\documentclass[lefttitle]{behroozarxiv}

\usepackage[utf8]{inputenc}
\usepackage[T1]{fontenc}

\usepackage{amsmath}
\usepackage{amssymb}
\usepackage{amsfonts}
\usepackage{amsthm}
\usepackage{amsbsy}
\usepackage{amstext}
\usepackage{mathtools}
\usepackage{dsfont}
\usepackage{relsize}
\usepackage{bm}
\usepackage{xfrac}
\usepackage{nicefrac}

\makeatletter
\@ifpackageloaded{xcolor}{}{\usepackage[table]{xcolor}}
\makeatother

\usepackage{microtype}
\usepackage{textcomp}
\usepackage{setspace}
\usepackage{comment}

\usepackage{booktabs}
\usepackage{array}
\usepackage{tabularx}
\usepackage{longtable}
\usepackage{makecell}
\usepackage{stackengine}

\usepackage{enumitem}

\usepackage{graphicx}
\usepackage{epsfig}
\usepackage{float}
\usepackage{caption}

\usepackage{tikz}
\usetikzlibrary{arrows.meta,positioning,fit,calc,backgrounds}

\usepackage{algorithm}
\usepackage{algpseudocode}

\providecommand{\STATE}{\State}
\providecommand{\FOR}{\For}
\providecommand{\ENDFOR}{\EndFor}
\providecommand{\IF}{\If}
\providecommand{\ENDIF}{\EndIf}
\providecommand{\ELSE}{\Else}
\providecommand{\ELSIF}{\ElsIf}

\newcommand{\AlgComment}[1]{\hfill{\textcolor{blue}{$\triangleright$ #1}}}

\usepackage[most]{tcolorbox}

\newtcolorbox{examplebox}[1]{
  colback=gray!6,
  colframe=gray!45,
  title=\textbf{#1},
  fonttitle=\small,
  fontupper=\small,
  rounded corners,
  boxrule=0.5pt,
  left=6pt,
  right=6pt,
  top=6pt,
  bottom=6pt
}

\newtcolorbox{protocolbox}[1]{
  colback=blue!4,
  colframe=blue!45!black,
  title=\textbf{#1},
  fonttitle=\small,
  fontupper=\small,
  rounded corners,
  boxrule=0.5pt,
  left=6pt,
  right=6pt,
  top=6pt,
  bottom=6pt
}

\newtcolorbox{promptbox}[1]{
  colback=gray!4,
  colframe=gray!45,
  title=#1,
  fonttitle=\bfseries,
  fontupper=\small,
  breakable,
  rounded corners,
  boxrule=0.5pt,
  left=1mm,
  right=1mm,
  top=1mm,
  bottom=1mm
}

\usepackage[numbers,sort&compress]{natbib}

\usepackage{orcidlink}

\makeatletter

\theoremstyle{definition}
\@ifundefined{definition}{
  \newtheorem{definition}{Definition}[section]
}{}

\theoremstyle{plain}
\@ifundefined{assumption}{
  \newtheorem{assumption}[definition]{Assumption}
}{}
\@ifundefined{theorem}{
  \newtheorem{theorem}[definition]{Theorem}
}{}
\@ifundefined{lemma}{
  \newtheorem{lemma}[definition]{Lemma}
}{}
\@ifundefined{proposition}{
  \newtheorem{proposition}[definition]{Proposition}
}{}
\@ifundefined{corollary}{
  \newtheorem{corollary}[definition]{Corollary}
}{}

\theoremstyle{remark}
\@ifundefined{remark}{
  \newtheorem{remark}[definition]{Remark}
}{}

\makeatother

\DeclareMathAlphabet{\mathpzc}{OT1}{pzc}{m}{it}

\providecommand{\Top}{\mathsf{Top}}
\providecommand{\Lap}{\mathsf{Lap}}

\providecommand{\norm}[1]{\left\lVert #1\right\rVert}

\makeatletter
\newcommand*{\centernot}{%
  \mathpalette\@centernot
}
\def\@centernot#1#2{%
  \mathrel{%
    \rlap{%
      \settowidth\dimen@{$\m@th#1{#2}$}%
      \kern.5\dimen@
      \settowidth\dimen@{$\m@th#1=$}%
      \kern-.5\dimen@
      $\m@th#1\not$%
    }%
    {#2}%
  }%
}
\makeatother

\renewcommand\thesection{\arabic{section}}
\renewcommand\thesubsection{\arabic{section}.\arabic{subsection}}

\makeatletter

\def\@seccntformatinl#1{\csname the#1dis\endcsname\hskip 1em\relax}
\makeatother

\makeatletter
\newcommand{\AppendixOnlyTOC}{%
  \section*{Appendix Contents}%
  \@starttoc{atoc}%
}
\makeatother

\newcommand{\appsection}[1]{%
  \section{#1}%
  \addcontentsline{atoc}{section}{\protect\numberline{\thesection}#1}%
}

\makeatletter
\newcommand*{\rom}[1]{\expandafter\@slowromancap\romannumeral #1@}
\makeatother

\def\BibTeX{{\rm B\kern-.05em{\sc i\kern-.025em b}\kern-.08em
    T\kern-.1667em\lower.7ex\hbox{E}\kern-.125emX}}

\title{Differentially Private Semantic Plans for Aggregate Insight Generation}

\author{Behrooz~Razeghi~\textsuperscript{\orcidlink{0000-0001-9568-4166}}}

\affiliation{School of Engineering and Applied Sciences, Harvard University}

\abstract{
\texttt{URANIA} provides end-to-end differential privacy (DP) for summaries
of data-dependent clusters. However, its cluster--keyword release does not
directly provide collection-wide aggregates for semantic concepts defined
independently of the protected corpus. Records may express several concepts,
records expressing the same concept may be assigned to different clusters,
and cluster identities need not correspond across analyses. Consequently,
cluster-level statistics do not directly provide comparable measurements of
predefined concepts across collections or repeated analyses.
We introduce \texttt{DP-SPIN}, a trusted-curator framework for aggregate
measurement and summarization over semantic concepts fixed independently of
the protected target records. Each record is mapped to a bounded sparse
nonnegative vector over these concepts, whose sum forms a semantic sketch. A
differentially private mechanism releases a semantic plan containing admitted
concepts and noisy masses; normalized semantic-support values and support bins
are obtained by post-processing. For user-level privacy, each user's aggregate
contribution is clipped to a fixed bound. The language model receives only the
plan and fixed decoding instructions, while a public verifier checks concept
mentions, reported values, comparisons, and rank claims against the released
plan. The final summary is differentially private by post-processing.
We establish record- and user-level DP guarantees under add/drop and
replacement adjacency. We evaluate \texttt{DP-SPIN} under record-level
privacy on CFPB complaint narratives, Amazon All Beauty reviews, and Yelp
restaurant reviews, and under user-level privacy on Amazon and Yelp. We
compare \texttt{DP-SPIN} with non-private plan and summary references, DP
keyword and category histogram baselines, and a \texttt{URANIA}-style
baseline with a fixed public keyword vocabulary.
}

\preprint{arXiv preprint, DP-SPIN}
\correspondence{\email{behroozrazeghi@seas.harvard.edu}, \email{behrooz.razeghi@gmail.com}%
}
\codeurl{https://github.com/BehroozRazeghi/dp-spin}

\begin{document}
\maketitle

\vspace{5pt}

\section{Introduction}

Organizations use language models to analyze sensitive text collections, including consumer complaints, reviews, support interactions, and conversational data, and to produce collection-level summaries of recurring themes, failure modes, and intents. Although the released report is collection-level, its computation can still depend on individual records. In aggregate-insight pipelines, a rare or distinctive record may affect a cluster, label, exemplar, keyword, or generated statement. Releasing only aggregate text therefore does not by itself bound the influence of an individual record on the output. \texttt{CLIO} illustrates this setting through LLM-based summarization, clustering, filtering, and privacy auditing of Claude.ai conversations \citep{tamkin2024clio}. \texttt{CLIOPATRA} demonstrates a limitation of such heuristic safeguards by showing that adversarially inserted conversations can induce disclosure of sensitive information from target conversations \citep{annamalai2026cliopatra}.

Prompting a language model not to reveal identifying information does not by itself establish differential privacy (DP). More generally, withholding raw records from the final generation prompt is insufficient if clusters, labels, keywords, exemplars, or other generation inputs remain non-private functions of the protected corpus. An end-to-end DP analysis must specify the adjacency relation and account for every protected-corpus-dependent object exposed outside the trusted curator. Such an object must either be fixed independently of the protected corpus or be covered by a DP mechanism, with the privacy costs of multiple releases composed accordingly. The released private object also determines which corpus-dependent quantities are available to support claims in the final report.

\begin{figure}
    \centering
    \includegraphics[width=\linewidth]{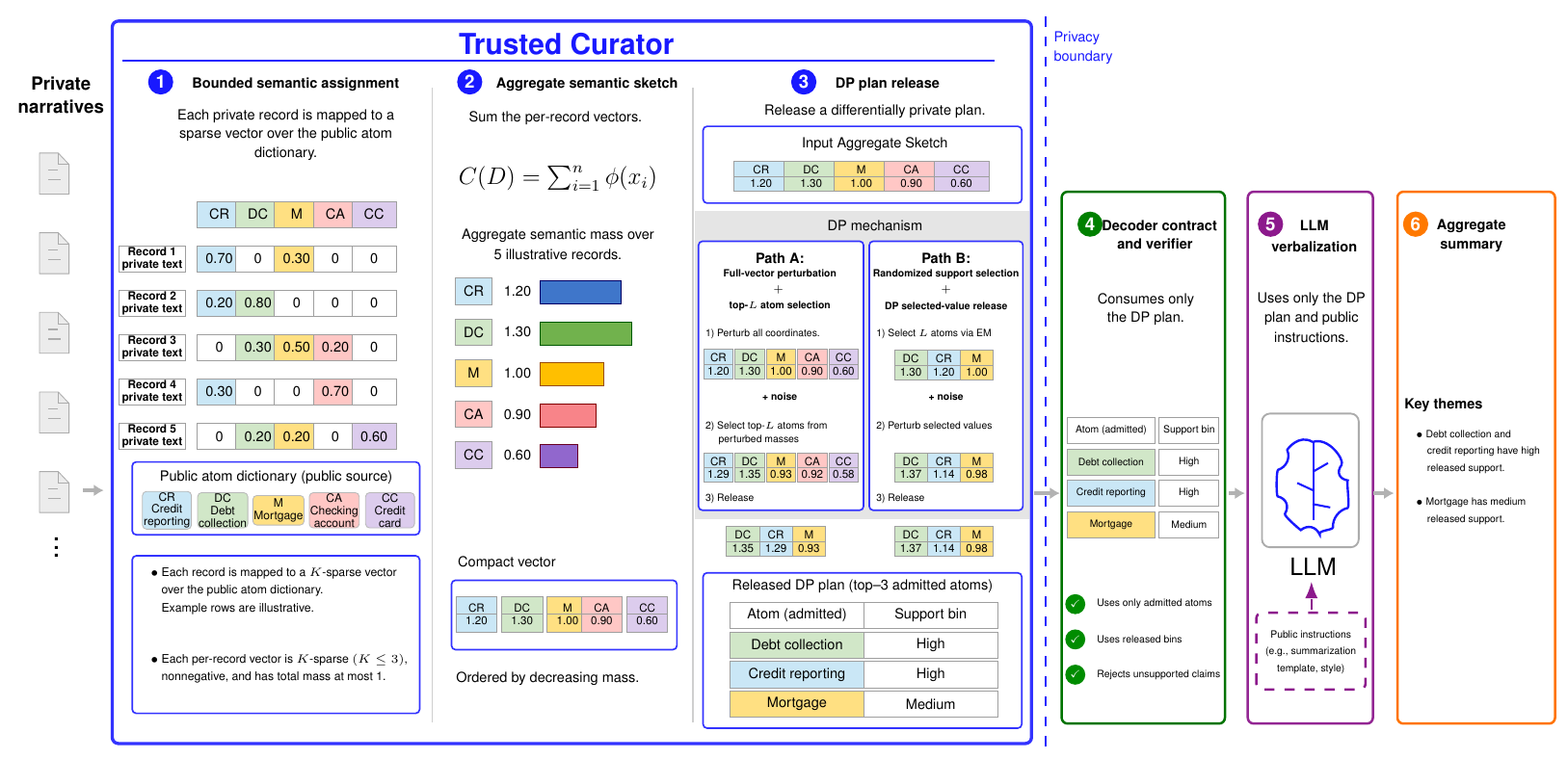}
    \caption{Illustrative record-level \texttt{DP-SPIN} example.}
    \label{fig:illustration-DPSPIN}
\end{figure}

A closely related formal-DP framework is \texttt{URANIA} \citep{liu2025urania}. \texttt{URANIA} privately clusters chatbot conversations, filters clusters using noisy sizes, releases DP per-cluster keyword histograms, and generates descriptions from released keywords. Its cluster--keyword release therefore supports private discovery and description of data-dependent groups. This is distinct from an analysis in which the target quantities are collection-wide aggregates over semantic concepts defined independently of the protected corpus. A record may express several such concepts, while records expressing the same concept may lie in different clusters. Consequently, cluster sizes and per-cluster keyword histograms do not directly provide the aggregate contribution to each predefined concept. Because the clusters are data-dependent, their identities also need not correspond across separately analyzed corpora. Cluster-level statistics therefore do not directly provide comparable measurements of the same predefined concepts across analyses. Private description of data-dependent groups and private measurement of predefined concepts are both meaningful objectives, but they require different released statistics.

\texttt{URANIA}'s formal analysis uses add/drop adjacency for one conversation and identifies protection of all conversations from one user as future work. When a user contributes multiple conversations, those conversations can affect the private clustering stage and multiple downstream cluster statistics. Its stated record-level guarantee therefore does not provide a fixed user-level bound independent of the number of conversations contributed by that user. A user-level construction requires corresponding per-user contribution control and privacy accounting over the affected private stages.

Natural-language generation introduces a separate requirement. Differential privacy is preserved under post-processing, but post-processing does not ensure that generated claims are supported by the released DP object. A summary generated only from a DP release may mention an unreleased concept, report an unsupported support level, reverse a comparison, or assert a rank not justified by the released values. Such a summary remains differentially private by post-processing but may be inconsistent with the release. Privacy and consistency with released evidence are therefore separate properties.

We study aggregate reporting over a finite set of semantic concepts when the protected unit may be one record or all records associated with one user. This setting presents three technical challenges. First, contributions must
be represented so that the private aggregate-release mechanisms have explicit record- and user-level sensitivity bounds. Second, decoder-visible concept labels must not expose non-private text derived from the protected corpus, while data-dependent concept admission and aggregate values must be released privately. Third, the generated report must remain within the semantic and quantitative information supported by the released object without exposing additional protected-corpus-dependent information to the decoder.

We introduce \textbf{\texttt{DP-SPIN}} (\textit{Differentially Private Semantic Plans for Insight Narration}), a
trusted-curator framework that addresses these requirements. Inside the curator, each protected record is mapped to a sparse nonnegative contribution over a finite semantic atom universe with bounded total mass. An atom represents a semantic concept that may appear in the final report. Record contributions are aggregated coordinate-wise into a semantic sketch whose coordinates measure aggregate semantic mass. For user-level privacy, all record contributions associated with one user are first aggregated and the resulting user vector is deterministically clipped to a public contribution bound. This gives an explicit sensitivity bound for the user-level semantic sketch independently of the number of records contributed by that user.

The curator releases a differentially private \emph{semantic plan} containing admitted atoms and noisy aggregate masses; normalized semantic-support values and support bins are obtained by post-processing.
The semantic plan is the only protected-corpus-dependent input to language generation and determines the concepts and released quantities available to support quantitative, comparison, and rank claims.

The first challenge concerns decoder-visible semantic labels. Adding noise to a numerical aggregate does not privatize a label, topic name, or description derived non-privately from protected records. In the core \texttt{DP-SPIN} formulation, the atom universe, labels, and descriptions are fixed using public or disjoint auxiliary data before the protected target corpus is processed. The protected target records affect the decoder-visible plan only through atom admission and the released noisy values. Any decoder-visible label derived from the protected target corpus would instead have to be covered by a differentially private release whose privacy cost is included in the overall accounting. This separates \emph{atom naming} from \emph{atom admission}.

The second challenge is private release of the high-mass atoms and their values. We analyze two mechanism families. Full-sketch mechanisms perturb every coordinate of the semantic sketch using Laplace or Gaussian noise and select the leading atoms by post-processing, so atom selection incurs no additional privacy cost. Sparse-release mechanisms privately admit atoms through sequential applications of the exponential mechanism and then release noisy values for the admitted coordinates. The latter avoids perturbing every coordinate of a large atom universe but assigns part of the privacy budget to atom admission. We establish record-level add/drop and replacement guarantees and user-level guarantees after deterministic clipping of each user's aggregate semantic contribution.

The third challenge is plan-consistent language generation. The language model receives only the released semantic plan and public instructions, not raw target records or non-private target-derived intermediates. We define a decoder contract under which concept mentions, reported semantic-support information, comparisons, and rank claims must be supported by the released plan. A public verifier rejects candidates that violate this contract. Candidate generation, verification, regeneration, and fallback generation depend only on the same DP plan and public information, so the final report remains DP by post-processing. The verifier establishes consistency with the released plan rather than correctness or completeness relative to the unreleased corpus.

\vspace{-3pt}

\paragraph{Contributions.}
\begin{enumerate}[leftmargin=16pt]
    \item
    We formulate differentially private aggregate insight generation through a structured semantic plan that is the only protected-corpus-dependent input to language generation. The final summary depends only on this differentially private plan and public information and is therefore differentially private by post-processing.\vspace{-5pt}
    \item 
    We introduce a semantic-sketch model with bounded sparse record contributions over a fixed semantic atom universe. Our formulation separates \textit{atom naming} from \textit{atom admission}. Decoder-visible atom labels and descriptions are fixed independently of the protected target records, whereas atom admission depends on the protected target records only through the differentially private plan-release mechanism.\vspace{-5pt}
    \item
    We analyze Gaussian and Laplace full-sketch mechanisms with top-$L$ selection by post-processing, and sequential exponential-mechanism atom admission followed by noisy release of the admitted masses. We establish record- and user-level DP guarantees under add/drop and replacement adjacency.
    \vspace{-5pt}
    \item
    We define plan-level utility measures and a decoder contract for plan-consistent verbalization. The utility measures quantify noisy-value error, top-$L$ support recovery, preserved non-private semantic-mass fraction, and support-mass loss. A public verifier checks atom mentions, numeric claims, support-bin assignments, pairwise comparisons, and rank claims for consistency with the released plan.\vspace{-5pt}
    \item
    We evaluate \texttt{DP-SPIN} on CFPB consumer complaint narratives, Amazon All Beauty reviews, and Yelp restaurant reviews using disjoint auxiliary and protected target splits. We measure plan utility, use the public verifier as a plan-consistency check, and compare with non-private plan and summary references, differentially private keyword and category histogram baselines, and a \texttt{URANIA}-style baseline using a fixed public keyword vocabulary.
\end{enumerate}

\vspace{-3pt}

\section{Related Work}

\vspace{-3pt}

\paragraph{Differential privacy.}
Differential privacy bounds how much the output distribution of a randomized mechanism can change under a specified adjacency relation, such as adding, removing, or replacing one protected contribution \citep{dwork2006calibrating,dwork2014algorithmic}. Our analysis uses standard primitives from the differential-privacy literature, including Laplace and Gaussian mechanisms for real-valued vector release, the exponential mechanism for private selection, and the post-processing and composition properties of differentially private mechanisms \citep{mcsherry2007mechanism,dwork2014algorithmic}.

\vspace{-3pt}

\paragraph{Aggregate insight approaches.}
\texttt{CLIO} analyzes Claude.ai conversations using LLM-generated summaries, clustering, filtering, and LLM-based privacy auditing to produce aggregate usage insights \citep{tamkin2024clio}. These safeguards are heuristic and do not provide a formal DP guarantee. \texttt{CLIOPATRA} shows that adversarially inserted conversations can bypass these layers and induce leakage from a target user's conversation \citep{annamalai2026cliopatra}.
\texttt{URANIA} provides end-to-end DP for a related cluster-based pipeline \citep{liu2025urania}. It privately clusters conversations, filters clusters using noisy sizes, releases DP per-cluster keyword histograms, and generates summaries from released keywords. These released statistics support private discovery and description of data-dependent groups. \texttt{DP-SPIN} addresses a different measurement target, namely collection-wide aggregates over semantic
concepts defined independently of the protected target records. A record may express several such concepts, while records expressing the same concept may lie in different clusters. Consequently, cluster sizes and per-cluster keyword histograms do not directly provide the collection-wide aggregate for each predefined concept. Because each
partition is learned from the analyzed corpus, cluster identities also need not correspond across analyses, so cluster-level statistics do not directly provide comparable measurements of fixed concepts. \texttt{DP-SPIN} instead releases a DP semantic plan over predefined concepts, with utility measures and verifier checks defined on the concepts and noisy values in that plan.
\texttt{URANIA}'s formal analysis uses add/drop adjacency for one conversation and identifies user-level DP as future work. When one user contributes multiple conversations, that user can affect the private clustering stage and multiple downstream cluster statistics. A fixed user-level guarantee therefore requires per-user contribution control and privacy accounting across these stages. \texttt{DP-SPIN} instead aggregates and clips each user's semantic contribution before private release, yielding a fixed sensitivity bound for the user-level semantic sketch.

\vspace{-3pt}

\paragraph{Prompt and context privacy.}
A separate line of work studies privacy for LLM inference by modifying or constraining the context supplied to the model. CAPE perturbs prompts using a context-aware differentially private mechanism \citep{wu2025cape}. DP-GTR uses group text rewriting under local differential privacy to reduce prompt-level disclosure \citep{li2025dp}. DP-Fusion bounds the influence of designated sensitive tokens in the input context on the output distribution \citep{thareja2025dp}. These methods operate on the context given to an LLM at inference time. In our setting, a trusted curator holds a collection of private records, releases a differentially private aggregate representation, and uses the LLM only to verbalize that released representation.

\vspace{-3pt}

\paragraph{Private top-$L$ selection.}
When the semantic atom universe is large, perturbing every coordinate of the full sketch may be unnecessary when the released plan contains only $L$ atoms. Differentially private selection methods, including the exponential mechanism and private top-$L$ selection procedures, select high-scoring coordinates without releasing the full vector \citep{mcsherry2007mechanism,qiao2021oneshot}. In \texttt{DP-SPIN}, this class of methods corresponds to atom admission, where the mechanism privately selects a subset of the fixed atom universe for the released plan. The admitted atom masses are then released by a separate differentially private value-release mechanism, and the privacy costs of atom admission and value release are composed explicitly.

\begin{figure}[!t]
\centering
\includegraphics[width=0.99\textwidth]{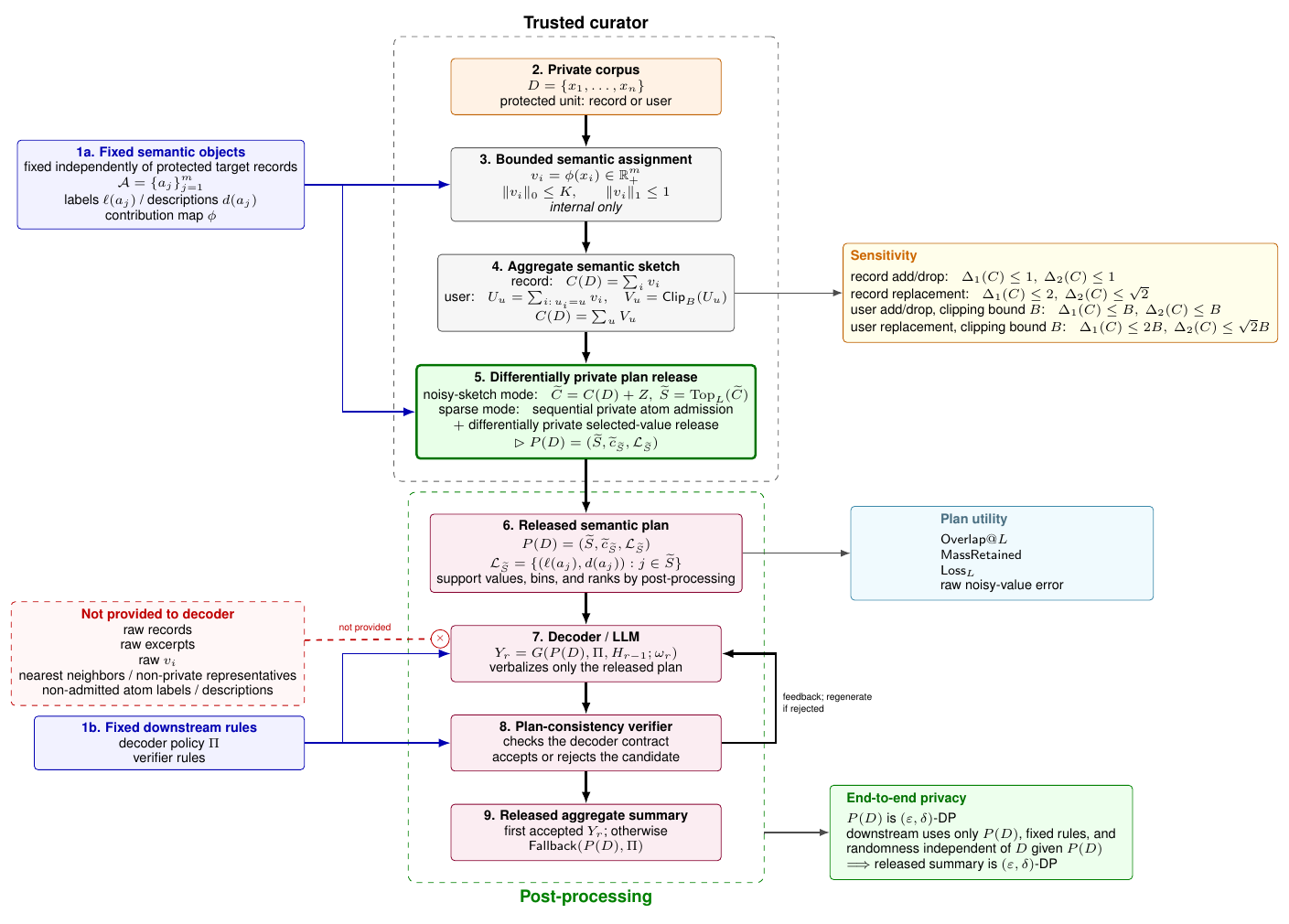}
\vspace{-4pt}
\caption{
\textbf{DP-SPIN pipeline.}
Private records are processed inside the trusted curator. Each record is mapped to a sparse norm-bounded semantic vector. For user-level privacy, record vectors are aggregated by user and the resulting user contribution is clipped before aggregation across users. The curator releases only a differentially private semantic plan $P(D)$ containing admitted atoms and released noisy values. Downstream computation receives only $P(D)$ and information fixed independently of the protected target records. The decoder and verifier never receive raw records, raw excerpts, per-record contribution vectors, non-private representatives, or non-admitted atoms. The released aggregate summary is therefore differentially private by post-processing.
}
\label{fig:private-plans-public-words}
\end{figure}

\vspace{-3pt}

\section{Problem Formulation}
\label{sec:problem-formulation}

\vspace{-3pt}

\paragraph{Input.}
The curator holds a finite dataset $D=\{x_1,\ldots,x_n\}$, where each $x_i\in\mathcal X$ is a private text record, such as a complaint, support ticket, conversation, survey response, or incident report.

\vspace{-4pt}

\paragraph{Output.}
The target released text output is a natural-language aggregate summary $Y\in\mathcal Y$ that describes recurring collection-level themes or semantic categories in $D$. In our framework, $Y$ is generated from a released differentially private semantic plan and public decoding instructions.

\vspace{-4pt}

\paragraph{Privacy requirement.}
The released summary $Y$ must satisfy differential privacy with respect to a protected unit specified by the deployment. In the record-level setting, one protected unit is one text record. We consider add/drop adjacency and replacement adjacency. In the user-level setting, one protected unit consists of all records associated with one user; the aggregate contribution of that user is deterministically clipped to a fixed contribution bound before release.

\vspace{-4pt}

\paragraph{Utility target.}
The formal utility object is the released semantic plan rather than the natural-language summary itself. We evaluate whether this plan recovers high-mass coordinates of the non-private semantic sketch and preserves non-private semantic mass relative to the non-private top-$L$ support. Generated summaries are evaluated separately for consistency with the released plan under the public verifier.

\subsection{Adjacency}

\begin{definition}[Add/drop adjacency]
Datasets $D$ and $D'$, viewed as multisets of protected units, are add/drop adjacent, denoted $D\sim_{\rm ad}D'$, if one dataset can be obtained from the other by adding or removing exactly one protected unit.
\end{definition}

\begin{definition}[Replacement adjacency]
Datasets $D$ and $D'$, viewed as multisets of protected units, are replacement adjacent, denoted $D\sim_{\rm rep}D'$, if they contain the same number of protected units and differ in exactly one protected unit.
\end{definition}

\begin{definition}[User-level add/drop adjacency]
Let each record have a user identifier. For a dataset $D$, let $D_u\subseteq D$ denote the submultiset of records associated with user $u$. Datasets $D$ and $D'$ are user-level add/drop adjacent, denoted $D\sim^{\rm user}_{\rm ad}D'$, if one dataset can be obtained from the other by adding or removing all records associated with exactly one protected user.
\end{definition}

\begin{definition}[User-level replacement adjacency]
View each dataset as a multiset of user-level protected units, where one unit contains all records associated with one user. Datasets $D$ and $D'$ are user-level replacement adjacent, denoted $D\sim^{\rm user}_{\rm rep}D'$, if they contain the same number of user-level protected units and differ in exactly one such unit.
\end{definition}

\vspace{-2pt}

\subsection{Fixed semantic atom universe}

Let $\mathcal{A}=\{a_1,\ldots,a_m\}$ be a finite semantic atom universe. Each atom $a_j$ has a decoder-visible label $\ell(a_j)$ and, optionally, a description $d(a_j)$. Atoms may represent topics, intents, complaint categories, risks, claim categories, or sentiment--topic pairs.

\begin{assumption}[Semantic atom universe fixed independently of protected records]
\label{ass:public-atoms}
The semantic atom universe $\mathcal A$ and all decoder-visible atom labels and descriptions are fixed independently of the protected target records, for example using public or disjoint auxiliary data.
%
\end{assumption}

An atom universe, label, or description derived from the protected target records would require a separate differentially private mechanism whose privacy cost is included in the overall accounting.

\subsection{Contribution map and semantic sketch}

\vspace{-3pt}

A semantic contribution map is a function $\phi:\mathcal{X}\to\mathbb{R}_+^m$ such that, for every record $x$,\vspace{-3pt}
\begin{equation}
\label{eq:phi-bound}
\norm{\phi(x)}_0\le K,\qquad \norm{\phi(x)}_1\le 1.
\end{equation}
The mapping rule $\phi$, including any fitted parameters used to evaluate it, is fixed independently of the protected target dataset. The map may use a fixed classifier, retrieval against atom descriptions, embeddings fitted on public or disjoint auxiliary data, or an LLM running inside the trusted curator. The privacy analysis relies on Assumption~\ref{ass:public-atoms}, the trusted-curator boundary, and the norm bound in \eqref{eq:phi-bound}. If an external service receives raw protected text to compute $\phi(x)$, that disclosure is outside the stated guarantee unless the service is part of the trusted curator.

\begin{remark}[Sparse normalized assignment]
One admissible construction scores each atom using a fixed scoring rule $r_j(x)$, preserves up to $K$ highest-scoring atoms under a fixed tie-breaking rule, and normalizes the resulting nonnegative weights to have total mass at most one. The scoring rule may be fixed in advance or learned from public or disjoint auxiliary data.%
\end{remark}

The non-private semantic sketch is defined according to the protected-unit setting. For record-level privacy,
\begin{equation}
\label{eq:sketch}
C(D)=\sum_{x\in D}\phi(x)\in\mathbb R_+^m.
\end{equation}
Coordinate $C_j(D)$ is the aggregate semantic mass assigned to atom $a_j$.
It need not be an integer count, since one record may distribute its
contribution across several atoms.

For user-level privacy, let $U_u(D)=\sum_{x\in D_u}\phi(x)$ denote the aggregate semantic contribution of user $u$. For a fixed user-level contribution bound $B>0$, define the $\ell_1$-clipping operator $\mathsf{Clip}_B(z) \coloneqq (Bz)/ \max\{B,\|z\|_1\}$, and set $V_u(D)=\mathsf{Clip}_B\!\left(U_u(D)\right)$. Then $\|V_u(D)\|_1\le B$, and the user-level semantic sketch is
\begin{equation}
\label{eq:user-sketch}
C(D)=\sum_u V_u(D).
\end{equation}
Throughout the paper, $C(D)$ denotes the semantic sketch for the protected-unit setting under consideration.

\vspace{-3pt}

\subsection{Semantic plan and decoder contract}

\vspace{-3pt}

A released semantic plan is
\begin{equation}
P(D)=\bigl(\widetilde S,\widetilde c_{\widetilde S}, \mathcal L_{\widetilde S}\bigr),
\end{equation}
where $\widetilde S\subseteq[m]$ is the admitted atom set, $\widetilde c_{\widetilde S}$ contains the raw noisy masses produced by the differentially private release mechanism, and $\mathcal L_{\widetilde S} = \{(\ell(a_j),d(a_j)):j\in\widetilde S\}$. For each admitted atom, define the decoder-facing nonnegative mass $\widetilde c_j^+=\max\{0,\widetilde c_j\}$, and, when normalized semantic support is reported, $\widetilde q_j = \frac{\widetilde c_j^+}{d_{\mathrm{public}}}$. Support bins and released ranks are deterministic functions of the released noisy values and fixed rules. We regard these derived quantities as fields
of the released semantic plan but suppress them from the compact notation for $P(D)$.

\begin{definition}[Decoder contract]
\label{def:decoder-contract}
A decoder satisfies the plan contract if every substantive claim in $Y$ is supported by one of the following objects:
\textbf{(i)} an admitted atom label or description included in the released plan;
\textbf{(ii)} a released noisy mass or support bin for an admitted atom;
\textbf{(iii)} a pairwise semantic-support comparison between two admitted atoms whose ordering is supported by released noisy masses under the public comparison margin;
\textbf{(iv)} a rank claim supported by released ranks; or
\textbf{(v)} public, data-independent text.
The decoder must not introduce raw examples, record-specific details, atom labels outside the released plan, or unsupported quantitative claims.
\end{definition}

A public verifier may reject summaries that violate the contract. Regeneration after rejection remains post-processing when every candidate summary, verifier decision, feedback message, and final output is computed only from the same released plan $P(D)$, public rules, and randomness independent of $D$ given $P(D)$.

\begin{remark}[Fixed-denominator semantic support]
When normalized semantic-support values are reported, $d_{\mathrm{public}}>0$ is fixed independently of the protected target records. The quantity $\widetilde q_j$ need not equal the fraction of protected units expressing atom $a_j$ and, because of the noisy numerator, need not be bounded above by one. Nonnegative clipping, normalization, and support-bin assignment are post-processing of the differentially private noisy masses. If the denominator is not public, it must be released privately and included in the privacy accounting.
\end{remark}

\vspace{-3pt}

\section{DP-SPIN Mechanism}

\vspace{-3pt}

Figure~\ref{fig:private-plans-public-words} summarizes the trusted-curator pipeline and the boundary between private computation and downstream post-processing. Algorithm~\ref{alg:dpspin} gives the corresponding procedure. The trusted curator computes bounded semantic contributions, aggregates them into a semantic sketch, and releases a differentially private semantic plan. All downstream generation and verification use only the released plan, public decoding instructions, and public verifier rules.

\begin{algorithm}[t]
\caption{\textsc{DP-SPIN}: Differentially Private Semantic Plans for Insight Narration}
\label{alg:dpspin}
\begin{algorithmic}[1]
\STATE \textbf{Input:} protected records $D=\{x_i\}_{i=1}^n$;
protected-unit mode $\mathsf{unit}\in\{\mathsf{record},\mathsf{user}\}$;
user identifiers and contribution bound $B$ when
$\mathsf{unit}=\mathsf{user}$; fixed semantic atom universe
$\mathcal A=\{a_j\}_{j=1}^m$; contribution map $\phi$; plan size $L$;
public denominator $d_{\mathrm{public}}$; decoder policy $\Pi$;
verifier $\mathsf V$; maximum attempts $R_{\max}$; privacy parameters.
\STATE $v_i \gets \phi(x_i)\in\mathbb R_+^m,\ i\in[n]$
\AlgComment{Stage I: $\|v_i\|_0\le K,\ \|v_i\|_1\le1$}
\IF{$\mathsf{unit}=\mathsf{record}$}
    \STATE $C\gets\sum_{i=1}^n v_i$
    \AlgComment{record-level semantic sketch}
\ELSE
    \FOR{each protected user $u$}
        \STATE $V_u\gets
        \mathsf{Clip}_{B} \left(\sum_{i:\,u_i=u}v_i\right)$
    \ENDFOR
    \STATE $C\gets\sum_u V_u$
    \AlgComment{user-level semantic sketch}
\ENDIF
\STATE $P \gets
\mathsf{DPPlanRelease}(C,\mathcal A,L,d_{\mathrm{public}};
\text{privacy parameters and sensitivity bounds})$
\AlgComment{Stage II: differentially private semantic plan}
\STATE $H_0\gets\emptyset$
\FOR{$r=1,\ldots,R_{\max}$}
    \STATE $Y_r\gets G(P,\Pi,H_{r-1};\omega_r)$
    \STATE $I_r\gets\mathsf V(Y_r,P,\Pi,H_{r-1})$
    \IF{$I_r=1$}
        \STATE \textbf{return} $Y_r$
    \ENDIF
    \STATE $F_r\gets
    \mathsf{Feedback}(Y_r,P,\Pi,H_{r-1},I_r)$
    \STATE $H_r\gets(H_{r-1},Y_r,I_r,F_r)$
\ENDFOR
\STATE \textbf{return} $\mathsf{Fallback}(P,\Pi)$
\AlgComment{deterministic plan-only template}
\end{algorithmic}
\end{algorithm}

\vspace{-3pt}

\subsection{Stage I: Bounded semantic assignment}

\vspace{-3pt}

For each private record $x_i$, the curator computes $v_i=\phi(x_i)\in\mathbb R_+^m$. The vectors $v_i$ are internal curator variables and are not provided to the decoder. By construction, each $v_i$ is $K$-sparse and has total mass at most one, i.e., $ \norm{v_i}_0\le K,\, \norm{v_i}_1\le 1$. For record-level privacy, $C(D)=\sum_i v_i$. For user-level privacy, record contributions are first aggregated by user, each user aggregate is clipped to the contribution bound $B$, and $C(D)=\sum_u V_u(D)$.

\vspace{-3pt}

\subsection{Stage II: Differentially private plan release}

\vspace{-3pt}

The curator converts the semantic sketch $C(D)$ into a differentially private semantic plan $P(D)$. The definition of $C(D)$ depends on the protected-unit setting as specified above. Algorithm~\ref{alg:plan-release} summarizes the plan-release mechanisms. We consider two release modes.

\vspace{-3pt}

\paragraph{Noisy-sketch selection.}
When perturbing the full $m$-dimensional sketch is feasible, the curator samples $ \widetilde C = C(D) +Z$, where $Z$ is Laplace or Gaussian noise calibrated to the global sensitivity of $C$  under the chosen adjacency relation. The admitted set is then computed as
\begin{equation}
\widetilde S=\Top_L(\widetilde C),   
\end{equation}
with deterministic public tie-breaking. The released plan contains the fixed labels and descriptions of atoms in $\widetilde S$, together with the corresponding noisy masses $\widetilde C_{\widetilde S}$. When semantic-support values and support bins are reported, they are computed from these noisy masses using the public denominator. The top-$L$ selection, support normalization, and binning are post-processing of the noisy sketch.

\paragraph{Sparse private atom admission.}
For sparse plan release, the curator does not perturb and release every atom coordinate. Instead, it selects an atom subset by a differentially private atom-admission mechanism and releases noisy masses only for the admitted atoms. Let $\widetilde S\subseteq[m]$, with $|\widetilde S|=L$, denote the admitted atom-index set.
The privacy budget is split as $\varepsilon_{\mathrm{sel}}=(1-\rho)\varepsilon$, $\varepsilon_{\mathrm{val}}=\rho\varepsilon$, where $\rho\in(0,1)$ is a public value-release budget fraction. Atom admission is performed sequentially with the exponential mechanism. At round $t\in\{1,\ldots,L\}$, let $S_{t-1}$ be the atom indices already admitted. The curator samples one new atom index $J_t\in [m]\setminus S_{t-1}$ using the coordinate utility $u(D,j)=C_j(D)$, with 
\begin{equation}
\mathsf{Pr} \left[ J_t = j \mid S_{t-1} \right]  \propto 
\exp\left(\frac{\varepsilon_{\mathrm{sel},t} \, C_j(D)}{2\Delta_{\mathrm{coord}}} \right),
\qquad j\in[m]\setminus S_{t-1},
\end{equation}
where $\Delta_{\mathrm{coord}}$ is the global sensitivity of the coordinate score $C_j(D)$ under the chosen adjacency relation. The per-round budgets satisfy $\sum_{t=1}^L \varepsilon_{\mathrm{sel},t} =\varepsilon_{\mathrm{sel}}$, and in the experiments we use the uniform allocation $\varepsilon_{\mathrm{sel},t}=\varepsilon_{\mathrm{sel}}/L$.

After $L$ rounds, the curator obtains $\widetilde S =\{J_1,\ldots,J_L\}$. It then forms the restricted vector
\begin{equation}
C_{\widetilde S}(D)=\bigl(C_j(D):j\in\widetilde S\bigr).
\end{equation}
The admission step privately selects the atom indices; it does not release the exact masses on those indices. Therefore $C_{\widetilde S}(D)$ is not released directly. Conditional on the realized admitted set $\widetilde S$, the curator applies a differentially private value-release mechanism to $C_{\widetilde S}(D)$. In our experiments, this mechanism uses Gaussian noise calibrated \citep{balle2018improving} to the sensitivity of the selected subvector, with privacy parameters $(\varepsilon_{\mathrm{val}},\delta_{\mathrm{val}})$. The released values are denoted by $\widetilde c_{\widetilde S}$.

The final sparse plan is $\big( \widetilde S, \widetilde c_{\widetilde S}, \{(\ell(a_j), d(a_j)):j\in\widetilde S\} \big)$. No value is released for atoms outside $\widetilde S$. Thus, the sparse mechanism has two differentially private stages, consisting of atom admission by the exponential mechanism and selected-value release by a differentially private value-release mechanism. The privacy costs of these stages are combined by sequential composition.

\begin{algorithm}[t]
\caption{\textsc{DPPlanRelease}}
\label{alg:plan-release}
\begin{algorithmic}[1]
\STATE \textbf{Input:} sketch $C\in\mathbb R_+^m$; fixed semantic atom universe
$\mathcal A$; plan size $L$; release mode $\mathsf{mode}$; public denominator
$d_{\mathrm{public}}$; privacy parameters and sensitivity bounds.
\IF{$\mathsf{mode}=\mathsf{sketch}$}
    \STATE Sample noise $Z$ calibrated to the sensitivity of $D\mapsto C(D)$.
    \STATE $\widetilde C \gets C+Z$ \AlgComment{differentially private noisy sketch}
    \STATE $\widetilde S \gets \Top_L(\widetilde C)$
    \STATE $\widetilde c_{\widetilde S}\gets \widetilde C_{\widetilde S}$
\ELSIF{$\mathsf{mode}=\mathsf{sparse}$}
    \STATE $\widetilde S\gets\emptyset$ \AlgComment{private atom admission}
    \FOR{$t=1,\ldots,L$}
        \STATE Sample $J_t\in[m]\setminus\widetilde S$ with
        $\Pr[J_t=j\mid \widetilde S]\propto \exp \left( \frac{\varepsilon_{\mathrm{sel},t}C_j}{2\Delta_{\mathrm{coord}}}\right), \; j\in[m]\setminus\widetilde S$.
        
        \STATE $\widetilde S\gets \widetilde S\cup\{J_t\}$
    \ENDFOR
    \STATE Sample noise $Z_{\widetilde S}$ calibrated to the sensitivity of
    $D\mapsto C_{\widetilde S}(D)$.
    \STATE $\widetilde c_{\widetilde S}\gets C_{\widetilde S}+Z_{\widetilde S}$
    \AlgComment{released noisy masses on admitted atoms}
\ENDIF
\STATE $\widetilde c_{\widetilde S}^{+} \gets \max\{0,\widetilde c_{\widetilde S}\}$
\AlgComment{coordinatewise post-processing}
\vspace{2pt}
\STATE Compute normalized semantic-support values, support bins, and released ranks by post-processing.
\STATE $\mathcal L_{\widetilde S}\gets \{(\ell(a_j),d(a_j)):j\in\widetilde S\}$
\STATE \textbf{return}
$P=(\widetilde S,\widetilde c_{\widetilde S},\mathcal L_{\widetilde S})$
\end{algorithmic}
\end{algorithm}

\subsection{Stage III: Contract-constrained verbalization}

The decoder receives only the released semantic plan $P(D)$ and public instructions. It does not receive raw records, raw excerpts, nearest neighbors, per-record contribution vectors, atoms outside the released plan, or non-private representatives. Under Assumption~\ref{ass:public-atoms}, all decoder-visible atom labels and descriptions are fixed independently of the protected target records. Let $H_{r-1}$ denote the candidate and verifier-feedback history available before attempt $r$. A candidate summary is generated as
\begin{equation}
Y_r=G(P(D),\Pi,H_{r-1};\omega_r),
\end{equation}
where $\Pi$ is the fixed decoder policy and $\omega_r$ denotes downstream decoder randomness at attempt $r$.

A public verifier (plan-consistency checker) validates whether each candidate summary satisfies the finite decoder contract in Definition~\ref{def:decoder-contract}. The decoder may make at most $R_{\max}$ attempts, where $R_{\max}$ is fixed independently of the protected data. Each attempt uses the same released plan $P(D)$, public decoding rules, public verifier feedback from previous attempts, and decoder randomness independent of $D$ given $P(D)$. The mechanism returns the first accepted candidate. If no candidate is accepted after $R_{\max}$ attempts, it returns a deterministic fallback summary computed from the same released plan. The final output is post-processing of the differentially private plan because decoding, verification, rejection, regeneration, verifier feedback, and the fallback rule depend only on $P(D)$, public rules, and randomness independent of $D$ given $P(D)$.

\section{Sensitivity Analysis}

For $p\in\{1,2\}$, let $\Delta_p(C)=\sup_{D\sim D'}\|C(D)-C(D')\|_p$ denote the global $\ell_p$-sensitivity under the adjacency relation  being considered.

\begin{lemma}[Record-level add/drop sensitivity]
\label{lem:adddrop}
Under \eqref{eq:phi-bound} and record-level add/drop adjacency, $\Delta_1(C)\le 1, \, \Delta_2(C)\le 1$.
\end{lemma}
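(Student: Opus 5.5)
The plan is to compute the difference of the record-level sketch \eqref{eq:sketch} on two add/drop adjacent datasets exactly, and then bound its norms using the contribution bound \eqref{eq:phi-bound}.

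Fix $D\sim_{\rm ad}D'$. The relation is symmetric and the norms are invariant under negation. So, without loss of generality, let $D'=D\cup\{x\}$ as multisets, for a single record $x\in\mathcal X$.

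Since $C$ is a sum over the multiset of records, the contributions of the shared records cancel term by term. This gives
\[
C(D')-C(D)=\phi(x).
\]
The $\ell_1$ bound is then immediate from \eqref{eq:phi-bound}: $\|C(D')-C(D)\|_1=\|\phi(x)\|_1\le 1$.

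For the $\ell_2$ bound, I will use the elementary inequality $\|v\|_2\le\|v\|_1$, valid for every $v\in\mathbb R^m$. It follows from
\[
\|v\|_1^2=\sum_j v_j^2+\sum_{j\neq k}|v_j||v_k|\ge \|v\|_2^2 .
\]
Hence $\|\phi(x)\|_2\le 1$. The sparsity constraint $\|\phi(x)\|_0\le K$ is not needed for this bound; it would only matter for an $\ell_\infty$-to-$\ell_2$ conversion.

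Taking the supremum over all adjacent pairs gives $\Delta_1(C)\le 1$ and $\Delta_2(C)\le 1$.

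There is no real obstacle. The only point needing care is the multiset bookkeeping: duplicate records must cancel exactly, and the argument must not assume that the datasets differ in an indexed position. Working with the multiset sum handles this directly.
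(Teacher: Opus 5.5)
Your proposal is correct and follows the paper's proof: adding or removing one record changes the sketch by $\pm\phi(x)$, the $\ell_1$ bound comes directly from the contribution bound, and the $\ell_2$ bound follows from $\|\phi(x)\|_2\le\|\phi(x)\|_1$. The only cosmetic difference is that the paper cites nonnegativity of $\phi(x)$ for this norm comparison, whereas you correctly note that it holds for every real vector.
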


\begin{proof}
Adding or removing one record changes the sketch by $\phi(x)$ or $-\phi(x)$. By Eq.~\eqref{eq:phi-bound}, $\|\phi(x)\|_1\le 1$. Since $\phi(x)\in\mathbb R_+^m$, we have $\|\phi(x)\|_2\le \|\phi(x)\|_1\le 1$.
\end{proof}

\begin{lemma}[Record-level replacement sensitivity]
\label{lem:replacement}
Under Eq.~\eqref{eq:phi-bound} and record-level replacement adjacency, $\Delta_1(C)\le 2$ and $\Delta_2(C)\le \sqrt{2}$. For $m\ge 2$, both bounds are tight.
\end{lemma}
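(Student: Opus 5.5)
Under record-level replacement adjacency, $D$ and $D'$ have the same number of records and differ in exactly one. So there are records $x,x'$ with $C(D)-C(D')=\phi(x)-\phi(x')$, and the plan is to bound this difference of two nonnegative vectors of mass at most one.

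For the $\ell_1$ bound, the triangle inequality and \eqref{eq:phi-bound} give
\[
\|\phi(x)-\phi(x')\|_1\le \|\phi(x)\|_1+\|\phi(x')\|_1\le 2 .
\]

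The $\ell_2$ bound is where nonnegativity matters; the naive triangle bound would only give $2$. Write $w=\phi(x)-\phi(x')$ and split it as $w=w^+-w^-$ into positive and negative parts. Coordinatewise, $w_j^+\le \phi_j(x)$ and $w_j^-\le \phi_j(x')$, because both vectors lie in $\mathbb R_+^m$. Hence $\|w^+\|_1\le 1$ and $\|w^-\|_1\le 1$. Since $w^+$ and $w^-$ have disjoint supports,
\[
\|w\|_2^2=\|w^+\|_2^2+\|w^-\|_2^2\le \|w^+\|_1^2+\|w^-\|_1^2\le 2,
\]
using $\|v\|_2\le\|v\|_1$ as in Lemma~\ref{lem:adddrop}. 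This gives $\Delta_2(C)\le\sqrt2$. The key step is this decomposition into disjointly supported parts.

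For tightness when $m\ge2$, we need a contribution map satisfying \eqref{eq:phi-bound} that attains both bounds. Take any $\phi$ with two records $x,x'$ such that $\phi(x)=e_1$ and $\phi(x')=e_2$; this is $1$-sparse and has unit mass, so it is admissible for every $K\ge1$. Let $D=D_0\cup\{x\}$ and $D'=D_0\cup\{x'\}$. These are replacement adjacent, and $C(D)-C(D')=e_1-e_2$, which has $\ell_1$ norm $2$ and $\ell_2$ norm $\sqrt2$. Tightness here means the supremum over admissible maps and datasets, so one witness suffices.
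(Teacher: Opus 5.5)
Your proof is correct. The $\ell_1$ bound and the tightness witness (two distinct standard basis vectors, with the quantifier over admissible maps made explicit) are the same as in the paper; the only real difference is the $\ell_2$ step.

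The paper expands $\|u-v\|_2^2=\|u\|_2^2+\|v\|_2^2-2\langle u,v\rangle$ and drops the cross term, since $\langle u,v\rangle\ge 0$ for nonnegative vectors. It then applies $\|\cdot\|_2\le\|\cdot\|_1$ to $u$ and $v$ themselves. This is a one-line argument that relies on the inner-product structure of $\ell_2$.

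You instead split $w=u-v$ into disjointly supported parts with $w^+\le u$ and $w^-\le v$ coordinatewise. This is a little longer, but it is purely coordinatewise, so one computation gives more:
\begin{itemize}
\item $\|w\|_p^p=\|w^+\|_p^p+\|w^-\|_p^p\le 2$ yields $\Delta_p(C)\le 2^{1/p}$ for every $p\ge 1$.
\item $\|w\|_\infty=\max\{\|w^+\|_\infty,\|w^-\|_\infty\}\le 1$ is exactly the record-level ($B=1$) case of the replacement coordinate-sensitivity bound $\Delta_{\mathrm{coord}}\le B$. The paper argues that bound separately in Proposition~\ref{prop:private-admission}.
\end{itemize}
Your decomposition also makes the equality case of the $\ell_2$ bound transparent. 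Equality forces $w^+=u$ and $w^-=v$ to be $1$-sparse with unit mass, i.e.\ distinct basis vectors, which is precisely your witness.

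Both arguments transfer verbatim to the user-level bound in Lemma~\ref{lem:user} after scaling by $B$.
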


\begin{proof}
Replacing $x$ by $x'$ changes the sketch by $\phi(x')-\phi(x)$. The $\ell_1$ bound follows from the triangle inequality $\|\phi(x')-\phi(x)\|_1 \le \|\phi(x')\|_1+\|\phi(x)\|_1 \le 2$. Let $u=\phi(x)$ and $v=\phi(x')$. Since $u,v\in\mathbb R_+^m$, we have $\norm{u-v}_2^2 =\norm{u}_2^2+\norm{v}_2^2-2\langle u,v\rangle \le \norm{u}_1^2+\norm{v}_1^2\le2$. Thus $\Delta_2(C)\le\sqrt{2}$. For $m\ge 2$, the bounds are attained by two distinct standard basis vectors.
\end{proof}

\begin{lemma}[User-level sensitivity after clipping]
\label{lem:user}
Suppose records are partitioned by user, and let $U_u(D)=\sum_{x\in D_u}\phi(x)$, $V_u(D)=\mathsf{Clip}_B\!\left(U_u(D)\right)$, $C(D)=\sum_u V_u(D)$. Under user-level add/drop adjacency, $\Delta_1(C)\le B$, $\Delta_2(C)\le B$. Under user-level replacement adjacency, $\Delta_1(C)\le2B$, $\Delta_2(C)\le\sqrt{2}B$.
\end{lemma}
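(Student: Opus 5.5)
The argument mirrors Lemmas~\ref{lem:adddrop} and~\ref{lem:replacement}, with the record vector $\phi(x)$ replaced by the clipped user vector $V_u(D)$. The one structural fact needed is locality. The sketch $C(D)=\sum_u V_u(D)$ is a sum over users. Each $V_u(D)$ depends only on the records $D_u$ of user $u$, and the clipping operator is applied separately to each user's aggregate. So under either user-level adjacency, every term except the one for the affected user is identical in $D$ and $D'$. I would state this first, because it is the point that differs from record-level reasoning: clipping is deterministic and per-user, so no cross-user interaction occurs.

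Next I record two properties of each clipped vector. First, $V_u\in\mathbb R_+^m$, since $U_u$ is a sum of nonnegative vectors $\phi(x)$ and $\mathsf{Clip}_B$ multiplies by the positive scalar $B/\max\{B,\|z\|_1\}$. Second, $\|V_u\|_1 = B\|U_u\|_1/\max\{B,\|U_u\|_1\}\le B$. These hold no matter how many records the user contributes.

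\textbf{Add/drop.} The difference $C(D)-C(D')$ equals $\pm V_u$ for the single added or removed user $u$. Hence $\|C(D)-C(D')\|_1\le B$. Since the entries of $V_u$ are nonnegative, $\|V_u\|_2\le\|V_u\|_1\le B$.

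\textbf{Replacement.} The difference is $w-v$, where $v$ is the clipped vector of the removed unit and $w$ that of the inserted unit. Both lie in $\mathbb R_+^m$ with $\ell_1$-norm at most $B$. The triangle inequality gives $\|w-v\|_1\le 2B$. For $\ell_2$, I reuse the expansion from Lemma~\ref{lem:replacement}:
\[
\|w-v\|_2^2=\|w\|_2^2+\|v\|_2^2-2\langle w,v\rangle\le \|w\|_1^2+\|v\|_1^2\le 2B^2,
\]
where the inequality uses $\langle w,v\rangle\ge 0$ for nonnegative vectors. This gives $\Delta_2(C)\le\sqrt2 B$.

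The only step needing care is the replacement case in a multiset of user units. The differing unit is one user's full record set replaced by another's, which may carry the same or a different identifier. In either case exactly one summand $V_u$ is swapped, provided user units are identified with distinct users. I would make that identification explicit and then the bounds follow.
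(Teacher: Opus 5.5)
Your proposal is correct and follows the paper's proof essentially line for line. Under add/drop the sketch changes by one clipped nonnegative vector. Under replacement it changes by the difference of two such vectors, bounded via the triangle inequality in $\ell_1$ and via the expansion using $\langle w,v\rangle\ge 0$ in $\ell_2$. Your remarks on per-user locality and on identifying each user-level unit with a distinct user identifier make explicit an assumption the paper's proof leaves implicit.
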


\begin{proof}
Under user-level add/drop adjacency, the sketch changes by one clipped nonnegative vector $V_u$. Since $\|V_u\|_1\le B$, we also have $\|V_u\|_2\le B$. Under user-level replacement adjacency, one clipped contribution $V_u$ is replaced by another clipped contribution $V_{u'}$. Hence $\|V_{u'}-V_u\|_1 \le \|V_{u'}\|_1+\|V_u\|_1 \le 2B$.
Moreover, because $V_u,V_{u'}\in\mathbb R_+^m$, $\|V_{u'}-V_u\|_2^2 = \|V_{u'}\|_2^2+\|V_u\|_2^2-2\langle V_{u'},V_u\rangle \le \|V_{u'}\|_1^2+\|V_u\|_1^2 \le 2B^2$. Thus $\Delta_2(C)\le \sqrt{2}B$.
\end{proof}

\subsection{Release mechanisms}

\begin{theorem}[Laplace noisy-sketch mechanism]
\label{thm:laplace}
Let $C$ have global $\ell_1$-sensitivity $\Delta_1(C)$, and let $b=\Delta_1(C)/\varepsilon$. The mechanism $\widetilde C_j=C_j(D)+\eta_j$, $\eta_j \stackrel{\mathrm{i.i.d.}}{\sim}\Lap(b)$, where $\Lap(b)$ denotes the Laplace distribution with scale $b$, is $(\varepsilon,0)$-DP. In particular, $b=1/\varepsilon$ suffices for record-level add/drop adjacency, and $b=2/\varepsilon$ suffices for record-level replacement adjacency. Any atom set, noisy masses, normalized semantic-support values or bins computed with a public denominator, or semantic plan computed from $\widetilde C$ and public objects is post-processing.
\end{theorem}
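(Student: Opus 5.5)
The plan is to run the standard Laplace-mechanism argument through a density-ratio bound, then read off the two constants from the sensitivity lemmas and finish with the post-processing property.

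\textbf{Step 1: density ratio.} Fix adjacent datasets $D\sim D'$ under whichever adjacency relation defines $\Delta_1(C)$. The output $\widetilde C$ has a density on $\mathbb R^m$ because the $\eta_j$ are independent. For any $y\in\mathbb R^m$, the density is
\begin{equation*}
p_D(y)=\prod_{j=1}^m \frac{1}{2b}\exp\!\left(-\frac{|y_j-C_j(D)|}{b}\right).
\end{equation*}
Taking the ratio with $p_{D'}(y)$ and applying the reverse triangle inequality coordinatewise, $\bigl||y_j-C_j(D')|-|y_j-C_j(D)|\bigr|\le |C_j(D)-C_j(D')|$, gives
\begin{equation*}
\frac{p_D(y)}{p_{D'}(y)}\le \exp\!\left(\frac{\|C(D)-C(D')\|_1}{b}\right)\le \exp\!\left(\frac{\Delta_1(C)}{b}\right)=e^{\varepsilon}.
\end{equation*}

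\textbf{Step 2: from densities to events.} Integrating the density bound over an arbitrary measurable set $S\subseteq\mathbb R^m$ yields $\Pr[\widetilde C(D)\in S]\le e^{\varepsilon}\Pr[\widetilde C(D')\in S]$. The argument is symmetric in $D$ and $D'$, so this is $(\varepsilon,0)$-DP.

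\textbf{Step 3: the two specific scales.} These follow by substituting earlier results. Lemma~\ref{lem:adddrop} gives $\Delta_1(C)\le 1$ under record-level add/drop adjacency. Lemma~\ref{lem:replacement} gives $\Delta_1(C)\le 2$ under record-level replacement adjacency. One point needs care: the theorem sets $b=\Delta_1(C)/\varepsilon$ exactly, whereas the lemmas only give upper bounds. I will note that the inequality in Step 1 holds for any $b\ge \Delta_1(C)/\varepsilon$. Hence $b=1/\varepsilon$ and $b=2/\varepsilon$ suffice in the two cases.

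\textbf{Step 4: post-processing.} Each object listed in the theorem is a measurable function $f$ of $\widetilde C$ together with public objects fixed independently of $D$. These are $\Top_L(\widetilde C)$ with deterministic tie-breaking, the restriction $\widetilde C_{\widetilde S}$, clipping to $\widetilde c^+$, division by $d_{\mathrm{public}}$, binning, ranks, and the labels $\mathcal L_{\widetilde S}$, which depend only on $\widetilde S$ and on the public atom universe (Assumption~\ref{ass:public-atoms}). For any such $f$ and any event $T$, $\Pr[f(\widetilde C(D))\in T]=\Pr[\widetilde C(D)\in f^{-1}(T)]$, so the bound from Step 2 transfers directly. If $f$ uses independent randomness, I will condition on that randomness and then average.

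\textbf{Main obstacle.} There is no genuine obstacle; Step 1 is the only computation. The points needing care are the symmetry of the adjacency relation in Step 2 and the monotonicity in $b$ used in Step 3.
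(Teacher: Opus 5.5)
Your proposal is correct and follows the same route as the paper, which simply invokes the standard Laplace mechanism for $C(D)$ with sensitivity $\Delta_1(C)$ and then appeals to post-processing. You additionally unroll the density-ratio proof of the Laplace mechanism, and you make explicit the monotonicity in $b$ needed to pass from the sensitivity upper bounds in Lemmas~\ref{lem:adddrop} and~\ref{lem:replacement} to the stated scales $b=1/\varepsilon$ and $b=2/\varepsilon$, a step the paper leaves implicit.
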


\vspace{-4pt}

\begin{proof}
This is the Laplace mechanism applied to $C(D)$ with global $\ell_1$-sensitivity $\Delta_1(C)$. The remaining operations are post-processing.
\end{proof}

\begin{theorem}[Gaussian noisy-sketch mechanism]
\label{thm:gaussian}
Let $C$ have global $\ell_2$-sensitivity $\Delta_2(C)$. The mechanism $\widetilde C=C(D)+Z$, $Z\sim\mathcal N(0,\sigma^2 I_m)$, is $(\varepsilon,\delta)$-DP whenever $\sigma$ satisfies a valid Gaussian-mechanism calibration for sensitivity $\Delta_2(C)$. In particular, for $0<\varepsilon<1$ and $0 < \delta < 1$, the classical sufficient calibration is $\sigma\ge \frac{\Delta_2(C)\sqrt{2\log(1.25/\delta)}}{\varepsilon}$. Any atom set, released noisy masses, normalized semantic-support values or bins computed with a public denominator, or semantic plan computed from $\widetilde C$ and public objects is post-processing.
\end{theorem}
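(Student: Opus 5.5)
The plan is to reduce the statement to the standard Gaussian mechanism applied to the vector-valued query $D\mapsto C(D)$. Then we invoke closure under post-processing for everything computed downstream of $\widetilde C$. The only data-dependent step is the addition of $Z\sim\mathcal N(0,\sigma^2 I_m)$ to $C(D)$. Its privacy depends only on the global $\ell_2$-sensitivity $\Delta_2(C)$ under the adjacency relation in force, and that sensitivity is bounded by Lemmas~\ref{lem:adddrop}, \ref{lem:replacement}, and \ref{lem:user} according to the protected-unit setting.

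\textbf{Step 1: the general claim.} We fix adjacent $D\sim D'$ and set $\mu=C(D)$, $\mu'=C(D')$, so that $\|\mu-\mu'\|_2\le\Delta_2(C)$. The output laws are $\mathcal N(\mu,\sigma^2 I_m)$ and $\mathcal N(\mu',\sigma^2 I_m)$. By rotation invariance, the privacy loss random variable reduces to the one-dimensional case with shift $\|\mu-\mu'\|_2$. The loss is
\[
\log\frac{p_\mu(y)}{p_{\mu'}(y)}=\frac{\langle y-\mu,\mu-\mu'\rangle}{\sigma^2}+\frac{\|\mu-\mu'\|_2^2}{2\sigma^2},
\]
which under $y\sim p_\mu$ is distributed as $\mathcal N\!\big(\tfrac{s^2}{2\sigma^2},\tfrac{s^2}{\sigma^2}\big)$ with $s=\|\mu-\mu'\|_2$. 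A calibration is ``valid'' exactly when the resulting $(\varepsilon,\delta)$ condition holds for all $s\le\Delta_2(C)$. Examples are the analytic calibration of \citet{balle2018improving} or the classical bound. Since these guarantees are monotone in $s$, it suffices to check the worst case $s=\Delta_2(C)$, and the first claim then follows by definition.

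\textbf{Step 2: the classical calibration.} Here I would cite the standard result of \citet{dwork2014algorithmic}, Theorem~A.1, rather than rederive it. For $\varepsilon<1$ and $\sigma\ge \Delta_2\sqrt{2\log(1.25/\delta)}/\varepsilon$, one bounds $\Pr[|\mathcal N(0,\sigma^2)|>\sigma^2\varepsilon/\Delta_2-\Delta_2/2]\le\delta$ using the Gaussian tail bound. This yields $\Pr[M(D)\in E]\le e^{\varepsilon}\Pr[M(D')\in E]+\delta$. This tail estimate is the only nonroutine calculation, and the restriction $\varepsilon<1$ is what makes it close; I would import it as stated. Note that the result is dimension-free: it depends on $m$ only through $\Delta_2$, because the loss is one-dimensional.

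\textbf{Step 3: post-processing.} $\Top_L(\widetilde C)$ with deterministic public tie-breaking, the restriction $\widetilde C_{\widetilde S}$, the clipping $\max\{0,\cdot\}$, the division by the public $d_{\mathrm{public}}$, the binning rules, the ranks, and the label set $\mathcal L_{\widetilde S}$ are all functions of $\widetilde C$ and objects fixed independently of $D$ by Assumption~\ref{ass:public-atoms}. Any randomized map whose randomness is independent of $D$ given $\widetilde C$ preserves $(\varepsilon,\delta)$-DP. This completes the argument, and the concrete instances follow by substituting the values $\Delta_2=1$, $\sqrt2$, $B$, or $\sqrt2 B$ from the sensitivity lemmas.
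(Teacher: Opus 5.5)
Your proposal is correct and matches the paper's proof, which simply observes that $\widetilde C$ is the Gaussian mechanism applied to $C(D)$ with global $\ell_2$-sensitivity $\Delta_2(C)$ and that everything computed downstream is post-processing. Your explicit privacy-loss computation and your citation of Dwork--Roth's Theorem~A.1 fill in standard details the paper leaves implicit. One small point: that theorem is stated with the strict condition $c^2>2\log(1.25/\delta)$, so the boundary case $\sigma=\Delta_2(C)\sqrt{2\log(1.25/\delta)}/\varepsilon$ needs the extra observation that the privacy profile is continuous in $\sigma$.
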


\vspace{-4pt}

\begin{proof}
This is the Gaussian mechanism applied to $C(D)$ with global $\ell_2$-sensitivity $\Delta_2(C)$. The remaining operations are post-processing.
\end{proof}

\begin{proposition}[Sequential private atom admission]
\label{prop:private-admission}
Let $1\le L\le m$. Let $C(D)\in\mathbb R_+^m$ be an aggregate semantic sketch whose protected-unit contribution is nonnegative and has $\ell_1$-norm at most $B>0$. Under add/drop adjacency, adding or removing one protected unit changes $C(D)$ by a nonnegative vector with $\ell_1$-norm at most $B$. In the record-level setting, $B=1$. In the user-level setting, $B$ is the user contribution bound after clipping. For each coordinate $j\in[m]$, define the utility $u(D,j)=C_j(D)$. Under add/drop adjacency,
\begin{equation}
\Delta_{\mathrm{coord}} \coloneqq \sup_{D\sim D'}\sup_{j\in[m]}
|u(D,j)-u(D',j)| \le B.
\end{equation}
Under replacement adjacency between protected units whose contributions are nonnegative and have $\ell_1$-norm at most $B$, the same coordinate sensitivity bound $\Delta_{\mathrm{coord}} \le B$ holds.

Consider the following sequential admission procedure. At round $t\in\{1,\ldots,L\}$, after atoms $S_{t-1}$ have already been admitted, sample one new atom index $J_t\in[m]\setminus S_{t-1}$ using the exponential mechanism with utility $u(D,j)$, sensitivity $\Delta_{\mathrm{coord}}$, and privacy parameter $\varepsilon_{\mathrm{sel},t}$:
\begin{equation}
\Pr[J_t=j\mid S_{t-1}]
= \frac{\exp\left(\frac{\varepsilon_{\mathrm{sel},t}u(D,j)}{2\Delta_{\mathrm{coord}}}\right)}{
\sum_{\ell\in[m]\setminus S_{t-1}}\exp\left(\frac{\varepsilon_{\mathrm{sel},t}u(D,\ell)}{2\Delta_{\mathrm{coord}}}\right)},
\qquad j\in[m]\setminus S_{t-1}.
\end{equation}
Let $\widetilde S=\{J_1,\ldots,J_L\}$. Then $\widetilde S$ is
\begin{equation}
\left(\sum_{t=1}^L\varepsilon_{\mathrm{sel},t},0\right)\text{-DP}. \nonumber
\end{equation}

If, conditional on every realization $\widetilde S=S$, the restricted vector $C_S(D)$ is released by an $(\varepsilon_{\mathrm{val}},\delta_{\mathrm{val}})$-DP value-release mechanism, then the complete sparse plan $P(D)= \bigl( \widetilde S, \widetilde c_{\widetilde S}, \{(\ell(a_j),d(a_j)):j\in\widetilde S\}\bigr)$ is
\begin{equation}
\left( \sum_{t=1}^L\varepsilon_{\mathrm{sel},t} + \varepsilon_{\mathrm{val}}, \delta_{\mathrm{val}} \right)\text{-DP}. \nonumber
\end{equation}
For Gaussian selected-value release, for every fixed set $S\subseteq[m]$ with $|S|=L$, the restricted map $D\mapsto C_{S}(D)$ has $\ell_2$-sensitivity at most $B$ under add/drop adjacency and at most $\sqrt{2}B$ under replacement adjacency.
\end{proposition}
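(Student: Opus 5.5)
The proof has four parts: the coordinate-sensitivity bounds, privacy of each exponential-mechanism round, composition of the rounds and of the value release, and the restricted-vector sensitivity.

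\emph{Coordinate sensitivity.} Under add/drop adjacency, $C(D)-C(D')=\pm w$ for one nonnegative contribution $w$ with $\|w\|_1\le B$. For every $j$ this gives
\[
|u(D,j)-u(D',j)|=w_j\le\|w\|_1\le B .
\]
In the record-level setting, $w=\phi(x)$ and $B=1$ by \eqref{eq:phi-bound}. In the user-level setting, $w=V_u$ and $\|V_u\|_1\le B$ by the definition of $\mathsf{Clip}_B$.

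Under replacement adjacency, the difference is $w'-w$ with $w,w'\ge 0$. The point is to avoid the triangle-inequality bound $2B$ and use nonnegativity coordinatewise:
\[
|w'_j-w_j|\le\max\{w'_j,w_j\}\le B .
\]
This step, replacement with bound $B$ rather than $2B$, is the only place where care is needed. Everything else is standard.

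\emph{Each admission round.} Fix a round $t$ and a previously admitted set $S_{t-1}=S$. Sampling $J_t$ is the exponential mechanism on the public range $[m]\setminus S$, with utility sensitivity $\Delta_{\mathrm{coord}}$. I would verify the privacy bound directly. The numerator ratio is at most $e^{\varepsilon_{\mathrm{sel},t}/2}$. The normalizing sums also change by a factor of at most $e^{\varepsilon_{\mathrm{sel},t}/2}$, because every term does. So the round is $\varepsilon_{\mathrm{sel},t}$-DP conditional on any $S$. If $\Delta_{\mathrm{coord}}$ is replaced by the upper bound $B$, the conclusion still holds, since a larger denominator only adds privacy.

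\emph{Composition.} The rounds form an adaptive composition in which each mechanism takes the previous outputs as public input. Writing the joint probability of a sequence $(J_1,\dots,J_L)$ as a product of conditionals, the ratio of joint probabilities under adjacent datasets is at most $\exp\bigl(\sum_t\varepsilon_{\mathrm{sel},t}\bigr)$. The unordered set $\widetilde S$ is post-processing of this sequence, so $\widetilde S$ is $(\sum_t\varepsilon_{\mathrm{sel},t},0)$-DP.

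For the full plan, I condition on $\widetilde S=S$ and use that the value release is $(\varepsilon_{\mathrm{val}},\delta_{\mathrm{val}})$-DP for each fixed $S$. For any event $E$ on (set, values), decompose $\Pr[E]=\sum_S\Pr[\widetilde S=S]\Pr[E_S\mid S]$. Bound the inner factor by $e^{\varepsilon_{\mathrm{val}}}\Pr'[E_S\mid S]+\delta_{\mathrm{val}}$ and the weights by $e^{\varepsilon_{\mathrm{sel}}}\Pr'[\widetilde S=S]$. The $\delta_{\mathrm{val}}$ terms then sum to at most $\delta_{\mathrm{val}}$, because they are weighted by the probabilities $\Pr[\widetilde S=S]$, which sum to one. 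This is basic sequential composition with a pure first stage. The labels $\mathcal L_{\widetilde S}$ are a public function of $\widetilde S$ by Assumption~\ref{ass:public-atoms}, so including them is post-processing.

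\emph{Restricted-vector sensitivity.} For fixed $S$, the restriction of a nonnegative vector is nonnegative, and restriction does not increase norms. Under add/drop, $\|w_S\|_2\le\|w_S\|_1\le B$. Under replacement, the computation in Lemma~\ref{lem:user} applied to $w_S$ and $w'_S$ gives
\[
\|w'_S-w_S\|_2^2\le\|w'_S\|_1^2+\|w_S\|_1^2\le 2B^2,
\]
so the $\ell_2$-sensitivity is at most $\sqrt2 B$.
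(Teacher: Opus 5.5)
Your proposal is correct and follows the same route as the paper's proof. Both arguments use coordinatewise sensitivity via nonnegativity (each coordinate lies in $[0,B]$), per-round exponential-mechanism privacy for any fixed history, adaptive sequential composition with the value release, post-processing for the labels, and the nonnegative inner-product bound for the restricted $\ell_2$-sensitivity. Your direct verification of the exponential-mechanism ratio and your explicit $\delta_{\mathrm{val}}$ bookkeeping in the composition step only fill in details that the paper invokes as standard.
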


\begin{proof}
First consider coordinate sensitivity. Under add/drop adjacency, two adjacent datasets differ by the addition or removal of one protected unit. The contribution of that protected unit has $\ell_1$-norm at most $B$, so each coordinate changes by at most $B$. Hence $\Delta_{\mathrm{coord}}\le B$.
Under replacement adjacency, one protected-unit contribution is replaced by another. Both contributions are nonnegative and have $\ell_1$-norm at most $B$. Each coordinate of each contribution therefore lies in $[0,B]$, so the absolute change of any coordinate is at most $B$. Thus $\Delta_{\mathrm{coord}}\le B$ under replacement adjacency as well.
Fix a round $t$ and condition on any previously selected set $S_{t-1}$. The candidate set $[m]\setminus S_{t-1}$ is then fixed. The round-$t$ selection rule is the exponential mechanism with utility sensitivity $\Delta_{\mathrm{coord}}$ and privacy parameter $\varepsilon_{\mathrm{sel},t}$. Thus, for any fixed history, round $t$ is
$(\varepsilon_{\mathrm{sel},t},0)$-DP. Adaptive sequential composition gives $\left(\sum_{t=1}^L\varepsilon_{\mathrm{sel},t},0\right)\text{-DP}$ for the selected set $\widetilde S$.
For every fixed selected set $\widetilde S$, the value-release mechanism is assumed to be $(\varepsilon_{\mathrm{val}},\delta_{\mathrm{val}})$-DP for the map $D\mapsto C_{\widetilde S}(D)$. Adaptive composition between the private selection stage and the selected-value release gives the stated privacy bound for $(\widetilde S,\widetilde c_{\widetilde S})$. Under Assumption~\ref{ass:public-atoms}, the atom universe, labels, and descriptions are fixed independently of $D$. Attaching the labels and descriptions indexed by $\widetilde S$ is therefore post-processing.
For the Gaussian selected-value release stated above, the sensitivity of $D\mapsto C_{\widetilde S}(D)$ is at most $B$ in $\ell_2$ under add/drop adjacency. Under replacement adjacency, the change in the selected subvector is $v'_S-v_S$, where $v_S,v'_S\in\mathbb R_+^{|S|}$ are the restrictions of the two protected-unit contributions to $S$ and satisfy $\|v_S\|_1,\|v'_S\|_1\le B$. Hence $\|v'_S-v_S\|_2\le\sqrt{2}B$.
\end{proof}

\begin{remark}[Atom labels and atom admission]
\label{rem:labels-vs-admission}
Proposition~\ref{prop:private-admission} protects atom admission from an atom universe fixed independently of the protected target records. It does not cover an atom universe, label, or description derived non-privately from the protected target records. Such objects would require a separate differentially private release and corresponding privacy accounting.
\end{remark}

\begin{theorem}[Private plans imply private summaries]
\label{thm:endtoend}
Let $\mathcal M:\mathcal D\to\mathcal P$ be an $(\varepsilon,\delta)$-differentially private mechanism that releases the semantic plan $P(D)=\mathcal M(D)$. Let $\Pi$ be a decoding policy, $\mathsf V$ a verifier,
$\mathsf{Feedback}$ a feedback rule, $\mathsf{Fallback}$ a fallback rule, and $R_{\max}\in\mathbb N$ a maximum number of decoding attempts. Assume that these objects are fixed independently of the protected dataset $D$ and that the downstream procedure has access to $D$ only through $P(D)$. Set $H_0=\varnothing$. At each reached attempt $r\in\{1,\ldots,R_{\max}\}$, generate $Y_r=G\!\left(P(D),\Pi,H_{r-1};\omega_r\right)$, and let $I_r = \mathsf V\!\left(Y_r,P(D),\Pi,H_{r-1};\xi_r\right) \in\{0,1\}$, where $I_r=1$ denotes acceptance. If $I_r=1$, the procedure terminates and returns $Y_r$. Otherwise, it computes $F_r = \mathsf{Feedback}\left( Y_r,P(D),\Pi,H_{r-1},I_r \right)$ and updates $H_r=(H_{r-1},Y_r,I_r,F_r)$. Conditional on $P(D)$, the distribution of all downstream randomness is independent of $D$. If no candidate is accepted after $R_{\max}$ attempts, the procedure returns $\mathsf{Fallback}(P(D),\Pi)$. Then the final released summary is $(\varepsilon,\delta)$-differentially private.
\end{theorem}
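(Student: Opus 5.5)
The plan is to reduce the statement to the post-processing property of differential privacy. The key step is to show that the whole downstream procedure is a single randomized map $\mathcal F$ from plans to summaries, $\mathcal F:\mathcal P\to\mathcal Y$, whose output law depends on $D$ only through $P(D)$. Concretely, I will define a Markov kernel $K(p,\cdot)$ on $\mathcal Y$ for each fixed plan $p\in\mathcal P$. It is obtained by running the loop of the statement with $P(D)$ replaced by $p$: generate $Y_r=G(p,\Pi,H_{r-1};\omega_r)$, set $I_r=\mathsf V(Y_r,p,\Pi,H_{r-1};\xi_r)$, compute the feedback, stop at the first acceptance or after $R_{\max}$ attempts, and otherwise output $\mathsf{Fallback}(p,\Pi)$. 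Since $\Pi,\mathsf V,\mathsf{Feedback},\mathsf{Fallback},R_{\max}$ are fixed independently of $D$, and the randomness $(\omega_r,\xi_r)_r$ has a conditional law given $P(D)=p$ that does not depend on $D$, this kernel is well defined. Then the final summary has law $\Pr[Y\in E\mid D]=\int K(p,E)\,\mathcal M(D)(dp)$.

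I will verify well-definedness by induction on $r$. The history $H_{r}$ is a measurable function of $(p,\omega_1,\xi_1,\ldots,\omega_r,\xi_r)$. The stopping time (the first $r$ with $I_r=1$, capped at $R_{\max}$) is bounded by the data-independent $R_{\max}$. The output is therefore a measurable function of $p$ and finitely many random variables whose joint law is fixed given $p$. The boundedness of $R_{\max}$ is what prevents the number of attempts from becoming an unbounded, data-dependent quantity. In any case, the number of attempts still depends only on $p$ and the downstream randomness.

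Next I apply the standard post-processing argument for $(\varepsilon,\delta)$-DP with randomized post-processing. For adjacent $D\sim D'$ and any measurable $E\subseteq\mathcal Y$, the function $f(p)=K(p,E)$ takes values in $[0,1]$. The DP guarantee of $\mathcal M$, applied to the level sets, gives
\[
\int f\,d\mathcal M(D)=\int_0^1 \Pr_{\mathcal M(D)}[f>s]\,ds\le \int_0^1\bigl(e^{\varepsilon}\Pr_{\mathcal M(D')}[f>s]+\delta\bigr)ds=e^{\varepsilon}\int f\,d\mathcal M(D')+\delta .
\]
This is exactly $(\varepsilon,\delta)$-DP of the summary. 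Equivalently, I can cite the post-processing theorem of \citet{dwork2014algorithmic} directly, since randomized post-processing is a mixture of deterministic post-processings.

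The main obstacle is the formal handling of the downstream randomness. "Conditional on $P(D)$, independent of $D$" must be turned into a kernel that is identical for $D$ and $D'$. If that randomness were correlated with the internal noise of $\mathcal M$ beyond $P(D)$, the layer-cake step would fail. I will therefore state explicitly that the joint law of $(P(D),\omega,\xi)$ factorizes as $\mathcal M(D)(dp)\,Q(d\omega\,d\xi\mid p)$ with $Q$ independent of $D$, and then apply the argument above.
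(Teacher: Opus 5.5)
Your proposal is correct and takes the same route as the paper. Both arguments treat the entire generate--verify--feedback--fallback loop as a randomized map of $P(D)$ alone and then invoke closure of $(\varepsilon,\delta)$-DP under randomized post-processing. Your explicit kernel $K(p,\cdot)$, the factorization $\mathcal M(D)(dp)\,Q(d\omega\,d\xi\mid p)$ with $Q$ independent of $D$, and the layer-cake derivation make rigorous what the paper's one-paragraph proof asserts by citing post-processing.
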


\vspace{-3pt}

\begin{proof}
For any released plan $P(D)$, candidate generation, verification, feedback, regeneration, the stopping rule, and fallback generation depend on the protected dataset only through $P(D)$ and on information fixed independently
of $D$. Hence the complete downstream procedure defines a randomized post-processing of $P(D)$. Since $P(D)=\mathcal M(D)$ is $(\varepsilon,\delta)$-differentially private and differential privacy is preserved under randomized post-processing, the final released summary is $(\varepsilon,\delta)$-differentially private.
\end{proof}

\vspace{-3pt}
 
\begin{remark}[Privacy boundary]
\label{rem:privacy-boundary}
Theorem~\ref{thm:endtoend} applies only when every dataset-dependent object observed by the decoder or verifier is contained in the released DP plan. The conclusion need not hold if the decoder or verifier receives any additional object that depends on the protected dataset and is exposed outside $P(D)$ without a corresponding differential-privacy guarantee.
\end{remark}

Appendix~\ref{app:utility-analysis} gives the plan-level utility statements and proofs.

\section{Experiments}
\label{sec:experiments}

We evaluate \texttt{DP-SPIN} as a framework for differentially private aggregate insight generation. Our experiments address four questions. First, we measure whether the released plan recovers high-mass atoms of the non-private semantic sketch. Second, we evaluate how plan utility varies with the privacy budget, atom dictionary size, plan size, and assignment sparsity. Third, we test whether generated summaries remain consistent with the released plan under the public verifier. Fourth, we compare \texttt{DP-SPIN} with non-private plan and summary references, differentially private histogram baselines, and a URANIA-style public-keyword baseline.

\vspace{-3pt}

\paragraph{Datasets and protected units.}
We use three text corpora: consumer complaint narratives from the CFPB Consumer Complaint Database \citep{cfpb_consumer_complaint_database}, Amazon Reviews 2023 \texttt{All\_Beauty} reviews \citep{hou2026bridging}, and restaurant reviews from the Yelp Open
Dataset \citep{yelp_open_dataset}. In record-level experiments, one text record is one protected unit, and the adjacency relation is add/drop adjacency. In user-level experiments, all records associated with one user are treated as one protected unit. The aggregate semantic contribution of that user is deterministically clipped to a fixed $\ell_1$ contribution bound before release. CFPB is evaluated only in the record-level setting because the prepared complaint records do not contain a stable user identifier for grouping multiple records from the same individual. Amazon and Yelp are evaluated in both record-level and user-level settings because their review records contain stable reviewer identifiers. Dataset-specific construction details are given in Appendices~\ref{app:cfpb-atom-construction}, \ref{app:amazon-atom-construction}, and~\ref{app:yelp-atom-construction}.

\vspace{-3pt}

\paragraph{Privacy boundary.}
Each dataset is split into an auxiliary split and a protected target split. The auxiliary split is used to construct decoder-visible atom labels and to fit the fixed text representation used by the assignment map. The protected target split is treated as the protected corpus. It affects the released output only through bounded record-to-atom assignments, aggregation into the semantic sketch, and the differentially private plan-release mechanism. The decoder never receives raw target records, target excerpts, nearest neighbors, per-record contribution vectors, non-private sketch values, non-private representatives or exemplars, atoms outside the released plan, or labels derived non-privately from the protected target split. For user-level experiments, the auxiliary and protected target splits are user-disjoint. Additional configuration details are given in Appendix~\ref{app:experimental-configuration}.

\vspace{-3pt}

\paragraph{Assignment map.}
Each target record is mapped to a sparse nonnegative vector over the fixed semantic atom universe. The atom dictionary and text representation are fixed before the protected target split is processed. For each target record, the assignment map computes similarity scores to fixed atom descriptions, preserves the $K$ highest-scoring atoms under a fixed public tie-breaking rule, and normalizes the nonnegative weights with total $\ell_1$ mass at most one. Thus each record contribution satisfies $\|v_i\|_0\le K$ and $\|v_i\|_1\le 1$, matching the bounded-contribution condition used in the privacy analysis. The main configuration uses assignment sparsity $K=3$ and released plan size $L=10$.

\vspace{-3pt}

\paragraph{Release mechanisms.}
We evaluate three \texttt{DP-SPIN} plan-release mechanisms. The Gaussian sketch mechanism perturbs the full semantic sketch with Gaussian noise calibrated to the global $\ell_2$-sensitivity under the chosen adjacency relation, then selects the admitted atom set by post-processing. The Laplace sketch mechanism uses Laplace noise calibrated to the global $\ell_1$-sensitivity and applies the same post-processing step. The sparse private atom-admission mechanism first admits $L$ atoms by sequential exponential-mechanism selection and then releases noisy masses only for the admitted atoms. For the full-sketch mechanisms, atom selection is post-processing of the noisy sketch. For the sparse mechanism, atom admission and selected-value release both consume privacy budget.

\vspace{-4pt}

\begin{figure}[!t]
    \centering
    \includegraphics[width=0.77\linewidth]{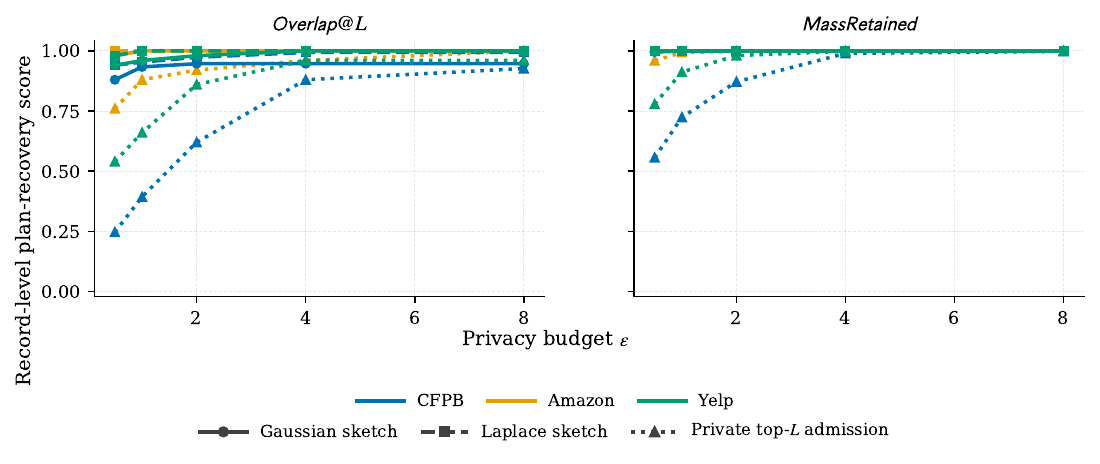}
    \vspace{-9pt}
    \caption{Record-level plan utility. The metrics are $\mathsf{Overlap}@L$ and $\mathsf{MassRetained}$, evaluated as the privacy budget $\varepsilon$ varies with assignment sparsity $K=3$ and released plan size $L=10$. Colors indicate datasets, and line styles indicate release mechanisms.}
    \vspace{-7pt}
    \label{fig:main-record-plan-utility}
\end{figure}
 
\begin{figure}[!t]
    \centering
    \includegraphics[width=0.77\linewidth]{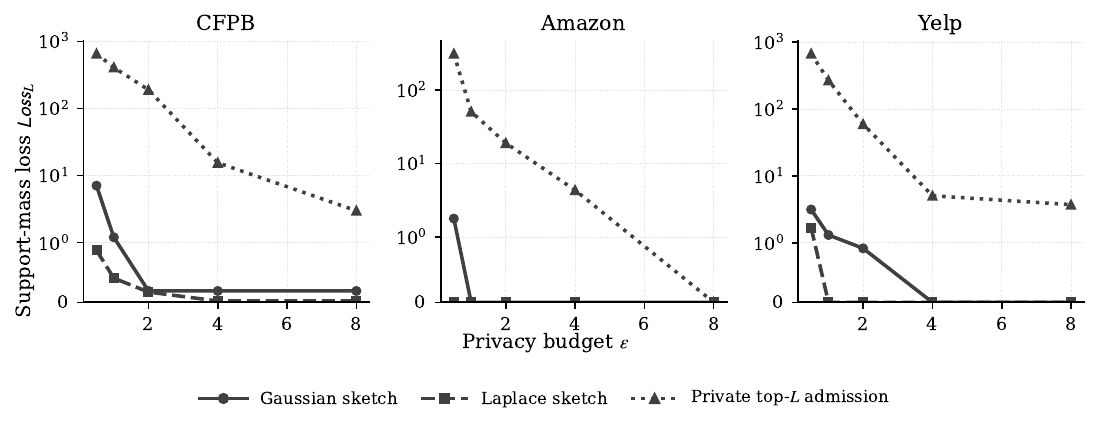}
    \vspace{-9pt}
    \caption{Record-level support-mass loss. The metric is $\mathsf{Loss}_L$, evaluated as the privacy budget $\varepsilon$ varies with assignment sparsity $K=3$ and released plan size $L=10$. Lower values indicate less non-private semantic mass lost by the released atom set. The vertical axis uses a symmetric logarithmic scale to show both near-zero full-sketch losses and larger private-admission losses.}
    \vspace{-7pt}
    \label{fig:main-record-support-loss}
\end{figure}

\paragraph{Baselines.}
We compare against non-private utility references and differentially private baselines. The non-private references include direct target-record summarization, heuristically redacted target-record summarization, non-private clustering, and non-private plan verbalization. These references measure utility when the decoder or preprocessing pipeline has access to target data; they do not satisfy the \texttt{DP-SPIN} decoder boundary. The differentially private baselines include fixed-vocabulary keyword and category histograms. Because no public reference implementation of \texttt{URANIA} was available at the time of our experiments, we independently implemented a \texttt{URANIA}-style baseline from the published method description. For comparison with \texttt{DP-SPIN}, we adapted this implementation to the matched public-vocabulary protocol used in our experiments. Our baseline preserves \texttt{URANIA}'s principal release structure: differentially private clustering, noisy cluster-size filtering, differentially private per-cluster keyword histograms, and summary generation from released keywords and noisy cluster-size information. The comparison evaluates this release structure rather than reproducing the original \texttt{URANIA} results.

\vspace{-4pt}

\paragraph{Metrics.}
We report $\mathsf{Overlap}@L$, kept non-private mass fraction, support-mass loss, and raw noisy-value errors relative to the non-private semantic sketch. Noisy-value errors are computed before decoder-facing nonnegative clipping. Because the template decoder constructs each summary from admitted plan entries, its verifier outcomes serve as consistency checks rather than measures of plan utility. For free-form decoded summaries, the public verifier checks consistency with the released plan. As secondary verbalization measures, we report lexical similarity to a non-private-plan reference summary, including token, $n$-gram, keyphrase, and TF--IDF measures. These text-level measures are not part of the privacy mechanism. Metric definitions are given in Appendix~\ref{app:evaluation-metrics}, and plan-level utility guarantees are given in Appendix~\ref{app:utility-analysis}.

\vspace{-4pt}

\subsection{Plan utility across datasets}
\label{subsec:main-plan-utility}

Figure~\ref{fig:main-record-plan-utility} shows record-level plan utility with $K=3$ and $L=10$. The Gaussian and Laplace full-sketch mechanisms preserve a large fraction of the semantic mass carried by the non-private top-$L$ support over the tested privacy budgets. Exact support recovery, measured by $\mathsf{Overlap}@L$, can be more sensitive than $\mathsf{MassRetained}$ when atoms near the non-private top-$L$ boundary have similar masses. A mechanism can therefore miss some non-private top-$L$ atoms while keeping most of the semantic mass of the non-private plan.

Figure~\ref{fig:main-record-support-loss} shows the corresponding support-mass loss $\mathsf{Loss}_L$. Under the main record-level configuration, the full-sketch mechanisms have lower support-mass loss than sparse private atom admission at the smaller tested $\varepsilon$ values. The mechanisms allocate the privacy budget differently. The full-sketch mechanisms spend the privacy budget on the noisy sketch and select atoms by post-processing, whereas sparse private atom admission allocates part of the privacy budget to atom admission before selected-value release. As $\varepsilon$ increases, the support-mass loss generally decreases, especially for private top-$L$ admission. Since $\mathsf{Loss}_L$ is unnormalized, its numerical scale depends on the dataset size and aggregate sketch mass; values should therefore be compared within each dataset panel.

\vspace{-4pt}

\subsection{Record-level and user-level settings}
\label{subsec:record-user-results}

Amazon and Yelp support both record-level and user-level evaluation. In the record-level setting, one review is the protected unit. In the user-level setting, one reviewer is the protected unit. All target records from the same reviewer are aggregated into a user contribution, which is clipped before summation across users and differentially private plan release.

Figure~\ref{fig:main-user-plan-utility} shows user-level plan utility for Amazon and Yelp at $K=3$, $L=10$, and user contribution bound $B=1$. The Gaussian and Laplace full-sketch mechanisms have $\mathsf{MassRetained}$ close to one across the tested privacy budgets. Private top-$L$ admission shows a larger dependence on the privacy budget, especially at smaller $\varepsilon$, but its plan utility improves as $\varepsilon$ increases. The decoder interface is unchanged in the user-level setting: it receives only the released differentially private plan.

\begin{figure}[!t]
    \centering
    \includegraphics[width=0.78\linewidth]{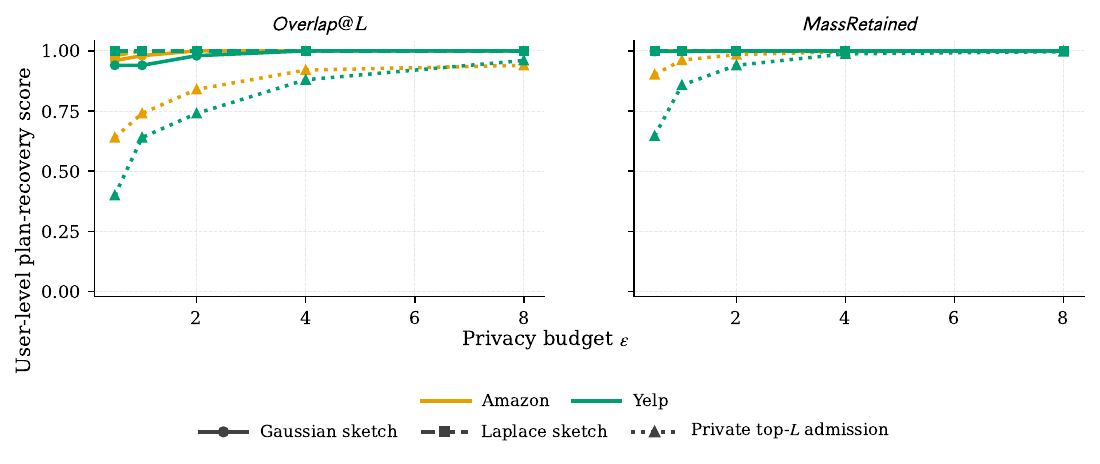}
    \vspace{-5pt}
    \caption{User-level plan utility for Amazon and Yelp. The metrics are $\mathsf{Overlap}@L$ and $\mathsf{MassRetained}$, evaluated as the privacy budget $\varepsilon$ varies with assignment sparsity $K=3$, released plan size $L=10$, and user contribution bound $B=1$. Colors indicate datasets, and line styles indicate release mechanisms.}
    \label{fig:main-user-plan-utility}
\end{figure}

\vspace{-2pt}

\begin{figure}[!t]
    \centering
    \includegraphics[width=0.78\linewidth]{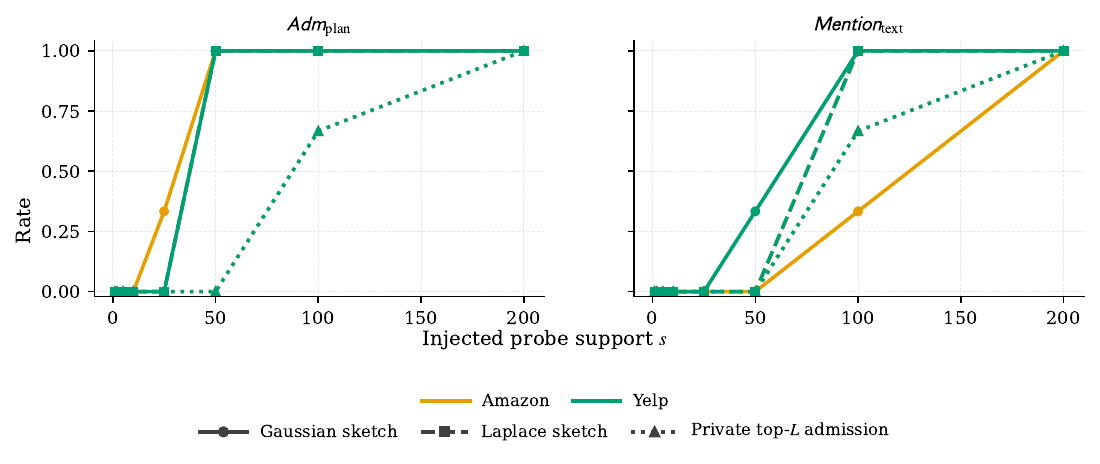}
    \vspace{-5pt}
    \caption{
    Controlled probe-atom behavior in the record-level setting. We report plan-admission rate $\mathsf{Adm}_{\mathrm{plan}}$ and summary mention rate $\mathsf{Mention}_{\mathrm{text}}$ as the injected probe support $s$ varies, with $K=3$ and $L=10$. The probe atom is a public coordinate added before the differentially private release.
    }
    \label{fig:main-controlled-probe-record}
\end{figure}

\subsection{Controlled probe-atom experiment}
\label{subsec:controlled-probe-main}

Figure~\ref{fig:main-controlled-probe-record} reports a controlled probe-atom experiment in the record-level setting. The probe atom is included in the fixed semantic atom universe before release, and synthetic probe records are inserted only into the protected target split. The experiment therefore evaluates private admission of a known public coordinate, not discovery of a new label from protected text.
Probe-atom admission is infrequent at small injected support values and generally increases as the injected support $s$ increases. The summary mention rate $\mathsf{Mention}_{\mathrm{text}}$ follows the same trend but can be lower than the plan-admission rate because an admitted atom need not be verbalized in every generated summary. The probe experiment characterizes how aggregate support affects admission; the differential-privacy guarantee follows from the mechanism analysis rather than from this empirical behavior.

\begin{table*}[t]
\centering
\vspace{5pt}
\caption{Release-conditioned OpenAI evaluation of selected record-level summaries across datasets. Rows compare the \texttt{DP-SPIN} release condition, differentially private keyword and category histograms, and the URANIA-style public-keyword baseline. Each summary is evaluated only with respect to the object released to its decoder. Scores are assigned on a 1--5 scale and reported as mean $\pm$ sample standard deviation across the judged summaries in each row, where each summary corresponds to one experimental seed. The text-safety score evaluates only the generated text and is not a differential-privacy guarantee.
}
\label{tab:openai-main-comparison}
\scriptsize
\resizebox{0.99\textwidth}{!}{%
\begin{tabular}{lllccccccc}
\toprule
Dataset & Method & Release configuration & $\varepsilon$ & Protected units & Judged summaries & Coverage & Insightfulness & Faithfulness  & Text safety \\
\midrule
CFPB & DP-SPIN & record; Gaussian sketch + top-$L$; m=200; K=3; L=10 & 1 & 2000 & 3 & 4.67 $\pm$ 0.58 & 3.67 $\pm$ 0.58 & 4.67 $\pm$ 0.58 & 5.00 $\pm$ 0.00 \\
CFPB & DP keyword histogram & fixed public keywords; Laplace histogram; top-10 & 1 & 2000 & 3 & 5.00 $\pm$ 0.00 & 3.00 $\pm$ 0.00 & 5.00 $\pm$ 0.00 & 5.00 $\pm$ 0.00 \\
CFPB & DP category histogram & fixed Issue universe; Laplace histogram; top-10 & 1 & 2000 & 3 & 5.00 $\pm$ 0.00 & 4.00 $\pm$ 0.00 & 5.00 $\pm$ 0.00 & 5.00 $\pm$ 0.00 \\
CFPB & URANIA-style public keywords & DP clustering; DP public-keyword histograms; clusters=10; keywords=10 & 1 & 2000 & 3 & 4.33 $\pm$ 0.58 & 4.00 $\pm$ 0.00 & 3.67 $\pm$ 0.58 & 5.00 $\pm$ 0.00 \\
Amazon & DP-SPIN & record; Gaussian sketch + top-$L$; m=46; K=3; L=10 & 1 & 2000 & 3 & 5.00 $\pm$ 0.00 & 4.00 $\pm$ 0.00 & 5.00 $\pm$ 0.00 & 5.00 $\pm$ 0.00 \\
Amazon & DP keyword histogram & fixed public keywords; Laplace histogram; top-10 & 1 & 2000 & 3 & 5.00 $\pm$ 0.00 & 3.67 $\pm$ 0.58 & 5.00 $\pm$ 0.00 & 5.00 $\pm$ 0.00 \\
Amazon & DP category histogram & fixed rating\_bin universe; Laplace histogram; top-3 & 1 & 2000 & 3 & 5.00 $\pm$ 0.00 & 3.33 $\pm$ 0.58 & 5.00 $\pm$ 0.00 & 5.00 $\pm$ 0.00 \\
Amazon & URANIA-style public keywords & DP clustering; DP public-keyword histograms; clusters=10; keywords=10 & 1 & 2000 & 3 & 4.00 $\pm$ 0.00 & 4.00 $\pm$ 0.00 & 4.00 $\pm$ 1.00 & 5.00 $\pm$ 0.00 \\
Yelp & DP-SPIN & record; Gaussian sketch + top-$L$; m=100; K=3; L=10 & 1 & 5000 & 3 & 4.67 $\pm$ 0.58 & 3.33 $\pm$ 0.58 & 5.00 $\pm$ 0.00 & 5.00 $\pm$ 0.00 \\
Yelp & DP keyword histogram & fixed public keywords; Laplace histogram; top-10 & 1 & 2000 & 3 & 5.00 $\pm$ 0.00 & 4.00 $\pm$ 0.00 & 5.00 $\pm$ 0.00 & 5.00 $\pm$ 0.00 \\
Yelp & DP category histogram & fixed rating\_bin universe; Laplace histogram; top-3 & 1 & 2000 & 3 & 5.00 $\pm$ 0.00 & 3.33 $\pm$ 0.58 & 5.00 $\pm$ 0.00 & 5.00 $\pm$ 0.00 \\
Yelp & URANIA-style public keywords & DP clustering; DP public-keyword histograms; clusters=10; keywords=10 & 1 & 2000 & 3 & 4.33 $\pm$ 0.58 & 4.00 $\pm$ 0.00 & 3.67 $\pm$ 0.58 & 5.00 $\pm$ 0.00 \\
\bottomrule
\end{tabular}%
}
\end{table*}

\vspace{2pt}

\begin{table*}[b]
\centering
\vspace{5pt}
\caption{
Release-conditioned OpenAI evaluation of selected \texttt{DP-SPIN} summaries across datasets. The decoder receives only the released differentially private semantic plan and public decoding instructions. 
The configuration column specifies the protected unit, release mechanism, atom dictionary size, assignment sparsity, and released plan size. Scores are reported as mean $\pm$ sample standard deviation across the
judged summaries in each row, where each summary corresponds to one experimental seed.
}
\label{tab:openai-main-dpspin}
\scriptsize
\resizebox{0.99\textwidth}{!}{%
\begin{tabular}{llllcccccc}
\toprule
Dataset & Method & Release configuration & $\varepsilon$ & Protected units & Judged summaries & Coverage & Insightfulness & Faithfulness & Clarity \\
\midrule
CFPB & DP-SPIN & record; Gaussian sketch + top-$L$; m=200; K=3; L=10 & 1 & 2000 & 3 & 4.67 $\pm$ 0.58 & 3.67 $\pm$ 0.58 & 4.67 $\pm$ 0.58 & 4.67 $\pm$ 0.58 \\
Amazon & DP-SPIN & record; Gaussian sketch + top-$L$; m=46; K=3; L=10 & 1 & 2000 & 3 & 5.00 $\pm$ 0.00 & 4.00 $\pm$ 0.00 & 5.00 $\pm$ 0.00 & 5.00 $\pm$ 0.00 \\
Yelp & DP-SPIN & record; Gaussian sketch + top-$L$; m=100; K=3; L=10 & 1 & 5000 & 3 & 4.67 $\pm$ 0.58 & 3.33 $\pm$ 0.58 & 5.00 $\pm$ 0.00 & 4.33 $\pm$ 0.58 \\
\bottomrule
\end{tabular}%
}
\end{table*}

\vspace{-2pt}

\subsection{Ablations}
\label{subsec:ablations}

Appendices~\ref{app:additional-results-cfpb}, \ref{app:additional-results-amazon}, and~\ref{app:additional-results-yelp} report dataset-specific ablations over the atom dictionary size $m$, assignment sparsity $K$, released plan size $L$, protected-unit count, and, for user-level runs, the clipping bound $B$. These experiments distinguish exact support recovery from kept semantic mass. Increasing $m$ includes additional atom coordinates but can reduce the mass margin around the non-private top-$L$ boundary. Increasing $L$ changes both the target support and the amount of non-private semantic mass that can be kept. For comparable record distributions, reducing the number of protected units typically reduces aggregate semantic mass, while the relevant global sensitivity bound remains unchanged when the per-unit contribution bound is fixed. We therefore report both $\mathsf{Overlap}@L$ and $\mathsf{Loss}_L$. The former measures exact support recovery, while the latter measures the non-private semantic mass lost by the released atom set.

\vspace{-4pt}

\subsection{Plan verbalization and baseline summaries}
\label{subsec:verbalization-baselines}

We next evaluate whether a language model can verbalize released plans without receiving raw protected records. In \texttt{DP-SPIN}, the decoder receives only admitted atom labels and descriptions fixed independently of the protected target records, released noisy masses or support bins, and public decoding instructions. It does not receive raw target records, target excerpts, nearest neighbors, per-record contribution vectors, non-private representatives, or target-derived labels outside the released plan.

Table~\ref{tab:openai-main-comparison} reports release-conditioned OpenAI evaluations for selected record-level outputs. Each method is evaluated with respect to the object released to its decoder. The table therefore measures verbalization quality conditioned on the released object, rather than plan-level recovery. DP keyword and category histograms can receive high scores when their fixed public vocabularies align with dominant dataset categories. The URANIA-style public-keyword baseline uses a different private intermediate representation, based on preserved clusters and differentially private public-keyword histograms. We include it as a matched public-vocabulary baseline, not as a full reproduction of URANIA.

Table~\ref{tab:openai-main-dpspin} reports selected \texttt{DP-SPIN} verbalization scores across datasets. The release-conditioned evaluations separate faithfulness from insightfulness, which depends on the semantic content available in the released plan. Table~\ref{tab:openai-main-qualitative} gives selected generated summaries. These examples illustrate the decoder interface: the language model produces aggregate text from a differentially private semantic plan, and all dataset-dependent quantities visible to the decoder are contained in the released plan.

\vspace{2pt}

Appendices~\ref{app:additional-results-cfpb}, \ref{app:additional-results-amazon}, and~\ref{app:additional-results-yelp} report the full dataset-specific tables, qualitative examples, private/non-private reference comparisons, and baseline comparisons. The OpenAI scores are evaluation measures and are not part of the privacy mechanism. The formal privacy guarantee follows from the differentially private plan release and post-processing.


\begin{table}[h]
\centering
\caption{
Selected summaries based on OpenAI-generated outputs from \texttt{DP-SPIN} plans across datasets. Each summary is generated only from the released differentially private semantic plan and public decoding instructions. DP-SPIN examples are terminology-normalized to the current semantic-support vocabulary; the admitted themes are unchanged. Summaries are truncated for display; full qualitative examples are reported in the appendices. 
}
\label{tab:openai-main-qualitative}
\scriptsize
\setlength{\tabcolsep}{3pt}
\renewcommand{\arraystretch}{1.04}
\resizebox{0.99\textwidth}{!}{%
\begin{tabular}{p{0.11\textwidth}p{0.26\textwidth}p{0.58\textwidth}}
\toprule
Dataset & Release configuration & Generated summary \\
\midrule
CFPB & record; Gaussian sketch + top-$L$; m=200; $K=3$; $L=10$ &
The aggregate summary identifies moderate semantic support for incorrect
information on credit reports and low semantic support for the admitted
credit-reporting, identity-theft, and debt-collection themes. -
Moderate-support theme: incorrect information on credit reports. -
Low-support themes: credit-reporting, identity-theft, and debt-collection
themes. \\ \hline
\addlinespace[1pt]
Amazon & record; Gaussian sketch + top-$L$; $m=46$; $K=3$; $L=10$ & The aggregate summary identifies high semantic support for easy application, sensitive skin, and waste money. Moderate semantic support is reported for product quality and long hair, while dry hair, natural hair, hair quality, dry skin, and skin feel have low support. - High-support themes: easy application, sensitive skin, waste money. - Moderate-support themes: product quality, long hair. - Low-support themes: dry hair, natural hair, hair quality, dry skin, skin feel.\\\hline
\addlinespace[1pt]
Yelp & record; Gaussian sketch + top-$L$; $m=100$; $K=3$; $L=10$ & The aggregate summary identifies low semantic support across the admitted dining-related themes, including long wait times, friendly staff, customer service, delicious food, pizza places, service staff, food places, Mexican food, lunch specials, and bar restaurants. - Long wait times and customer service have low released support. - Friendly staff has low released support. - Several food-related themes, including Mexican food and... \\
\addlinespace[1pt]
\bottomrule
\end{tabular}%
}
\end{table}

\section{Conclusion}

We introduced \texttt{DP-SPIN}, a framework for differentially private aggregate insight generation from text collections. Each protected record is mapped inside the trusted curator to a bounded sparse semantic vector. For record-level privacy, these vectors are summed directly; for user-level privacy, they are aggregated and clipped per user before summation. The resulting semantic sketch is converted into a differentially private semantic plan containing admitted atoms and noisy masses. Normalized semantic-support values and support bins, when reported, are obtained by post-processing with a public denominator. The language model receives only the released plan and information fixed independently of the protected target records. The final summary is differentially private by post-processing of the released plan. We evaluate plan utility using raw noisy-value error, top-$L$ support recovery, preserved non-private semantic-mass fraction, and support-mass loss. Experiments on CFPB complaint narratives, Amazon All Beauty reviews, and Yelp restaurant reviews use disjoint auxiliary and protected target splits. The experiments characterize plan utility across privacy budgets, evaluate controlled probe-atom admission, and measure consistency of generated summaries with the released plan under the public verifier.

\section*{Acknowledgment}
This work was supported by the Swiss National Science Foundation (SNSF) under Grant No.~222339. The author thanks Prof.~Flavio~P.~Calmon for insightful discussions and helpful suggestions.

\bibliographystyle{plainnat}
\bibliography{references}

\clearpage

\clearpage
\appendix

\AppendixOnlyTOC

\clearpage

\appsection{Utility Analysis}
\label{app:utility-analysis}

In this appendix, we analyze the plan-level utility of \texttt{DP-SPIN}. We evaluate utility at the level of the released semantic plan, not the generated natural-language summary itself. We quantify perturbation of the non-private semantic sketch, recovery of high-mass atoms, and support-mass loss relative to the non-private sketch. We evaluate consistency of the generated text with the released plan separately using the decoder-contract metrics in Appendix~\ref{app:evaluation-metrics}.

\subsection{Full-sketch release}

Let $C=C(D)\in\mathbb R_+^m$ denote the non-private semantic sketch, and let $\widetilde C=C+Z$ be a noisy full-sketch release. Let $C_{(1)}\ge \cdots \ge C_{(m)}$ denote the order statistics of the entries of $C$. For $L<m$, define the top-$L$ margin $\Gamma_L=C_{(L)}-C_{(L+1)}$, and set $\Gamma_m=\infty$. All $\Top_L(\cdot)$ operations use a fixed public deterministic tie-breaking rule. Define
\begin{equation}
S_L^\star=\Top_L(C),
\qquad \widetilde S_L=\Top_L(\widetilde C).    
\end{equation}
For any set $S\subseteq[m]$, define its non-private semantic mass by
\begin{equation}
M(S;C)=\sum_{j\in S}C_j .
\end{equation}
The support-mass loss of the released support is
\begin{equation}
\mathsf{Loss}_L(C,\widetilde S_L) = M(S_L^\star;C)-M(\widetilde S_L;C).
\end{equation}

\subsection{Uniform coordinate error}

We next give high-probability bounds on the maximum coordinate perturbation of the noisy full sketch.

\begin{proposition}[Gaussian uniform coordinate error]
\label{prop:app-gaussian-uniform}
Let $Z\sim\mathcal N(0,\sigma^2 I_m)$. For any $\beta\in(0,1)$,
\begin{equation}
\Pr\!\left[ \|Z\|_\infty
\le \sigma\sqrt{2\log\frac{2m}{\beta}} \right]
\ge 1-\beta .
\end{equation}
\end{proposition}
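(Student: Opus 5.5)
The plan is a union bound over the $m$ coordinates, combined with a one-dimensional Gaussian tail bound. Set $t=\sigma\sqrt{2\log(2m/\beta)}$. The coordinates $Z_1,\ldots,Z_m$ are each distributed as $\mathcal N(0,\sigma^2)$, and no independence between them is needed. Then
\begin{equation}
\Pr\!\left[\|Z\|_\infty>t\right]\le\sum_{j=1}^m\Pr\!\left[|Z_j|>t\right].
\end{equation}
It therefore suffices to show $\Pr[|Z_j|>t]\le\beta/m$ for each $j$.

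For a single coordinate, I will use the standard Chernoff bound $\Pr[|Z_j|>t]\le 2\exp(-t^2/(2\sigma^2))$. To derive it, write $\Pr[Z_j>t]\le e^{-\lambda t}\mathbb E[e^{\lambda Z_j}]=\exp(-\lambda t+\lambda^2\sigma^2/2)$ and optimize at $\lambda=t/\sigma^2$. This gives the one-sided bound $\exp(-t^2/(2\sigma^2))$, and symmetry supplies the factor $2$. Substituting $t^2/(2\sigma^2)=\log(2m/\beta)$ gives $\Pr[|Z_j|>t]\le 2\cdot\beta/(2m)=\beta/m$.

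Summing over the $m$ coordinates bounds the failure probability by $\beta$, which is the claimed bound $1-\beta$ on the complementary event. The only step that needs any care is the sub-Gaussian tail constant. The Chernoff bound $2e^{-t^2/(2\sigma^2)}$ is exactly what makes the $2m$ inside the logarithm come out correctly, so a cruder tail bound would not recover the stated constant. Since $\beta\in(0,1)$ and $m\ge1$, we have $\log(2m/\beta)>0$, so $t$ is well defined. The argument holds for every $\sigma>0$, and the case $\sigma=0$ is trivial.
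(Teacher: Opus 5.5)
Your proposal is correct and takes essentially the same route as the paper: a per-coordinate Gaussian tail bound $\Pr\{|Z_j|>t\}\le 2\exp(-t^2/(2\sigma^2))$, a union bound over the $m$ coordinates, and the choice $t=\sigma\sqrt{2\log(2m/\beta)}$. Your Chernoff derivation of the tail bound and your remark that independence of the coordinates is not needed are minor additions that the paper leaves implicit.
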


\begin{proof}
For each coordinate $j\in[m]$, the Gaussian tail bound gives
\begin{equation}
\Pr\{|Z_j|>t\}
\le 2\exp\!\left(-\frac{t^2}{2\sigma^2}\right),
\qquad t\ge0 .
\end{equation}
By the union bound,
\begin{equation}
\Pr\{\|Z\|_\infty>t\}
\le \sum_{j=1}^m \Pr\{|Z_j|>t\}
\le 2m\exp\!\left(-\frac{t^2}{2\sigma^2}\right).
\end{equation}
Taking $t=\sigma\sqrt{2\log\frac{2m}{\beta}}$ gives $\Pr\{\|Z\|_\infty>t\}\le\beta$, and hence $\Pr\{\|Z\|_\infty\le t\}\ge 1-\beta$.
\end{proof}

\begin{proposition}[Laplace uniform coordinate error]
\label{prop:app-laplace-uniform}
Let $Z= (Z_1,\ldots,Z_m)$, where $Z_1,\ldots,Z_m$ are independent Laplace random variables with scale $b>0$. Then, for any $\beta\in(0,1)$, 
\begin{equation}
\Pr\!\left[ \|Z\|_\infty \le b\log\frac{m}{\beta} \right]
\ge 1-\beta .
\end{equation}
\end{proposition}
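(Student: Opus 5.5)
I will follow the same template as the proof of Proposition~\ref{prop:app-gaussian-uniform}: bound each coordinate's tail exactly, then combine coordinates with a union bound. For a single Laplace variable $Z_j$ with scale $b$ and density $\frac{1}{2b}e^{-|z|/b}$, the two-sided tail has the closed form
\begin{equation}
\Pr\{|Z_j|>t\}=2\int_t^\infty \frac{1}{2b}e^{-z/b}\,dz = e^{-t/b},\qquad t\ge 0 .
\end{equation}
No concentration inequality is needed here; this is a one-line computation from the density. Note that the two-sided tail carries no factor of $2$, which is why the threshold will involve $\log(m/\beta)$ rather than $\log(2m/\beta)$.

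Next I apply the union bound over the $m$ coordinates:
\begin{equation}
\Pr\{\|Z\|_\infty>t\}\le \sum_{j=1}^m \Pr\{|Z_j|>t\}= m\,e^{-t/b}.
\end{equation}
Independence is not required for this step. I then set $t=b\log\frac{m}{\beta}$, which gives $m e^{-t/b}=\beta$. Taking complements yields $\Pr\{\|Z\|_\infty\le b\log(m/\beta)\}\ge 1-\beta$.

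The only point needing care is that the threshold must be a valid tail argument, i.e.\ $t\ge 0$. This holds because $\beta<1\le m$ gives $\log(m/\beta)>0$. There is no real obstacle; the main thing to get right is the exact tail identity with constant $1$, since that constant is what fixes the stated threshold.
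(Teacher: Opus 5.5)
Your proof is correct and is the same as the paper's: you compute the exact two-sided tail $\Pr\{|Z_j|>t\}=e^{-t/b}$, take a union bound over the $m$ coordinates, and choose $t=b\log(m/\beta)$. Your remarks that independence is not needed and that $t>0$ because $\beta<1\le m$ are valid points that the paper leaves implicit.
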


\begin{proof}
For a Laplace random variable with scale $b$, $\Pr\{|Z_j|>t\}=\exp(-t/b)$ for $t\ge0$. Therefore, by the union bound,
\begin{equation}
\Pr\{\|Z\|_\infty>t\}
\le \sum_{j=1}^m \Pr\{|Z_j|>t\} = m\exp(-t/b).
\end{equation}
Taking $t=b\log\frac{m}{\beta}$ gives $\Pr\{\|Z\|_\infty>t\} \le\beta$, and hence the claim.
\end{proof}

\subsection{Top-\texorpdfstring{$L$}{L} recovery and support-mass loss}

We next show that a uniform coordinate-error bound controls both exact top-$L$ recovery and support-mass loss.

\begin{proposition}[Plan recovery from uniform sketch error]
\label{prop:app-plan-recovery}
Suppose that $\|\widetilde C-C\|_\infty\le \tau$. Then $\mathsf{Loss}_L(C,\widetilde S_L)\le 2L\tau$. If, in addition, $\Gamma_L>2\tau$, then $\widetilde S_L=S_L^\star$.
\end{proposition}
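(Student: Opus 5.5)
The plan is to work entirely with the deterministic event $\|\widetilde C-C\|_\infty\le\tau$ and compare the two top-$L$ sets through a matching between the atoms they disagree on. Write $A=S_L^\star\setminus\widetilde S_L$ and $A'=\widetilde S_L\setminus S_L^\star$. Both sets have exactly $L$ elements, so $|A|=|A'|=:k\le L$, and
\[
\mathsf{Loss}_L(C,\widetilde S_L)=\sum_{j\in A}C_j-\sum_{i\in A'}C_i .
\]

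For the loss bound, the key inequality is that every $i\in A'$ and every $j\in A$ satisfy $\widetilde C_i\ge\widetilde C_j$. This holds because $i$ is selected by $\Top_L(\widetilde C)$ and $j$ is not. Under the fixed tie-breaking rule a tie is allowed, but it still gives the non-strict inequality. Fix any bijection $\pi:A\to A'$. The uniform error bound then gives
\[
C_j\le \widetilde C_j+\tau\le \widetilde C_{\pi(j)}+\tau\le C_{\pi(j)}+2\tau .
\]
Summing over $j\in A$ yields $\mathsf{Loss}_L\le 2k\tau\le 2L\tau$. Equivalently, one can write $\mathsf{Loss}_L = [M(S^\star;C)-M(S^\star;\widetilde C)] + [M(S^\star;\widetilde C)-M(\widetilde S;\widetilde C)] + [M(\widetilde S;\widetilde C)-M(\widetilde S;C)]$. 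The middle term is $\le 0$ by optimality of $\widetilde S$ for $\widetilde C$, and the outer terms cancel on the common atoms, leaving at most $2k\tau$. I would present whichever version reads more cleanly.

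For exact recovery under $\Gamma_L>2\tau$, I argue by contradiction. If $\widetilde S_L\ne S_L^\star$, pick $j\in A$ and $i\in A'$. Since $j\in S_L^\star$ and $i\notin S_L^\star$, we have $C_j\ge C_{(L)}$ and $C_i\le C_{(L+1)}$, hence $C_j-C_i\ge\Gamma_L>2\tau$. On the other hand, $\widetilde C_i\ge \widetilde C_j$ combined with the error bound gives $C_i+\tau\ge C_j-\tau$, i.e. $C_j-C_i\le 2\tau$. These conflict, so $A=A'=\emptyset$. The case $L=m$ is trivial, since both sets equal $[m]$ and $\Gamma_m=\infty$.

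The main obstacle is the handling of ties. The inequality $C_j\ge C_{(L)}\ge C_{(L+1)}\ge C_i$ uses only membership in the top-$L$ set together with the order statistics. Under ties, $C_j$ may equal $C_{(L)}$, but the inequality remains valid. Since $\Gamma_L>2\tau\ge0$ forces $C_{(L)}>C_{(L+1)}$, the set $S_L^\star$ is uniquely determined regardless of tie-breaking. I would state this explicitly so that the margin argument does not depend on the tie-breaking convention.
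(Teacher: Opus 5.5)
Your proposal is correct and is essentially the paper's argument. Exact recovery comes from $C_j-C_i\ge\Gamma_L>2\tau$ conflicting with $\widetilde C_i\ge\widetilde C_j$ (the paper states this directly rather than by contradiction), and the loss bound comes from optimality of $\widetilde S_L$ for $\widetilde C$ plus the uniform error bound. The one difference is that you restrict to the symmetric difference, which gives the slightly sharper $\mathsf{Loss}_L\le 2|S_L^\star\setminus\widetilde S_L|\,\tau$, whereas the paper sums over all $L$ atoms in each set and obtains $2L\tau$ directly.
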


\begin{proof}
For exact recovery, let $j\in S_L^\star$ and $k\notin S_L^\star$. By the definition of $\Gamma_L$, $C_j-C_k\ge \Gamma_L$. On the event $\|\widetilde C-C\|_\infty\le \tau$,
\begin{equation}
\widetilde C_j-\widetilde C_k = (C_j-C_k) + (\widetilde C_j-C_j) - (\widetilde C_k-C_k)
\ge \Gamma_L-2\tau .
\end{equation}
If $\Gamma_L>2\tau$, every coordinate in $S_L^\star$ has strictly larger noisy value than every coordinate outside $S_L^\star$. Hence $\widetilde S_L=S_L^\star$.
For the mass-loss bound, $\widetilde S_L$ is a top-$L$ set for $\widetilde C$,
\begin{equation}
\sum_{j\in S_L^\star}\widetilde C_j
\le \sum_{j\in \widetilde S_L}\widetilde C_j .
\end{equation}
The event $\|\widetilde C-C\|_\infty\le\tau$, implies $C_j\le\widetilde C_j+\tau$ and $\widetilde C_j\le C_j+\tau$ for all $j$. Therefore,
\begin{equation}
\sum_{j\in S_L^\star} C_j
\le \sum_{j\in S_L^\star} \widetilde C_j + L\tau
\le \sum_{j\in \widetilde S_L} \widetilde C_j + L\tau
\le \sum_{j\in \widetilde S_L} C_j + 2L\tau .
\end{equation}
Rearranging gives
\begin{equation}
M(S_L^\star;C)-M(\widetilde S_L;C)\le 2L\tau .
\end{equation}
\end{proof}

\begin{corollary}[High-probability plan recovery]
\label{cor:app-high-prob-plan-recovery}
For Gaussian noise $Z\sim\mathcal N(0,\sigma^2 I_m)$, with probability at least $1-\beta$,
\begin{equation}
\mathsf{Loss}_L(C,\widetilde S_L)
\le 2L\sigma\sqrt{2\log\frac{2m}{\beta}}.
\end{equation}
On the same event, exact top-$L$ recovery holds if
\begin{equation}
\Gamma_L> 2\sigma\sqrt{2\log\frac{2m}{\beta}} .
\end{equation}
For independent Laplace noise with scale $b$, with probability at least $1-\beta$,
\begin{equation}
\mathsf{Loss}_L(C,\widetilde S_L)
\le 2Lb\log\frac{m}{\beta},
\end{equation}
On the same event, exact top-$L$ recovery holds if
\begin{equation}
\Gamma_L>
2b\log\frac{m}{\beta}.
\end{equation}
\end{corollary}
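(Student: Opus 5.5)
The corollary follows by combining the uniform coordinate-error bounds with the deterministic recovery result. Throughout, the noisy full sketch is written as $\widetilde C=C+Z$, so $\|\widetilde C-C\|_\infty=\|Z\|_\infty$. Proposition~\ref{prop:app-plan-recovery} holds on any event where $\|\widetilde C-C\|_\infty\le\tau$. The plan is therefore to find, for each noise family, a threshold $\tau$ and an event of probability at least $1-\beta$ on which $\|Z\|_\infty\le\tau$, and then apply Proposition~\ref{prop:app-plan-recovery} on that event.

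\textbf{Gaussian case.} Take $Z\sim\mathcal N(0,\sigma^2 I_m)$ and set $\tau_G=\sigma\sqrt{2\log(2m/\beta)}$. By Proposition~\ref{prop:app-gaussian-uniform}, the event $E_G=\{\|Z\|_\infty\le\tau_G\}$ has probability at least $1-\beta$. On $E_G$, Proposition~\ref{prop:app-plan-recovery} with $\tau=\tau_G$ gives two conclusions:
\begin{itemize}
\item $\mathsf{Loss}_L(C,\widetilde S_L)\le 2L\tau_G$, which is the stated loss bound;
\item if $\Gamma_L>2\tau_G$, then $\widetilde S_L=S_L^\star$.
\end{itemize}

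\textbf{Laplace case.} Take independent Laplace noise with scale $b$ and set $\tau_L=b\log(m/\beta)$. By Proposition~\ref{prop:app-laplace-uniform}, the event $E_L=\{\|Z\|_\infty\le\tau_L\}$ has probability at least $1-\beta$. On $E_L$, Proposition~\ref{prop:app-plan-recovery} with $\tau=\tau_L$ gives $\mathsf{Loss}_L(C,\widetilde S_L)\le 2L\tau_L$, and exact recovery whenever $\Gamma_L>2\tau_L$.

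\textbf{Points to check.}
\begin{itemize}
\item \emph{Single event.} The phrase ``on the same event'' means both the loss bound and exact recovery hold simultaneously on $E_G$ (respectively $E_L$). No second union bound is needed, because both are deterministic consequences of one sup-norm bound.
\item \emph{Margin condition.} The margin condition is a deterministic property of $C$, which is fixed.
\item \emph{Degenerate case $L=m$.} Here $\Gamma_m=\infty$ and $\widetilde S_m=S_m^\star=[m]$ trivially, so the statement holds.
\item \emph{Size of $m/\beta$ in the Laplace bound.} No lower bound on $m/\beta$ is needed: since $\beta<1$ we have $m/\beta>1$, so the logarithm is positive.
\end{itemize}

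None of these steps is a real obstacle. The only care needed is to use the exact thresholds from the two uniform-error propositions, so that the constants match: $\sqrt{2\log(2m/\beta)}$ for Gaussian noise versus $\log(m/\beta)$ for Laplace noise.
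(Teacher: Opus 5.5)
Your proposal is correct and matches the paper's proof. Both obtain $\|\widetilde C-C\|_\infty\le\tau$ with probability at least $1-\beta$ from Proposition~\ref{prop:app-gaussian-uniform} or Proposition~\ref{prop:app-laplace-uniform}, then apply Proposition~\ref{prop:app-plan-recovery} deterministically on that single event, and your side checks (one event for both conclusions, the case $L=m$, positivity of the logarithms) are consistent with this.
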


\begin{proof}
Apply Proposition~\ref{prop:app-gaussian-uniform} or Proposition~\ref{prop:app-laplace-uniform} to obtain $\|\widetilde C-C\|_\infty\le\tau$ with probability at least $1-\beta$. Then apply Proposition~\ref{prop:app-plan-recovery}.
\end{proof}

\begin{corollary}[Retained non-private mass]
\label{cor:app-preserved-mass}
Assume $M(S_L^\star;C)>0$. If $\|\widetilde C-C\|_\infty\le\tau$, then
\begin{equation}
\frac{M(\widetilde S_L;C)}{M(S_L^\star;C)}
\ge 1-\frac{2L\tau}{M(S_L^\star;C)} .
\end{equation}
\end{corollary}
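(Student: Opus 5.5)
The plan is to derive Corollary~\ref{cor:app-preserved-mass} directly from the mass-loss bound in Proposition~\ref{prop:app-plan-recovery}, followed by a single division. Assume the event $\|\widetilde C-C\|_\infty\le\tau$. Proposition~\ref{prop:app-plan-recovery} then gives
\begin{equation}
\mathsf{Loss}_L(C,\widetilde S_L)=M(S_L^\star;C)-M(\widetilde S_L;C)\le 2L\tau .
\end{equation}
No probabilistic argument is needed, since the statement is conditional on the deterministic event.

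Rearranging this bound yields $M(\widetilde S_L;C)\ge M(S_L^\star;C)-2L\tau$. We then divide both sides by $M(S_L^\star;C)$. This is legitimate because $M(S_L^\star;C)>0$ by assumption, and dividing by a positive quantity preserves the inequality. The result is
\begin{equation}
\frac{M(\widetilde S_L;C)}{M(S_L^\star;C)}\ge 1-\frac{2L\tau}{M(S_L^\star;C)} .
\end{equation}

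There is essentially no obstacle in this argument. The only points to flag are the following.

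\textbf{(i)} The positivity assumption is exactly what makes the ratio well defined and lets the inequality direction survive the division. Positivity holds automatically whenever $C\neq 0$, because $C\in\mathbb R_+^m$ and $S_L^\star$ collects the $L$ largest coordinates.

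\textbf{(ii)} The bound can be vacuous, i.e., negative, when $2L\tau\ge M(S_L^\star;C)$. It remains true in that case, since the left-hand side is nonnegative.

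Finally, combining the corollary with Proposition~\ref{prop:app-gaussian-uniform} or Proposition~\ref{prop:app-laplace-uniform} turns it into a high-probability statement with the corresponding $\tau$, exactly as in Corollary~\ref{cor:app-high-prob-plan-recovery}.
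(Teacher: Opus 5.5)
Your proof is correct and matches the paper's argument: invoke the mass-loss bound $M(S_L^\star;C)-M(\widetilde S_L;C)\le 2L\tau$ from Proposition~\ref{prop:app-plan-recovery} on the event $\|\widetilde C-C\|_\infty\le\tau$, then divide by the positive quantity $M(S_L^\star;C)$. Your side remarks on when positivity holds and on the bound possibly being vacuous are also accurate.
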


\begin{proof}
By Proposition~\ref{prop:app-plan-recovery}, $M(S_L^\star;C)-M(\widetilde S_L;C)\le 2L\tau$. Dividing by $M(S_L^\star;C)>0$ gives the claim.
\end{proof}

\subsection{Sparse private atom admission}

Sparse private atom admission is our alternative release mechanism to full-sketch perturbation. It privately selects $L$ coordinates by sequential applications of the exponential mechanism and then releases noisy values only for the admitted coordinates. Since the mechanism does not release a full noisy sketch, full-vector perturbation errors such as $\|\widetilde C-C\|_\infty$ are not defined for its released object. Instead, support recovery, preserved non-private mass, support-mass loss, and selected-coordinate value error are evaluated directly on the admitted set $\widetilde S_L$, as defined in Appendix~\ref{app:evaluation-metrics}. The sequential admission procedure scores the remaining public atom universe at each round, so we do not claim a computational-scaling advantage over full-sketch perturbation.

\begin{proposition}[Utility certificate for sequential private atom admission]
\label{prop:sequential-em-regret}
At round $t$, let $R_t=[m]\setminus\{J_1,\ldots,J_{t-1}\}$, $M_t=|R_t|=m-t+1$, and let $u_t^\star=\max_{j\in R_t} C_j(D)$. Suppose that round $t$ uses the exponential mechanism with coordinate sensitivity $\Delta_{\mathrm{coord}}$ and privacy parameter $\varepsilon_{\mathrm{sel},t}>0$. Then, for any $\beta\in(0,1)$, with probability at least $1-\beta$, simultaneously for all $t\in[L]$,
\begin{equation}
u_t^\star-C_{J_t}(D) \le \alpha_t
\coloneqq \frac{2\Delta_{\mathrm{coord}}}{\varepsilon_{\mathrm{sel},t}}
\left( \log M_t+\log\frac{L}{\beta} \right).
\end{equation}
On the same event,
\begin{equation}
\mathsf{Loss}_L = M(S_L^\star;C)-M(\widetilde S_L;C)
\le \sum_{t=1}^L \bigl(u_t^\star-C_{J_t}(D)\bigr)
\le \sum_{t=1}^L \alpha_t .
\end{equation}
\end{proposition}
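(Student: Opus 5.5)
The plan is to prove the per-round bound conditionally on the admission history, take a union bound over the $L$ rounds, and then derive the support-mass inequality from a short deterministic comparison of $\sum_t u_t^\star$ with $M(S_L^\star;C)$.

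\emph{Step 1 (one round of the exponential mechanism).} Fix $t$ and condition on any realization of $J_1,\ldots,J_{t-1}$. Then $R_t$, $M_t$ and $u_t^\star$ are fixed. For $c\ge 0$, let $B_c=\{j\in R_t: C_j(D)\le u_t^\star-c\}$, so that $|B_c|\le M_t$.
\begin{itemize}
\item Each $j\in B_c$ has unnormalized weight at most $\exp\bigl(\varepsilon_{\mathrm{sel},t}(u_t^\star-c)/(2\Delta_{\mathrm{coord}})\bigr)$.
\item The normalizer is at least the weight of a maximizer, namely $\exp\bigl(\varepsilon_{\mathrm{sel},t}u_t^\star/(2\Delta_{\mathrm{coord}})\bigr)$.
\end{itemize}
Together these give
\[
\Pr\bigl[u_t^\star-C_{J_t}(D)\ge c \,\big|\, J_1,\ldots,J_{t-1}\bigr]\le M_t\exp\!\left(-\frac{\varepsilon_{\mathrm{sel},t}c}{2\Delta_{\mathrm{coord}}}\right).
\]
Choosing $c=\alpha_t$ makes the right-hand side equal to $M_t\cdot \frac{1}{M_t}\cdot\frac{\beta}{L}=\beta/L$. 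The strict versus non-strict inequality does not matter here, since we bound the event $\{\ge \alpha_t\}$, which contains the failure event $\{>\alpha_t\}$.

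\emph{Step 2 (union bound).} The bound in Step 1 holds for every history, so taking expectations over the history gives an unconditional failure probability of at most $\beta/L$ for each round. A union bound over $t\in[L]$ then shows that, with probability at least $1-\beta$, $u_t^\star-C_{J_t}(D)\le\alpha_t$ holds for all $t$ simultaneously.

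\emph{Step 3 (from per-round regret to support-mass loss).} This step is deterministic and is the only non-routine part. Since the $J_t$ are distinct, $M(\widetilde S_L;C)=\sum_{t=1}^L C_{J_t}(D)$. It therefore suffices to show $\sum_{t=1}^L u_t^\star\ge M(S_L^\star;C)=\sum_{t=1}^L C_{(t)}$, and the plan is to prove $u_t^\star\ge C_{(t)}$ for each $t$.
\begin{itemize}
\item Let $T_t$ be the set of indices of the $t$ largest entries of $C$ under the public tie-breaking rule, so $|T_t|=t$.
\item $R_t$ excludes only $t-1$ indices, so $T_t\cap R_t\neq\emptyset$.
\item Every index in $T_t$ has value at least $C_{(t)}$, so $u_t^\star=\max_{j\in R_t}C_j(D)\ge C_{(t)}$.
\end{itemize}
Summing over $t$ and subtracting $M(\widetilde S_L;C)$ gives
\[
\mathsf{Loss}_L\le\sum_{t=1}^L\bigl(u_t^\star-C_{J_t}(D)\bigr),
\]
which holds deterministically. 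On the event of Step 2, the right-hand side is at most $\sum_{t=1}^L\alpha_t$.

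The main point needing care is Step 3: $u_t^\star$ is a maximum over a random, history-dependent candidate set, so one might fear the per-round regrets do not telescope to a bound against the fixed non-private top-$L$ support. The pigeonhole comparison $u_t^\star\ge C_{(t)}$ is what resolves this.
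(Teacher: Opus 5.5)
Your proposal is correct and follows the paper's proof essentially step for step: a conditional exponential-mechanism tail bound at level $\beta/L$ for every history, a union bound over the $L$ adaptive rounds, and the pigeonhole observation $u_t^\star\ge C_{(t)}$, which turns the per-round regrets into a bound on $\mathsf{Loss}_L$. The only difference is that you derive the exponential-mechanism utility bound directly from the weights and the normalizer, whereas the paper cites it as the standard bound.
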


\begin{proof}
Condition on any realized history through round $t-1$. The remaining candidate set $R_t$ is then fixed. The standard exponential-mechanism utility bound gives
\begin{equation}
\Pr\!\left[ u_t^\star-C_{J_t}(D)>\alpha_t \,\middle|\,
J_1,\ldots,J_{t-1} \right] \le \frac{\beta}{L}.
\end{equation}
A union bound over the $L$ adaptive rounds proves the simultaneous bound.

Let $C_{(1)}\ge\cdots\ge C_{(m)}$ denote the ordered non-private coordinate values. Before round $t$, at most $t-1$ coordinates have been removed, so at least one of the $t$ largest coordinates remains. Hence $u_t^\star\ge C_{(t)}$. Therefore,
\begin{align}
\mathsf{Loss}_L = \sum_{t=1}^L C_{(t)} - \sum_{t=1}^L C_{J_t}(D)
\le \sum_{t=1}^L \bigl(u_t^\star-C_{J_t}(D)\bigr),
\end{align}
which gives the result.
\end{proof}

This certificate bounds the utility loss of sequential private admission. Exact top-$L$ recovery requires additional separation conditions on the non-private coordinate masses. In our experiments, the selection budget is allocated uniformly across rounds, $\varepsilon_{\mathrm{sel},t}=\varepsilon_{\mathrm{sel}}/L$.

We next give high-probability bounds on selected-coordinate value errors. Because the conditional failure probability is at most $\beta$ for every realized admitted set, the same bounds hold unconditionally by the law of total probability.

\begin{proposition}[Selected-coordinate Gaussian value error]
\label{prop:app-selected-gaussian}
Let $\widetilde S\subseteq[m]$ be a random admitted set with $|\widetilde S|=L$. Suppose that, conditional on every realization $\widetilde S=S$, the released selected values satisfy $\widetilde c_j=C_j(D)+W_j$, $j\in S$, where $\{W_j:j\in S\}$ are independent $\mathcal N(0,\sigma_{\mathrm{val}}^2)$ random variables. Then, for any $\beta\in(0,1)$,
\begin{equation}
\Pr\!\left[ \max_{j\in\widetilde S} |\widetilde c_j-C_j(D)|
\le \sigma_{\mathrm{val}} \sqrt{2\log\frac{2L}{\beta}} \right] \ge 1-\beta .
\end{equation}
\end{proposition}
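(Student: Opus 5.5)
The plan is to condition on the admitted set, apply the Gaussian union-bound argument of Proposition~\ref{prop:app-gaussian-uniform} with $L$ coordinates in place of $m$, and then remove the conditioning by the law of total probability. Write $t_\beta=\sigma_{\mathrm{val}}\sqrt{2\log(2L/\beta)}$ and let $E$ be the event $\max_{j\in\widetilde S}|\widetilde c_j-C_j(D)|\le t_\beta$.

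\emph{Fixed admitted set.} Fix any realization $S$ with $|S|=L$. Conditional on $\widetilde S=S$, the errors are $\widetilde c_j-C_j(D)=W_j$ for $j\in S$, where the $W_j$ are independent $\mathcal N(0,\sigma_{\mathrm{val}}^2)$ by hypothesis. For each $j$ the Gaussian tail bound gives $\Pr[|W_j|>t\mid\widetilde S=S]\le 2\exp(-t^2/(2\sigma_{\mathrm{val}}^2))$. A union bound over the $L$ indices in $S$ then gives
\[
\Pr\bigl[E^c\mid \widetilde S=S\bigr]\le 2L\exp\bigl(-t_\beta^2/(2\sigma_{\mathrm{val}}^2)\bigr)=\beta .
\]
This is exactly Proposition~\ref{prop:app-gaussian-uniform} applied to the $L$-dimensional vector $(W_j)_{j\in S}$, so I would simply invoke it.

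\emph{Removing the conditioning.} The set $\widetilde S$ takes values in the finite family of $L$-subsets of $[m]$. Summing over that family,
\[
\Pr[E^c]=\sum_{S}\Pr[\widetilde S=S]\,\Pr[E^c\mid\widetilde S=S]\le\beta .
\]
Hence $\Pr[E]\ge 1-\beta$, which is the claim.

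The main point requiring care is that $\widetilde S$ is random and depends on $D$. The noise $W_j$ is indexed by a data-dependent set, but the hypothesis gives the Gaussian law conditionally on every realization, and the bound $\beta$ is uniform in $S$. So no independence between selection and noise is needed beyond what is assumed. Conditioning on events of probability zero does not arise, because the family of $L$-subsets is finite.
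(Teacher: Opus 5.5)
Your proposal is correct and follows the paper's own proof: condition on each realized admitted set $S$, combine the Gaussian tail bound with a union bound over the $L$ admitted coordinates to get conditional failure probability at most $\beta$, and remove the conditioning by the law of total probability. One small precision: some $L$-subsets may have probability zero, so conditioning on a zero-probability event can in principle arise, but those terms contribute nothing to the finite sum and the argument is unaffected.
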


\begin{proof}
Condition on the event $\{\widetilde S=S\}$, where $S\subseteq[m]$ is an arbitrary realized admitted set with $|S|=L$. By the Gaussian tail bound and the union bound,
\begin{equation}
\Pr\!\left[ \max_{j\in S}|W_j|>t \,\middle|\, \widetilde S=S \right]
\le 2L\exp\!\left( -\frac{t^2}{2\sigma_{\mathrm{val}}^2} \right).
\end{equation}
Taking $t= \sigma_{\mathrm{val}} \sqrt{2\log(2L/\beta)}$ makes the conditional failure probability at most $\beta$. Since this bound holds for every realized $S$, the unconditional failure probability is also at most $\beta$.
\end{proof}

\begin{proposition}[Selected-coordinate Laplace value error]
\label{prop:app-selected-laplace}
Let $\widetilde S\subseteq[m]$ be a random admitted set with $|\widetilde S|=L$. Suppose that, conditional on every realization $\widetilde S=S$, the released selected values satisfy $\widetilde c_j=C_j(D)+W_j$, $j\in S$, where $\{W_j:j\in S\}$ are independent Laplace random variables with scale $b_{\mathrm{val}}>0$. Then, for any $\beta\in(0,1)$,
\begin{equation}
\Pr\!\left[ \max_{j\in\widetilde S} |\widetilde c_j-C_j(D)|
\le b_{\mathrm{val}}\log\frac{L}{\beta}
\right] \ge 1-\beta .
\end{equation}
\end{proposition}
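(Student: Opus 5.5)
The argument conditions on the admitted set, exactly as in the proof of Proposition~\ref{prop:app-selected-gaussian}, and replaces the Gaussian tail with the exact Laplace tail used in Proposition~\ref{prop:app-laplace-uniform}.

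Fix an arbitrary realization $S\subseteq[m]$ with $|S|=L$ and work conditionally on the event $\{\widetilde S=S\}$. On this event, $\widetilde c_j-C_j(D)=W_j$ for $j\in S$. The $W_j$ are independent $\Lap(b_{\mathrm{val}})$ variables, so for every $t\ge0$ each coordinate has the exact tail $\Pr[|W_j|>t\mid\widetilde S=S]=\exp(-t/b_{\mathrm{val}})$. The union bound over the $L$ coordinates of $S$ then gives
\begin{equation}
\Pr\!\left[\max_{j\in S}|W_j|>t \,\middle|\, \widetilde S=S\right]\le L\exp(-t/b_{\mathrm{val}}).
\end{equation}
Independence is not actually needed for this step, only the marginal laws.

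Next, choose $t=b_{\mathrm{val}}\log(L/\beta)$. This threshold is positive whenever $\beta<L$, which holds here. With this choice the conditional failure probability is at most $\beta$.

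Finally, remove the conditioning. The conditional failure bound holds uniformly over every realized $S$, and $\widetilde S$ ranges over finitely many subsets. The law of total probability then gives $\Pr[\max_{j\in\widetilde S}|\widetilde c_j-C_j(D)|>b_{\mathrm{val}}\log(L/\beta)]\le\beta$, and taking complements yields the claim.

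The only point requiring care is this last step. It relies on the hypothesis that the noise law is the same conditional on every realization of $\widetilde S$, not merely marginally. That hypothesis is supplied by the statement, and in the mechanism it holds because the selected-value noise is drawn fresh after admission.
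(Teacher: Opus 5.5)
Your proposal is correct and matches the paper's proof essentially step for step. Both condition on $\widetilde S=S$, apply the exact Laplace tail with a union bound over the $L$ admitted coordinates, and choose $t=b_{\mathrm{val}}\log(L/\beta)$; since the resulting bound holds for every realized $S$, both then remove the conditioning by the law of total probability.
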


\begin{proof}
Condition on the event $\{\widetilde S=S\}$. For each $j\in S$, $\Pr\{|W_j|>t\} = \exp(-t/b_{\mathrm{val}})$. The union bound over the $L$ admitted coordinates gives
\begin{equation}
\Pr\!\left[ \max_{j\in S}|W_j|>t \,\middle|\, \widetilde S=S \right]
\le L\exp(-t/b_{\mathrm{val}}).
\end{equation}
Taking $t=b_{\mathrm{val}}\log(L/\beta)$ makes the conditional failure probability at most $\beta$. Since this bound holds for every realized $S$, the same bound holds unconditionally.
\end{proof}

\clearpage
\appsection{Evaluation Metrics}
\label{app:evaluation-metrics}

In this appendix, we specify the empirical metrics used in the experiments. The plan-level utility guarantees are given in Appendix~\ref{app:utility-analysis}. The reported tables and figures use the following metrics to summarize run outputs. Metrics that depend on the non-private sketch $C(D)$ are used only for offline evaluation and are not part of the released differentially private mechanism.

\subsection{Notation recap}

Let $C(D)\in\mathbb R_+^m$ denote the non-private semantic sketch for the protected-unit setting under consideration, as defined in Section~\ref{sec:problem-formulation}, over the fixed semantic atom universe
$\mathcal A=\{a_1,\ldots,a_m\}$. Let $L$ denote the released plan size, and define $S_L^\star=\Top_L(C(D))$ as the non-private top-$L$ atom set under a fixed deterministic tie-breaking rule. Let $\widetilde S_L\subseteq[m]$ denote the atom set admitted by the differentially private mechanism. For the Gaussian and Laplace full-sketch mechanisms, $\widetilde S_L$ is obtained by applying $\Top_L$ to the noisy full sketch. For sparse private atom admission, $\widetilde S_L$ is the set of $L$ atoms admitted by the sequential exponential-mechanism procedure. For any set of atom indices $S\subseteq[m]$, define its non-private semantic
mass as $M(S;C)=\sum_{j\in S}C_j(D)$. Thus, $M(S;C)$ is the total non-private aggregate semantic mass assigned to the atoms indexed by $S$.

\subsection{Plan-level recovery metrics}

\paragraph{Top-$L$ atom overlap.}
The overlap metric measures the fraction of non-private top-$L$ atoms recovered by the released plan
\begin{equation}
\mathsf{Overlap}@L = \frac{|\widetilde S_L\cap S_L^\star|}{L}.
\end{equation}
This metric evaluates support recovery. It equals $1$ when the released plan admits exactly the same atom set as the non-private top-$L$ plan, and it equals $0$ when the two sets are disjoint.

\paragraph{Retained non-private mass fraction.}
The preserved non-private mass fraction measures the fraction of non-private top-$L$ semantic mass captured by the atoms admitted in the released plan
\begin{equation}
\mathsf{MassRetained} = \frac{M(\widetilde S_L;C)}{M(S_L^\star;C)}
= \frac{\sum_{j\in \widetilde S_L}C_j(D)}{\sum_{j\in S_L^\star}C_j(D)}.
\end{equation}
If $M(S_L^\star;C)=0$, this ratio is undefined; our implementation reports $0$ by convention and records the zero-denominator case. This metric is less strict than $\mathsf{Overlap}@L$, since a released plan can miss some non-private top-$L$ atoms while keeping most of the non-private mass when near-boundary atoms have similar masses.

\paragraph{Support-mass loss.}
The support-mass loss measures the non-private semantic mass lost by selecting $\widetilde S_L$ instead of the non-private top-$L$ set
\begin{equation}
\mathsf{Loss}_L = M(S_L^\star;C) - M(\widetilde S_L;C).
\end{equation}
Smaller values indicate lower loss. This metric is reported in the same units as the non-private semantic sketch. Unlike the preserved non-private mass fraction, it is not normalized, so its scale depends on the number of protected units and the contribution bounds used in the experiment.

\subsection{Noisy-value error metrics}

For the full-sketch Gaussian and Laplace releases, the mechanism forms a noisy vector
$\widetilde C=C(D)+Z\in\mathbb{R}^m$. We report the full-vector errors
$\|\widetilde C-C\|_\infty$, $\|\widetilde C-C\|_1$, and
$\|\widetilde C-C\|_2$. These metrics measure perturbation of the full semantic sketch before top-$L$ post-processing.
For sparse private atom admission, \texttt{DP-SPIN} does not release numerical values for all $m$ atoms. It releases noisy values only for atoms in $\widetilde S_L$. Thus, full-vector error is not defined for the released object in this mode. We instead report selected-coordinate value errors 
$\max_{j\in \widetilde S_L}|\widetilde c_j-C_j(D)|$,
$\sum_{j\in \widetilde S_L}|\widetilde c_j-C_j(D)|$,
$(\sum_{j\in \widetilde S_L}(\widetilde c_j-C_j(D))^2)^{1/2}$.
These metrics compare the released noisy masses with their non-private values only on the admitted atom set. When display clipping or support binning is applied after noise addition, these errors are computed from the raw noisy masses before display clipping; displayed masses and support bins are post-processed quantities.

\subsection{Decoder-contract metrics}
\label{app:decoder-contract-metrics}

Let $P(D)$ be a released semantic plan with admitted atom set $\widetilde S\subseteq[m]$. The admitted atoms visible to the decoder are
\begin{equation}
\widetilde{\mathcal A}(P) = \{(\ell(a_j),d(a_j)):j\in\widetilde S\}.
\end{equation}
We compute the following metrics using a fixed public parser and verifier, both defined by the public evaluation protocol. Neither component has access to the protected records.

\paragraph{Unsupported atom rate.}
For strict audit summaries, let $\widehat S(Y)\subseteq[m]$ denote the set of atom indices appearing in canonical markers $\texttt{[atom:}j\texttt{]}$ in $Y$. Canonical markers are the authoritative atom references in this mode. Literal atom-label matching is preserved only as a diagnostic for summaries that do not use the strict marker contract. The unsupported-atom rate is
\begin{equation}
\mathsf{UAR}(Y,P) = \frac{|\widehat S(Y)\setminus \widetilde S|}{\max\{1,|\widehat S(Y)|\}} .
\end{equation}
A value of zero means that every canonical atom reference in $Y$ refers to an atom admitted in the released plan.

\paragraph{Unsupported comparison rate.}
Let $\mathsf{Comp}(Y)$ be the set of comparison records detected in $Y$ by the fixed public parser. In our implementation, these records include simple pairwise semantic-support comparisons, such as ``higher support than'' and ``lower support than'', and semantic-support or rank superlatives, such as ``highest support'' or ``top-ranked theme''. Each pairwise record contains the detected relation and the atom indices detected on the left and right sides of the comparison.
A detected pairwise comparison is valid if each side identifies exactly one admitted atom and the claimed ordering is supported by the released noisy masses in $P(D)$, using the public comparison margin fixed before evaluation. A detected superlative claim is valid if the identified admitted atom has released rank one. Let $\mathsf{ValidComp}(Y,P)\subseteq\mathsf{Comp}(Y)$ denote the subset of detected comparison records that are valid under these rules. The unsupported-comparison rate is
\begin{equation}
\mathsf{UCR}(Y,P) = \frac{|\mathsf{Comp}(Y)\setminus \mathsf{ValidComp}(Y,P)|}{\max\{1,|\mathsf{Comp}(Y)|\}} .
\end{equation}
A value of zero means that every verifier-detected semantic-support or rank comparison in $Y$ is supported by the released plan. Because the denominator is $\max\{1,|\mathsf{Comp}(Y)|\}$, $\mathsf{UCR}=0$ also when no comparison is detected. We therefore interpret $\mathsf{UCR}$ jointly with the number of detected comparison records $|\mathsf{Comp}(Y)|$.

\paragraph{Verifier acceptance rate.}
Let $\mathsf{Accept}(Y,P)\in\{0,1\}$ denote the decision of the fixed public verifier applied to summary $Y$ and released plan $P$. The verifier applies the public decoder-contract checks specified before evaluation. These checks include unsupported atom mentions, unsupported semantic-support or rank comparisons, unsupported support-bin assignments, unsupported numeric claims, raw-record or structured-identifier patterns, and, when strict marker mode is enabled, atom-label mentions without canonical markers. If explicit importance-ordering claims are rejected by the configured verifier, these claims are also treated as violations because the released plan contains no separate importance score.
Under the default acceptance rule, the verifier accepts if and only if no violation is detected. The quantities $\mathsf{UAR}(Y,P)$ and $\mathsf{UCR}(Y,P)$ are computed and preserved as evaluation measures; in the thresholded configuration, acceptance additionally requires $\mathsf{UAR}(Y,P)\le \alpha_{\mathrm{atom}}$ and $\mathsf{UCR}(Y,P)\le \alpha_{\mathrm{comp}}$, with public thresholds fixed before evaluation. Pairwise comparison checks use the public comparison margin fixed before evaluation. For a collection of runs $\mathcal R$, with generated summaries $Y_r$ and released plans $P_r$, the verifier acceptance rate is
\begin{equation}
\mathsf{AcceptRate} =\frac{1}{|\mathcal R|} \sum_{r\in\mathcal R} \mathsf{Accept}(Y_r,P_r).
\end{equation}
The template decoder constructs audit text from admitted plan entries and canonical atom markers and does not introduce pairwise comparison claims. Accordingly, its verifier outputs are preserved as consistency checks but are not reported as parameter-sweep utility results. For free-form audit decoding, the same verifier measures consistency with the released plan under the fixed public parser. The verifier does not assess correctness or completeness relative to the protected corpus.

\subsection{Non-private reference text evaluation}

The primary utility object in \texttt{DP-SPIN} is the released semantic plan. Text-level metrics are used only to evaluate decoded summaries. Let $Y_{\mathrm{DP}}$ denote the summary decoded from the released differentially private plan, and let $Y_{\mathrm{NP}}$ denote the reference summary decoded from the non-private top-$L$ semantic plan
\begin{equation}
P_{\mathrm{NP}} = \left( S_L^\star,\, C_{S_L^\star}(D),\, \{(\ell(a_j),d(a_j)):j\in S_L^\star\} \right).
\end{equation}
The plan $P_{\mathrm{NP}}$ and the summary $Y_{\mathrm{NP}}$ are used only for offline evaluation. They are not given to the decoder that produces $Y_{\mathrm{DP}}$ and are not part of the released differentially private output.

The role of these metrics depends on how the summary is produced. With our template decoder, the plan-to-text map is deterministic. Once the released plan is fixed, the generated text is fixed. The resulting scores mainly reflect overlap between the atom labels, support bins, and fixed template phrases obtained from the differentially private plan and from the non-private top-$L$ plan. In this case, text-level metrics are useful as checks that differences at the plan level are reflected in the generated text, but they should not be read as a separate measure of language-generation quality.

With the OpenAI decoder, the plan-to-text map is treated as stochastic post-processing of the released plan. The generated summary may paraphrase, compress, reorder, or omit admitted themes while following the public decoder rules. In this case, text-level metrics are more informative as secondary verbalization evaluation measures. They measure whether a free-form summary generated from the differentially private plan remains close to the summary generated from the non-private reference plan.

We therefore report text-level metrics with the decoder backend fixed. For the deterministic template decoder, the scores mainly reflect overlap between two fixed plan verbalizations. For the OpenAI decoder, the scores also capture variation from free-form generation, including paraphrasing, compression, and reordering. Our primary utility measure remains recovery of the non-private semantic plan.

\paragraph{Token Jaccard similarity.}
Let $T(Y)$ be the set of normalized non-stopword tokens extracted from summary $Y$. The token Jaccard similarity is
\begin{equation}
J_{\mathrm{tok}} = \frac{|T(Y_{\mathrm{DP}})\cap T(Y_{\mathrm{NP}})|}{|T(Y_{\mathrm{DP}})\cup T(Y_{\mathrm{NP}})|}.
\end{equation}
This metric measures lexical overlap at the token level.

\paragraph{Bigram Jaccard similarity.}
Let $G_2(Y)$ denote the set of normalized non-stopword token bigrams in summary $Y$. We define
\begin{equation}
J_2 = \frac{|G_2(Y_{\mathrm{DP}})\cap G_2(Y_{\mathrm{NP}})|}{|G_2(Y_{\mathrm{DP}})\cup G_2(Y_{\mathrm{NP}})|}.
\end{equation}
This metric measures bigram overlap between the two summaries.

\paragraph{Keyphrase Jaccard similarity.}
Our implementation extracts deterministic lexical keyphrases by forming content-token $n$-grams for $n\le 3$ after stopword removal. Let $\mathsf{KP}(Y)$ denote the resulting keyphrase set. The keyphrase Jaccard similarity is
\begin{equation}
J_{\mathrm{kp}} = \frac{|\mathsf{KP}(Y_{\mathrm{DP}})\cap \mathsf{KP}(Y_{\mathrm{NP}})|}{|\mathsf{KP}(Y_{\mathrm{DP}})\cup \mathsf{KP}(Y_{\mathrm{NP}})|}.
\end{equation}
This is a deterministic lexical-content similarity measure with fixed extraction rules. For all Jaccard metrics, if both sets in the denominator are empty, the implementation reports similarity $1$ by convention and records the empty-denominator case.

\paragraph{TF--IDF cosine similarity.}
Let $q(Y)$ be the term frequency--inverse document frequency (TF--IDF) vector of summary $Y$, constructed from unigram and bigram features over the pair $(Y_{\mathrm{DP}}, Y_{\mathrm{NP}})$. The TF--IDF cosine similarity is
\begin{equation}
\mathsf{Cos}_{\mathrm{tfidf}} = \frac{\langle q(Y_{\mathrm{DP}}),q(Y_{\mathrm{NP}})\rangle}{\|q(Y_{\mathrm{DP}})\|_2\|q(Y_{\mathrm{NP}})\|_2}.
\end{equation}
This metric measures lexical similarity with term weighting rather than set overlap. If no TF--IDF feature is produced, the implementation reports cosine similarity $0$.

\subsection{Embedding-space semantic similarity}

When a fixed sentence-embedding backend is enabled, let $e:\mathcal Y\to\mathbb{R}^d$ denote the corresponding embedding map. We define
\begin{equation}
\mathsf{Cos}_{\mathrm{emb}} = \frac{\langle e(Y_{\mathrm{DP}}),e(Y_{\mathrm{NP}})\rangle}{\|e(Y_{\mathrm{DP}})\|_2\|e(Y_{\mathrm{NP}})\|_2}.
\end{equation}
This is a secondary verbalization measure and does not replace the plan-level utility measures. We do not separately report $\mathsf{Cos}_{\mathrm{emb}}$ for runs configured with the TF--IDF backend, because in that case it coincides with the TF--IDF cosine similarity already reported above.

\subsection{Comparative summary evaluation}

For optional comparative evaluation, a fixed evaluator receives $P_{\mathrm{NP}}$, $Y_{\mathrm{DP}}$, and $Y_{\mathrm{NP}}$. It compares the two summaries as verbalizations of the same non-private reference plan $P_{\mathrm{NP}}$, without receiving raw protected records. This comparison is intended for free-form decoded summaries, especially OpenAI outputs. For deterministic template outputs, the comparison is largely redundant with plan-level recovery because both summaries follow the same fixed verbalization rules.

The evaluator chooses $\mathsf{choice}\in\{\mathrm{DP},\mathrm{NP},\mathrm{tie}\}$ according to coverage of atoms in $P_{\mathrm{NP}}$, faithfulness to $P_{\mathrm{NP}}$, clarity, and concision. For aggregate reporting, we encode the preference as
\begin{equation}
\mathsf{DPPreferred} =
\begin{cases}
1, & \mathsf{choice}=\mathrm{DP},\\
1/2, & \mathsf{choice}=\mathrm{tie},\\
0, & \mathsf{choice}=\mathrm{NP}.
\end{cases}
\end{equation}
The reported comparative preference is the mean of this value across runs.


\subsection{Controlled probe-atom admission and mention measures}

The probe-atom experiments measure how a synthetic public atom with controlled support appears in the released plan and in the decoded summary. For each run, we insert synthetic records expressing a fixed probe pattern into the protected target corpus. The corresponding probe atom is included in the public atom dictionary before the differentially private release. The experiment therefore measures admission of a known public coordinate. It does not evaluate discovery of a new label from protected records.

\paragraph{Plan probe-atom admission.}
Let $\widetilde S_L$ be the admitted atom set and let $\mathcal J_{\mathrm{probe}}\subseteq[m]$ be the set of public probe atom indices. For a single probe atom $j_c$, the plan admission indicator is
\begin{equation}
\mathsf{Adm}_{\mathrm{plan}}(j_c) = \mathbf 1\{j_c\in \widetilde S_L\}.
\end{equation}
For multiple synthetic probe atoms, we report
\begin{equation}
\mathsf{Adm}_{\mathrm{plan}}(\mathcal J_{\mathrm{probe}})
= \frac{1}{|\mathcal J_{\mathrm{probe}}|} \sum_{j\in\mathcal J_{\mathrm{probe}}}
\mathbf 1\{j\in\widetilde S_L\}.
\end{equation}
This metric is computed before decoding. It measures whether the differentially private release admits the public probe atom into the semantic plan.

\paragraph{Summary probe-string mention rate.}
Let $\mathcal C$ be the set of public probe strings and let $Y_{\mathrm{DP}}$ be the summary generated from the differentially private plan. The summary probe atom mention rate is
\begin{equation}
\mathsf{Mention}_{\mathrm{text}} = \frac{|\{c\in\mathcal C: c \text{ is detected in } Y_{\mathrm{DP}}\}|}{|\mathcal C|}.
\end{equation}
Detection uses a fixed public string-matching rule. This metric records whether the public probe string appears in the generated text. It is computed after decoding and does not affect the privacy mechanism. For the template decoder, the metric is largely determined by plan admission. For the OpenAI decoder, it also reflects whether an admitted probe-atom theme is verbalized in the generated summary.

\paragraph{Non-private probe-atom rank.}
The non-private probe-atom rank is computed offline from the non-private sketch $C(D)$ and is used only for evaluation. For a synthetic probe atom $j_c$, define
\begin{equation}
\mathsf{Rank}_{\mathrm{NP}}(j_c)
= 1+ |\{j\in[m]: C_j(D)>C_{j_c}(D)\}| + |\{j\in[m]: C_j(D)=C_{j_c}(D),\ j\prec j_c\}|,
\end{equation}
where $\prec$ is the fixed public tie-breaking order. This rank indicates whether the controlled probe atom is close to the non-private top-$L$ boundary before the differentially private release is applied. It is used only for offline interpretation and is not part of the released mechanism.

\clearpage
\appsection{Experimental Configuration}
\label{app:experimental-configuration}

\vspace{-3pt}

In this appendix, we specify our experimental configuration, dataset-specific run grids, and aggregation rules for repeated runs. We identify the protected unit, the construction of public atom dictionaries, the released-plan parameters, and the grouping keys used in the reported tables and figures. Where relevant, we use the same variable names as in the implementation to simplify reproduction and extension of the experiments.
  
\vspace{-3pt}

\subsection{Protected unit}

Each run is configured with a protected unit $\texttt{protected\_unit}\in\{\texttt{record},\texttt{user}\}$. In record-level experiments, one text record is one protected unit. The corresponding sample-size parameter is $n_{\mathrm{records}}$. Each run selects $n_{\mathrm{records}}$ protected records from the target split and uses record-level add/drop adjacency.
In user-level experiments, one user is one protected unit. The corresponding sample-size parameter is $n_{\mathrm{users}}$. Each run selects $n_{\mathrm{users}}$ protected users from the target split, groups all target records with the same user identifier, and uses user-level add/drop adjacency. Our configuration field \texttt{user\_id\_field} specifies the identifier used for grouping.

\begin{protocolbox}{Record-level and user-level size fields}
\begin{itemize}[leftmargin=14pt]
    \item \textbf{Record-level:} \texttt{protected\_unit=record}, configured by \texttt{n\_records}.
    \item \textbf{User-level:} \texttt{protected\_unit=user}, configured by \texttt{n\_users} and \texttt{user\_id\_field}.
\end{itemize}
\end{protocolbox}

CFPB is evaluated only in the record-level setting because the prepared complaint records do not contain a stable user identifier for grouping multiple records from the same individual. Amazon All Beauty reviews and Yelp Restaurant reviews are evaluated in both record-level and user-level settings because their review records contain stable reviewer identifiers.

\vspace{-3pt}
 
\subsection{Auxiliary and protected target splits}

Each dataset is partitioned into an auxiliary split and a protected target split. The auxiliary split is used to construct decoder-visible atom labels and to fit the text representation used by the assignment map. The protected target split is used only through bounded record-to-atom assignments, aggregation, and the differentially private release mechanism.
For user-level experiments, the auxiliary and target splits are user-disjoint, meaning that all records from the same user are assigned entirely to one split.  This ensures that atom construction and representation fitting do not use records from users whose records are included in the protected target split.

\begin{protocolbox}{Privacy boundary}
\textbf{Auxiliary split.} used for atom construction and representation fitting.

\vspace{1mm}
\textbf{Protected target split.} Used for bounded assignments, aggregation, and DP release.

\vspace{1mm}
\textbf{Not provided to the decoder.} Raw target records, target excerpts, nearest neighbors, per-record contribution vectors, non-private sketch values, atoms outside the released plan, or labels derived non-privately from the protected target split.
\end{protocolbox}

\vspace{-3pt}

\subsection{Public denominator}

The field \texttt{public\_denominator} specifies the denominator used to convert decoder-facing noisy masses into normalized semantic-support values. In record-level runs, $d_{\mathrm{public}}=n_{\mathrm{records}}$, and in user-level runs, $d_{\mathrm{public}}=n_{\mathrm{users}}$, where  $n_{\mathrm{records}}$ and $n_{\mathrm{users}}$ are configured analysis sizes fixed before the protected target data are processed. The denominator is not recomputed from the realized cardinality of a neighboring dataset. For an admitted atom $j$, define
\[
\widetilde c_j^{+}=\max\{0,\widetilde c_j\}, \qquad
\widetilde q_j = \frac{\widetilde c_j^{+}}{d_{\mathrm{public}}}.
\]
The value $\widetilde q_j$ is normalized semantic support and need not equal the fraction of protected units expressing atom $a_j$. Nonnegative clipping, normalization, and support-bin assignment are deterministic post-processing and consume no additional privacy budget.


\vspace{-3pt}

\subsection{Atom and plan parameters}

The parameters $m$, $K$, and $L$ have the following meanings across all experiments:
\begin{align}
m &= \text{number of public atom coordinates}, \nonumber \\
K &= \text{maximum number of nonzero atom assignments per record},\nonumber\\
L &= \text{number of admitted atoms in the released plan}.\nonumber
\end{align}
The atom dictionary is fixed before the protected target split is processed. Each target record is mapped to at most $K$ atoms with total $\ell_1$ mass at most one. The released semantic plan contains $L$ admitted atoms in the configurations considered.

For Yelp, the main dictionary sizes are $m\in\{50,100\}$, with $m=200$ used only as a finer-dictionary ablation. For Amazon All Beauty, we use a more restrictive auxiliary-only construction. We extract two-word review-theme phrases, apply the Amazon-domain phrase filter, canonicalize deterministic variants, merge duplicate labels that denote the same review theme, and keep only candidate phrases with auxiliary document support at least $25$. The value $25$ is the minimum number of auxiliary reviews in which a candidate phrase must appear before it can be preserved. This threshold is configurable; increasing it gives a smaller atom dictionary, while decreasing it admits lower-support phrases. With the threshold set to $25$, the Amazon construction yields a fixed public atom dictionary with $m=46$, used in all Amazon runs. This choice avoids using rare or redundant product-review phrases as atom coordinates in the smaller Amazon All Beauty domain.

\subsection{Privacy mechanism and budgets}

Each run specifies a release mechanism, privacy parameters, and an adjacency relation. The mechanism field takes one of the values $\{\texttt{laplace}, \texttt{gaussian}, \texttt{private\_topL}\}$. The remaining privacy fields specify $\varepsilon$, $\delta$, and the adjacency relation. All reported experiments use add/drop adjacency. In record-level runs, the protected unit is one record. In user-level runs, the protected unit is one user after deterministic $\ell_1$ clipping of that user's aggregate contribution.
We use the following three plan-release mechanisms.
\begin{itemize}[leftmargin=16pt]
    \item 
    \texttt{gaussian}.
    The mechanism forms a differentially private noisy full sketch using Gaussian noise calibrated to the global $\ell_2$ sensitivity. The admitted atom set is selected from this noisy sketch by post-processing, $\widetilde S=\Top_L(\widetilde C)$. The released semantic plan contains the admitted atoms and their corresponding noisy values. In plots and tables, this mechanism is labeled \emph{Gaussian sketch + top-$L$}.
    \item 
    \texttt{laplace}.
    The mechanism forms a differentially private noisy full sketch using Laplace noise calibrated to the global $\ell_1$ sensitivity under the configured protected unit and adjacency relation. The admitted atom set is then selected by post-processing, $\widetilde S=\Top_L(\widetilde C)$. This mechanism is $(\varepsilon,0)$-differentially private, so its effective $\delta$ is zero. In plots and tables, this mechanism is labeled \emph{Laplace sketch + top-$L$}.
    \item 
    \texttt{private\_topL}.
    The mechanism admits $L$ atom indices using sequential exponential-mechanism selection. It then releases noisy masses only for the admitted atoms. The selected-value release uses Gaussian noise calibrated to the $\ell_2$ sensitivity of the restricted map $D\mapsto C_{\widetilde S}(D)$. In plots and tables, this mechanism is labeled \emph{Private top-$L$ admission + Gaussian values}.
\end{itemize}

For the sparse \texttt{private\_topL} mechanism, we split the $\varepsilon$ budget into an atom-admission budget and a selected-value budget:
\begin{equation}
\varepsilon_{\mathrm{sel}}=(1-\rho)\varepsilon,
\qquad \varepsilon_{\mathrm{val}}=\rho\varepsilon,    
\end{equation}
where $\rho\in(0,1)$ is fixed before the run. In our implementation, selection budget is allocated uniformly across the $L$ sequential exponential-mechanism admission rounds
\begin{equation}
\varepsilon_{\mathrm{sel},t} =\varepsilon_{\mathrm{sel}}/L.
\end{equation}
The admitted set has fixed cardinality $L$, but its elements are random. After the admission step, the mechanism releases noisy masses only for the admitted atoms, using a selected-value mechanism with privacy parameters $(\varepsilon_{\mathrm{val}},\delta)$.

For \texttt{Amazon All Beauty} and \texttt{Yelp Restaurants}, the main user-level clipping bounds are $B\in\{1,2\}$, with $B=5$ used only as an ablation. Smaller $B$ lowers the user-level sensitivity and the corresponding noise scale, but clips more per-user semantic mass. Larger $B$ preserves more per-user semantic mass, but requires more noise for the same privacy budget.

\subsection{Selection and random seeds}

Protected units are selected from the target split by a deterministic selection rule controlled by the run seed. The default implementation uses hash-based selection. Changing the run seed changes the selected protected units. When mechanism randomness is also derived from the run seed, changing the seed also changes the differentially private noise realization, while preserving reproducibility.
The field \texttt{strict\_sample\_size} prevents silent changes to the experimental sample size. If the requested number of records or users is unavailable and strict sampling is enabled, the run terminates rather than changing the requested experimental condition.

\vspace{-3pt}

\subsection{Generated configurations and run isolation}

Run scripts construct final per-run YAML configurations from dataset-specific defaults. A final per-run configuration can therefore contain fields not present in the template file, including the atom file used, the effective record count, the effective user count, the selected protected-unit count, the output directory, and the user-level clipping bound.
 
To avoid aggregating runs from incompatible experimental grids, our scripts support a clean-run policy. When \texttt{CLEAN\_RUNS=1}, the target run directory for the selected dataset and protected-unit type is removed before the new grid is executed. This ensures that grouped summaries are computed only from runs produced under the same configuration grid.

\begin{protocolbox}{Configuration construction}
Template configuration files define dataset-specific defaults. Run scripts construct final per-run configurations by setting dataset paths, $n_{\mathrm{records}}$ or $n_{\mathrm{users}}$, $m$, $K$, $L$, $\varepsilon$, $\delta$, the mechanism, the seed, the output directory, and, for user-level runs, the clipping bound $B$.
\end{protocolbox}

The deterministic parameter-sweep experiments use \texttt{decoder\_backend=template}, \texttt{decoder\_contract\_mode=legacy\_text}, \texttt{clean\_reference\_decoder\_backend=template}, and \texttt{summary\_embedding\_backend=none}. The optional structured-claim decoder mode in the released code is not used for these reported sweeps.

\subsection{Run outputs}

\vspace{-3pt}

Each run writes a run directory containing the final per-run configuration and
the artifacts used for analysis:
\begin{itemize}[leftmargin=14pt]
    \item \texttt{config.yaml}: final per-run configuration;
    \item \texttt{private\_plan.json}: released differentially private semantic plan used by the decoder;\vspace{-3pt}
    \item \texttt{summary.txt}: generated aggregate summary or public failure marker;\vspace{-3pt}
    \item \texttt{verifier\_attempts.json}: verifier decisions, feedback, and recorded checks;\vspace{-3pt}
    \item \texttt{metrics.json}: plan-utility metrics, released-value errors, verifier measures, and controlled probe-atom measures.
\end{itemize}
Dataset-level statistics are written under \texttt{outputs/<dataset>/stats/}.
Grouped run summaries are written under \texttt{outputs/<dataset>/summaries/}.

\vspace{-3pt}

\subsection{Grouped run summaries}

The summarization script aggregates repeated runs into grouped tables. Record-level summaries are grouped by\vspace{-3pt}
\[
(\texttt{protected\_unit}, n_{\mathrm{records}}, m, K, L,
\texttt{mechanism}, \varepsilon_{\mathrm{eff}}, \delta_{\mathrm{eff}}).
\]
User-level summaries are grouped by\vspace{-3pt}
\[
(\texttt{protected\_unit}, n_{\mathrm{users}}, B, m, K, L,
\texttt{mechanism}, \varepsilon_{\mathrm{eff}}, \delta_{\mathrm{eff}}).
\]
In our implementation, $\varepsilon_{\mathrm{eff}}$ and $\delta_{\mathrm{eff}}$ denote the mechanism-specific privacy parameters used for the reported release. For Laplace releases, $\delta_{\mathrm{eff}}=0$. For user-level runs, the effective number of selected records can vary across seeds because selected users may contribute different numbers of reviews. Therefore, user-level summaries are not grouped by effective record count. Instead, each group reports effective record-count statistics, including the mean, minimum, maximum, and standard deviation. The implementation additionally includes the adjacency relation, Gaussian calibration, selected-value budget fraction and mechanism, decoder backend, decoder-contract mode, non-private-reference decoder backend, and configured summary-embedding backend in the grouping key; controlled-probe runs also include the probe-support value. Runs that differ in any of these fields are not pooled.

\begin{protocolbox}{Summary grouping rule}
Runs with the same protected unit, sample size, atom dictionary size, assignment sparsity, plan size, mechanism, effective privacy parameters, and clipping bound are grouped across seeds. Effective record counts are reported as group-level statistics and are not used as user-level grouping keys.
\end{protocolbox}

\clearpage

\appsection{CFPB Atom Construction and Assignment Protocol}
\label{app:cfpb-atom-construction}

In this appendix, we specify how the public atom dictionary is constructed for the CFPB experiments and how protected target records are mapped to bounded semantic vectors. This section distinguishes the public or auxiliary objects from the protected target objects and identifies where differential privacy is applied.

\subsection{Raw CFPB record structure}

Each CFPB record contains a free-text complaint narrative and structured metadata fields. In our CFPB experiments, each complaint narrative is treated as one protected record, and its text is the field used in the target assignment step. The fields \texttt{Product}, \texttt{Issue}, and \texttt{Sub-issue} are structured CFPB metadata categories.

\begin{examplebox}{Illustrative CFPB record}
\footnotesize

\textbf{Consumer complaint narrative:}

\emph{I found an account on my credit report that does not belong to me. I disputed the information, but the credit bureau did not remove it.}

\vspace{1mm}
\textbf{Product:}

Credit reporting, credit repair services, or other personal consumer reports

\vspace{1mm}
\textbf{Issue:}

Incorrect information on your report

\vspace{1mm}
\textbf{Sub-issue:}

Information belongs to someone else
\end{examplebox}

The atom labels used in the CFPB experiments are not generated by an LLM and are not derived from protected target narratives. They are constructed from \texttt{Product}, \texttt{Issue}, and \texttt{Sub-issue} values observed in the auxiliary split.

\subsection{Auxiliary and protected target splits}

We partition the CFPB records into two disjoint sets,
\[
D_{\mathrm{aux}} \cap D_{\mathrm{target}} = \varnothing .
\]
The auxiliary split $D_{\mathrm{aux}}$ is used to construct decoder-visible atom labels and to fit the text representation used by the assignment map. The target split $D_{\mathrm{target}}$ is the protected dataset. The CFPB privacy
guarantees are defined with respect to record-level add/drop adjacency on $D_{\mathrm{target}}$, with $D_{\mathrm{aux}}$ fixed.

\begin{protocolbox}{Experimental privacy boundary}
\footnotesize
\textbf{Auxiliary split $D_{\mathrm{aux}}$.}
Used for atom-dictionary construction and representation fitting.

\vspace{1mm}
\textbf{Protected target split $D_{\mathrm{target}}$.}
Used to compute bounded record-to-atom assignments, aggregate them into the semantic sketch, and release the differentially private plan.

\vspace{1mm}
\textbf{Not provided to the decoder.}
Raw target narratives, target excerpts, nearest neighbors, per-record contribution vectors, non-private sketch values, atoms outside the released plan, non-private representatives, or labels derived non-privately from the protected target split.
\end{protocolbox}

Thus, the decoded summary depends on the protected target split only through the released differentially private semantic plan.

\subsection{Public atom labels}

For CFPB, each atom label is formed by concatenating structured metadata values. In all CFPB runs, the atom schema is
\[
\texttt{Product / Issue / Sub-issue}.
\]
Changing $m$ changes the number of preserved public atoms, not the atom schema.

\begin{examplebox}{Example atom label construction}
\footnotesize
From the metadata values

\vspace{1mm}
\textbf{Product:}

Credit reporting, credit repair services, or other personal consumer reports

\textbf{Issue:}

Incorrect information on your report

\textbf{Sub-issue:}

Information belongs to someone else

\vspace{2mm}
we construct the atom label

\vspace{1mm}
\texttt{Credit reporting, credit repair services, or other personal consumer reports /}\\
\texttt{Incorrect information on your report /}\\
\texttt{Information belongs to someone else}
\end{examplebox}

For a requested dictionary size $m$, we compute the auxiliary support of each distinct \texttt{Product / Issue / Sub-issue} combination in $D_{\mathrm{aux}}$. We sort the combinations by auxiliary support, with deterministic tie-breaking, and keep the top $m$ combinations. The resulting public atom universe is $\mathcal A_m=\{a_1,\ldots,a_m\}$. If fewer than $m$ metadata combinations remain after preprocessing, we use all remaining combinations and report the realized dictionary size in the run metadata. In the main CFPB experiments, we use $m\in\{50,100,200\}$.

\begin{examplebox}{Example public atom dictionary}
\footnotesize
Suppose the first rows of the auxiliary-support ordering are

\vspace{1mm}
\begin{tabular}{p{0.15\linewidth}p{0.78\linewidth}}
\toprule
\textbf{Auxiliary support} & \textbf{Metadata combination} \\
\midrule
1520 &
\texttt{Credit reporting / Incorrect information / Information belongs to someone else} \\

940 &
\texttt{Debt collection / Attempts to collect debt not owed / Debt is not yours} \\

710 &
\texttt{Credit reporting / Problem with investigation / Investigation did not fix an error} \\

455 &
\texttt{Checking or savings account / Managing an account / Deposits and withdrawals} \\
\bottomrule
\end{tabular}

\vspace{2mm}

If $m=3$, the public atom dictionary contains the first three combinations in this ordering. If $m=200$, it contains the first 200 combinations, provided that at least 200 combinations remain after preprocessing.
\end{examplebox}

\vspace{-3pt}

\subsection{Record-to-atom assignment}

After the atom dictionary is fixed, each protected target record $x_i\in D_{\mathrm{target}}$ is mapped to a sparse nonnegative vector $v_i=\phi(x_i)\in\mathbb R_+^m$. The assignment map uses the protected target narrative, the fixed public atom dictionary, and a text representation fitted only on auxiliary or public data. In our implementation, we compute similarity scores between the target narrative and the public atom labels or descriptions, preserve the $K$ highest-scoring atoms under deterministic tie-breaking, map the preserved scores to nonnegative weights, and normalize the weights to have total mass at most one. Formally, for each target record $x_i$, let $T_i\subseteq[m]$ be the selected atom indices. Then\vspace{-5pt}
\[
|T_i|\le K,\qquad
v_{ij}=0 \;\, \forall j\notin T_i,\qquad
\sum_{j=1}^m v_{ij}\le 1 .
\]
Equivalently, $\|v_i\|_0\le K$ and $\|v_i\|_1\le 1$.

\begin{center}
\begin{minipage}{\textwidth}
\begin{examplebox}{Example bounded assignment}
\footnotesize

Assume $m=200$ and $K=3$. Consider the protected target narrative

\vspace{1mm}
\emph{I disputed an account on my credit report because it was opened by someone else. The company said it verified the account, but I never received documentation.}

\vspace{2mm}
The assignment map may produce
\[
\begin{array}{lll}
a_{17} &: 0.58 &
\texttt{Credit reporting / Incorrect information / Information belongs to someone else} \\
a_{42} &: 0.27 &
\texttt{Credit reporting / Problem with investigation / Investigation did not fix an error} \\
a_{81} &: 0.15 &
\texttt{Debt collection / Attempts to collect debt not owed / Debt is not yours}.
\end{array}
\]
All other coordinates are zero. The record contributes to at most $K=3$ atoms and has total contribution mass $0.58+0.27+0.15=1.00$.

\end{examplebox}
\end{minipage}
\end{center}

The bounded-contribution condition implies the record-level sensitivity used in the privacy analysis. Under add/drop adjacency, adding or removing one protected record changes the aggregate sketch by a vector with $\ell_1$ norm at most one.

\subsection{Aggregate semantic sketch}

The non-private aggregate sketch is the sum of the target-record assignment vectors
\begin{equation}
C(D_{\mathrm{target}}) = \sum_{x_i\in D_{\mathrm{target}}} v_i \in \mathbb R_+^m .
\end{equation}
Coordinate $C_j(D_{\mathrm{target}})$ is the aggregate semantic mass assigned to atom $a_j$. It need not be an integer count, because a record may split its unit mass across several atoms.

\begin{examplebox}{Example non-private aggregate sketch}
\footnotesize
After aggregation over protected target records, the internal non-private sketch can contain values such as
\begin{equation}
\begin{array}{lll}
C_{17}(D_{\mathrm{target}}) &=& 482.3,\\
C_{42}(D_{\mathrm{target}}) &=& 391.8,\\
C_{81}(D_{\mathrm{target}}) &=& 210.5.
\end{array}
\end{equation}
These values are internal to the trusted curator and are not released without a differential-privacy guarantee.

\end{examplebox}

\subsection{Differentially private semantic-plan release}

\texttt{DP-SPIN} releases a differentially private semantic plan rather than the non-private sketch. In the full-sketch release mode, the curator samples $\widetilde C = C(D_{\mathrm{target}})+Z$, where $Z$ is Laplace or Gaussian noise calibrated to the sensitivity of $C$ under the chosen norm. Under record-level add/drop adjacency and the bound $\|v_i\|_1\le 1$, the sketch satisfies $\Delta_1(C)\le 1$, $\Delta_2(C)\le 1$.
The admitted atom set is then selected by post-processing, $\widetilde S=\mathsf{Top}_L(\widetilde C)$. The released semantic plan is
\begin{equation}
P(D_{\mathrm{target}}) = \left( \widetilde S, \widetilde C_{\widetilde S}, \{(\ell(a_j),d(a_j)):j\in \widetilde S\} \right).   
\end{equation}
Normalized semantic-support values and support bins, when reported, are deterministic post-processing of the released noisy values and the public denominator.

\begin{center}
\begin{minipage}{1.05\textwidth}
\begin{examplebox}{Example released differentially private plan}
\scriptsize
For $L=3$, the released plan may be

{\scriptsize
\[
\begin{array}{lll}
\textbf{Atom label} & \textbf{Noisy mass} & \textbf{Support bin} \\
\midrule
\texttt{Credit reporting / Incorrect information / Information belongs to someone else}
& 479.6 & \texttt{high} \\

\texttt{Credit reporting / Problem with investigation / Investigation did not fix an error}
& 386.2 & \texttt{moderate} \\

\texttt{Debt collection / Attempts to collect debt not owed / Debt is not yours}
& 214.7 & \texttt{low}
\end{array}
\]
}
\end{examplebox}
\end{minipage}
\end{center}

\subsection{CFPB protocol summary}

\begin{protocolbox}{CFPB record-level protocol}
\footnotesize
\begin{enumerate}[leftmargin=16pt]
    \item Split CFPB records into $D_{\mathrm{aux}}$ and $D_{\mathrm{target}}$.
    \item Construct public atom labels from \texttt{Product / Issue / Sub-issue} combinations in $D_{\mathrm{aux}}$.
    \item Keep the $m$ combinations with highest auxiliary support as the public atom universe.
    \item Fit the assignment representation using auxiliary or public data.
    \item Map each protected target narrative to at most $K$ atoms with total mass at most one.
    \item Aggregate the bounded assignment vectors into $C(D_{\mathrm{target}})$.
    \item Release a differentially private semantic plan containing $L$ admitted atoms and noisy masses.
    \item Provide the decoder only the released plan and public decoding rules.
\end{enumerate}
\end{protocolbox}

\clearpage

\appsection{Amazon Reviews Atom Construction and Assignment Protocol}
\label{app:amazon-atom-construction}

In this appendix, we specify the dataset-specific atom construction and assignment protocol used for the Amazon experiments. We use the Amazon Reviews 2023 \texttt{All\_Beauty} category. Each record contains review text, a rating, item identifiers, category metadata, and a reviewer identifier. The reviewer identifier allows all target reviews written by the same reviewer to be treated as one protected unit in the user-level setting.

Amazon \texttt{All\_Beauty} contains many reviewers with one review and a smaller set of reviewers with multiple reviews. We therefore use it for both record-level and user-level differential-privacy evaluation. We report dataset statistics, including the number of records, the number of reviewers, the number of reviewers with multiple reviews, and the distribution of per-reviewer contribution sizes, together with the experimental outputs.

\subsection{Raw Amazon review structure}

An Amazon review record contains review text and structured fields such as a reviewer identifier, rating, item identifier, and category. In our experiments, the review text is the protected text field used for assignment. For record-level privacy, one protected unit is one review. For user-level privacy, one protected unit is the collection of all target reviews associated with the same reviewer identifier.

\begin{examplebox}{Illustrative Amazon \texttt{All\_Beauty} review record}
\footnotesize
\textbf{Review text:}

\emph{The skin cream absorbed quickly and worked well on dry skin, but the pump dispenser broke after a few uses.}

\vspace{1mm}
\textbf{Rating:}

2.0

\vspace{1mm}
\textbf{User identifier:}

\texttt{user\_A17Q9}

\vspace{1mm}
\textbf{Item identifier:}

\texttt{parent\_asin = B000EXAMPLE}

\vspace{1mm}
\textbf{Category:}

\texttt{All\_Beauty}
\end{examplebox}

\subsection{Auxiliary and target splits}

For record-level experiments, Amazon reviews are partitioned into auxiliary and protected target records. For user-level experiments, the partition is performed at the reviewer level. All reviews from the same reviewer are assigned entirely to either the auxiliary split or the protected target split.

\begin{protocolbox}{Amazon split protocol}
\footnotesize
\textbf{Record-level experiments.}
Reviews are split into auxiliary and protected target records. Each target review is one protected unit.

\vspace{1mm}
\textbf{User-level experiments.}
Reviewers are split into auxiliary and protected target reviewers. All reviews from the same reviewer are placed in one split.

\vspace{1mm}
\textbf{User-disjointness.}
The user-disjoint split ensures that atom construction and representation fitting do not use reviews from reviewers whose records are included in the protected target split.

\vspace{1mm}
\textbf{Protected user contribution.}
For a protected target reviewer $u$, all target reviews written by $u$ are aggregated and clipped before the differentially private release.
\end{protocolbox}

\subsection{Amazon atom labels}

Unlike CFPB, Amazon Reviews does not provide a fine-grained issue taxonomy analogous to \texttt{Product / Issue / Sub-issue}. The \texttt{All\_Beauty} category restricts the domain, but it does not provide a fine-grained set of issue labels from which to form atom coordinates. We therefore construct decoder-visible atom labels as review-theme phrases from the auxiliary split.

We use deterministic phrase extraction, filtering, canonicalization, duplicate merging, and support-based ranking within the \texttt{All\_Beauty} domain. Candidate labels are two-word review-theme phrases extracted only from auxiliary review text. The filtering step removes generic review phrases, sentiment-only phrases, uninformative word-order variants, and phrases outside the \texttt{All\_Beauty} domain. Examples of preserved phrases include recurring themes related to skin concerns, hair concerns, product use, scent, packaging, delivery, price, quality, customer support, and product failure.

\begin{center}
\begin{minipage}{0.55\textwidth}
\begin{examplebox}{Example Amazon \texttt{All\_Beauty} atom labels}
\footnotesize
\begin{itemize}[leftmargin=14pt]
    \item \texttt{sensitive skin}
    \item \texttt{dry skin}
    \item \texttt{skin irritation}
    \item \texttt{hair loss}
    \item \texttt{dry hair}
    \item \texttt{pump dispenser}
    \item \texttt{strong smell}
    \item \texttt{absorbs quickly}
    \item \texttt{fast shipping}
    \item \texttt{waste money}
\end{itemize}
\end{examplebox}
\end{minipage}
\end{center}

These atom labels are fixed before the protected target split is processed. The protected target reviews affect only bounded record-to-atom assignments, aggregation, and the differentially private release. If decoder-visible atom labels are constructed from protected target reviews in a deployment, that construction is a data-dependent release and its privacy cost must be included in the accounting.

\subsection{Amazon dictionary construction}
 
The Amazon atom dictionary is constructed only from the auxiliary split. The procedure is as follows.
\begin{enumerate}[leftmargin=16pt]
    \item Load auxiliary Amazon \texttt{All\_Beauty} reviews.
    \item Normalize auxiliary review text by lowercasing and deterministic token filtering.\vspace{-1pt}
    \item Extract candidate two-word review-theme phrases from auxiliary review text.\vspace{-1pt}
    \item Remove generic review phrases, sentiment-only phrases, uninformative word-order variants, and phrases outside the \texttt{All\_Beauty} domain.\vspace{-1pt}
    \item Canonicalize phrase variants deterministically, such as \texttt{easy apply} to \texttt{easy application}, \texttt{quality hair} to \texttt{hair quality}, and \texttt{wasted money} to \texttt{waste money}.\vspace{-1pt}
    \item Merge duplicate labels that denote the same review theme.\vspace{-1pt}
    \item Retain only candidates with auxiliary document support at least $25$ after filtering, canonicalization, and duplicate merging.\vspace{-1pt}
    \item Rank the remaining candidates by auxiliary document support, with deterministic tie-breaking.\vspace{-1pt}
    \item Use the resulting ordered list as the fixed public atom dictionary for the protected target split.
\end{enumerate}

Duplicate merging is restricted to labels treated as the same review theme; distinct concerns remain separate. It merges only labels that denote the same review theme and would otherwise occupy multiple atom coordinates. It does not merge distinct concerns such as \texttt{dry skin}, \texttt{sensitive skin}, and \texttt{skin irritation}. Similarly, \texttt{hair loss}, \texttt{dry hair}, and \texttt{hair growth} remain distinct when they describe different product effects.

In the Amazon experiments, filtering, canonicalization, duplicate merging, and the auxiliary document-support threshold keep $46$ atom labels. Thus, the resulting Amazon atom universe has size $m=46$, and this fixed public dictionary is used for all Amazon \texttt{All\_Beauty} runs.

\begin{examplebox}{Example Amazon auxiliary construction}
\footnotesize

Suppose the first rows of the auxiliary-support ordering are

\vspace{1mm}
\begin{center}
\begin{tabular}{p{0.3\linewidth}p{0.3\linewidth}}
\toprule
\textbf{Auxiliary support} & \textbf{Candidate atom label} \\
\midrule
1840 & \texttt{sensitive skin} \\
1390 & \texttt{dry skin} \\
1215 & \texttt{pump dispenser} \\
980  & \texttt{strong smell} \\
875  & \texttt{customer service} \\
\bottomrule
\end{tabular}    
\end{center}

If this illustrative ordered dictionary were truncated to three atoms, the atom dictionary would be $\{\texttt{sensitive skin},~\texttt{dry skin},~\texttt{pump dispenser}\}$.
\end{examplebox}

The resulting public atom dictionary is fixed before the protected target split is used for assignment, aggregation, and differentially private release.

\vspace{-3pt}

\subsection{Record-level Amazon assignment}

\vspace{-3pt}

For record-level differential privacy, each target review $x_i$ is mapped to a bounded sparse vector over the public atom dictionary, $v_i=\phi(x_i)\in\mathbb R_+^m$, with $\|v_i\|_0\le K$ and $\|v_i\|_1\le 1$. The assignment map is fixed before the protected target split is processed. It uses the public atom dictionary and a text representation fitted only on public or auxiliary text.

\begin{examplebox}{Example Amazon record-level assignment}
\footnotesize
\textbf{Protected target review:}

\emph{The skin cream absorbed quickly and helped with dry skin, but the pump dispenser stopped working.}

\vspace{1mm}
\textbf{Assigned atoms for $K=3$:}
\[
\begin{array}{lll}
a_{12} &: 0.52 & \texttt{dry skin} \\
a_{38} &: 0.31 & \texttt{pump dispenser} \\
a_{91} &: 0.17 & \texttt{absorbs quickly}
\end{array}
\]
All other coordinates are zero. The review contributes total mass $0.52+0.17+0.31=1.00$ to at most three atoms.
\end{examplebox}

\vspace{-3pt}

\subsection{User-level Amazon assignment}

\vspace{-3pt}

For user-level differential privacy, all target reviews written by the same reviewer form one protected unit. Let $D_u^{\mathrm{target}}$ denote the set of target reviews written by reviewer $u$. We first compute record-level assignment vectors and aggregate them as $U_u=\sum_{x_i\in D_u^{\mathrm{target}}}\phi(x_i)$. We then clip the aggregate reviewer vector to a fixed $\ell_1$ contribution bound $B$, obtaining $V_u=\mathsf{Clip}_B(U_u)$, $\|V_u\|_1\le B$. The user-level aggregate sketch is $C(D)=\sum_u V_u$.
Under user-level add/drop adjacency, adding or removing all target reviews from one reviewer changes the clipped aggregate sketch by a vector with $\ell_1$ norm at most $B$. The noise scale is therefore calibrated to the user-level sensitivity $B$, rather than to the record-level sensitivity $1$. In the main Amazon user-level experiments, we use $B\in\{1,2\}$, with $B=5$ included as a clipping-bound ablation. Smaller $B$ reduces the noise required for a fixed privacy budget but clips more per-reviewer semantic mass. Larger $B$ preserves more semantic mass from reviewers with multiple target reviews but requires more noise for the same privacy budget.

\vspace{-3pt}


\subsection{Dataset and run evaluation measures}

For each Amazon run, the implementation records dataset statistics and grouped run summaries. The dataset statistics include the number of records, the number of reviewers, the number of records with a valid reviewer identifier, the number of reviewers with multiple reviews, the per-reviewer contribution-size distribution, text-length statistics, and category counts. Grouped run summaries aggregate plan-utility and verifier metrics by protected-unit type, number of selected records or reviewers, user contribution bound $B$, release mechanism, dictionary size $m=46$, and effective privacy parameters.

For user-level experiments, the effective number of selected records can vary across random seeds because a fixed number of selected reviewers may contribute different numbers of reviews. Therefore, grouped user-level summaries are not grouped by effective record count. Instead, each group reports the mean, minimum, maximum, and standard deviation of the effective record count. This keeps runs with the same user-level experimental condition in the same group while reporting variation in the number of underlying reviews.

\begin{protocolbox}{Amazon run outputs}
\footnotesize
For Amazon \texttt{All\_Beauty}, the implementation records 
\begin{itemize}[leftmargin=14pt]
    \item processed-record statistics for the prepared category;\vspace{-1pt}
    \item auxiliary and protected-target split statistics;\vspace{-1pt}
    \item user-contribution statistics for user-level experiments;\vspace{-1pt}
    \item atom-construction manifests specifying the domain profile,
    canonicalization rules, refinement mode, candidate counts, auxiliary-support
    threshold, preserved dictionary size, and auxiliary-support values;\vspace{-1pt}
    \item grouped run summaries keyed by $n_{\mathrm{records}}$ or
    $n_{\mathrm{users}}$, $B$ when applicable, mechanism, $m=46$, and
    $\varepsilon$;\vspace{-1pt}
    \item verifier measures, including unsupported-atom rate and
    unsupported-comparison rate.
\end{itemize}
\end{protocolbox}

\vspace{-3pt}

\subsection{Fixed public Amazon atom dictionary}
\label{app:amazon-fixed-atoms}

\vspace{-3pt}

Table~\ref{tab:amazon-fixed-atoms} gives the fixed public atom dictionary used in the Amazon \texttt{All\_Beauty} experiments. The support column reports auxiliary document support after filtering, canonicalization, duplicate merging, and application of the auxiliary-support threshold. These support values are computed only from the auxiliary split and are not statistics of the protected target split.

\vspace{-5pt}

\subsection{Amazon protocol summary}

\vspace{-3pt}

\begin{protocolbox}{Amazon \texttt{All\_Beauty} record-level and user-level protocol}
\footnotesize
\begin{itemize}[leftmargin=14pt]
    \item \textbf{Dataset subset.} Amazon Reviews 2023 \texttt{All\_Beauty}.\vspace{-1pt}
    \item \textbf{Record-level DP.} One review is one protected unit; each review contributes a vector with $\ell_1$ mass at most $1$.\vspace{-1pt}
    \item \textbf{User-level DP.} All target reviews written by one reviewer form one protected unit.\vspace{-1pt}
    \item \textbf{Auxiliary and target splits.} For user-level experiments, the split is user-disjoint.\vspace{-1pt}
    \item \textbf{Atom labels.} Atom labels are two-word review-theme phrases extracted only from auxiliary \texttt{All\_Beauty} reviews, then filtered, canonicalized, and duplicate-merged.\vspace{-1pt}
    \item \textbf{Dictionary size.} The fixed public Amazon atom dictionary has size $m=46$.\vspace{-1pt}
    \item \textbf{Assignment.} Each review is mapped to at most $K$ atoms with total mass at most one.\vspace{-1pt}
    \item \textbf{User clipping.} Each reviewer aggregate vector is clipped to $\ell_1$ norm at most $B$; the main Amazon user-level runs use $B\in\{1,2\}$, with $B=5$ used only as an ablation.\vspace{-1pt}
    \item \textbf{Differentially private release.} The released semantic plan contains admitted atom labels and noisy masses or support bins.\vspace{-1pt}
    \item \textbf{Recorded outputs.} Dataset statistics, atom-construction manifests, grouped run summaries, and verifier measures are recorded with the experiment outputs.
\end{itemize}
\end{protocolbox}

\clearpage

\begin{table}[h]
\centering
\caption{Fixed Public Amazon \texttt{All\_Beauty} atom dictionary with $m=46$.}
\label{tab:amazon-fixed-atoms}
\tiny
\resizebox{0.60\textwidth}{!}{
\begin{tabular}{rll}
\toprule
\textbf{Atom id} & \textbf{Atom label} & \textbf{Auxiliary support} \\
\midrule
0 & \texttt{waste money} & 3781 \\
1 & \texttt{sensitive skin} & 2560 \\
2 & \texttt{easy application} & 2426 \\
3 & \texttt{curly hair} & 2033 \\
4 & \texttt{dry skin} & 1675 \\
5 & \texttt{product quality} & 1589 \\
6 & \texttt{long hair} & 1530 \\
7 & \texttt{skin feel} & 1526 \\
8 & \texttt{fast shipping} & 1207 \\
9 & \texttt{customer service} & 1197 \\
10 & \texttt{human hair} & 984 \\
11 & \texttt{hair quality} & 919 \\
12 & \texttt{natural hair} & 825 \\
13 & \texttt{dry hair} & 751 \\
14 & \texttt{fast delivery} & 666 \\
15 & \texttt{short hair} & 643 \\
16 & \texttt{skin care} & 617 \\
17 & \texttt{oily skin} & 587 \\
18 & \texttt{strong smell} & 550 \\
19 & \texttt{dead skin} & 483 \\
20 & \texttt{straight hair} & 453 \\
21 & \texttt{hair length} & 451 \\
22 & \texttt{wavy hair} & 427 \\
23 & \texttt{reasonable price} & 426 \\
24 & \texttt{skin irritation} & 399 \\
25 & \texttt{wet hair} & 368 \\
26 & \texttt{battery life} & 361 \\
27 & \texttt{low price} & 339 \\
28 & \texttt{synthetic hair} & 326 \\
29 & \texttt{low quality} & 309 \\
30 & \texttt{soft hair} & 292 \\
31 & \texttt{absorbs quickly} & 283 \\
32 & \texttt{chemical smell} & 277 \\
33 & \texttt{hair growth} & 252 \\
34 & \texttt{hair loss} & 190 \\
35 & \texttt{soft skin} & 186 \\
36 & \texttt{smooth skin} & 184 \\
37 & \texttt{high price} & 109 \\
38 & \texttt{natural scent} & 93 \\
39 & \texttt{pump dispenser} & 90 \\
40 & \texttt{smooth application} & 81 \\
41 & \texttt{oily hair} & 78 \\
42 & \texttt{smooth hair} & 63 \\
43 & \texttt{skin cream} & 47 \\
44 & \texttt{customer support} & 30 \\
45 & \texttt{skin rash} & 26 \\
\bottomrule
\end{tabular}}
\end{table}

\clearpage

\appsection{Yelp Reviews Atom Construction and Assignment Protocol}
\label{app:yelp-atom-construction}

In this appendix, we specify the dataset-specific atom construction and assignment protocol used for the Yelp experiments. Yelp review records contain review text, business identifiers, ratings, timestamps, and reviewer identifiers. We join each review to Yelp business metadata by \texttt{business\_id} and restrict our experiments to reviews whose joined business categories include \texttt{Restaurants}. The resulting Yelp Restaurants subset is used for both record-level and user-level differential-privacy evaluation. The \texttt{user\_id} field allows all target reviews written by the same reviewer to be treated as one protected unit in the user-level setting.

Yelp Restaurants contains many reviewers with one review and a smaller set of reviewers with multiple reviews. We report dataset statistics, including the number of records, the number of reviewers, the number of reviewers with
multiple reviews, and the distribution of per-reviewer contribution sizes, together with the experimental outputs.

\subsection{Raw Yelp review structure}

Yelp review record contains review text and structured fields such as a reviewer identifier, business identifier, rating, and date. The Yelp business metadata file contains business-level fields such as business name, city, state, and categories. In preprocessing, we join review records to business metadata by \texttt{business\_id} and keep reviews whose joined category string contains \texttt{Restaurants}.

\begin{examplebox}{Illustrative Yelp Restaurants record after metadata join}
\footnotesize

\textbf{Review text:}

\emph{The food was excellent and the staff were friendly, but we waited almost forty minutes even though we had a reservation. The restaurant was also very noisy.}

\vspace{1mm}
\textbf{Stars:}

3.0

\vspace{1mm}
\textbf{User identifier:}

\texttt{user\_8K42}

\vspace{1mm}
\textbf{Business identifier:}

\texttt{business\_P19Z}

\vspace{1mm}
\textbf{Business categories:}

\texttt{Restaurants | Italian | Nightlife}
\end{examplebox}

For record-level differential privacy, one protected unit is one Yelp restaurant review. For user-level differential privacy, one protected unit is the collection of all target reviews written by the same \texttt{user\_id}.

\subsection{Auxiliary and target splits}

For record-level experiments, Yelp restaurant reviews are partitioned into auxiliary and target records. For user-level experiments, the partition is performed at the reviewer level, so all reviews from the same \texttt{user\_id} are assigned entirely to either the auxiliary split or the protected target split.

\begin{protocolbox}{Yelp Restaurants split protocol}
\footnotesize
\textbf{Record-level experiments.}
Restaurant reviews are split into auxiliary and protected target records. Each target review is one protected unit.

\vspace{1mm}
\textbf{User-level experiments.}
Reviewers are split into auxiliary and protected target reviewers. All reviews written by the same reviewer are placed entirely in one split.

\vspace{1mm}
\textbf{User-disjointness.}
The user-disjoint split ensures that atom construction and representation fitting do not use reviews written by reviewers whose records are included in the protected target split.
\end{protocolbox}

\subsection{Yelp atom labels}

Yelp business categories are used to restrict the dataset to restaurant reviews, but they do not define fine-grained review-theme labels. We therefore construct Yelp atom labels only from auxiliary restaurant-review text. Our implementation uses deterministic restaurant-domain phrase extraction, filtering, canonicalization, duplicate merging, and auxiliary-support ranking.

The preserved atoms include restaurant-experience themes, such as service, waiting time, food quality, seating, price, and portion size, as well as restaurant-category and menu themes, such as cuisine type, breakfast items, coffee shops, sushi, and food trucks. Category and menu atoms are preserved when they occur as recurring review themes in the auxiliary restaurant-review text.

These atom labels are fixed before the protected target split is processed. The protected target reviews affect only bounded record-to-atom assignments, aggregation, and the differentially private release. If decoder-visible atom labels are constructed from protected target reviews in a deployment, that construction is a data-dependent release and its privacy cost must be included in the accounting.

\subsection{Yelp dictionary construction}

The Yelp atom dictionary is constructed only from the auxiliary split. The construction proceeds as follows.
\begin{enumerate}[leftmargin=16pt]
    \item Join Yelp reviews to Yelp business metadata by \texttt{business\_id}.
    \item Retain reviews for businesses whose category string contains
    \texttt{Restaurants}.
    \item Partition the resulting Yelp Restaurants records into auxiliary and protected target sets. For user-level experiments, the partition is user-disjoint.
    \item Use only the auxiliary Yelp Restaurants reviews for atom construction.
    \item Normalize auxiliary review text by lowercasing and deterministic token filtering.
    \item Extract candidate two-word restaurant-review phrases from auxiliary review text.
    \item Remove generic review phrases, phrases specific to non-restaurant product-review domains, sentiment-only phrases, and ill-formed or uninformative word orders.
    \item Canonicalize phrase variants deterministically, such as
    \texttt{staff friendly} to \texttt{friendly staff} and
    \texttt{food delicious} to \texttt{delicious food}.
    \item Merge duplicate labels after canonicalization.
    \item Rank the remaining candidates by auxiliary document support, with deterministic tie-breaking.
    \item Retain the top $m$ labels as the fixed public atom dictionary for the protected target split.
\end{enumerate}

In the Yelp Restaurants experiments, we use dictionary sizes $m\in\{50,100\}$, with $m=200$ used only as a finer-dictionary ablation. The dictionaries are ordered by auxiliary document support. The $m=50$ dictionary is reported in Table~\ref{tab:yelp-fixed-atoms-m50}. The $m=100$ and $m=200$ dictionaries are provided with the experiment artifacts.

\subsection{Fixed Public Yelp Restaurants atom dictionary}
\label{app:yelp-fixed-atoms}

Table~\ref{tab:yelp-fixed-atoms-m50} reports the fixed $m=50$ public atom dictionary used in the main Yelp Restaurants experiments. The support column reports auxiliary document support after filtering, canonicalization, and duplicate merging. These support values are computed only from the auxiliary split and are not statistics of the protected target split.

\begin{table}[h]
\centering
\caption{Fixed Public Yelp Restaurants $m=50$ atom dictionary.}
\label{tab:yelp-fixed-atoms-m50}
\footnotesize
\resizebox{0.50\textwidth}{!}{
\begin{tabular}{rll}
\toprule
\textbf{Atom id} & \textbf{Atom label} & \textbf{Auxiliary support} \\
\midrule
0 & \texttt{friendly staff} & 62502 \\
1 & \texttt{customer service} & 58087 \\
2 & \texttt{long wait} & 38604 \\
3 & \texttt{service speed} & 34677 \\
4 & \texttt{delicious food} & 31452 \\
5 & \texttt{food quality} & 25145 \\
6 & \texttt{mexican food} & 24201 \\
7 & \texttt{outdoor seating} & 19223 \\
8 & \texttt{fast service} & 14345 \\
9 & \texttt{chinese food} & 13887 \\
10 & \texttt{fresh food} & 11137 \\
11 & \texttt{high quality} & 10288 \\
12 & \texttt{portion size} & 10182 \\
13 & \texttt{indian food} & 9421 \\
14 & \texttt{mexican restaurant} & 8988 \\
15 & \texttt{thai food} & 8852 \\
16 & \texttt{italian food} & 8811 \\
17 & \texttt{coffee shop} & 8665 \\
18 & \texttt{pizza place} & 8618 \\
19 & \texttt{lunch special} & 7813 \\
20 & \texttt{seating} & 6933 \\
21 & \texttt{slow service} & 6418 \\
22 & \texttt{reasonable price} & 6230 \\
23 & \texttt{italian restaurant} & 6195 \\
24 & \texttt{terrible service} & 5979 \\
25 & \texttt{lunch menu} & 5585 \\
26 & \texttt{breakfast sandwich} & 5566 \\
27 & \texttt{food truck} & 5397 \\
28 & \texttt{sushi place} & 5251 \\
29 & \texttt{drink menu} & 5211 \\
30 & \texttt{price} & 5209 \\
31 & \texttt{chinese restaurant} & 5090 \\
32 & \texttt{quick lunch} & 4936 \\
33 & \texttt{sushi bar} & 4610 \\
34 & \texttt{breakfast burrito} & 4495 \\
35 & \texttt{breakfast place} & 4118 \\
36 & \texttt{dinner menu} & 3893 \\
37 & \texttt{large portion} & 3774 \\
38 & \texttt{food menu} & 3745 \\
39 & \texttt{small portion} & 3650 \\
40 & \texttt{vietnamese food} & 3509 \\
41 & \texttt{brunch menu} & 3298 \\
42 & \texttt{korean food} & 3230 \\
43 & \texttt{food place} & 3077 \\
44 & \texttt{greek food} & 3060 \\
45 & \texttt{thai restaurant} & 3021 \\
46 & \texttt{sushi restaurant} & 2956 \\
47 & \texttt{indian restaurant} & 2941 \\
48 & \texttt{breakfast food} & 2705 \\
49 & \texttt{hot food} & 2666 \\
\bottomrule
\end{tabular}}
\end{table}

\subsection{Record-level Yelp assignment}

For record-level differential privacy, each target review $x_i$ is mapped to a bounded sparse vector over the fixed public atom dictionary, $v_i=\phi(x_i)\in\mathbb R_+^m$, with $\|v_i\|_0\le K$ and $\|v_i\|_1\le 1$.

\begin{examplebox}{Example Yelp record-level assignment}
\footnotesize

\textbf{Protected target review:}

\emph{The food was excellent and the staff were friendly, but we waited almost forty minutes even though we had a reservation.}

\vspace{1mm}
\textbf{Assigned atoms for $K=3$:}
\[
\begin{array}{lll}
a_{4}  &: 0.45 & \texttt{long wait} \\
a_{11} &: 0.35 & \texttt{friendly staff} \\
a_{29} &: 0.20 & \texttt{food quality}
\end{array}
\]

All other coordinates are zero. The review contributes total mass $1.00$ to at most three atoms.

\end{examplebox}

\subsection{User-level Yelp assignment}

For user-level differential privacy, all target reviews written by the same reviewer form one protected unit. Let $D_u^{\mathrm{target}}$ denote the set of target reviews written by reviewer $u$. We first compute record-level assignment vectors and aggregate them as $U_u=\sum_{x_i\in D_u^{\mathrm{target}}}\phi(x_i)$. We then clip the aggregate reviewer vector to a fixed $\ell_1$ contribution bound $B$, obtaining $V_u=\mathsf{Clip}_B(U_u)$, $\|V_u\|_1\le B$. The user-level semantic sketch is $C(D)=\sum_u V_u$. Under user-level add/drop adjacency, adding or removing all target reviews from one reviewer changes the clipped aggregate sketch by a vector with $\ell_1$ norm at most $B$. The noise scale is therefore calibrated to the user-level sensitivity $B$, rather than to the record-level sensitivity $1$. In the main Yelp user-level experiments, we use $B\in\{1,2\}$, with $B=5$ included as a clipping-bound ablation. Smaller $B$ reduces the noise required for a fixed privacy budget but clips more per-reviewer semantic mass. Larger $B$ preserves more semantic mass from reviewers with multiple target reviews but requires more noise for the same privacy budget.

\subsection{Dataset and run statistics}

For each Yelp run, the implementation records dataset statistics and grouped run summaries. The dataset statistics include the number of records, the number of reviewers, the number of records with a valid \texttt{user\_id}, the number of reviewers with multiple reviews, the per-reviewer contribution-size distribution, text-length statistics, and the most frequent business categories. Grouped summaries aggregate plan-utility and verifier metrics by protected-unit type, number of selected records or reviewers, contribution bound $B$, release mechanism, dictionary size $m$, and effective privacy parameters. This grouping keeps runs with different user counts, dictionary sizes, mechanisms, privacy parameters, or clipping bounds in separate summary tables.

\begin{protocolbox}{Yelp run outputs}
\footnotesize
For Yelp Restaurants, our implementation records
\begin{itemize}[leftmargin=14pt]
    \item processed-record statistics for the restaurant subset;
    \item auxiliary and protected-target split statistics;
    \item reviewer-contribution statistics for user-level experiments;
    \item atom-construction manifests specifying the domain profile, canonicalization rules, refinement mode, candidate counts, preserved dictionary size, and auxiliary-support values;
    \item grouped run summaries keyed by $n_{\mathrm{records}}$ or $n_{\mathrm{users}}$, $B$ when applicable, mechanism, $m$, and effective privacy parameters;
    \item verifier measures, including unsupported-atom rate and unsupported-comparison rate.
\end{itemize}
\end{protocolbox}

\subsection{Yelp protocol summary}

\begin{protocolbox}{Yelp Restaurants record-level and user-level protocol}
\footnotesize
\begin{itemize}[leftmargin=14pt]
    \item \textbf{Dataset subset.} Yelp reviews are joined to business metadata by \texttt{business\_id}. The experiments use reviews whose business categories include \texttt{Restaurants}.
    
    \item \textbf{Record-level DP.} One Yelp restaurant review is one protected unit.
    
    \item \textbf{User-level DP.} All target reviews written by the same reviewer form one protected unit, with reviewers identified by
    \texttt{user\_id}.
    
    \item \textbf{Auxiliary and target splits.} For user-level experiments, the split is user-disjoint.
    
    \item \textbf{Atom labels.} Atom labels are two-word restaurant-review phrases extracted only from auxiliary text, then filtered, canonicalized, and duplicate-merged. Yelp atoms include restaurant-experience themes and recurring restaurant-category or menu themes.
    
    \item \textbf{Dictionary sizes.} Main experiments use $m\in\{50,100\}$; $m=200$ is used as a finer-dictionary ablation.
    
    \item \textbf{Assignment.} Each review is mapped to at most $K$ atoms with total mass at most one.
    
    \item \textbf{User clipping.} Each reviewer aggregate vector is clipped to $\ell_1$ norm at most $B$. Main Yelp user-level runs use $B\in\{1,2\}$, with $B=5$ used only as an ablation.
    
    \item \textbf{Differentially private release.} The released semantic plan contains admitted atom labels and noisy masses or support bins.
    
    \item \textbf{Recorded outputs.} Dataset statistics, atom-construction manifests, grouped run summaries, and verifier measures are recorded with the experiment outputs.
\end{itemize}
\end{protocolbox}

\clearpage

\appsection{Contract-Constrained Decoder and Public Plan Verifier}
\label{app:decoder-verifier}

In this appendix, we specify the decoder interface and the public verifier used in the experiments. The decoder is not part of the privacy mechanism. It receives only the released semantic plan and public decoding instructions. The verifier is a public post-processing procedure that checks whether the generated summary is consistent with the released plan.

\subsection{Decoder-visible plan}

For a released plan $P(D)= \left( \widetilde S, \widetilde c_{\widetilde S}, \{(\ell(a_j),d(a_j)):j\in\widetilde S\} \right)$, the decoder receives only the following objects:
\begin{itemize}[leftmargin=16pt]
    \item admitted atom identifiers $j\in\widetilde S$;\vspace{-3pt}
    \item admitted atom labels $\ell(a_j)$;\vspace{-3pt}
    \item optional public atom descriptions $d(a_j)$;\vspace{-3pt}
    \item noisy masses or support bins derived from $\widetilde c_j$;\vspace{-3pt}
    \item public formatting and decoding instructions.
\end{itemize}

The decoder does not receive raw protected records, target excerpts, nearest neighbors, per-record contribution vectors, non-private sketch values, atoms outside the released plan, or labels derived non-privately from the protected target split.

\begin{protocolbox}{Decoder-visible input}
\footnotesize
The decoder receives a released plan table of the following form:
\[
\begin{array}{llll}
\textbf{Marker} & \textbf{Atom label} & \textbf{Noisy mass} & \textbf{Support bin} \\
\midrule
\texttt{[atom:17]} & \texttt{long wait} & 318.4 & \texttt{high} \\
\texttt{[atom:42]} & \texttt{friendly staff} & 271.9 & \texttt{moderate} \\
\texttt{[atom:81]} & \texttt{food quality} & 244.7 & \texttt{moderate}
\end{array}
\]
The decoder is instructed to generate the summary using only these released entries and public instructions.
\end{protocolbox}

\subsection{Decoder modes}

We use two decoder modes for \texttt{DP-SPIN} outputs.

\paragraph{Audit decoder.}
The audit decoder is used for verification. It is instructed to attach a canonical marker of the form $\texttt{[atom:}j\texttt{]}$ to each substantive theme mention, where $j\in\widetilde S$. It may report released noisy masses and support bins. This makes each substantive claim directly checkable by the public verifier.
 
\paragraph{Display decoder.}
The display decoder produces summaries for qualitative examples and text-level evaluation. It receives the same released plan as the audit decoder, including admitted atom labels, public descriptions, released noisy masses, and support bins. Unlike the audit decoder, it is not required to display canonical atom markers or exact noisy masses. It may express released quantitative information through the corresponding support bins, such as $\texttt{high}$, $\texttt{moderate}$, and $\texttt{low}$. It remains constrained to the admitted atoms and public descriptions in the released plan.

Both decoder modes are post-processing of the same released differentially private plan. The display decoder changes only the presentation format and does not receive additional private information.

\subsection{Non-private-reference decoding}

For offline utility evaluation, we also construct a non-private reference plan $P_{\mathrm{NP}}$ from the non-private top-$L$ semantic sketch. This plan is not released by the differentially private mechanism and is used only for evaluation. The corresponding non-private reference display summary $Y_{\mathrm{NP}}$ is used for text-level comparison with the differentially private display summary.

\begin{protocolbox}{Evaluation use of decoder outputs}
\footnotesize
The audit summary is used for verifier checks and verifier measures. The display summary is used for qualitative examples and text-level metrics. The non-private reference display summary is used only as an offline reference for utility evaluation.
\end{protocolbox}

\subsection{Canonical atom references and decoder contract}

The audit decoder uses canonical markers to make the link to the released plan explicit. For example, instead of writing only ``long wait time has high semantic support,'' the audit decoder writes ``long wait time \texttt{[atom:17]} has high semantic support.'' This format gives the verifier an explicit link between each generated theme mention and a released plan entry.
A generated audit summary $Y$ satisfies the decoder contract if every substantive claim is supported by the released plan. Allowed claims are:
\begin{enumerate}[leftmargin=16pt]
    \item an admitted atom label, referenced with its canonical marker;\vspace{-3pt}
    \item a paraphrase directly supported by an admitted atom description,
    referenced with its canonical marker;\vspace{-3pt}
    \item a released noisy mass or support bin for an admitted atom;\vspace{-3pt}
    \item a pairwise semantic-support comparison between admitted atoms, supported by released noisy masses;\vspace{-3pt}
    \item a superlative or rank claim for an admitted atom, supported by the released rank;\vspace{-3pt}
    \item public, data-independent text stating that the output is an aggregate summary or is based on a differentially private plan.
\end{enumerate}
Disallowed claims are:
\begin{enumerate}[leftmargin=16pt]
    \item raw examples or quotations from protected records;\vspace{-3pt}
    \item record-specific details, including names, dates, addresses, account numbers, case identifiers, or unique events;\vspace{-3pt}
    \item non-admitted atom labels;\vspace{-3pt}
    \item substantive themes without canonical support in the released plan;\vspace{-3pt}
    \item unsupported support-bin assignments;\vspace{-3pt}
    \item unsupported numeric claims;\vspace{-3pt}
    \item unsupported pairwise semantic-support comparisons;\vspace{-3pt}
    \item unsupported superlative or rank claims;\vspace{-3pt}
    \item explicit importance-ordering claims, because the released plan does not contain a separate importance score;\vspace{-3pt}
    \item claims about causes, demographics, subgroups, locations, or entities not present in the released plan.
\end{enumerate}

The display summary is not used as the strict verifier target because it may omit canonical markers for readability. It remains constrained to the released plan and is post-processing of the same released differentially private plan.

\begin{examplebox}{Allowed and disallowed decoder behavior}
\footnotesize

\textbf{Released atoms:}
\[
\{\texttt{[atom:17] long wait},\;
  \texttt{[atom:42] friendly staff},\;
  \texttt{[atom:81] food quality}\}.
\]

\vspace{1mm}
\textbf{Allowed sentence:}

\emph{The released plan identifies long wait \texttt{[atom:17]}, friendly staff \texttt{[atom:42]}, and food quality \texttt{[atom:81]} as recurring themes.}

\vspace{2mm}
\textbf{Disallowed sentence:}

\emph{Many users complained that a specific waiter named John ignored them on Friday night.}

\vspace{1mm}
The disallowed sentence introduces a record-specific example and a named person that are not present in the released plan.
\end{examplebox}

\subsection{Decoder implementations}

We use two decoder implementations.

\paragraph{Template decoder.}
The template decoder deterministically converts the released plan into a short natural-language summary. In audit mode, it emits canonical atom markers by construction. It uses only admitted atom labels, public descriptions, and released support bins from the released plan. In display mode, it uses the same released plan entries but omits audit markers when producing qualitative examples.

\paragraph{LLM decoder.}
The LLM decoder receives a prompt containing the released plan and a public decoder policy. In audit mode, the prompt requires canonical atom markers and restricts quantitative claims to released values or public post-processing of released values. In display mode, the prompt requests a concise aggregate summary using only admitted themes and released support bins, without requiring atom identifiers in the displayed text. In both modes, the LLM is used only as a verbalizer of the released plan.

The prompt templates for the audit decoder, display decoder, non-private-reference decoder, baselines, and OpenAI judge are reported in Appendix~\ref{app:decoder-prompts}.

\subsection{Public verifier}

After audit decoding, we apply a public verifier. The verifier receives only $(P(D),Y_{\mathrm{audit}})$, where $P(D)$ is the released differentially private plan and $Y_{\mathrm{audit}}$ is the audit summary. The verifier does not receive the protected dataset, the non-private sketch, the display summary as a verification target, or any per-record information.

Our implemented verifier is a contract verifier, not a semantic oracle. It checks whether the generated text is consistent with the released plan under fixed public rules. The verifier uses:
\begin{itemize}[leftmargin=16pt]
    \item the admitted atom identifiers $\widetilde S$;\vspace{-3pt}
    \item admitted atom labels and optional public descriptions;\vspace{-3pt}
    \item released noisy masses, support bins, and released ranks;\vspace{-3pt}
    \item public comparison rules for pairwise semantic-support comparisons and superlative or rank claims;\vspace{-3pt}
    \item public regular-expression filters for structured identifiers and personal-contact patterns, such as email addresses, phone numbers, Social-Security-number formats, account or case identifiers, long digit sequences, long quotations, and titled person-name patterns.
\end{itemize}

\subsection{Verifier checks}

The verifier performs the following checks.

\paragraph{Canonical marker check.}
Every marker of the form $\texttt{[atom:}j\texttt{]}$ must refer to an admitted atom $j\in\widetilde S$. A marker referring to a non-admitted atom is rejected.

\vspace{-3pt}

\paragraph{Literal label support check.}
If the summary contains a literal atom label from the public dictionary, the corresponding atom must be admitted in the released plan. This check detects unsupported atom mentions even when the decoder omits the canonical marker.

\vspace{-3pt}

\paragraph{Support-bin check.}
If the summary assigns a support bin to an admitted atom, the bin must match the released bin for that atom, or a public coarsening allowed by the released plan or decoding policy.

\vspace{-3pt}

\paragraph{Numeric-claim check.}
If the summary contains a numeric mass, percentage, or normalized semantic-support value, the number must be a released noisy value or a value obtained by public post-processing of released values. Unsupported numeric claims are rejected.

\vspace{-3pt}

\paragraph{Pairwise semantic-support check.}
If the summary states that atom $a$ has higher semantic support than atom $b$, both atoms must be admitted, and the claimed ordering must be supported by the released noisy masses under the fixed public comparison margin.

\vspace{-3pt}

\paragraph{Superlative and rank check.}
If the summary uses a superlative or rank expression such as ``highest support,'' ``highest released rank,'' or ``top-ranked theme,'' the referenced atom must have released rank one.

\vspace{-3pt}

\paragraph{Importance-ordering check.}
Explicit importance-ordering claims, such as ``more important than'' or ``most important,'' are rejected because the released plan contains no separate importance score.

\vspace{-3pt}

\paragraph{Raw-record pattern check.}
The verifier rejects summaries containing raw-record patterns or structured identifier patterns, including email addresses, phone numbers, Social-Security-number formats, account or case identifiers, long digit sequences, long quotations, and titled person-name patterns. These checks are public regular-expression filters and do not use the protected dataset.

\begin{examplebox}{Accepted audit summary}
\footnotesize

\textbf{Released plan:}
\[
\begin{array}{llll}
\texttt{[atom:17]} & \texttt{long wait} & 318.4 & \texttt{high} \\
\texttt{[atom:42]} & \texttt{friendly staff} & 271.9 & \texttt{moderate} \\
\texttt{[atom:81]} & \texttt{food quality} & 244.7 & \texttt{moderate}
\end{array}
\]

\vspace{1mm}
\textbf{Generated audit summary:}

\emph{The released plan highlights long wait \texttt{[atom:17]}, friendly staff \texttt{[atom:42]}, and food quality \texttt{[atom:81]} as recurring themes. Long wait \texttt{[atom:17]} has a higher released noisy mass than food quality \texttt{[atom:81]}.}

\vspace{1mm}
\textbf{Verifier result:}

All atom markers refer to admitted atoms. The comparison is supported because the released noisy mass for \texttt{[atom:17]} is larger than the released noisy mass for \texttt{[atom:81]}. The summary is accepted.

\end{examplebox}

\begin{examplebox}{Rejected audit summary: non-admitted theme}
\footnotesize

\textbf{Released plan:}
\[
\{\texttt{[atom:17] long wait},\;
  \texttt{[atom:42] friendly staff},\;
  \texttt{[atom:81] food quality}\}.
\]

\vspace{1mm}
\textbf{Generated audit summary:}

\emph{The main themes are long wait \texttt{[atom:17]}, friendly staff \texttt{[atom:42]}, food quality \texttt{[atom:81]}, and parking problems \texttt{[atom:99]}.}

\vspace{1mm}
\textbf{Verifier result:}

The marker \texttt{[atom:99]} does not refer to an admitted atom. The summary is rejected.
\end{examplebox}

\begin{examplebox}{Rejected audit summary: unsupported numeric claim}
\footnotesize

\textbf{Released plan:}
\[
\texttt{[atom:17] long wait}: \text{noisy mass }318.4.
\]

\vspace{1mm}
\textbf{Generated audit summary:}

\emph{Exactly 612 records mention long wait \texttt{[atom:17]}.}

\vspace{1mm}
\textbf{Verifier result:}

The value $612$ is not a released noisy value and is not obtained by permitted public post-processing of the released plan. The summary is rejected.
\end{examplebox}

\subsection{Regeneration loop}

If the verifier rejects an audit summary, the decoder may generate a new audit summary using the same released plan $P(D)$, public rules, and verifier feedback. The protected dataset is not accessed again. The implemented loop is:
\begin{enumerate}[leftmargin=16pt]
    \item release the differentially private semantic plan $P(D)$;\vspace{-4pt}
    \item generate an audit summary $Y_{\mathrm{audit}}^{(1)}$ from $P(D)$;\vspace{-4pt}
    \item verify $Y_{\mathrm{audit}}^{(1)}$ using only $P(D)$ and public rules;
    \item if accepted, record $Y_{\mathrm{audit}}^{(1)}$;\vspace{-4pt}
    \item if rejected, provide public verifier feedback and generate
    $Y_{\mathrm{audit}}^{(2)}$ from the same $P(D)$;\vspace{-4pt}
    \item repeat up to a fixed public maximum number of attempts;\vspace{-4pt}
    \item if all attempts fail, output a deterministic audit-template summary
    computed from the same released plan and record the failure status and
    verifier measures.
\end{enumerate}

The display summary may be generated from the same released plan for qualitative presentation. Every candidate summary, verifier decision, feedback message, regenerated summary, and display summary is computed only from $P(D)$, public rules, and, when applicable, decoder randomness. Therefore, the entire procedure is post-processing of the released differentially private plan.

\begin{protocolbox}{Post-processing property}
\footnotesize
The decoder, verifier, regeneration loop, and display formatting do not consume additional privacy budget. They never access protected target records, raw excerpts, per-record assignment vectors, or the non-private sketch. They only transform the released differentially private semantic plan into text and recorded verifier measures.
\end{protocolbox}

\subsection{Scope and Limitations of Verification}
\label{app:verifier-scope}

The verifier establishes consistency between an audit summary and the released differentially private plan. It does not establish completeness of the released plan, equality between noisy and non-private masses, or completeness of the summary with respect to the protected dataset. In particular:
\begin{itemize}[leftmargin=16pt]
    \item a verifier-consistent summary may omit a theme that is present in the protected dataset but not admitted into the released plan;
    \item a verifier-consistent summary reflects released noisy values, not necessarily the corresponding non-private values;
    \item the verifier does not evaluate semantic correctness against raw protected records, because it never receives those records;
    \item the verifier can check paraphrases only to the extent that they are linked to admitted atom markers and supported by public atom labels or descriptions;
    \item the verifier does not provide the privacy guarantee. Privacy follows from the differentially private plan-release mechanism and the closure of differential privacy under post-processing.
\end{itemize}

Thus, the verifier enforces the public decoder contract and reports plan-consistency measures. It is not a semantic audit of the protected corpus.

\clearpage

\appsection{Decoder and Evaluation Prompts}
\label{app:decoder-prompts}

\vspace{-2pt}

In this appendix, we report the public prompt templates used for LLM decoding and evaluation.  In \texttt{DP-SPIN} decoding, the model receives only the released semantic plan and public instructions. It does not receive raw protected records, target excerpts, nearest neighbors, per-record assignments, non-private sketch values, or non-admitted atoms.

\vspace{-2pt}

\subsection{DP-SPIN audit decoder prompt}
\label{app:prompt-audit-decoder}

\vspace{-2pt}

The audit decoder is used for verifier checks and verifier measures. It requires canonical atom markers and permits quantitative or semantic-support claims only when they are supported by released noisy masses or released support bins. The public verifier is applied to the generated audit summary.

\begin{promptbox}{Audit decoder prompt}
\footnotesize

You are given a differentially private aggregate summary plan.

Write a concise aggregate summary using only the listed plan entries.

Every substantive theme mention must include its canonical marker \texttt{[atom:<id>]} exactly as shown.

Rules:
\begin{enumerate}[leftmargin=16pt]
    \item Do not introduce any theme not listed in the plan.
    \item Do not invent examples, names, organizations, dates, quotes, or events.
    \item Do not mention individual records or claim to have seen raw records.
    \item Do not infer causes, demographics, locations, product identities, businesses, user groups, or subgroups beyond the atom labels and descriptions.
    \item Do not interpret semantic support as the number, fraction, or frequency of records expressing an atom.
    \item Use semantic-support language only according to the released support bins and released noisy masses.
    \item Comparisons such as `higher support than' are allowed only when supported by the released noisy masses.
    \item If the plan is low-support, noisy, or ambiguous, state uncertainty rather than adding unsupported content.
\end{enumerate}

Released differentially private summary plan:

\texttt{<PLAN\_ENTRIES>}

Output format: one short paragraph followed by up to five bullets. Use canonical markers in both the paragraph and the bullets.

\end{promptbox}

Each plan entry has the form
{\small
\[
\texttt{- [atom:}j\texttt{] label=<label> | description=<description> | bin=<bin> | noisy\_mass=<value>}.
\]
}

\subsection{DP-SPIN display decoder prompt}
\label{app:prompt-display-decoder}

The display decoder is used for qualitative examples and text-level evaluation. It receives the same released plan as the audit decoder, including admitted atom labels, public descriptions, released noisy masses, and support bins. It is not required to display atom markers or exact noisy masses.

\begin{promptbox}{Display decoder prompt}
\footnotesize

You are given a differentially private aggregate summary plan.

Write a concise aggregate summary using only the listed plan entries.

Rules:
\begin{enumerate}[leftmargin=16pt]
    \item Do not introduce any theme not listed in the plan.
    \item Do not invent examples, names, organizations, dates, quotes, events, locations, or subgroups.
    \item Do not mention individual records or claim to have seen raw records.
    \item Do not infer causes, demographics, product identities, businesses, user groups, or subgroups beyond the atom labels and descriptions.
    \item  Do not interpret semantic support as the number, fraction, or frequency of records expressing an atom.
    \item Use semantic-support language only from the released support bins. Do not report exact noisy masses.
    \item Do not include atom identifiers such as \texttt{[atom:<id>]} in the output.
    \item If the plan is low-support, noisy, or ambiguous, state uncertainty rather than adding unsupported content.
    \item Write one concise paragraph followed by at most three short bullets only if bullets improve clarity.
\end{enumerate}

Released differentially private summary plan:

\texttt{<PLAN\_ENTRIES>}

Output format: one concise paragraph, optionally followed by at most three short bullets. Do not include atom IDs or exact noisy masses.

\end{promptbox}

\subsection{Non-private reference decoder prompt substitution}
\label{app:prompt-non-private-reference}

For offline utility evaluation, we also decode a non-private reference plan $P_{\mathrm{NP}}$. The non-private reference decoder uses the same audit and display prompt templates as the differentially private plan decoder, with two public substitutions. First, the opening sentence identifies the input as a non-private reference aggregate summary plan. Second, the plan block is headed by \texttt{Non-private reference summary plan:} and contains entries from the non-private top-$L$ plan rather than entries from the released differentially private plan. Thus, the audit-mode prompt begins with
\[
\texttt{You are given a non-private reference aggregate summary plan.}
\]
and its plan block is headed by 
\[
\texttt{Non-private reference summary plan:}
\]
followed by entries of the form
\[
\texttt{- [atom:}j\texttt{] label=<label> | description=<description> | bin=<bin> | noisy\_mass=<value>}.
\]
The display-mode prompt uses the same substitutions and omits atom identifiers in the requested output, as in the display decoder prompt above.

The resulting non-private reference summaries are used only for offline evaluation. They are not part of the released differentially private output and are not given to the decoder that produces the differentially private summary. The field name \texttt{noisy\_mass} is preserved only to use the same plan-entry schema; for the non-private reference plan, this value is the corresponding non-private top-$L$ mass.

\subsection{Non-private baseline prompts}
\label{app:prompt-openai-baselines}

For non-private LLM baselines, the model receives either sampled target records or heuristically redacted sampled target records. These baselines are not \texttt{DP-SPIN} outputs and are reported only as non-private utility references.

\begin{promptbox}{Raw-record baseline prompt}
\footnotesize

You are given a bounded sample of records.

Write a concise aggregate summary of recurring themes in the records.

Rules:
\begin{enumerate}[leftmargin=16pt]
    \item Summarize only aggregate themes.
    \item Do not quote records.
    \item Do not include names, identifiers, addresses, dates, or other
    record-specific details.
    \item Do not describe individual users, reviewers, businesses, or organizations.
    \item Use one concise paragraph, optionally followed by at most three short bullets.
\end{enumerate}

Records:

\texttt{<RECORD\_TEXT\_SAMPLE>}
\end{promptbox}

\begin{promptbox}{Heuristic-redaction baseline prompt}
\footnotesize
You are given a bounded sample of heuristically redacted records.

Write a concise aggregate summary of recurring themes in the records.

Rules:
\begin{enumerate}[leftmargin=16pt]
    \item Summarize only aggregate themes.
    \item Do not quote records.
    \item Do not include names, identifiers, addresses, dates, or other record-specific details.
    \item Do not describe individual users, reviewers, businesses, or organizations.
    \item Use one concise paragraph, optionally followed by at most three short bullets.
\end{enumerate}

Redacted records:

\texttt{<REDACTED\_RECORD\_TEXT\_SAMPLE>}
\end{promptbox}

\subsection{URANIA-style public-keyword baseline prompt}
\label{app:prompt-urania-style}

The URANIA-style public-keyword baseline generates cluster summaries from released cluster-level keyword information. The decoder receives selected public keywords and noisy cluster-size information, but not raw protected records.

\begin{promptbox}{URANIA-style cluster-summary prompt}
\footnotesize

You are given a released cluster description consisting of selected public keywords and a noisy cluster size.

Write a concise topic name and a brief aggregate description using only the provided keywords and the noisy cluster size. Rules:
\begin{enumerate}[leftmargin=16pt]
    \item Use only the provided keywords and the noisy cluster size.\vspace{-3pt}
    \item Do not invent examples, names, organizations, dates, quotes, locations, or record-specific details.\vspace{-3pt}
    \item Do not claim to have read raw records.\vspace{-3pt}
    \item Do not include XML tags or Markdown headings.\vspace{-3pt}
    \item Use the format \texttt{Topic: <short topic name>} followed by \texttt{Description: <one or two sentences>}.
\end{enumerate}

Noisy cluster size:

\texttt{<NOISY\_CLUSTER\_SIZE>}

Selected keywords:

\texttt{<KEYWORDS>}
\end{promptbox}

\subsection{LLM judge prompt}
\label{app:prompt-openai-judge}

The LLM judge is used only for offline evaluation. Its outputs do not affect the released differentially private summary. The judge is evaluated at temperature zero. 




\begin{promptbox}{OpenAI judge prompt}
\footnotesize

You are evaluating a generated aggregate insight summary.
You must judge the summary only with respect to the released object shown below. Do not assume access to the protected corpus. Do not reward or penalize a method for information that is absent from its released object. Do not treat this as a formal differential-privacy audit.

Return strict valid JSON with integer scores from 1 to 5 for exactly these
keys:

\begin{itemize}[leftmargin=16pt]
    \item \texttt{coverage}: Does the summary cover the main high-salience entries in the released object?\vspace{-3pt}

    \item \texttt{specificity}: Does the summary give concrete theme-level detail permitted by the released object, rather than only generic wording or a list of isolated keywords?\vspace{-3pt}

    \item \texttt{insightfulness}: Does the summary synthesize the released entries into useful collection-level insight, beyond merely repeating labels or counts?\vspace{-3pt}

    \item \texttt{faithfulness}: Does the summary avoid adding themes, examples, causes, entities, comparisons, or record-level details unsupported by the released object?\vspace{-3pt}

    \item \texttt{privacy\_safety}: Does the generated text avoid individual-level, identifying, rare-record, raw-example, or quote-like details? This is a text-quality score, not a formal DP guarantee.\vspace{-3pt}

    \item \texttt{clarity}: Is the summary readable, concise, and organized?
\end{itemize}

Scoring calibration:

\begin{enumerate}[leftmargin=16pt]
    \item A summary that faithfully repeats a short keyword histogram may receive high \texttt{faithfulness} and \texttt{privacy\_safety}, but should receive high \texttt{specificity} or \texttt{insightfulness} only if it turns released keywords into clearly supported aggregate themes.\vspace{-3pt}

    \item A public category histogram can be highly useful when the public categories are themselves semantically informative. Do not automatically penalize it for being a histogram.\vspace{-3pt}

    \item A \texttt{DP-SPIN} semantic plan can receive high  \texttt{specificity} or \texttt{insightfulness} only when the generated summary uses the admitted atom labels, descriptions, and released bins coherently.\vspace{-3pt}

    \item Penalize any method for unsupported causal claims, invented examples, exact private details, or unsupported comparisons.\vspace{-3pt}

    \item Return only valid JSON with the six integer score keys and a short rationale string.
\end{enumerate}

Dataset:

\texttt{<DATASET>}

Method:

\texttt{<METHOD>}

Decoder input:

\texttt{<DECODER\_INPUT>}

Released object type:

\texttt{<RELEASED\_OBJECT\_TYPE>}

Mechanism:

\texttt{<MECHANISM>}

Epsilon:

\texttt{<EPSILON>}

Protected unit:

\texttt{<PROTECTED\_UNIT>}

Protected units:

\texttt{<N\_PROTECTED\_UNITS>}

Released object:

\texttt{<RELEASED\_OBJECT>}

Generated summary:

\texttt{<SUMMARY>}

\end{promptbox}

\clearpage

\appsection{Controlled Probe Atom Insertion Experiment}
\label{app:controlled-probe-atom-experiment}

In this appendix, we describe a controlled probe-atom insertion experiment. The experiment measures whether a synthetic theme with known support is admitted into the released differentially private semantic plan. It is used only for evaluation. The inserted probe records are not used to construct the ordinary atom dictionary, fit the assignment representation, or define the main utility benchmark.

\subsection{Objective}

The experiment measures the admission probability of a known public probe atom as a function of its support and the privacy parameters. Starting from a fixed base corpus, we construct protected target corpora by adding controlled numbers of synthetic probe records. We then run the record-level \texttt{DP-SPIN} release mechanism and record whether the probe atom is admitted into the released plan. The experiment asks:
\begin{center}
\textit{At probe support $s$, how often is the probe atom admitted into
$\widetilde S$?}
\end{center}
This measures the response of the private atom-admission procedure to a controlled synthetic theme.

\subsection{Probe record construction}

Let $D_0$ be a fixed base corpus of size $n_0$. For a support value $s$, we construct a probe-augmented target corpus
\begin{equation}
D_s = D_0 \cup \{x^{\mathrm{probe}}_1,\ldots,x^{\mathrm{probe}}_s\},   
\end{equation}
where each $x^{\mathrm{probe}}_r$ is a synthetic record expressing the same controlled probe theme. The probe records are inserted only into the protected target corpus used by the differentially private release. They are not used to fit the assignment representation or to construct atom labels.

The probe atom is added explicitly to the public atom dictionary before the release. Therefore, the experiment evaluates differentially private admission of a known public coordinate, not data-dependent discovery of a new label from protected records.

\begin{examplebox}{Example probe setup}
\footnotesize

Suppose the base corpus contains $n_0=2000$ target records. For support $s=5$, we create $|D_s| = 2005$. The public atom dictionary contains all ordinary public atoms plus one probe atom, for example $\texttt{synthetic probe billing dispute}$. The five inserted probe records are constructed so that their bounded assignment vectors place mass on this probe atom.
\end{examplebox}

\subsection{Per-support configuration}

For each support value $s$, our experiment constructs a separate probe-augmented target file containing exactly $n_0+s$ records. The run configuration is set to
\begin{equation}
n_{\mathrm{records}} = n_0+s.
\end{equation}
This avoids a sample-size mismatch in which a run requests more records than are available in the corresponding probe-augmented target file.
The public denominator used for support-bin computation is set to the same corpus size:
\begin{equation}
d_{\mathrm{public}} = n_0+s.
\end{equation}
Thus, a noisy probe mass $\widetilde c_{\mathrm{probe}}$ is converted to displayed semantic support by
\begin{equation}
\widetilde q_{\mathrm{probe}} = \frac{ \max \{ 0, \widetilde c_{\mathrm{probe}}\} }{n_0+s}.
\end{equation}
Note that this quantity is normalized assignment support for a known public probe coordinate, not private discovery of a new label and not a population prevalence estimate.

\begin{protocolbox}{Probe denominator policy}
\footnotesize
For the probe experiment, each support value $s$ uses
\begin{equation}
n_{\mathrm{records}} = n_0+s, \qquad d_{\mathrm{public}} = n_0+s. \nonumber 
\end{equation}
The denominator is therefore the configured public size of the probe-augmented protected target corpus for that run.
\end{protocolbox}

This choice keeps support bins tied to the corpus used in each run. It also prevents support values with different corpus sizes from being evaluated with a shared denominator.

\subsection{Probe release and admission indicator}

For each support value $s$, \texttt{DP-SPIN} computes bounded record-to-atom assignment vectors $v_i=\phi(x_i)$, with $\|v_i\|_1\le 1$, aggregates them into the non-private sketch $C(D_s)=\sum_{x_i\in D_s}v_i$, and releases a differentially private semantic plan. Let $\widetilde S_s$ denote the admitted atom set. The probe admission indicator for one run is $\mathsf{Adm}(s)=\mathbf 1\{a_{\mathrm{probe}}\in \widetilde S_s\}$. Across $R$ repeated runs, we report the probe admission rate
\begin{equation}
\widehat{\pi}_{\mathrm{adm}}(s) = \frac{1}{R} \sum_{r=1}^R
\mathbf 1\{a_{\mathrm{probe}}\in\widetilde S_s^{(r)}\}.
\end{equation}

\subsection{Interpretation}

The probe experiment evaluates atom admission under the full differentially private release pipeline. For fixed assignment and release parameters, increasing the probe support $s$ increases the non-private mass assigned to the probe atom. Larger noise, as required by a smaller privacy budget, can reduce the probability that the probe atom is admitted. Admission remains random because the admitted set is produced by a randomized differentially private release.

Probe admission does not imply that raw probe text is released. The decoder receives only the released atom label, noisy mass or support bin, and public decoding rules. The raw probe records are not provided to the decoder. Conversely, non-admission does not imply that the probe has zero non-private support. It only means that the probe atom was not selected into the released differentially private plan in that run.

\begin{protocolbox}{Probe experiment summary}
\footnotesize
\begin{enumerate}[leftmargin=16pt]
    \item Fix a base protected corpus $D_0$ of size $n_0$.
    \item Add $s$ synthetic probe records to obtain $D_s$.
    \item Add the probe atom to the public atom dictionary.
    \item Set $n_{\mathrm{records}}=n_0+s$.
    \item Set the semantic-support denominator to $d_{\mathrm{public}}=n_0+s$.
    \item Run the record-level \texttt{DP-SPIN} release mechanism.
    \item Record whether the probe atom is admitted into the released plan.
\end{enumerate}
\end{protocolbox}

\clearpage

\appsection{Our URANIA-Style Public-Keyword Baseline}
\label{app:urania-baseline}

In this appendix, we describe our independently implemented \texttt{URANIA-PublicKeywords} baseline. The baseline preserves the main decoder-facing structure of URANIA---differentially private clustering, noisy cluster-size filtering, differentially private per-cluster keyword histograms, and language generation from the resulting private cluster--keyword object---but it is not an exact reproduction of the original URANIA experiments. We use a keyword universe and text representation fixed from public or disjoint auxiliary data so that the comparison with \texttt{DP-SPIN} uses the same public-vocabulary boundary.

\subsection{Scope of our implementation}

The baseline receives
\begin{itemize}[leftmargin=16pt]
    \item protected target records $D_{\mathrm{target}}=\{x_1,\ldots,x_n\}$;
    \item disjoint auxiliary records and public atom text;
    \item a fixed public keyword vocabulary $\mathcal K=\{w_1,\ldots,w_q\}$;
    \item a fixed text-representation map $g$;
    \item the public numbers of clusters $k$ and released keywords per preserved cluster $t$;
    \item a public noisy-size threshold $\tau$; and
    \item privacy parameters $(\varepsilon,\delta)$ with budgets $\varepsilon_{\mathrm{clust}}$, $\varepsilon_{\mathrm{size}}$, and $\varepsilon_{\mathrm{hist}}$ satisfying
    \begin{equation}
    \varepsilon_{\mathrm{clust}}+\varepsilon_{\mathrm{size}}+\varepsilon_{\mathrm{hist}}=\varepsilon,
    \qquad \delta_{\mathrm{clust}}=\delta.
    \end{equation}
\end{itemize}

In the reported configuration, $k=10$, $t=10$, and
\begin{equation}
\varepsilon_{\mathrm{clust}}=0.45\varepsilon,
\qquad \varepsilon_{\mathrm{size}}=0.15\varepsilon,
\qquad \varepsilon_{\mathrm{hist}}=0.40\varepsilon.
\end{equation}
The public keyword vocabulary is constructed from public atom labels and descriptions before the protected target split is processed. Specifically, our implementation extracts non-stopword alphabetic or hyphenated tokens of length at least four and preserves the $q\leq 200$ most frequent tokens under deterministic tie-breaking. This construction is specific to our matched baseline and is not the \textsc{URANIA} \textsc{KwSet-Public} procedure.

\subsection{Text representation}

For each protected record $x_i$, our \textsc{URANIA-PublicKeywords} implementation computes a vector representation
\begin{equation}
e_i = g(x_i).
\end{equation}
In our experiments, $g$ uses a lowercase TF--IDF vectorizer with English stop-word removal, unigram and bigram features, and at most $50{,}000$ features. The vectorizer is fitted on public atom text and disjoint auxiliary records. If the fitted TF--IDF dimension exceeds $64$, truncated SVD with output dimension $64$ is fitted on the same public-or-auxiliary text. The transformed vectors are then $\ell_2$-normalized, so $\|e_i\|_2\leq 1$. All parameters of $g$, including the vocabulary, inverse-document-frequency weights, and SVD components, are fixed before $D_{\mathrm{target}}$ is processed.

Note that our representation differs from the original \textsc{URANIA} experiments, in which an LLM first maps each conversation to a structured record summary and the summary field is embedded using the pretrained \texttt{all-mpnet-base-v2} sentence encoder. Our TF--IDF--SVD map is an implementation-specific choice for a reproducible matched public-or-auxiliary representation protocol.

\subsection{Differentially private clustering}

Our implementation uses a Gaussian noisy Lloyd procedure rather than the Google DP-KMeans implementation used in the original \textsc{URANIA} experiments. Initial centers are obtained by non-private $k$-means on the public-or-auxiliary representations and are therefore fixed independently of $D_{\mathrm{target}}$.

Let $R_{\mathrm{clust}}=3$ denote the number of Lloyd iterations. At iteration $r$, each protected representation is assigned to its nearest current center. The curator forms the cluster sums and counts
\begin{equation}
S_j^{(r)}=\sum_{i:a_r(i)=j}e_i,
\qquad N_j^{(r)}=\left|\{i:a_r(i)=j\}\right|.
\end{equation}
Because each $e_i$ has $\ell_2$ norm at most one and contributes to exactly one cluster, the flattened sum vector $(S_1^{(r)},\ldots,S_k^{(r)})$ and the count vector $(N_1^{(r)},\ldots,N_k^{(r)})$ each have add/drop $\ell_2$-sensitivity at most one. Our implementation applies analytically calibrated Gaussian mechanisms to these two vectors. The per-iteration budget is $(\varepsilon_{\mathrm{clust}}/R_{\mathrm{clust}},\delta_{\mathrm{clust}}/R_{\mathrm{clust}})$, with $75\%$ assigned to the sum release and $25\%$ to the count release. The next center is obtained by public post-processing of the noisy statistics and is subsequently $\ell_2$-normalized. Basic adaptive composition over the three iterations gives $(\varepsilon_{\mathrm{clust}},\delta_{\mathrm{clust}})$-DP for the released final centers $\widetilde c_1,\ldots,\widetilde c_k$. Each protected record is then assigned internally to its nearest released center,
\begin{equation}
a(i)=\arg\min_{j\in[k]}\|e_i-\widetilde c_j\|_2,
\end{equation}
using deterministic tie-breaking. These record-to-cluster assignments are not released.

\subsection{Noisy cluster-size filtering}

For each cluster $j$, define the internal cluster size
\begin{equation}
n_j=\left|\{i:a(i)=j\}\right|, \qquad \boldsymbol n=(n_1,\ldots,n_k).
\end{equation}
Since each protected record contributes to exactly one coordinate, $\boldsymbol n$ has add/drop $\ell_1$-sensitivity one. Our implementation applies the private histogram mechanism
\begin{equation}
\widetilde{\boldsymbol n} = \operatorname{PHR}_{1,\varepsilon_{\mathrm{size}}}(\boldsymbol n),
\end{equation}
which adds independent symmetric discrete-Laplace noise,
\begin{equation}
\widetilde n_j=n_j+\eta_j, \qquad
\Pr(\eta_j=z) = \frac{1-e^{-\varepsilon_{\mathrm{size}}}}{1+e^{-\varepsilon_{\mathrm{size}}}} e^{-\varepsilon_{\mathrm{size}}|z|}, \qquad z\in\mathbb Z.
\end{equation}
A cluster is preserved only if $\widetilde n_j\geq\tau$. The threshold $\tau$ is fixed independently of the protected target records. In the reported runs, the public analysis size is $N_0=2000$, $k=10$, and we fix $\tau=50$ before processing the target records. Clusters that fail the noisy-size test produce no released keyword list or summary.

\subsection{Per-record public-keyword indicators}

For each protected record $x_i$, our implementation constructs a set $h_i\subseteq\mathcal K$ containing at most $b_{\mathrm{kw}}=5$ public keywords. In the reported configuration, keyword selection is deterministic lexical matching: our implementation counts case-insensitive, boundary-delimited occurrences of every public keyword in the record text, orders matching keywords by decreasing occurrence count with deterministic public tie-breaking, and preserves at most five. This differs from the original \textsc{URANIA} experiments, which use an LLM to select up to five keywords from the supplied keyword set.
All records assigned to a cluster are used in the reported keyword histogram. For $j\in[k]$ and $w\in\mathcal K$, define
\begin{equation}
r_{j,w} = \sum_{i=1}^{n} \mathbf 1\{a(i)=j\}\mathbf 1\{w\in h_i\}.
\end{equation}
Each protected record contributes to at most $b_{\mathrm{kw}}$ coordinates of the flattened cluster--keyword table $(r_{j,w})_{j,w}$.

\subsection{Differentially private keyword histogram release}

Our implementation applies
\begin{equation}
(\widetilde r_{j,w})_{j,w} = \operatorname{PHR}_{b_{\mathrm{kw}},\varepsilon_{\mathrm{hist}}} \bigl((r_{j,w})_{j,w}\bigr).
\end{equation}
Equivalently, it adds independent discrete-Laplace noise
\begin{equation}
\widetilde r_{j,w}=r_{j,w}+\xi_{j,w},
\qquad \Pr(\xi_{j,w}=z) \propto \exp \left(-\frac{\varepsilon_{\mathrm{hist}}}{b_{\mathrm{kw}}}|z|\right),
\qquad z\in\mathbb Z.
\end{equation}
For each cluster passing the noisy-size test, our implementation selects at most $t$ keywords with the largest positive noisy counts,
\begin{equation}
\widetilde K_j = \operatorname{Top}^{+}_{t} \bigl(\{\widetilde r_{j,w}:w\in\mathcal K\}\bigr),
\end{equation}
using deterministic public tie-breaking. This selection is post-processing of the private keyword histogram. The same rule is applied to every preserved cluster; the released keyword list does not branch on the non-private true cluster size.

\subsection{Cluster-level and aggregate verbalization}

For each preserved cluster, the first decoder call receives only its selected public keywords and released noisy size. It is instructed to produce a concise keyword-supported topic label and description, not to repeat the numerical noisy size, and not to introduce record-specific details or claims unsupported by the released keywords. The preserved clusters are then ordered by released noisy size. A second decoder call forms the paper-facing aggregate summary from the selected keywords and cluster-level descriptions of at most the first four preserved clusters. Both calls use \texttt{gpt-4o-mini} in the reported experiments. The decoder does not receive raw protected records, target excerpts, non-private cluster sizes, non-private keyword counts, record-to-cluster assignments, or any keyword outside the fixed public vocabulary. All language generation is therefore post-processing of the released differentially private cluster--keyword object.

\begin{protocolbox}{Decoder-visible information in URANIA-PublicKeywords}
\footnotesize

For each preserved cluster, the decoder receives a noisy cluster size and a selected keyword list, for example
\[
\widetilde n_j = 185, \qquad
\{\texttt{credit report},\texttt{incorrect information}, \texttt{identity theft},\texttt{dispute},\texttt{investigation}\}.
\]
The decoder does not receive the records assigned to the cluster.
\end{protocolbox}

\subsection{Privacy accounting and release boundary}

The decoder-facing mechanism composes
\begin{equation}
(\varepsilon_{\mathrm{clust}},\delta_{\mathrm{clust}}) \quad\text{for final cluster centers},
\end{equation}
\begin{equation}
(\varepsilon_{\mathrm{size}},0) \quad\text{for the noisy cluster-size histogram},
\end{equation}
and
\begin{equation}
(\varepsilon_{\mathrm{hist}},0) \quad\text{for the flattened cluster--keyword histogram}.
\end{equation}
Thus, by adaptive sequential composition, the released centers, preserved-cluster decisions, selected public keywords, noisy sizes, cluster-level descriptions, and final aggregate summary are
\begin{equation}
\left( \varepsilon_{\mathrm{clust}} + \varepsilon_{\mathrm{size}} + \varepsilon_{\mathrm{hist}}, \delta_{\mathrm{clust}} \right)
= (\varepsilon,\delta)
\end{equation}
differentially private under record-level add/drop adjacency, conditional on the public-or-auxiliary representation map, fixed public keyword vocabulary, public hyperparameters, and the bounded per-record keyword contribution.
Non-private quantities, including true cluster sizes, exact sample-membership information, and statistics computed directly from unnoised keyword matches, are used only for internal evaluation and are not included in the released baseline artifact or provided to the decoder.

\subsection{Relationship to the original URANIA system}

Our baseline preserves the main released-object structure of \textsc{URANIA}: differentially private cluster centers, noisy cluster-size filtering, differentially private per-cluster keyword histograms, and language generation from selected released keywords. It differs from the original system in the following respects:
\begin{itemize}[leftmargin=16pt]
    \item the fixed vocabulary is extracted from the public \texttt{DP-SPIN}
    atom text and is not an implementation of \textsc{URANIA}'s
    \textsc{KwSet-TFIDF}, \textsc{KwSet-LLM}, \textsc{KwSet-Public}, or
    \textsc{KwSet-Hybrid} procedures;
    
    \item protected records are represented using a public-or-auxiliary
    TF--IDF--SVD map rather than LLM-generated record summaries encoded by
    \texttt{all-mpnet-base-v2};
    
    \item clustering is performed using our Gaussian noisy Lloyd
    implementation rather than the Google DP-KMeans implementation used by
    \textsc{URANIA};
    
    \item per-record keywords are selected by deterministic lexical matching
    rather than an LLM keyword-selection prompt;
    
    \item verbalization uses our OpenAI prompts and \texttt{gpt-4o-mini},
    rather than the LLM configuration used in the original experiments; and
    
    \item the implementation does not reproduce the original
    hierarchy-construction and visualization stages.
\end{itemize}
Our comparison evaluates this release structure under the matched fixed-vocabulary protocol rather than reproducing the original \textsc{URANIA} results.

\begin{table}[h]
\centering
\scriptsize
\renewcommand{\arraystretch}{1.15}
\rowcolors{2}{gray!9}{white}
\begin{tabular}{p{0.21\linewidth}p{0.36\linewidth}p{0.36\linewidth}}
\toprule
\rowcolor{white}
\textbf{Aspect} & \textbf{\texttt{DP-SPIN}} & \textbf{\texttt{URANIA-PublicKeywords}} \\
\midrule
Protected unit
& One text record; optionally one user after deterministic contribution clipping
& One target text record \\
Per-record contribution
& Bounded nonnegative assignment vector over public semantic atoms
& One norm-bounded representation for clustering and at most five indicators over a fixed public keyword vocabulary \\
Internal aggregate statistic
& Atom-mass sketch over an atom universe fixed from public or auxiliary data
& Cluster sums and counts, final cluster-size histogram, and a flattened cluster--keyword count table \\
Generation-visible semantic strings
& Admitted public atom labels and descriptions
& Selected terms from a fixed public keyword vocabulary \\
Data-dependent admitted objects
& Semantic atoms admitted from a private sketch or private selection mechanism
& Clusters preserved by noisy size and keywords selected from private per-cluster histograms \\
Released differentially private object
& Admitted atoms with noisy masses or support bins
& Retained clusters with noisy sizes and selected public keywords \\
LLM input
& Released atom labels, public descriptions, and noisy masses or support bins
& Selected public keywords and released noisy-size information for preserved clusters \\
LLM output
& Aggregate summary constrained to the released semantic plan
& Cluster-level descriptions followed by an aggregate summary over at most four preserved clusters \\
Main differentially private mechanisms
& Gaussian or Laplace full-sketch release with top-$L$ post-processing, or
sequential exponential-mechanism atom admission followed by noisy
selected-mass release
& Gaussian noisy Lloyd clustering, discrete-Laplace cluster-size release, and discrete-Laplace cluster--keyword histogram release \\
\bottomrule
\end{tabular}
\caption{Mechanism-level comparison between \texttt{DP-SPIN} and the \texttt{URANIA-PublicKeywords} baseline.}
\label{tab:dpspin-urania-detailed}
\end{table}

\clearpage
\appsection{Additional Experimental Results: CFPB Complaint}
\label{app:additional-results-cfpb}

In this appendix, we report additional experimental results on the \texttt{CFPB} consumer complaint narratives.

\begin{figure}[h]
    \centering
    \includegraphics[width=0.99\linewidth]{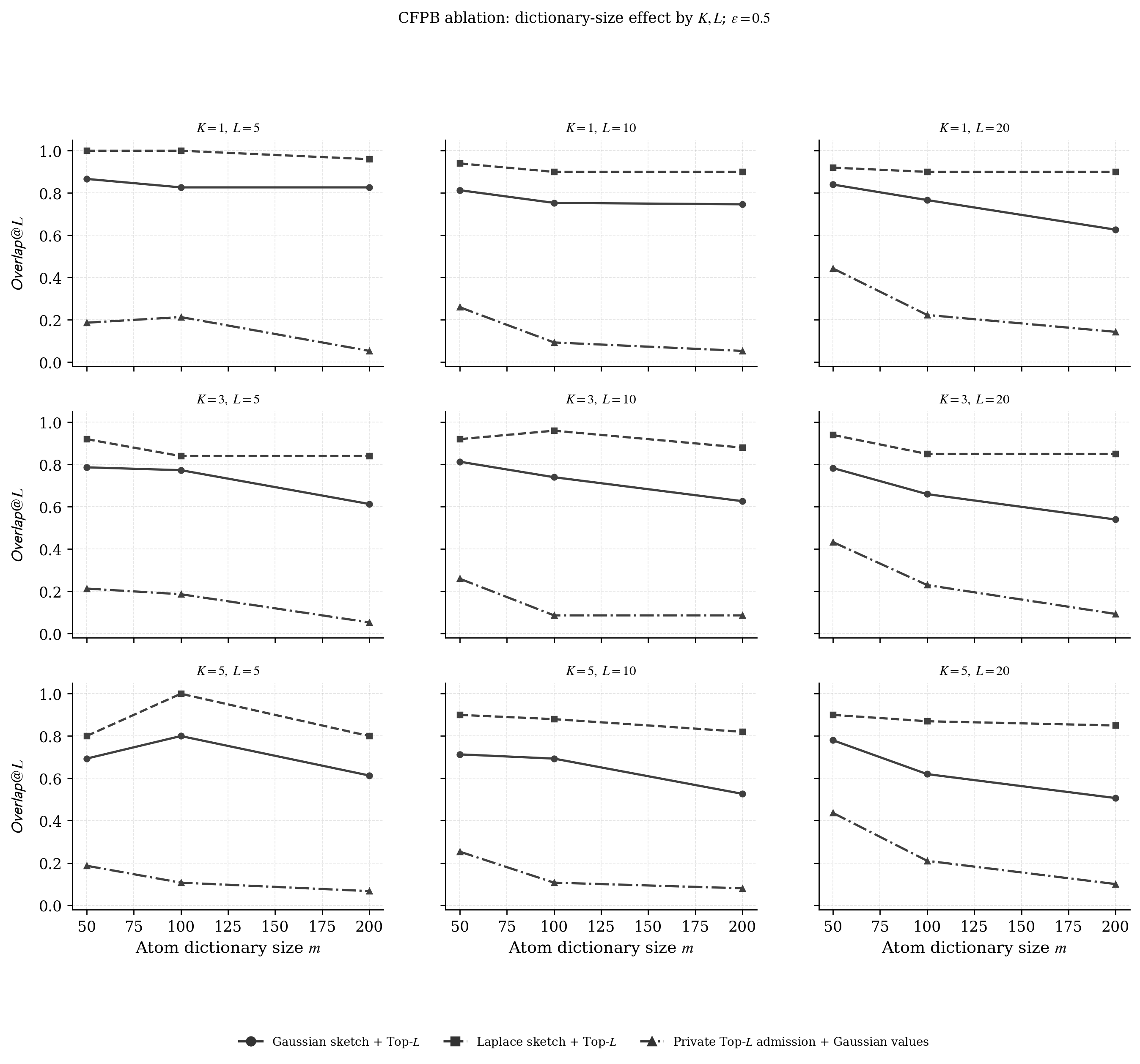}
    \caption{
    CFPB ablation of $\mathsf{Overlap}@L$ as the atom dictionary size $m$ varies at $\varepsilon=0.5$. Panels vary the assignment sparsity $K$ and released plan size $L$; curves compare the release mechanisms.
    }
    \label{fig:app-cfpb-ablation-overlap-atL-vs-m}
\end{figure}

\begin{figure}[h]
    \centering
    \includegraphics[width=0.99\linewidth]{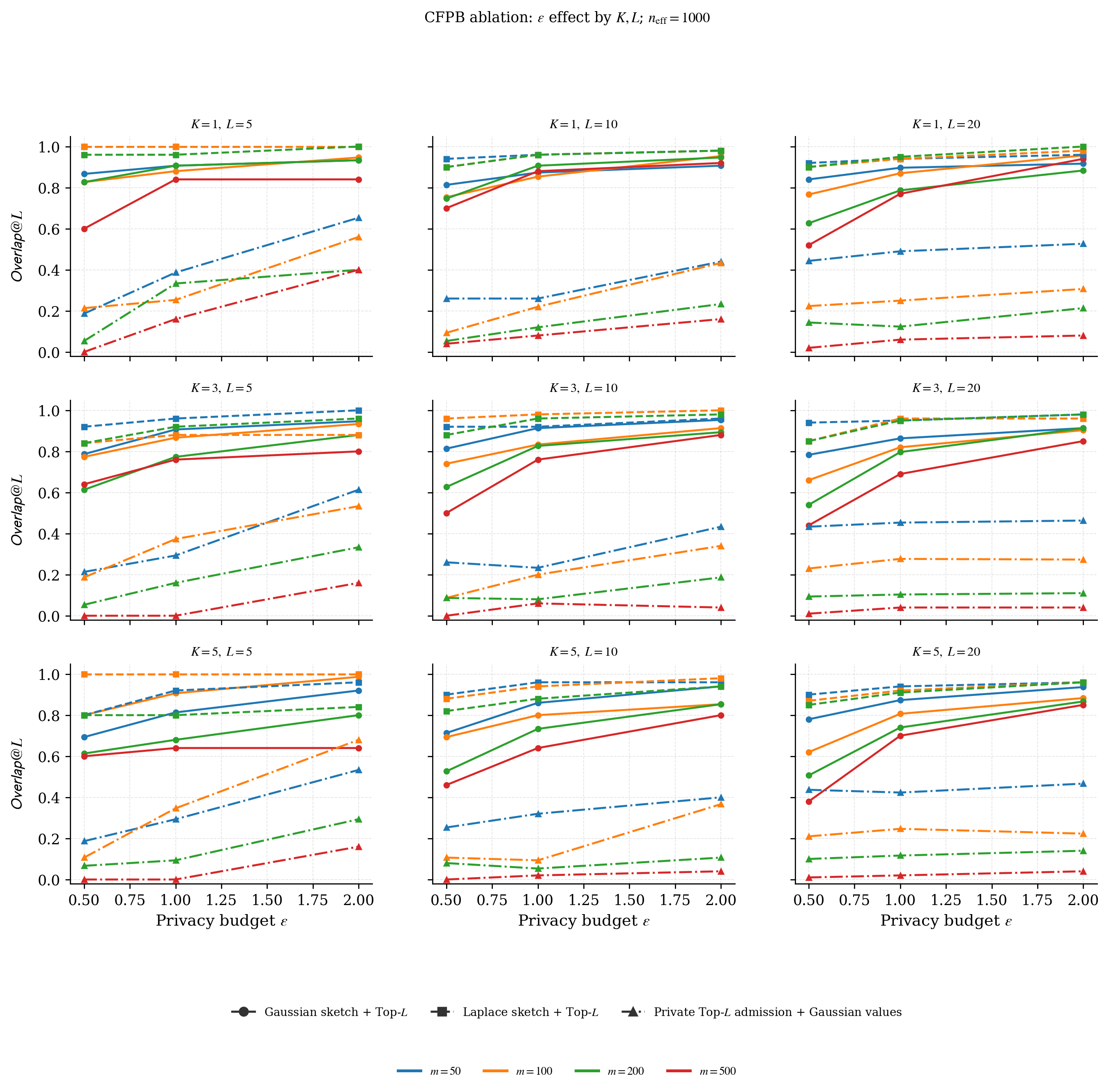}
    \caption{
    CFPB ablation of $\mathsf{Overlap}@L$ as the privacy budget $\varepsilon$ varies, with $n_{\mathrm{eff}}=1000$. Panels vary $K$ and $L$; colors indicate $m$, and line styles indicate the release mechanism.
    }
    \label{fig:app-cfpb-ablation-overlap-atL-vs-epsilon}
\end{figure}

\begin{figure}[h]
    \centering
    \includegraphics[width=0.99\linewidth]{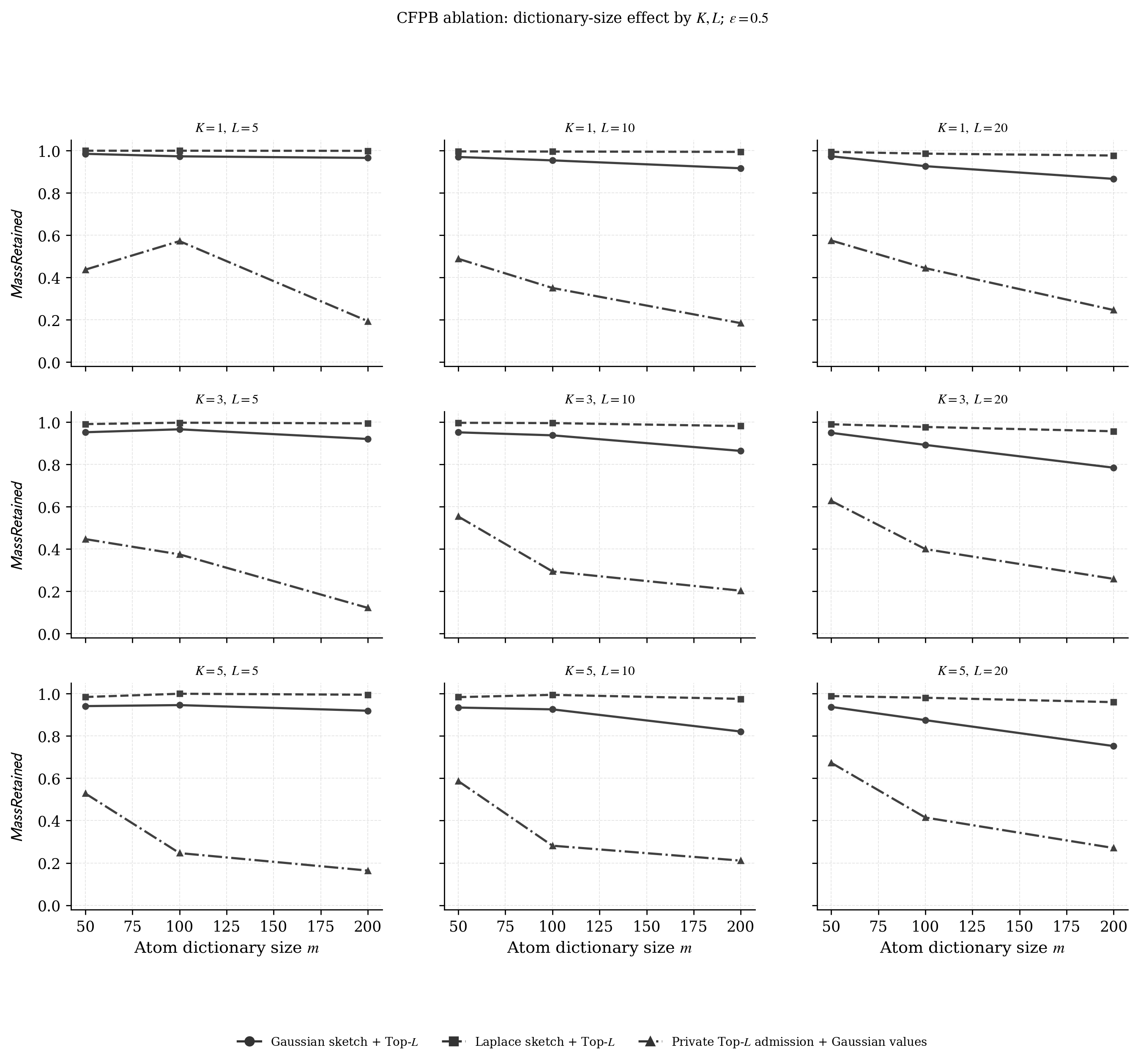}
    \caption{
    CFPB ablation of preserved non-private mass as the atom dictionary size $m$ varies at $\varepsilon=0.5$. Panels vary $K$ and $L$; curves compare the release mechanisms.
    }
    \label{fig:app-cfpb-ablation-preserved-mass-vs-m}
\end{figure}

\begin{figure}[h]
    \centering
    \includegraphics[width=0.99\linewidth]{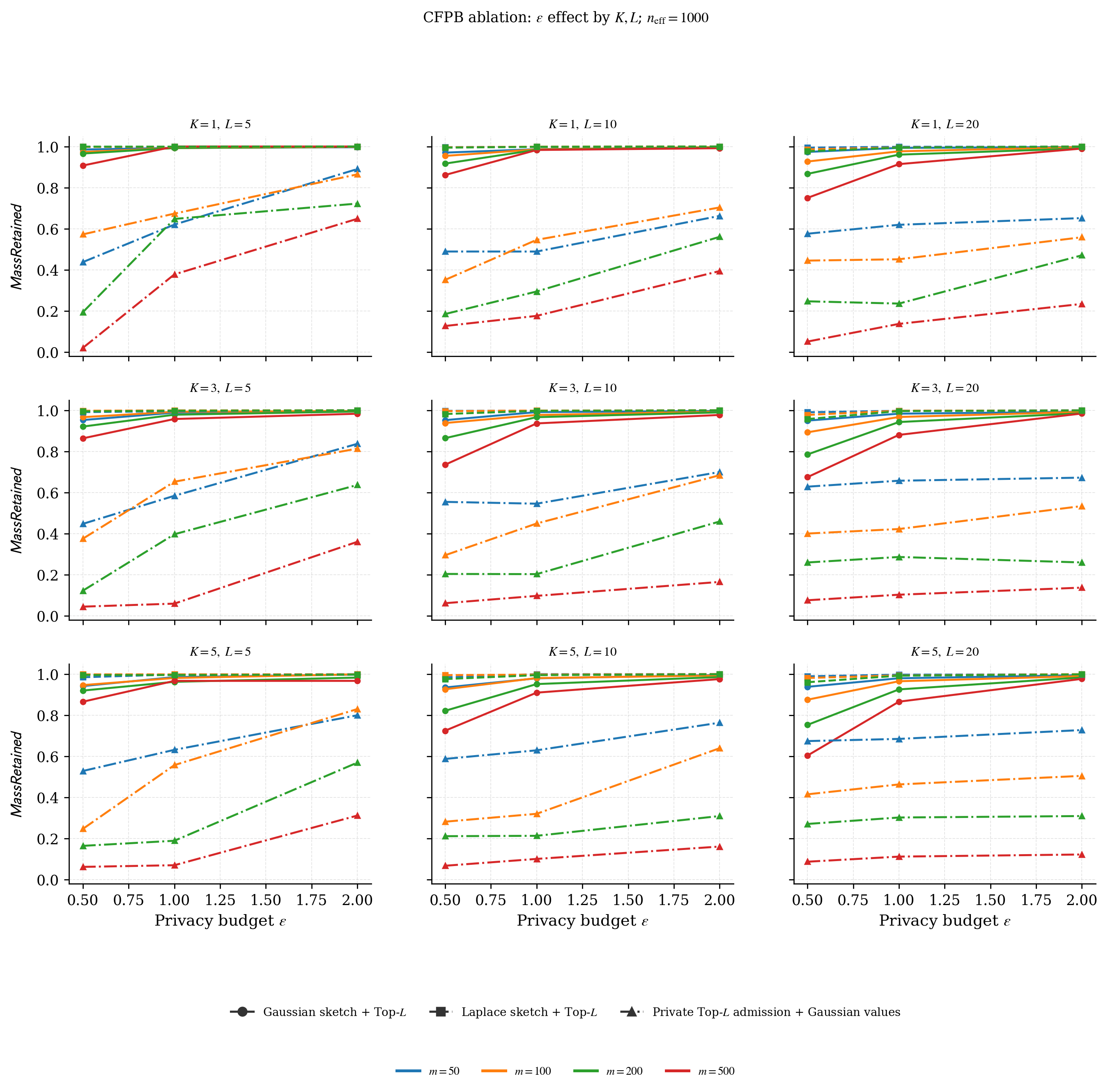}
    \caption{
    CFPB ablation of preserved non-private mass as the privacy budget $\varepsilon$ varies, with $n_{\mathrm{eff}}=1000$. Panels vary $K$ and $L$; colors indicate $m$, and line styles indicate the release mechanism.
    }
    \label{fig:app-cfpb-ablation-preserved-mass-vs-epsilon}
\end{figure}

\begin{figure}[h]
    \centering
    \includegraphics[width=0.99\linewidth]{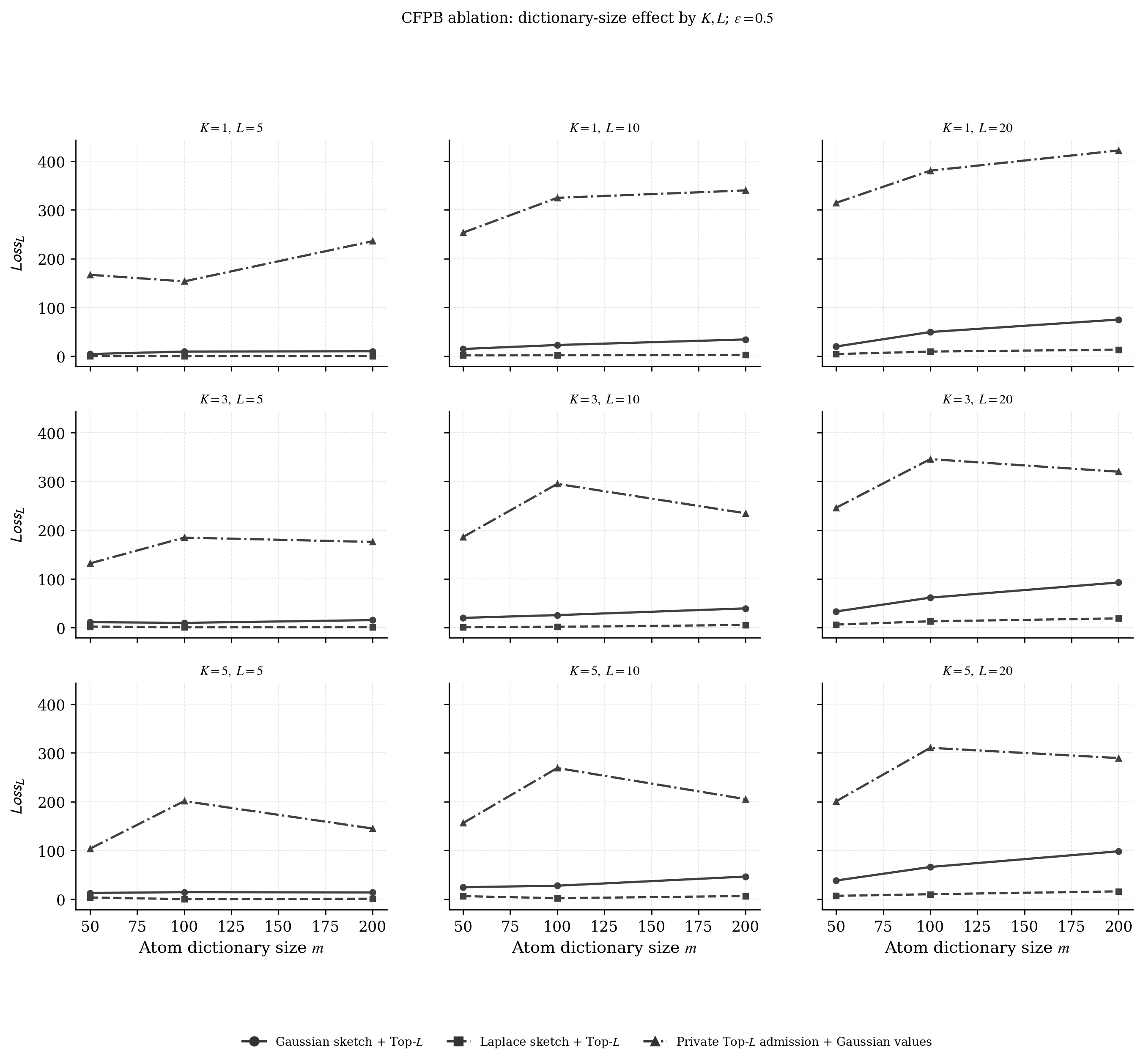}
    \caption{CFPB ablation of support-mass loss $\mathsf{Loss}_L$ as the atom dictionary size $m$ varies at $\varepsilon=0.5$. Panels vary $K$ and $L$; curves compare the release mechanisms. Lower values indicate less non-private semantic mass lost by the released atom set.}
    \label{fig:app-ablation-cfpb-support-loss-vs-m}
\end{figure}

\begin{figure}[h]
    \centering
    \includegraphics[width=0.99\linewidth]{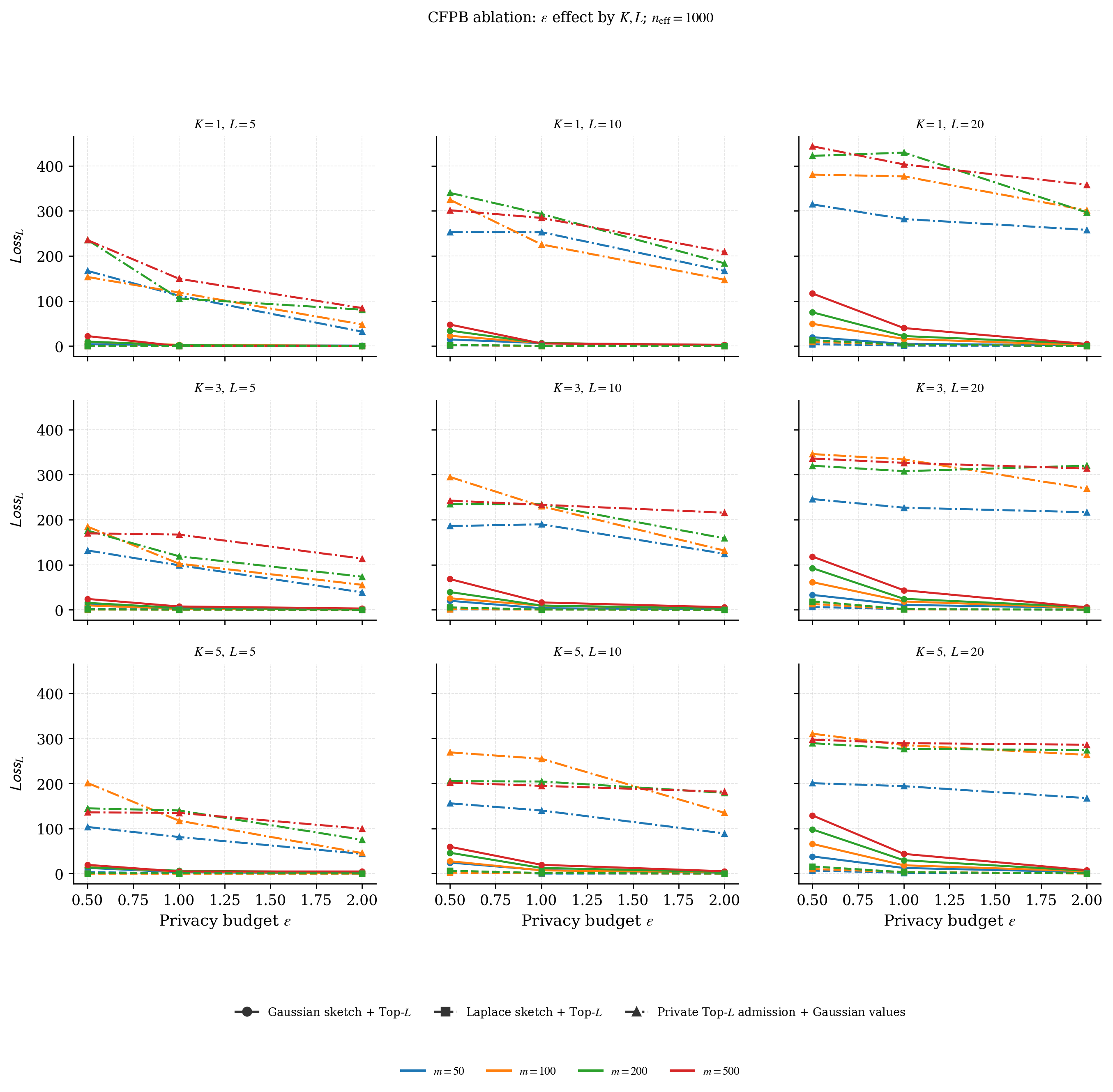}
    \caption{CFPB ablation of support-mass loss $\mathsf{Loss}_L$ as the privacy budget $\varepsilon$ varies, with $n_{\mathrm{eff}}=1000$. Panels vary $K$ and $L$; colors indicate $m$, and line styles indicate the release mechanism. Lower values indicate less non-private semantic mass lost by the released atom set.
    }
    \label{fig:app-cfpb-ablation-support-loss-vs-epsilon}
\end{figure}

\begin{figure}[h]
    \centering
    \includegraphics[width=0.99\linewidth]{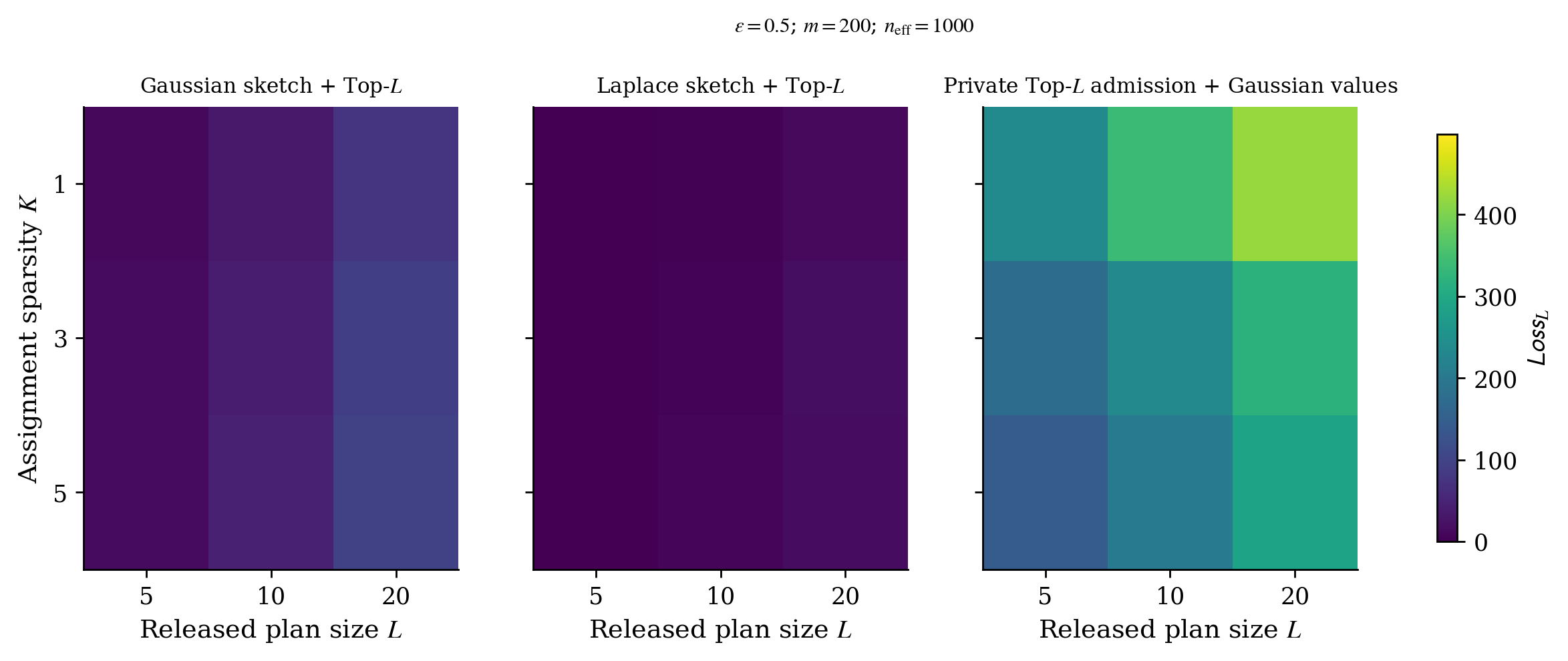}
    \caption{CFPB heatmap of support-mass loss $\mathsf{Loss}_L$ over assignment sparsity $K$ and released plan size $L$, with $\varepsilon=0.5$, $m=200$, and $n_{\mathrm{eff}}=1000$. Each panel corresponds to one release mechanism. Lower values indicate less non-private semantic mass lost by the released atom set.}
    \label{fig:app-cfpb-ablation-support-loss-heatmap-K-L}
\end{figure}


\begin{figure}[h]
    \centering
    \begin{minipage}{0.49\linewidth}
        \centering
        \includegraphics[width=\linewidth]{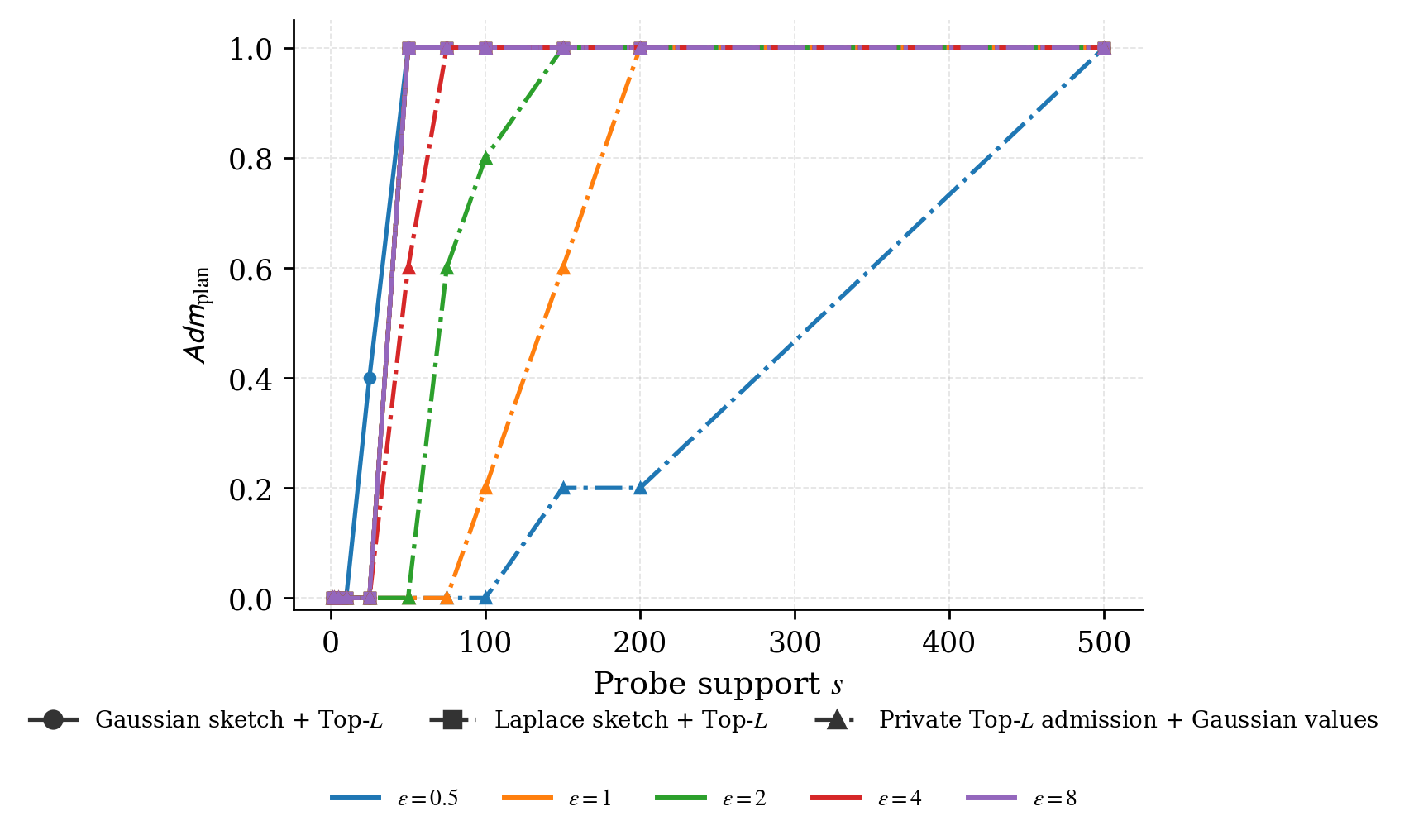}
        \vspace{2pt}
        \textbf{(a)} Plan admission.
    \end{minipage}
    \hfill
    \begin{minipage}{0.49\linewidth}
        \centering
        \includegraphics[width=\linewidth]{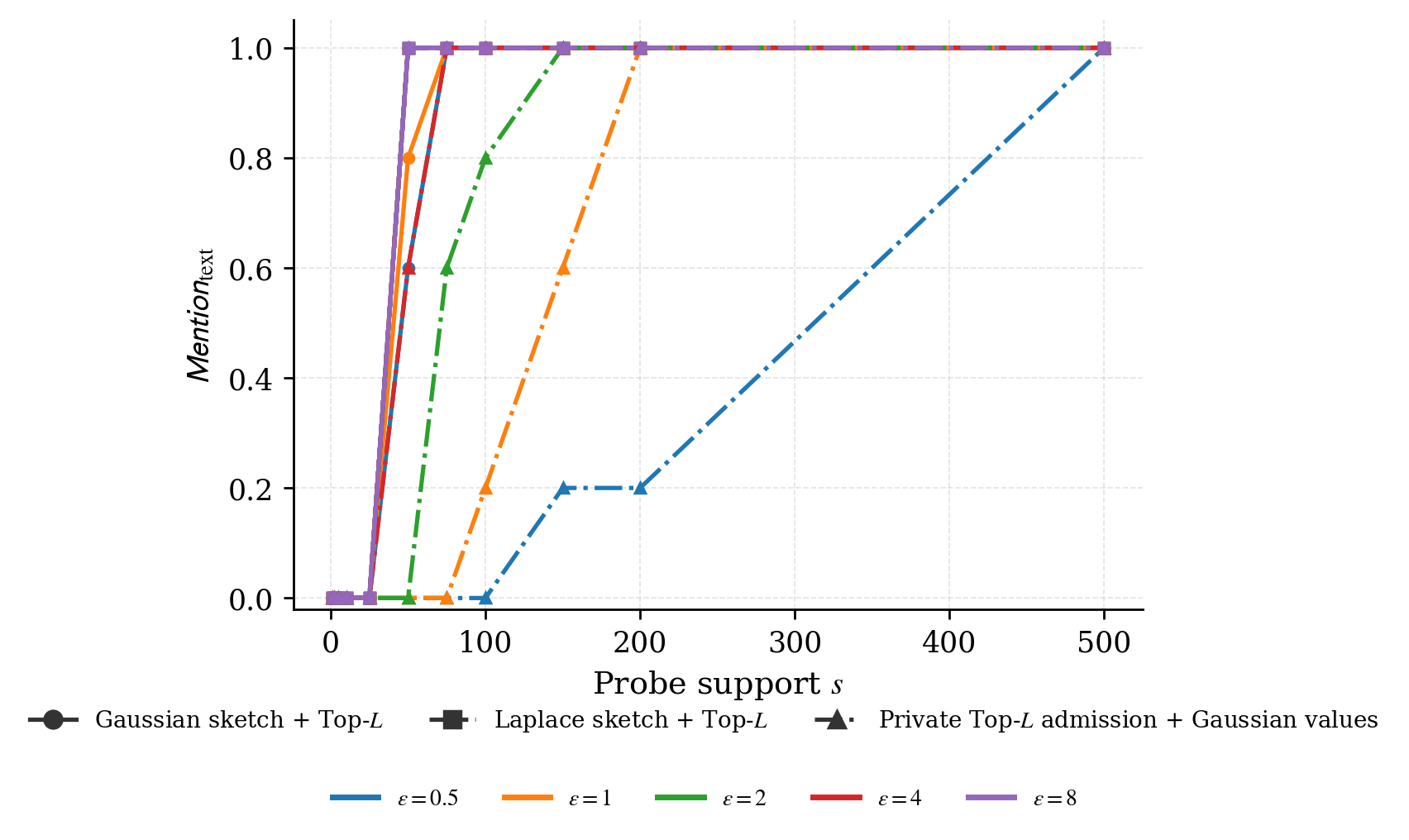}
        \vspace{2pt}
        \textbf{(b)} Summary mention.
    \end{minipage}
    \caption{CFPB controlled probe-atom experiment as the injected probe support $s$ varies, with $K=3$ and $L=10$. Panel~(a) reports the plan-admission rate $\mathsf{Adm}_{\mathrm{plan}}$, which measures whether the differentially
    private release admits the public probe atom into the released semantic plan. Panel~(b) reports the summary mention rate $\mathsf{Mention}_{\mathrm{text}}$, which measures whether the public probe string is detected in the decoded summary. Curves compare privacy budgets and release mechanisms under the configured probe-sweep setting.
    }
    \label{fig:app-cfpb-probe-admission-mention}
\end{figure}

\begin{figure}[h]
    \centering
    \includegraphics[width=0.5\linewidth]{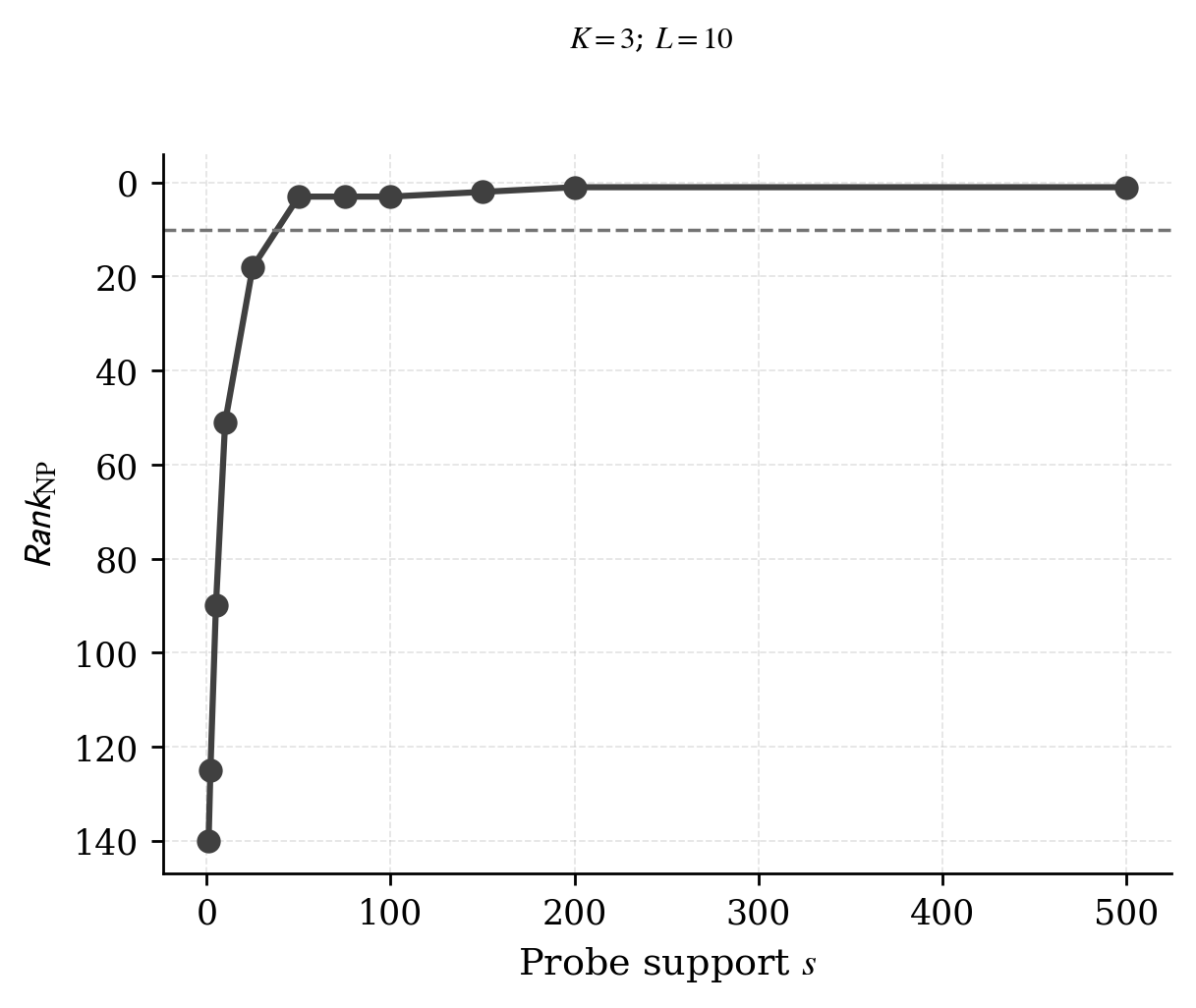}
    \caption{CFPB offline non-private rank of the controlled probe atom as the injected probe support $s$ varies, with $K=3$ and $L=10$. The rank $\mathsf{Rank}_{\mathrm{NP}}$ is computed from the non-private semantic sketch before the differentially private release and is used only for interpretation. Smaller rank values indicate that the probe atom is closer to the top of the non-private sketch. The dashed line marks the top-$L$ cutoff.
    }
    \label{fig:app-cfpb-probe-non-private-rank}
\end{figure}

\clearpage

\begin{table*}[t]
\centering
\caption{Release-conditioned OpenAI evaluation of selected generated summaries on CFPB consumer complaints. Each summary is evaluated only with respect to the object released to its decoder. Scores are assigned on a 1--5 scale and report verbalization quality. The judge privacy-safety column evaluates only the generated text and is not a formal differential-privacy guarantee. We report the grouped values as mean $\pm$ sample standard deviation across the summaries in each row.}
\label{tab:openai-baseline-comparison-cfpb}
\small
\resizebox{0.99\textwidth}{!}{%
\begin{tabular}{lllccccccccc}
\toprule
Method & Release object & Configuration & $\varepsilon$ & Protected units & $N$ & Coverage & Specificity & Insightfulness & Faithfulness & Text safety & Clarity \\
\midrule
DP-SPIN private plan & DP semantic plan & record; Gaussian sketch + top-$L$; m=200; K=3; L=10 & 1.0 & 2000 & 3 & 4.67 $\pm$ 0.58 & 4.00 $\pm$ 0.00 & 3.67 $\pm$ 0.58 & 4.67 $\pm$ 0.58 & 5.00 $\pm$ 0.00 & 4.67 $\pm$ 0.58 \\
DP keyword histogram & DP keyword histogram & fixed public keyword vocabulary; Laplace histogram; released top-10 keywords & 1.0 & 2000 & 3 & 5.00 $\pm$ 0.00 & 4.00 $\pm$ 0.00 & 3.00 $\pm$ 0.00 & 5.00 $\pm$ 0.00 & 5.00 $\pm$ 0.00 & 4.00 $\pm$ 0.00 \\
DP category histogram & DP category histogram & fixed Issue universe; Laplace histogram; released top-10 categories & 1.0 & 2000 & 3 & 5.00 $\pm$ 0.00 & 4.00 $\pm$ 0.00 & 4.00 $\pm$ 0.00 & 5.00 $\pm$ 0.00 & 5.00 $\pm$ 0.00 & 5.00 $\pm$ 0.00 \\
URANIA-style public keywords & DP public keywords & DP clustering; DP public-keyword histograms; clusters=10; keywords=10 & 1.0 & 2000 & 3 & 4.33 $\pm$ 0.58 & 3.33 $\pm$ 0.58 & 4.00 $\pm$ 0.00 & 3.67 $\pm$ 0.58 & 5.00 $\pm$ 0.00 & 4.33 $\pm$ 0.58 \\
\bottomrule
\end{tabular}%
}
\end{table*}

\begin{table*}[t]
\centering
\caption{Release-conditioned OpenAI evaluation of selected \texttt{DP-SPIN} summaries on CFPB consumer complaints. The decoder receives only the released differentially private semantic plan and public decoding instructions. Scores are assigned on the same 1--5 rubric used for the baseline-comparison table.}
\label{tab:openai-dpspin-cfpb}
\small
\resizebox{0.99\textwidth}{!}{%
\begin{tabular}{lllccccccccc}
\toprule
Unit & Mechanism & Configuration & $\varepsilon$ & Protected units & $N$ & Coverage & Specificity & Insightfulness & Faithfulness & Text safety & Clarity \\
\midrule
record & Gaussian sketch + top-$L$ & record; Gaussian sketch + top-$L$; m=200; K=3; L=10 & 1.0 & 2000 & 3 & 4.67 $\pm$ 0.58 & 4.00 $\pm$ 0.00 & 3.67 $\pm$ 0.58 & 4.67 $\pm$ 0.58 & 5.00 $\pm$ 0.00 & 4.67 $\pm$ 0.58 \\
record & Gaussian sketch + top-$L$ & record; Gaussian sketch + top-$L$; m=200; K=3; L=10 & 4.0 & 2000 & 3 & 4.67 $\pm$ 0.58 & 4.33 $\pm$ 0.58 & 4.00 $\pm$ 0.00 & 5.00 $\pm$ 0.00 & 5.00 $\pm$ 0.00 & 5.00 $\pm$ 0.00 \\
record & Private top-$L$ admission + Gaussian values & record; Private top-$L$ admission + Gaussian values; m=200; K=3; L=10 & 1.0 & 5000 & 3 & 5.00 $\pm$ 0.00 & 5.00 $\pm$ 0.00 & 4.00 $\pm$ 0.00 & 5.00 $\pm$ 0.00 & 5.00 $\pm$ 0.00 & 5.00 $\pm$ 0.00 \\
record & Private top-$L$ admission + Gaussian values & record; Private top-$L$ admission + Gaussian values; m=200; K=3; L=10 & 4.0 & 5000 & 3 & 5.00 $\pm$ 0.00 & 4.67 $\pm$ 0.58 & 4.00 $\pm$ 0.00 & 5.00 $\pm$ 0.00 & 5.00 $\pm$ 0.00 & 5.00 $\pm$ 0.00 \\
\bottomrule
\end{tabular}%
}
\end{table*}

\begin{table*}[t]
\centering
\caption{Similarity between OpenAI summaries generated from \texttt{DP-SPIN} private plans and summaries generated from the corresponding non-private top-$L$ plans on CFPB consumer complaints. Metrics compare generated texts and do not evaluate the underlying semantic sketches. Values are reported as mean $\pm$ sample standard deviation across runs.}
\label{tab:openai-private-non-private-cfpb-consumer-complaints}
\small
\resizebox{0.99\textwidth}{!}{%
\begin{tabular}{lllccccccc}
\toprule
Unit & Mechanism & Configuration & $\varepsilon$ & Protected units & Runs & $J_{\mathrm{tok}}$ & $J_2$ & $J_{\mathrm{kp}}$ & $\mathsf{Cos}_{\mathrm{tfidf}}$ \\
\midrule
record & Gaussian sketch + top-$L$ & record; Gaussian sketch + top-$L$; m=200; K=3; L=10 & 1.0 & 2000 & 3 & 0.49 $\pm$ 0.08 & 0.21 $\pm$ 0.05 & 0.23 $\pm$ 0.05 & 0.57 $\pm$ 0.04  \\
record & Gaussian sketch + top-$L$ & record; Gaussian sketch + top-$L$; m=200; K=3; L=10 & 4.0 & 2000 & 3 & 0.55 $\pm$ 0.08 & 0.22 $\pm$ 0.06 & 0.25 $\pm$ 0.06 & 0.60 $\pm$ 0.08 \\
record & Private top-$L$ admission + Gaussian values & record; Private top-$L$ admission + Gaussian values; m=200; K=3; L=10 & 1.0 & 5000 & 3 & 0.37 $\pm$ 0.10 & 0.21 $\pm$ 0.08 & 0.21 $\pm$ 0.08 & 0.54 $\pm$ 0.10  \\
record & Private top-$L$ admission + Gaussian values & record; Private top-$L$ admission + Gaussian values; m=200; K=3; L=10 & 4.0 & 5000 & 3 & 0.67 $\pm$ 0.13 & 0.40 $\pm$ 0.13 & 0.41 $\pm$ 0.14 & 0.73 $\pm$ 0.08  \\
\bottomrule
\end{tabular}%
}
\end{table*}

\begin{table*}[t]
\centering
\caption{Examples of OpenAI-generated aggregate summaries on CFPB consumer complaints. Each row reports the generated summary for one selected \texttt{DP-SPIN} or baseline release object.}
\label{tab:openai-qualitative-cfpb-consumer-complaints}
\scriptsize
\setlength{\tabcolsep}{3pt}
\renewcommand{\arraystretch}{1.04}
\resizebox{0.99\textwidth}{!}{%
\begin{tabular}{p{0.27\textwidth}p{0.68\textwidth}}
\toprule
Method and release configuration & Generated summary \\
\midrule
DP-SPIN private plan (record; Gaussian sketch + top-$L$; m=200; K=3; L=10; $\varepsilon=1$; $n=2000$) & The aggregate summary reveals moderate concerns regarding incorrect information on credit reports, with notable entries related to information that does not belong to the individual and issues with account status. Additionally, there are low-level reports concerning problems with dispute statements during credit reporting investigations, identity theft related to credit cards, and various debt collection issues, including attempts to collect debts that are not owed and insufficient information for debt verification. - Moderate support of incorrect information on credit reports. - Low support of issues related to credit reporting investigations and identity theft. - Low support of debt collection disputes and inaccuracies. \\\hline
\addlinespace[1pt]
DP keyword histogram (fixed public keyword vocabulary; vocab=10; Laplace histogram; released top-10 keywords; $\varepsilon=1$; $n=2000$) & The aggregate summary reveals various counts of keywords related to financial topics, with the highest support observed for "credit," followed by "account" and "report." Other notable keywords include "information," "reporting," and "debt," while "payment," "card," "loan," and "balance" show lower counts. - Highest keyword count: credit - Other significant keywords: account, report, information, reporting, debt - Lower support keywords: payment, card, loan, balance \\\hline
\addlinespace[1pt]
DP category histogram (fixed Issue universe; Laplace histogram; released top-10 categories; $\varepsilon=1$; $n=2000$) & The aggregate summary of reported issues reveals a range of concerns related to public category fields. The most frequently reported issue is incorrect information on reports, followed by improper use of reports. Other notable issues include problems with investigations by companies, attempts to collect debts not owed, and managing accounts. Additionally, there are reports of written notifications about debt and problems with purchases shown on statements. - Incorrect information on reports is the most prevalent issue. - Improper use of reports is also a significant concern. - Other issues include problems with investigations and debt collection attempts. \\\hline
\addlinespace[1pt]
URANIA-style public keywords (DP clustering; DP public-keyword histograms; clusters=10; keywords=10; $\varepsilon=1$; $n=2000$) & The preserved cluster entries highlight various aspects of financial services, particularly focusing on credit and loan information, prepaid transfer reporting, customer communication regarding credit features, and security in vehicle rewards. These clusters suggest a landscape where users are engaged in actions related to credit and loans, while also navigating challenges in communication and service features. The emphasis on consent and reporting indicates a potential concern for transparency and user awareness in financial transactions. - The clusters reflect a diverse range of topics within financial services, emphasizing user interactions and experiences. - There is a notable focus on communication challenges and the importance of consent in financial dealings. - Security and reporting mechanisms appear to be critical areas of interest, particularly in relation to rewards and prepaid services. \\
\addlinespace[1pt]
\bottomrule
\end{tabular}%
}
\end{table*}

\clearpage
\appsection{Additional Experimental Results: Amazon Reviews}
\label{app:additional-results-amazon}

In this appendix, we report additional experimental results on the \texttt{Amazon All Beauty} reviews.

\subsection{Record-level results}

\begin{figure}[h]
    \centering
    \includegraphics[width=0.99\linewidth]{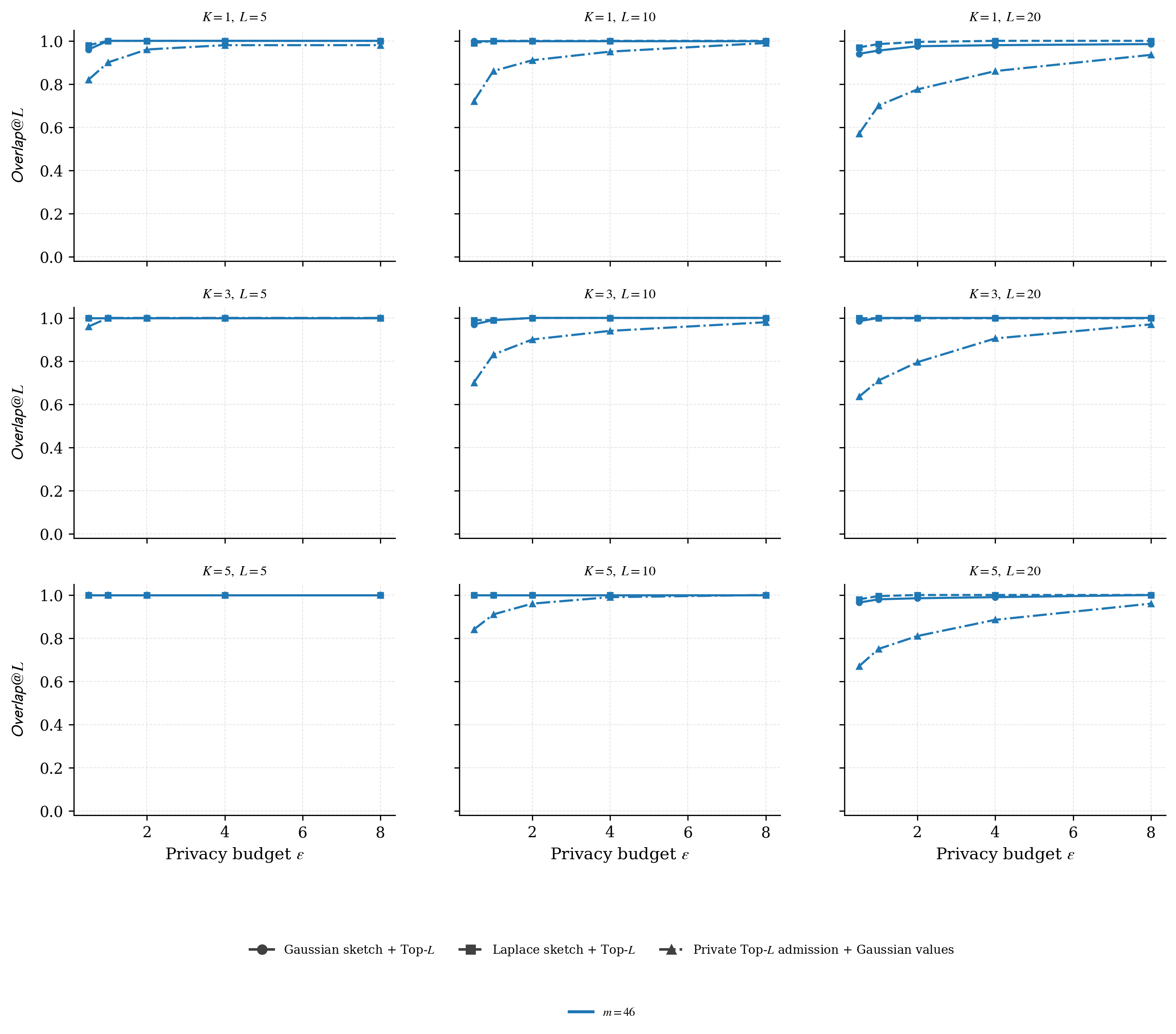}
    \caption{Amazon All Beauty record-level ablation of $\mathsf{Overlap}@L$ as the privacy budget $\varepsilon$ varies, with $m=46$. Panels vary the assignment sparsity $K$ and released plan size $L$; curves compare the release mechanisms.}
    \label{fig:amazon-record-overlap-vs-epsilon}
\end{figure}

\begin{figure}[h]
    \centering
    \includegraphics[width=0.99\linewidth]{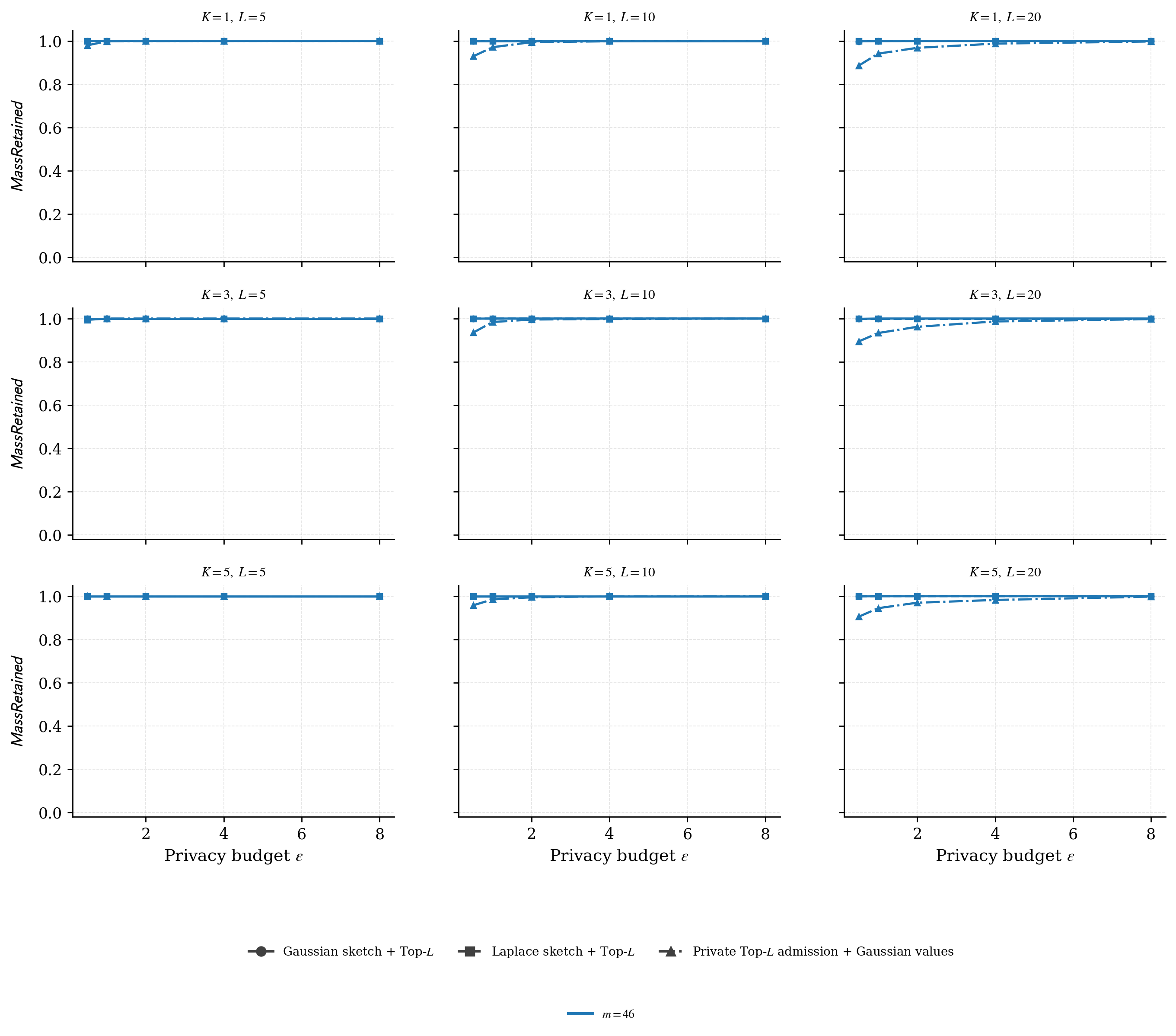}
    \caption{Amazon All Beauty record-level ablation of the preserved non-private mass fraction $\mathsf{MassRetained}$ as the privacy budget $\varepsilon$ varies, with $m=46$. Panels vary $K$ and $L$; curves compare the release mechanisms.}
    \label{fig:amazon-record-mass-preserved-vs-epsilon}
\end{figure}

\begin{figure}[h]
    \centering
    \includegraphics[width=0.99\linewidth]{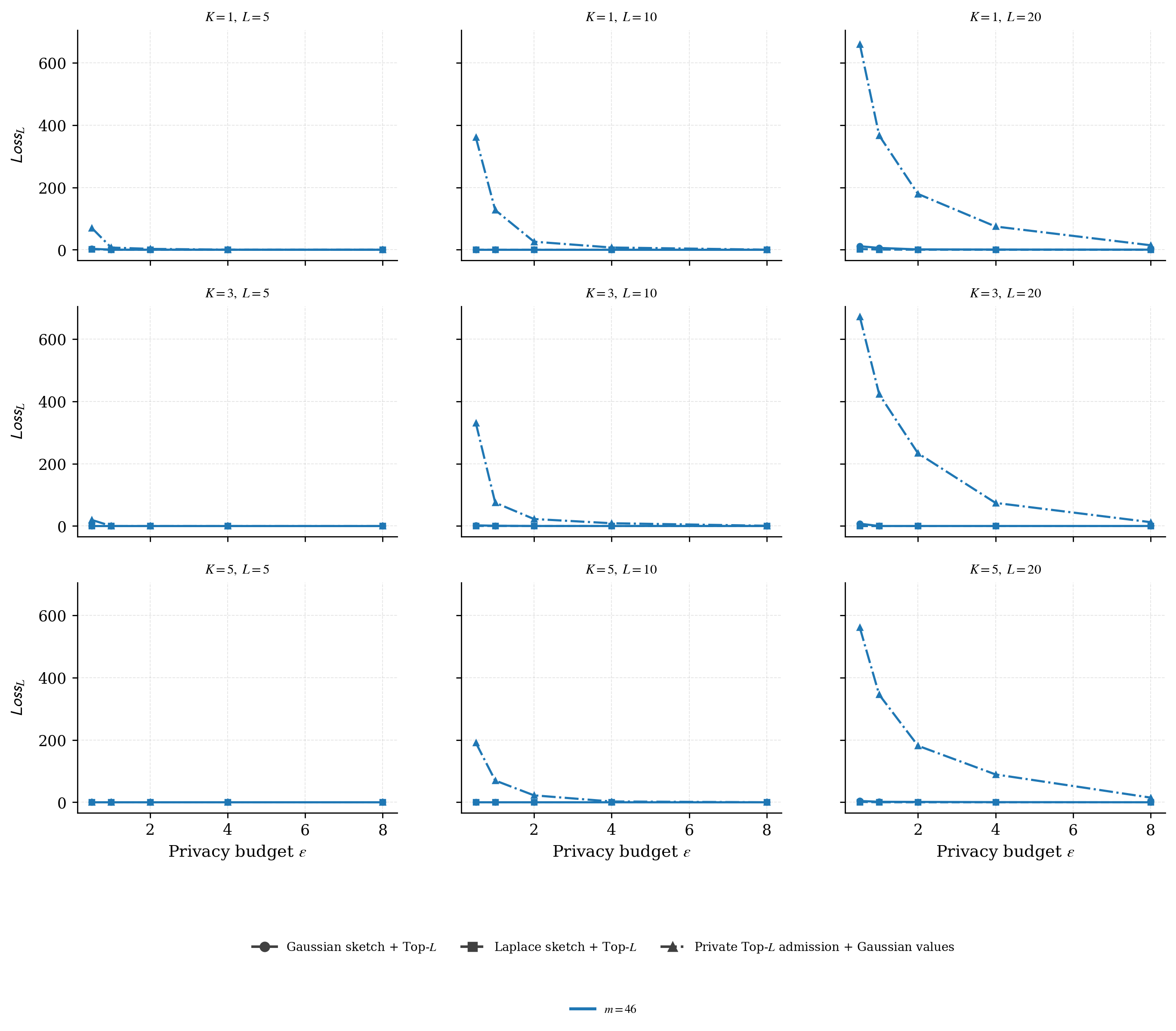}
    \caption{Amazon All Beauty record-level ablation of support-mass loss $\mathsf{Loss}_L$ as the privacy budget $\varepsilon$ varies, with $m=46$. Panels vary $K$ and $L$; curves compare the release mechanisms. Lower values indicate less non-private semantic mass lost by the released atom set.}
    \label{fig:amazon-record-loss-vs-epsilon}
\end{figure}

\begin{figure}[h]
    \centering
    \includegraphics[width=0.99\linewidth]{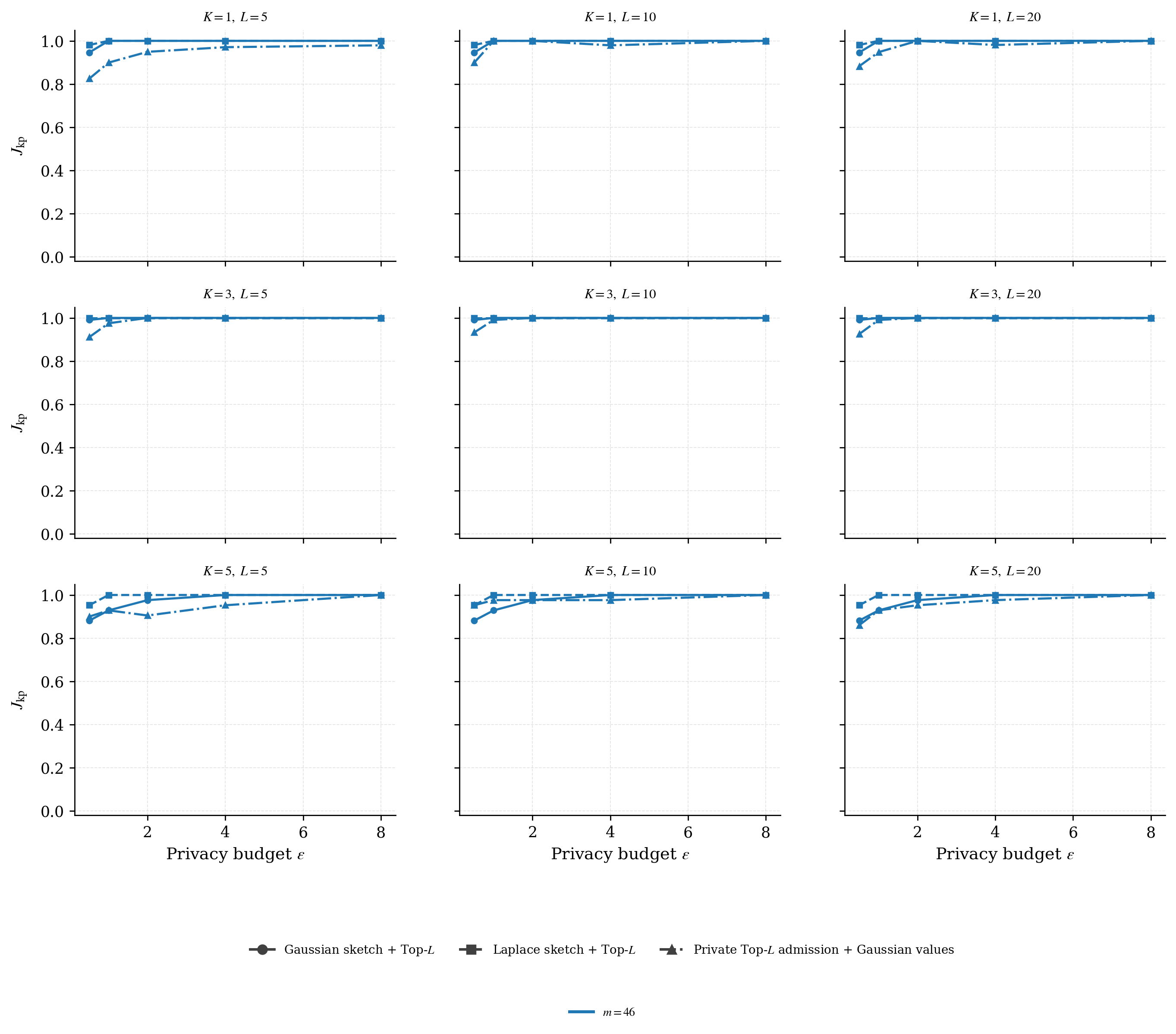}
    \caption{Amazon All Beauty record-level keyphrase Jaccard similarity $J_{\mathrm{kp}}$ between the DP-plan summary and the non-private-plan reference summary as the privacy budget $\varepsilon$ varies, with $m=46$. Panels vary $K$ and $L$; curves compare the release mechanisms.}
    \label{fig:amazon-record-keyphrase-jaccard-vs-epsilon}
\end{figure}

\begin{figure}[h]
    \centering
    \includegraphics[width=0.99\linewidth]{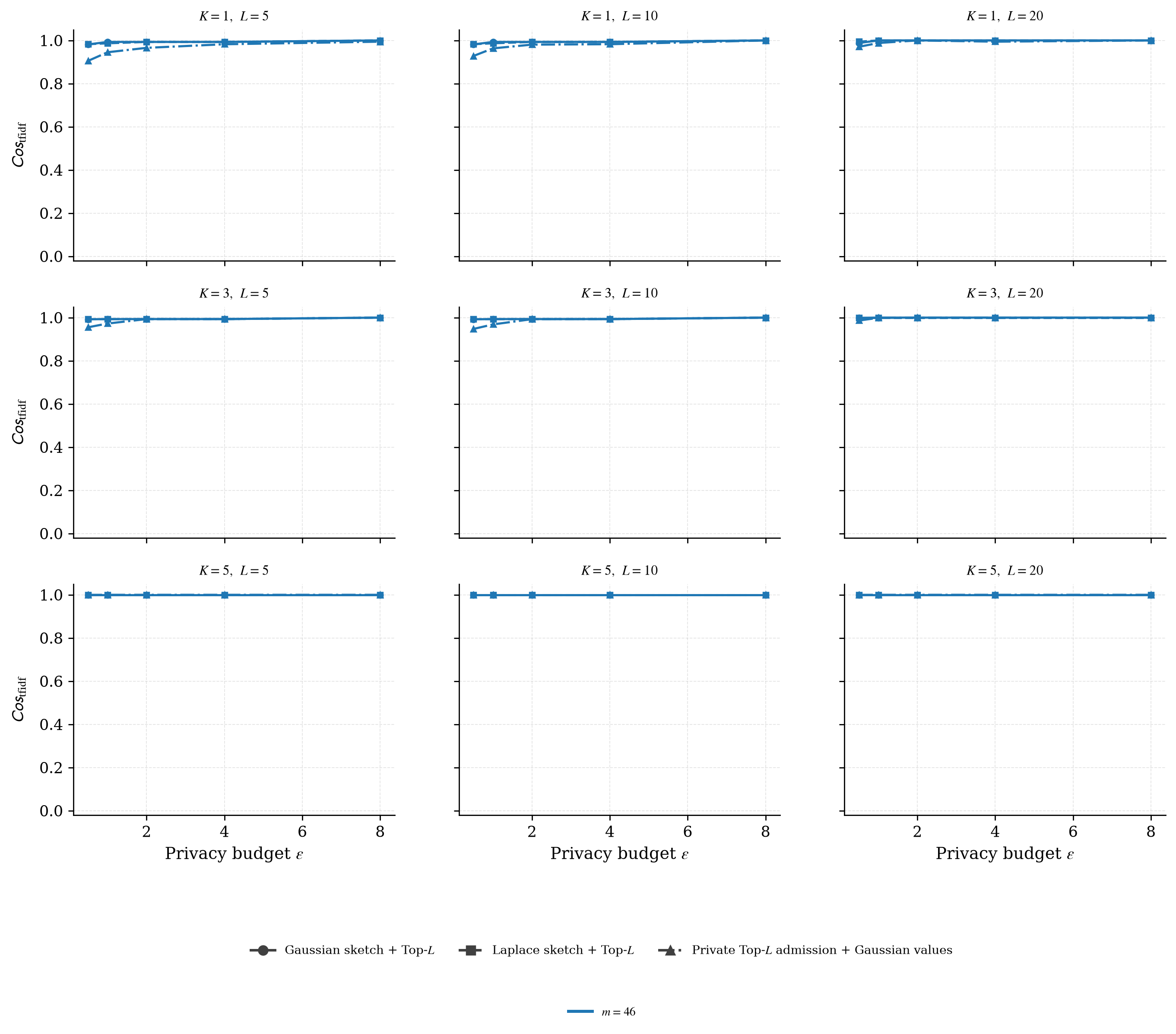}
    \caption{Amazon All Beauty record-level TF--IDF cosine similarity $\mathsf{Cos}_{\mathrm{tfidf}}$ between the DP-plan summary and the non-private-plan reference summary as the privacy budget $\varepsilon$ varies, with $m=46$. Panels vary $K$ and $L$; curves compare the release mechanisms.}
    \label{fig:amazon-record-tfidf-cosine-vs-epsilon}
\end{figure}

\begin{figure}[h]
    \centering
    \includegraphics[width=0.99\linewidth]{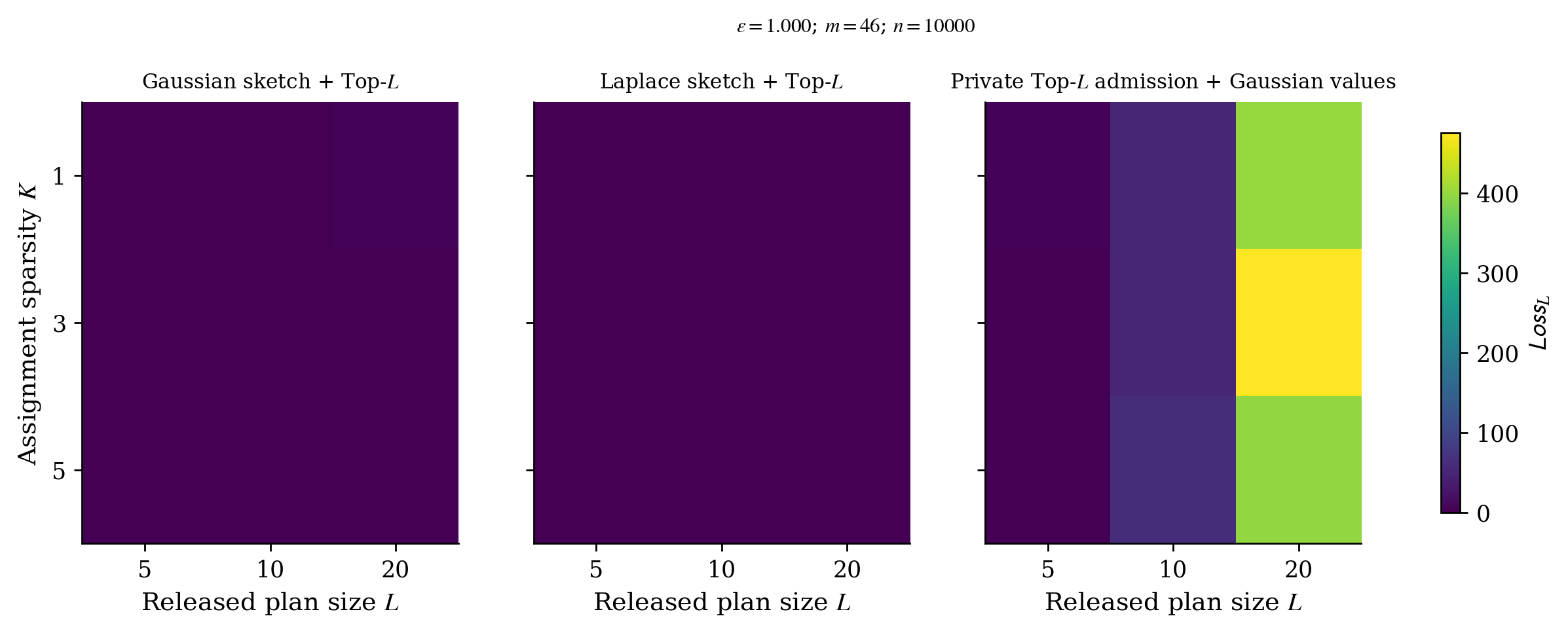}
    \caption{Amazon All Beauty record-level heatmap of support-mass loss $\mathsf{Loss}_L$ over assignment sparsity $K$ and released plan size $L$, with $\varepsilon=1.0$, $m=46$, and $n=10000$. Each panel corresponds to one release mechanism. Lower values indicate less non-private semantic mass lost by the released atom set.}
    \label{fig:amazon-record-loss-heatmap}
\end{figure}

\begin{figure}[h]
    \centering
    \begin{minipage}{0.49\linewidth}
        \centering
        \includegraphics[width=\linewidth]{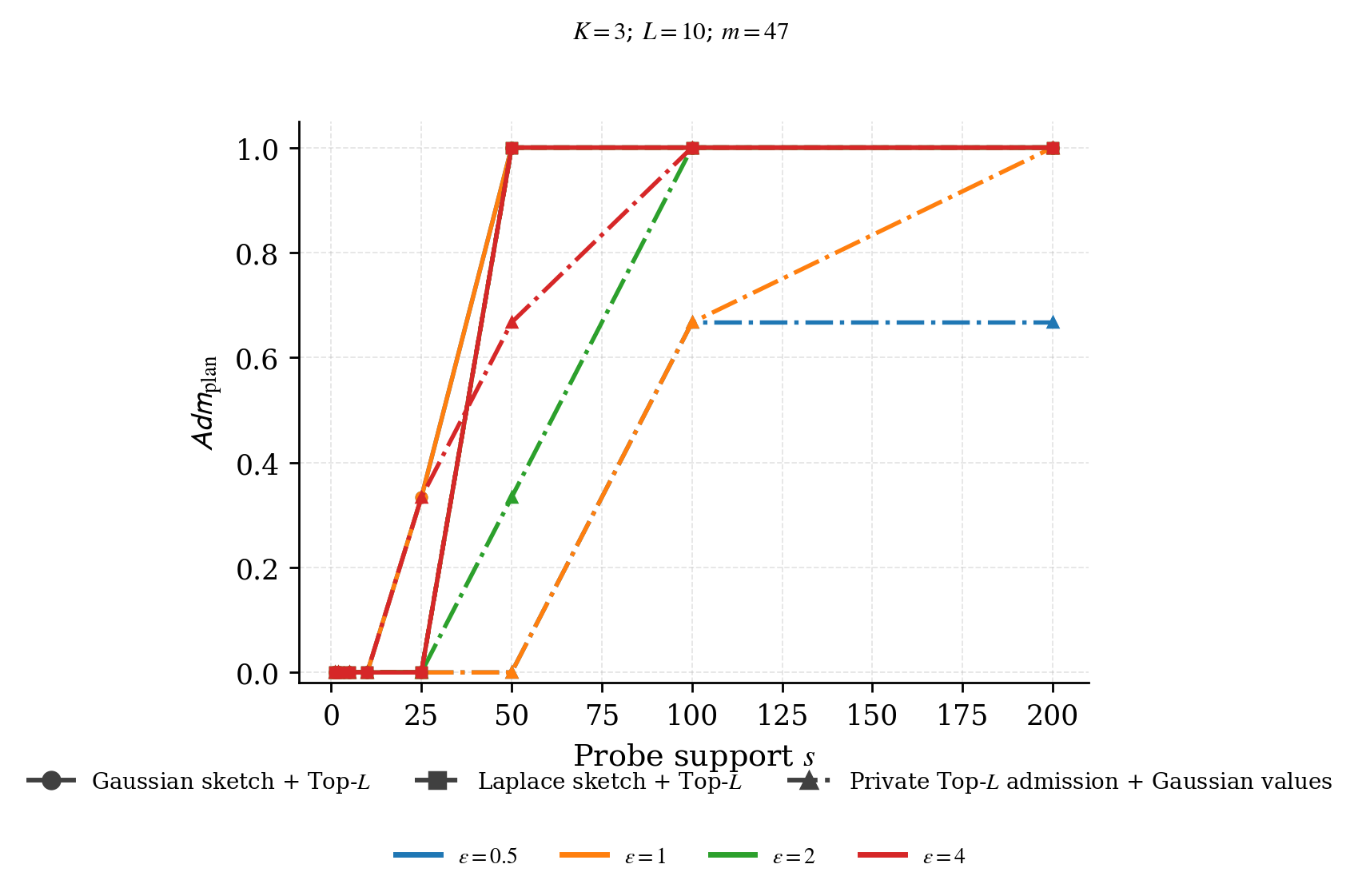}
        \vspace{2pt}
        \textbf{(a)} Plan admission.
    \end{minipage}
    \hfill
    \begin{minipage}{0.49\linewidth}
        \centering
        \includegraphics[width=\linewidth]{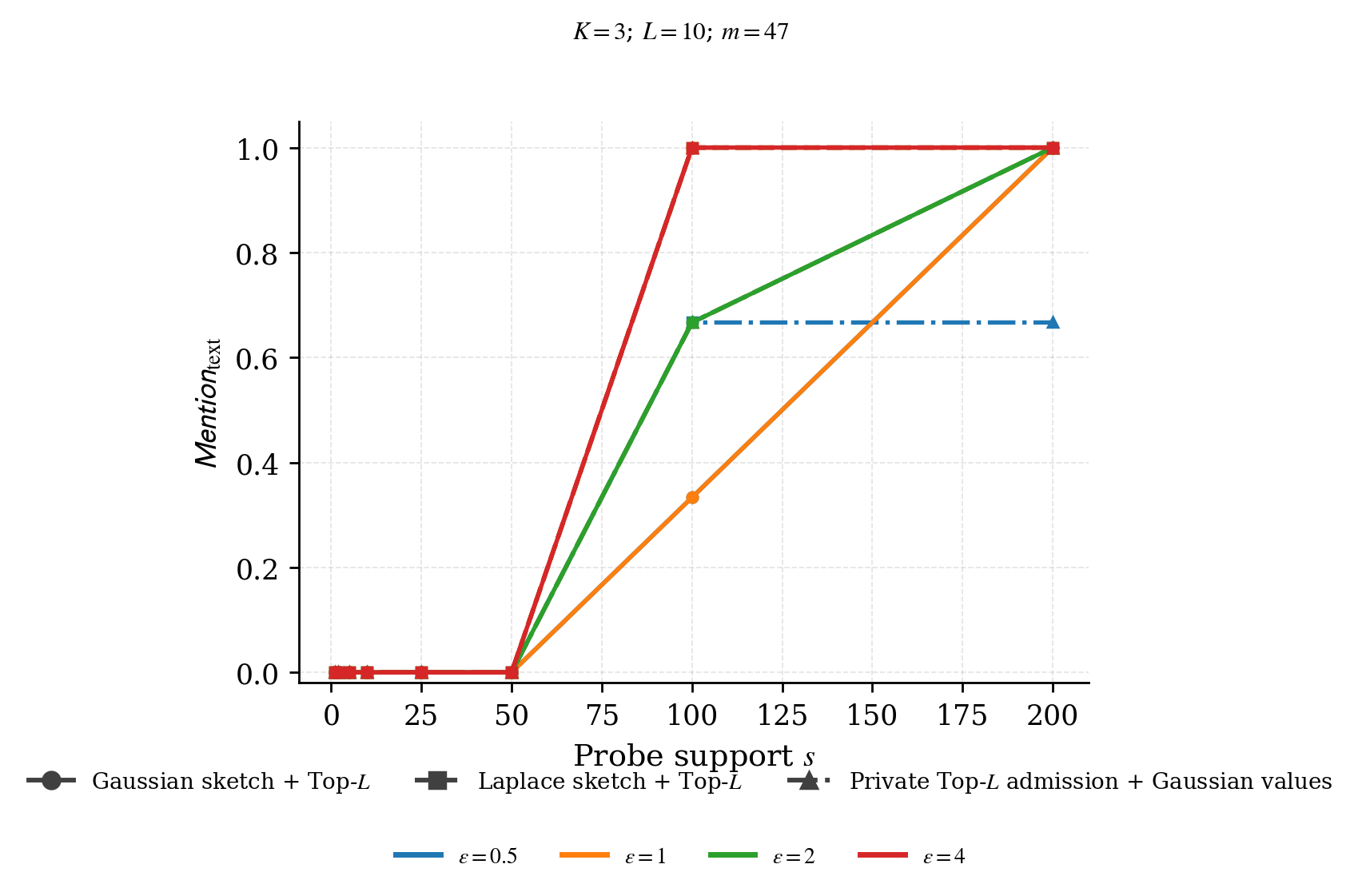}
        \vspace{2pt}
        \textbf{(b)} Summary mention.
    \end{minipage}
    \caption{
    Amazon All Beauty record-level controlled probe-atom experiment as the injected probe support $s$ varies, with $K=3$, $L=10$, and $m=47$ public atoms including the probe atom. Panel~(a) reports the plan-admission rate $\mathsf{Adm}_{\mathrm{plan}}$, which measures whether the differentially private release admits the public probe atom into the released semantic plan. Panel~(b) reports the summary mention rate $\mathsf{Mention}_{\mathrm{text}}$, which measures whether the public probe
    string is detected in the decoded summary. Curves compare privacy budgets and release mechanisms under the configured Amazon record-level probe-sweep setting.}
    \label{fig:amazon-record-probe-admission-mention}
\end{figure}

\clearpage
\subsection{User-level results}

\begin{figure}[h]
    \centering
    \includegraphics[width=0.99\linewidth]{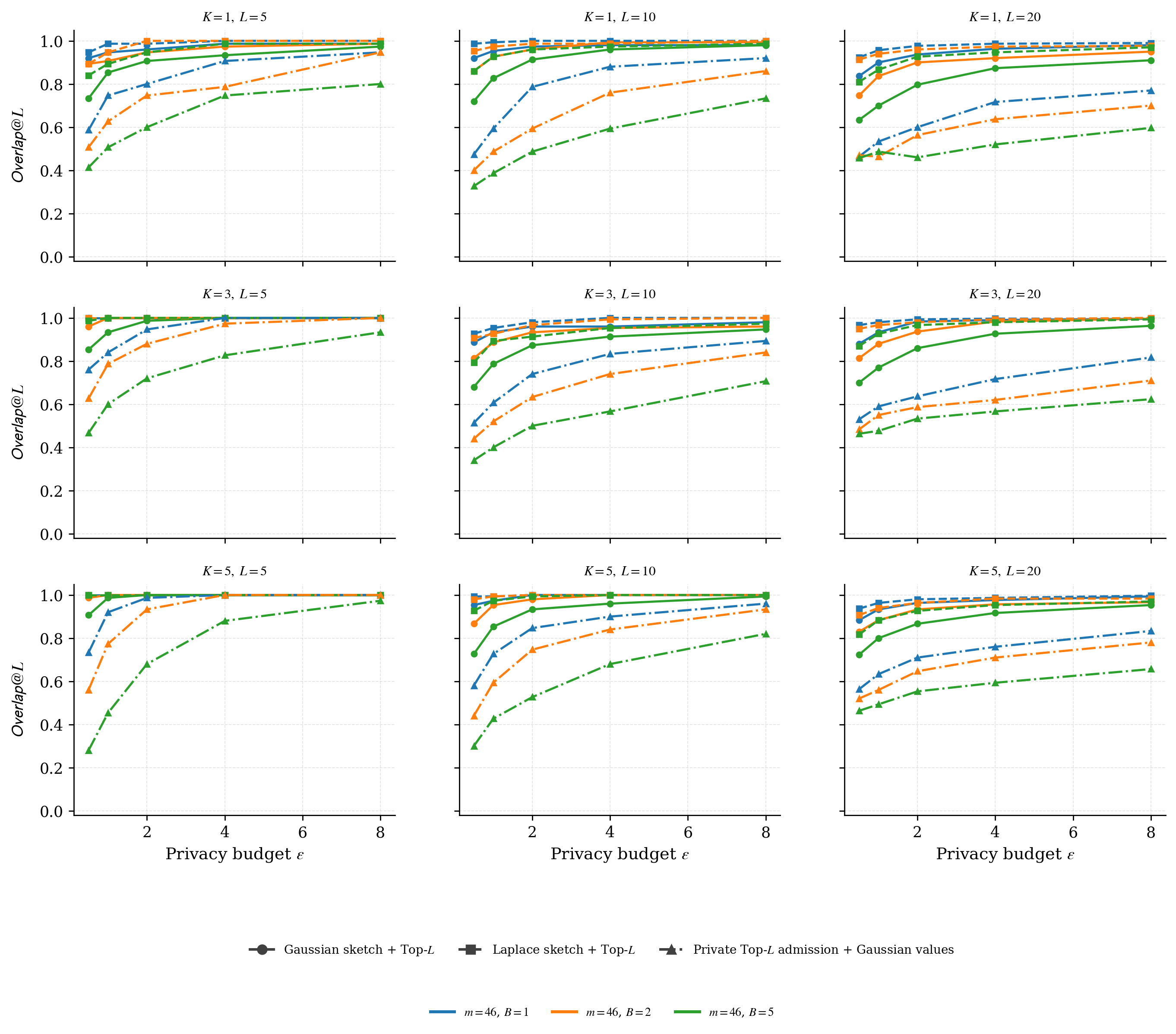}
    \caption{Amazon All Beauty user-level ablation of $\mathsf{Overlap}@L$ as the privacy budget $\varepsilon$ varies. Panels vary the assignment sparsity $K$ and released plan size $L$; curves compare the release mechanisms and user-level configurations.
    }
    \label{fig:amazon-user-overlap-vs-epsilon}
\end{figure}

\begin{figure}[h]
    \centering
    \includegraphics[width=0.99\linewidth]{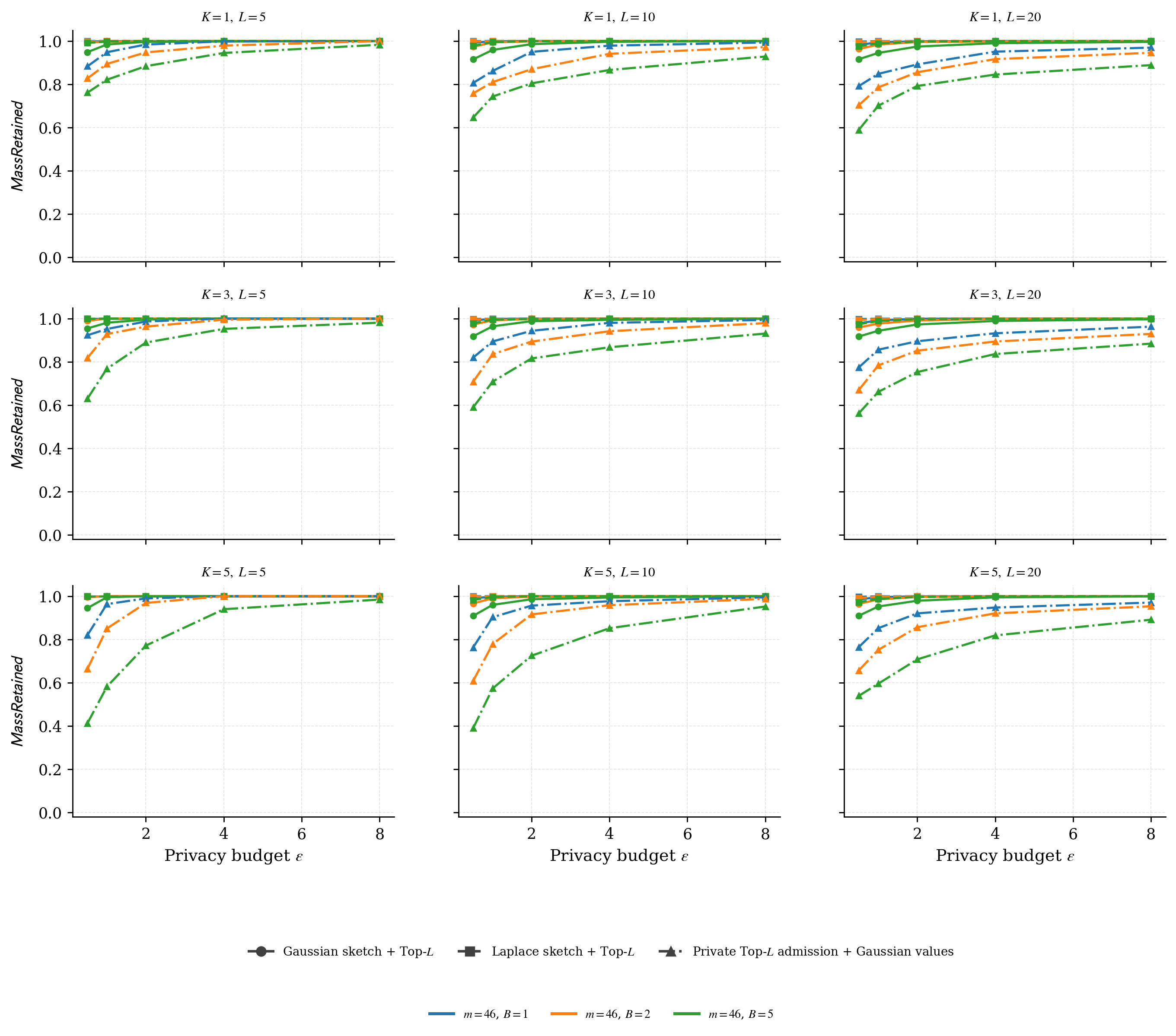}
    \caption{Amazon All Beauty user-level ablation of the preserved non-private mass fraction $\mathsf{MassRetained}$ as the privacy budget $\varepsilon$ varies. Panels vary $K$ and $L$; curves compare the release mechanisms and user-level configurations.}
    \label{fig:amazon-user-mass-preserved-vs-epsilon}
\end{figure}

\begin{figure}[h]
    \centering
    \includegraphics[width=0.99\linewidth]{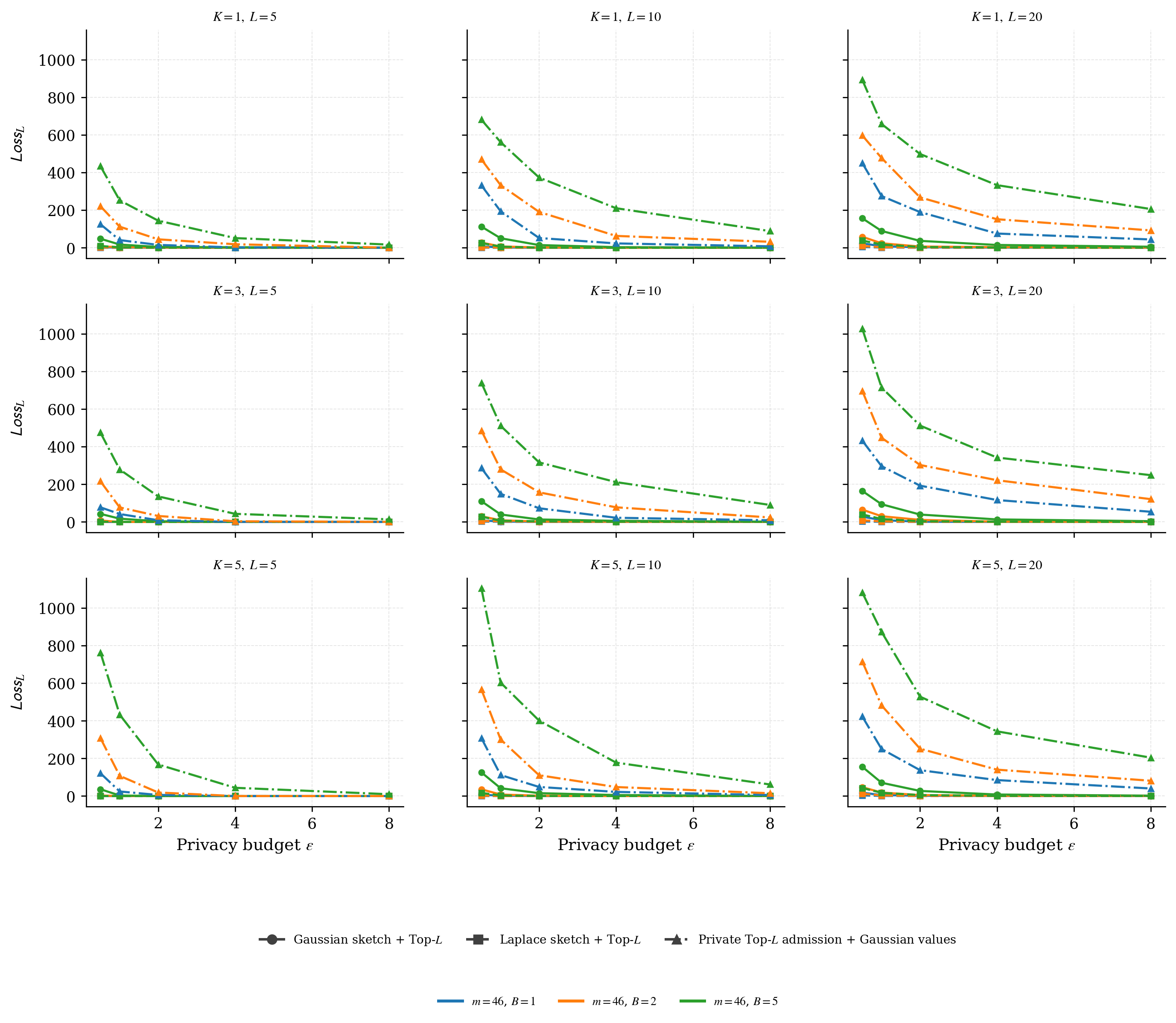}
    \caption{Amazon All Beauty user-level ablation of support-mass loss $\mathsf{Loss}_L$ as the privacy budget $\varepsilon$ varies. Panels vary $K$ and $L$; curves compare the release mechanisms and user-level configurations. Lower values indicate less non-private semantic mass lost by the released atom set.}
    \label{fig:amazon-user-loss-vs-epsilon}
\end{figure}

\begin{figure}[h]
    \centering
    \includegraphics[width=0.99\linewidth]{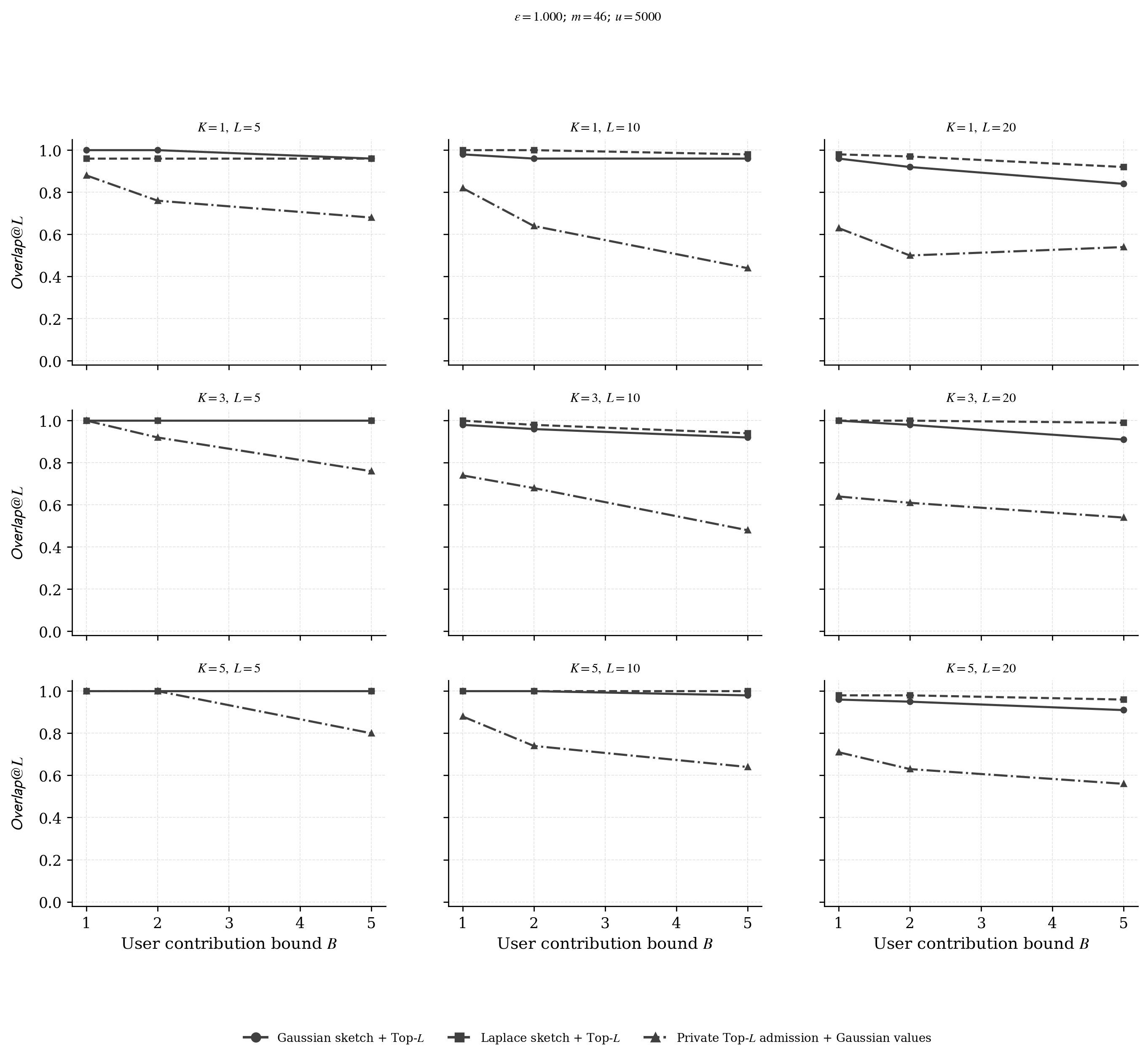}
    \caption{
    Amazon All Beauty user-level ablation of $\mathsf{Overlap}@L$ as the user contribution bound $B$ varies, with $\varepsilon=1.0$, $m=46$, and $u=5000$ selected users. Panels vary $K$ and $L$; curves compare the release mechanisms.}
    \label{fig:amazon-user-overlap-vs-B}
\end{figure}

\begin{figure}[h]
    \centering
    \includegraphics[width=0.99\linewidth]{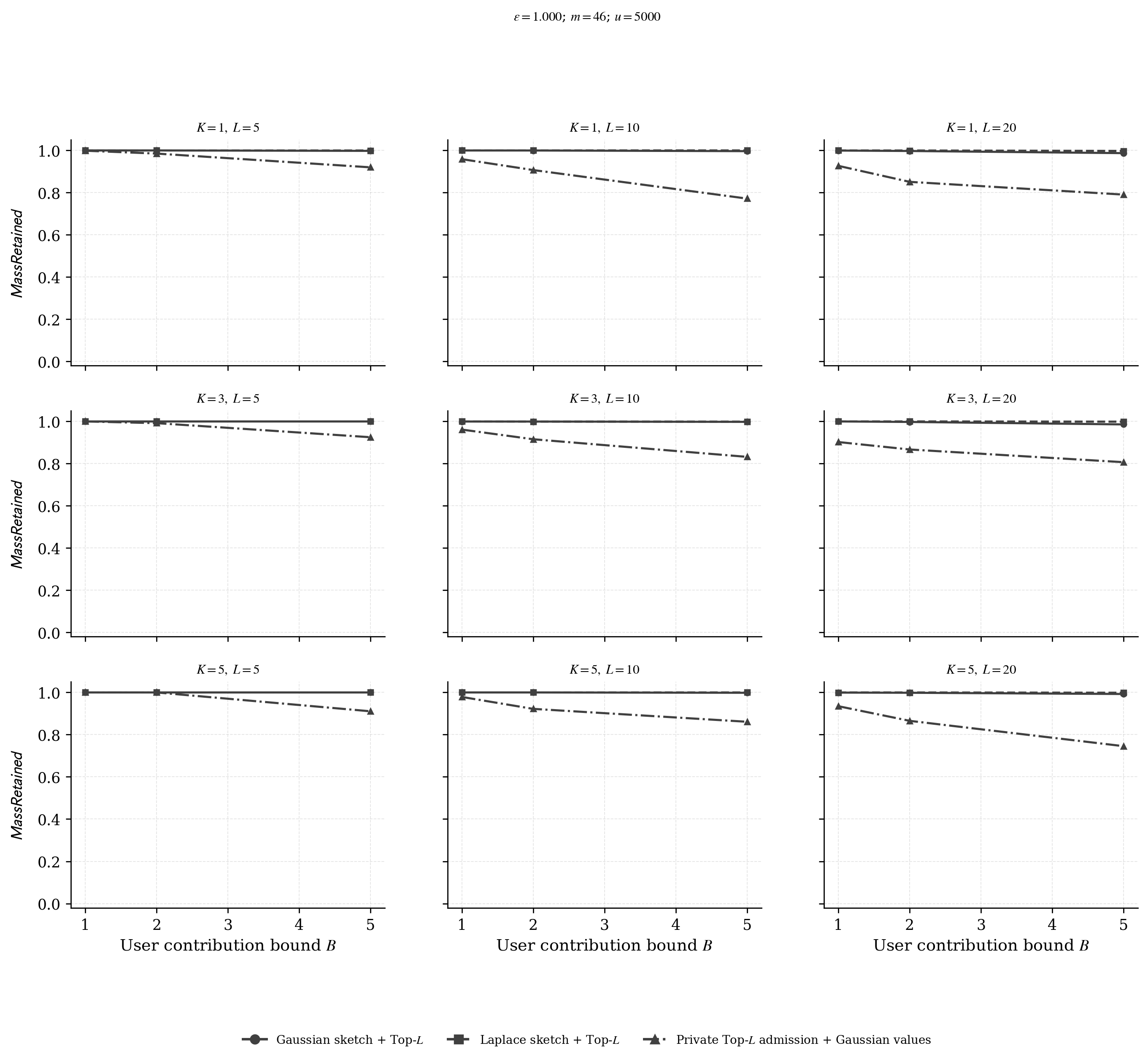}
    \caption{
    Amazon All Beauty user-level ablation of the preserved non-private mass fraction $\mathsf{MassRetained}$ as the user contribution bound $B$ varies, with $\varepsilon=1.0$, $m=46$, and $u=5000$ selected users. Panels vary $K$ and $L$; curves compare the release mechanisms.}
    \label{fig:amazon-user-mass-preserved-vs-B}
\end{figure}

\begin{figure}[h]
    \centering
    \includegraphics[width=0.99\linewidth]{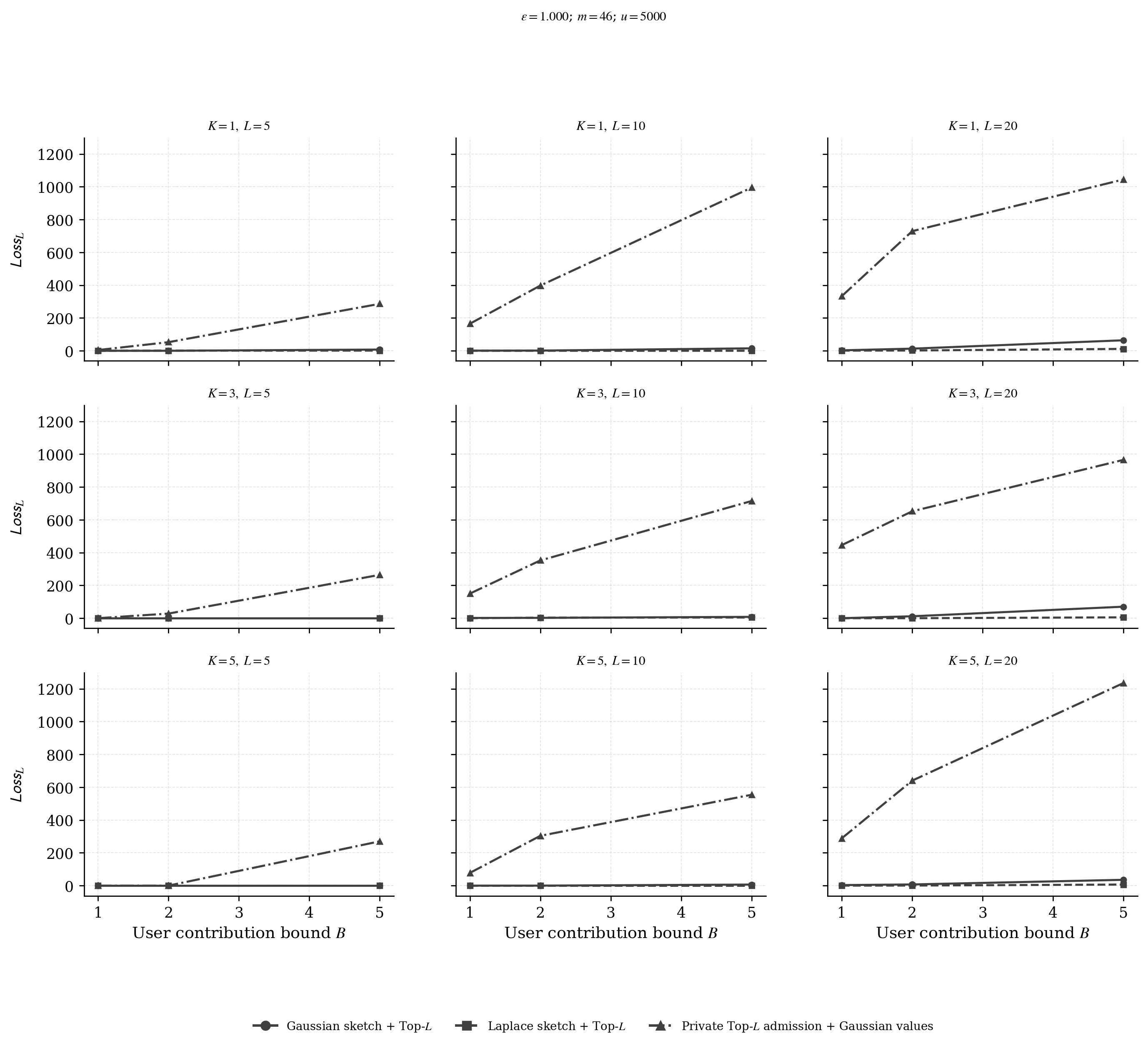}
    \caption{Amazon All Beauty user-level ablation of support-mass loss $\mathsf{Loss}_L$ as the user contribution bound $B$ varies, with $\varepsilon=1.0$, $m=46$, and $u=5000$ selected users. Panels vary $K$ and $L$; curves compare the release mechanisms. Lower values indicate less non-private semantic mass lost by the released atom set.}
    \label{fig:amazon-user-loss-vs-B}
\end{figure}

\begin{figure}[h]
    \centering
    \includegraphics[width=0.99\linewidth]{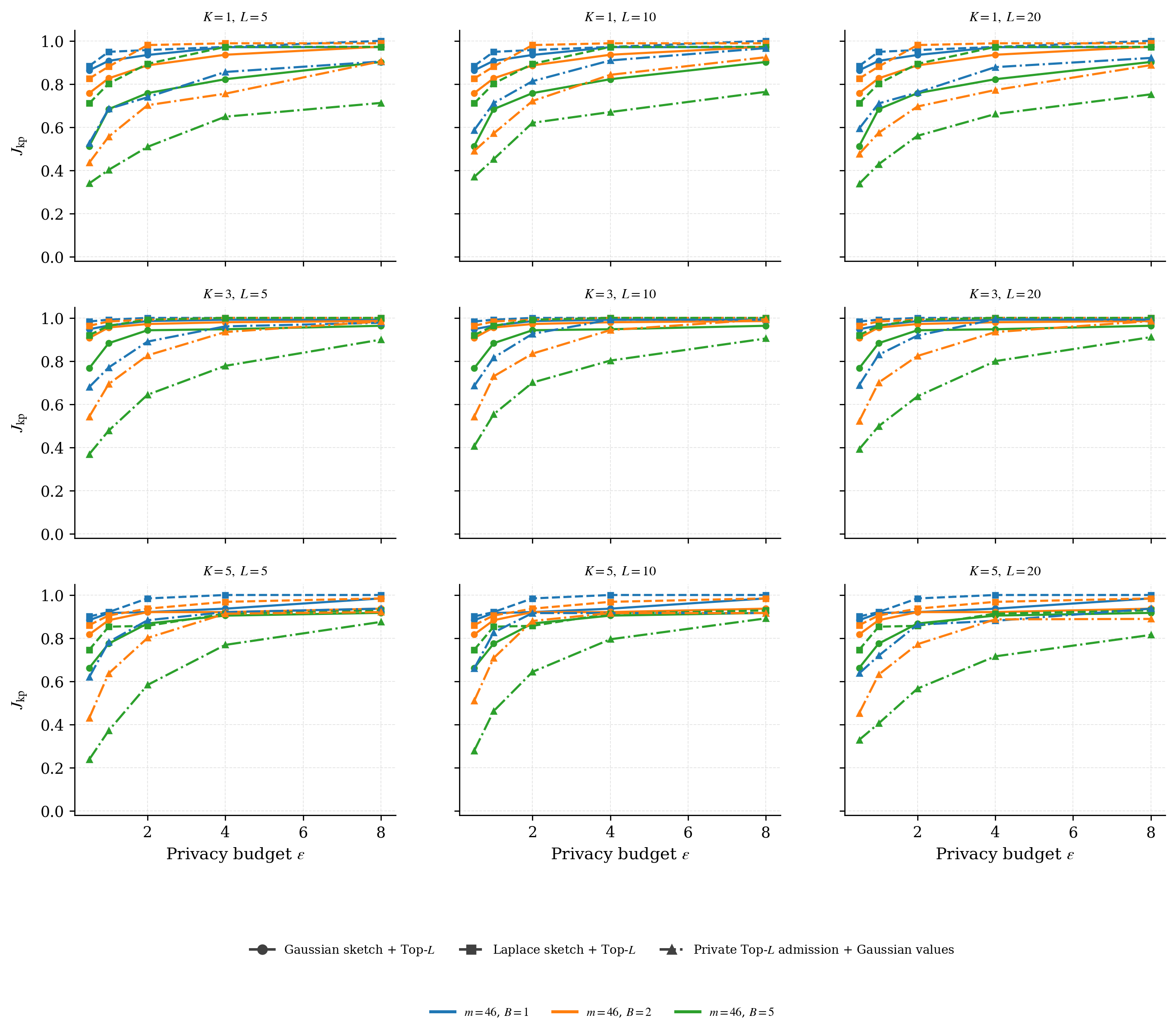}
    \caption{
    Amazon All Beauty user-level keyphrase Jaccard similarity $J_{\mathrm{kp}}$ between the DP-plan summary and the non-private-plan reference summary as the privacy budget $\varepsilon$ varies. Panels vary $K$ and $L$; curves compare the release mechanisms and user-level configurations.
    }
    \label{fig:amazon-user-keyphrase-jaccard-vs-epsilon}
\end{figure}

\begin{figure}[h]
    \centering
    \includegraphics[width=0.99\linewidth]{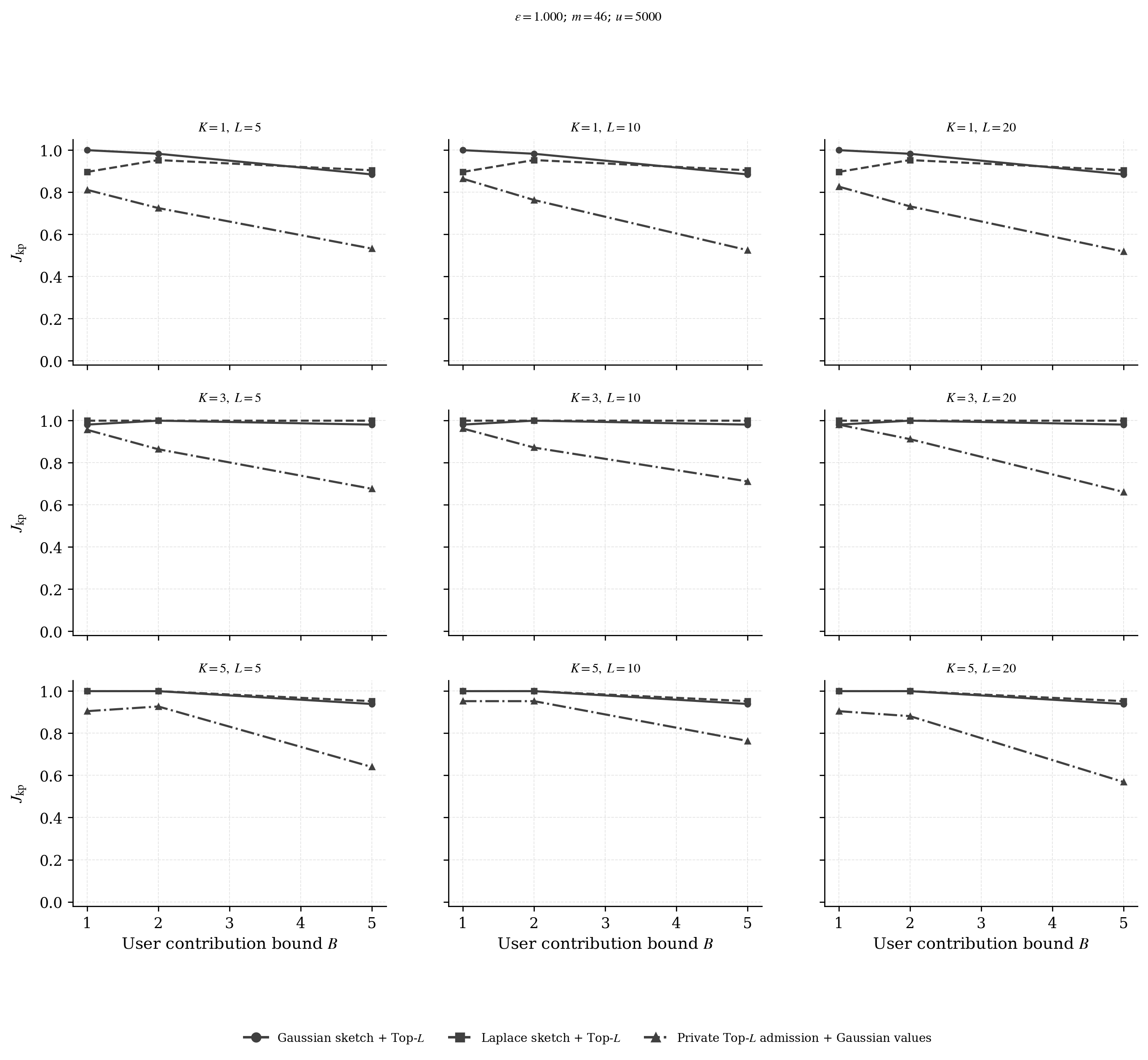}
    \caption{
    Amazon All Beauty user-level keyphrase Jaccard similarity $J_{\mathrm{kp}}$ between the DP-plan summary and the non-private-plan reference summary as the user contribution bound $B$ varies, with fixed $\varepsilon$ and $m$ as specified in the figure. Panels vary $K$ and $L$; curves compare the release mechanisms.
    }
    \label{fig:amazon-user-keyphrase-jaccard-vs-B}
\end{figure}

\begin{figure}[h]
    \centering
    \includegraphics[width=0.99\linewidth]{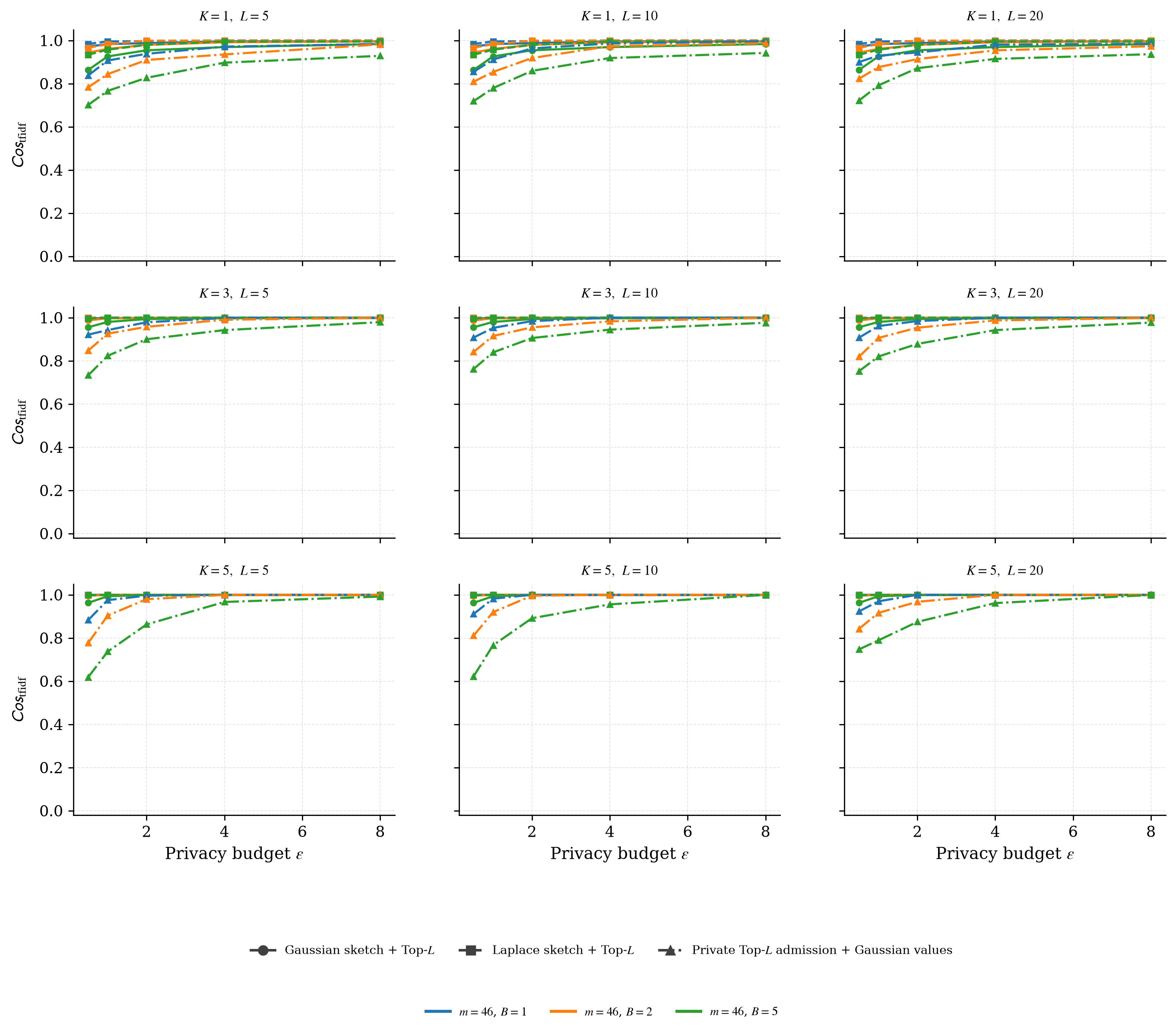}
    \caption{Amazon All Beauty user-level TF--IDF cosine similarity $\mathsf{Cos}_{\mathrm{tfidf}}$ between the DP-plan summary and the non-private-plan reference summary as the privacy budget $\varepsilon$ varies. Panels vary $K$ and $L$; curves compare the release mechanisms and user-level configurations.}
    \label{fig:amazon-user-tfidf-cosine-vs-epsilon}
\end{figure}

\begin{figure}[h]
    \centering
    \includegraphics[width=0.99\linewidth]{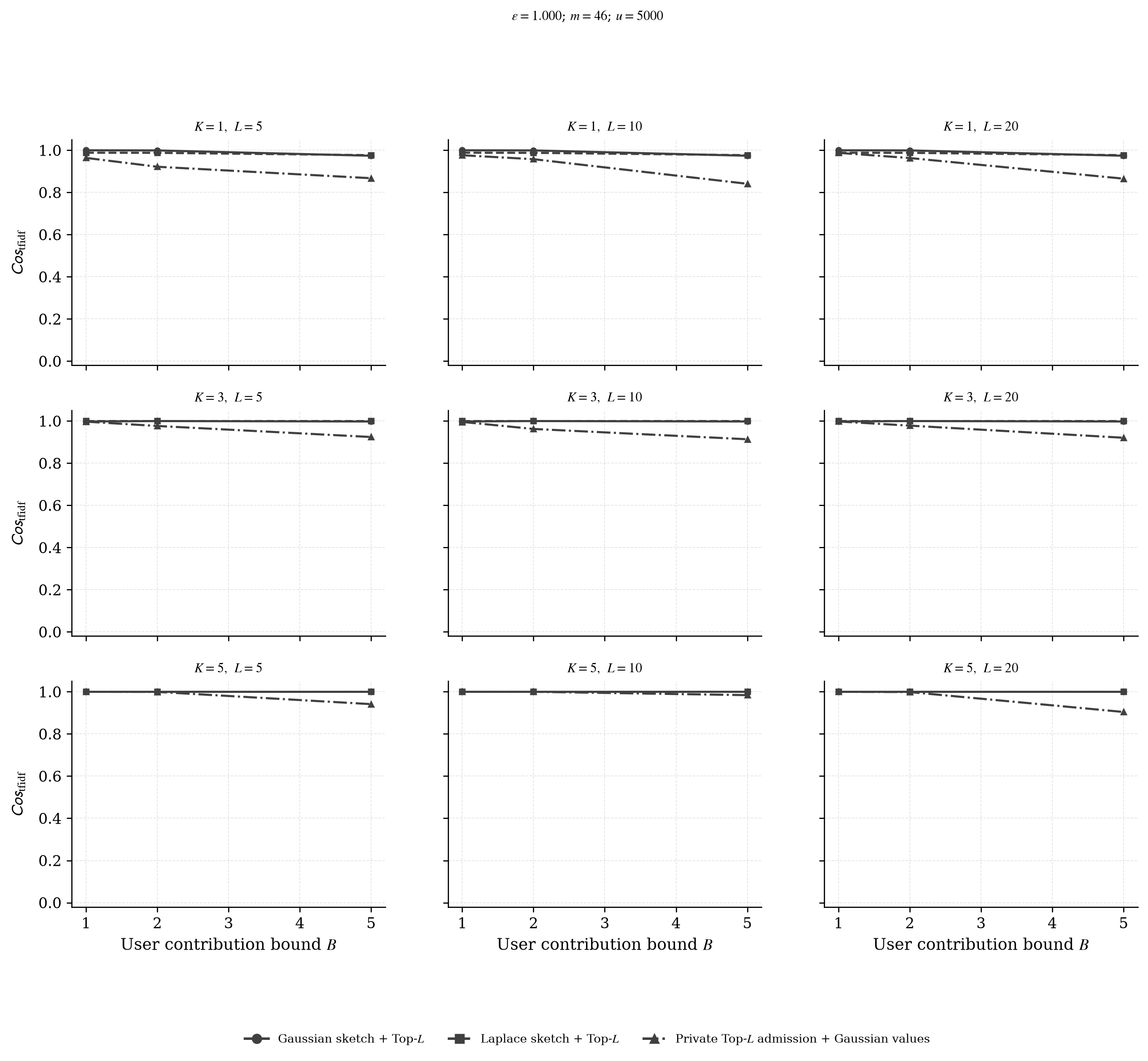}
    \caption{Amazon All Beauty user-level TF--IDF cosine similarity $\mathsf{Cos}_{\mathrm{tfidf}}$ between the DP-plan summary and the non-private-plan reference summary as the user contribution bound $B$ varies, with fixed $\varepsilon$ and $m$ as specified in the figure. Panels vary $K$ and $L$; curves compare the release mechanisms.}
    \label{fig:amazon-user-tfidf-cosine-vs-B}
\end{figure}

\begin{figure}[h]
    \centering
    \includegraphics[width=0.99\linewidth]{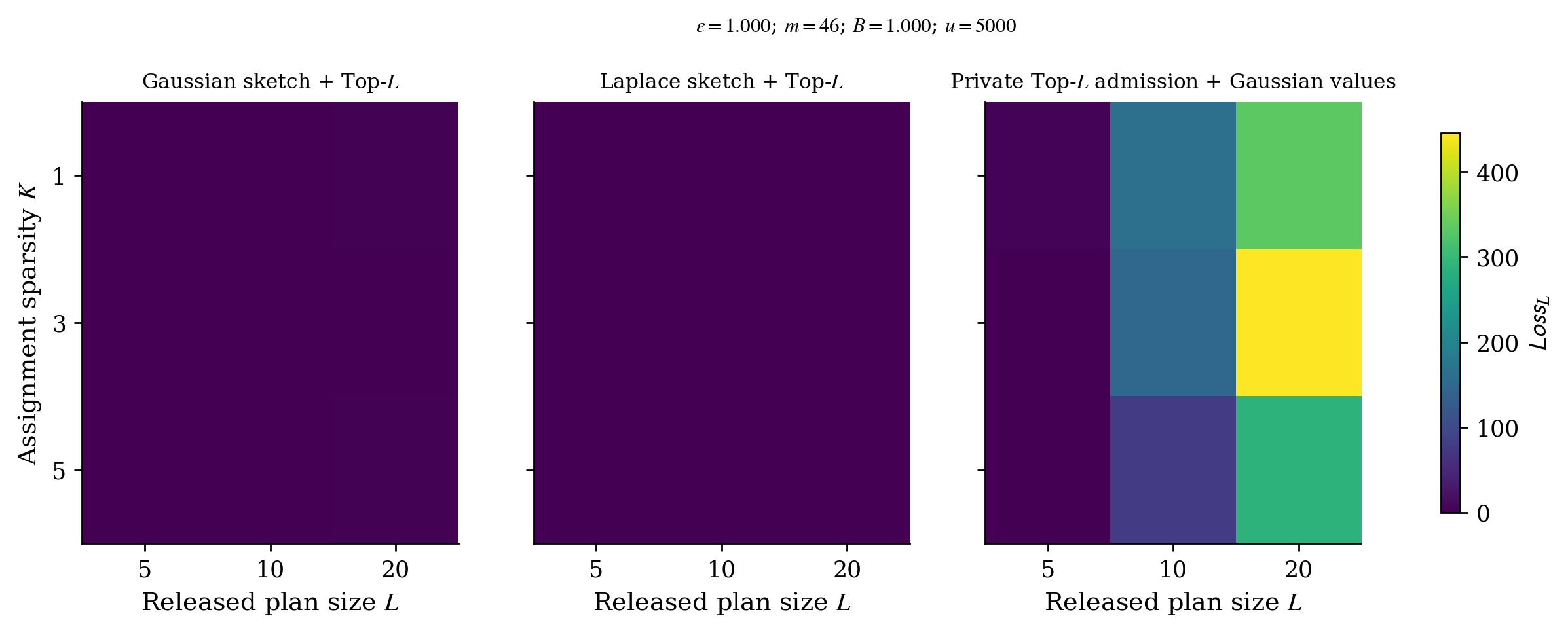}
    \caption{
    Amazon All Beauty user-level heatmap of support-mass loss $\mathsf{Loss}_L$ over assignment sparsity $K$ and released plan size $L$, with fixed $\varepsilon$, atom dictionary size $m$, and user contribution bound $B$ as specified in the figure. Each panel corresponds to one release mechanism. Lower values indicate less non-private semantic mass lost by the released atom set.
    }
    \label{fig:amazon-user-loss-heatmap}
\end{figure}

\begin{figure}[h]
    \centering
    \begin{minipage}{0.49\linewidth}
        \centering
        \includegraphics[width=\linewidth]{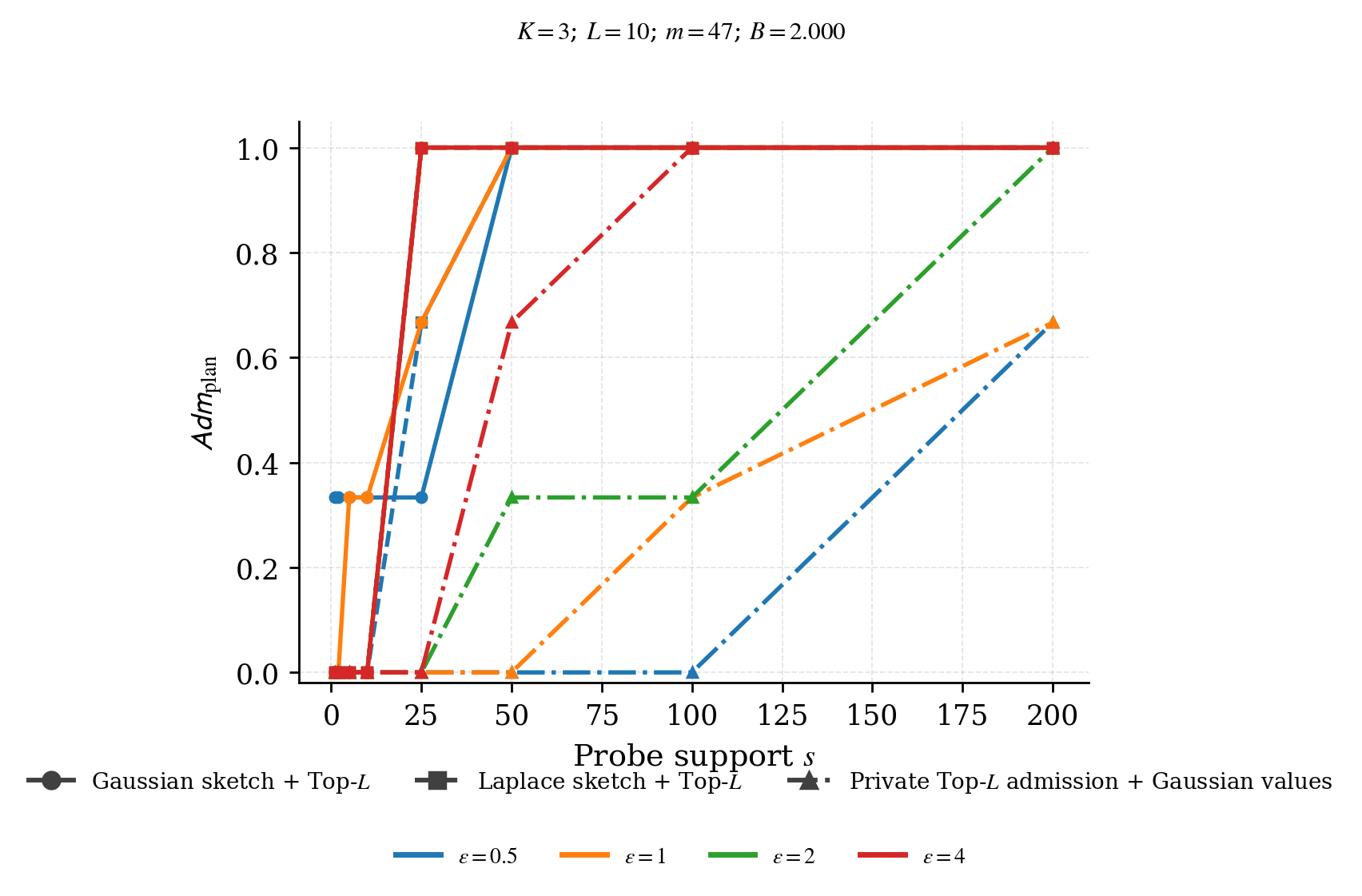}
        \vspace{2pt}
        \textbf{(a)} Plan admission.
    \end{minipage}
    \hfill
    \begin{minipage}{0.49\linewidth}
        \centering
        \includegraphics[width=\linewidth]{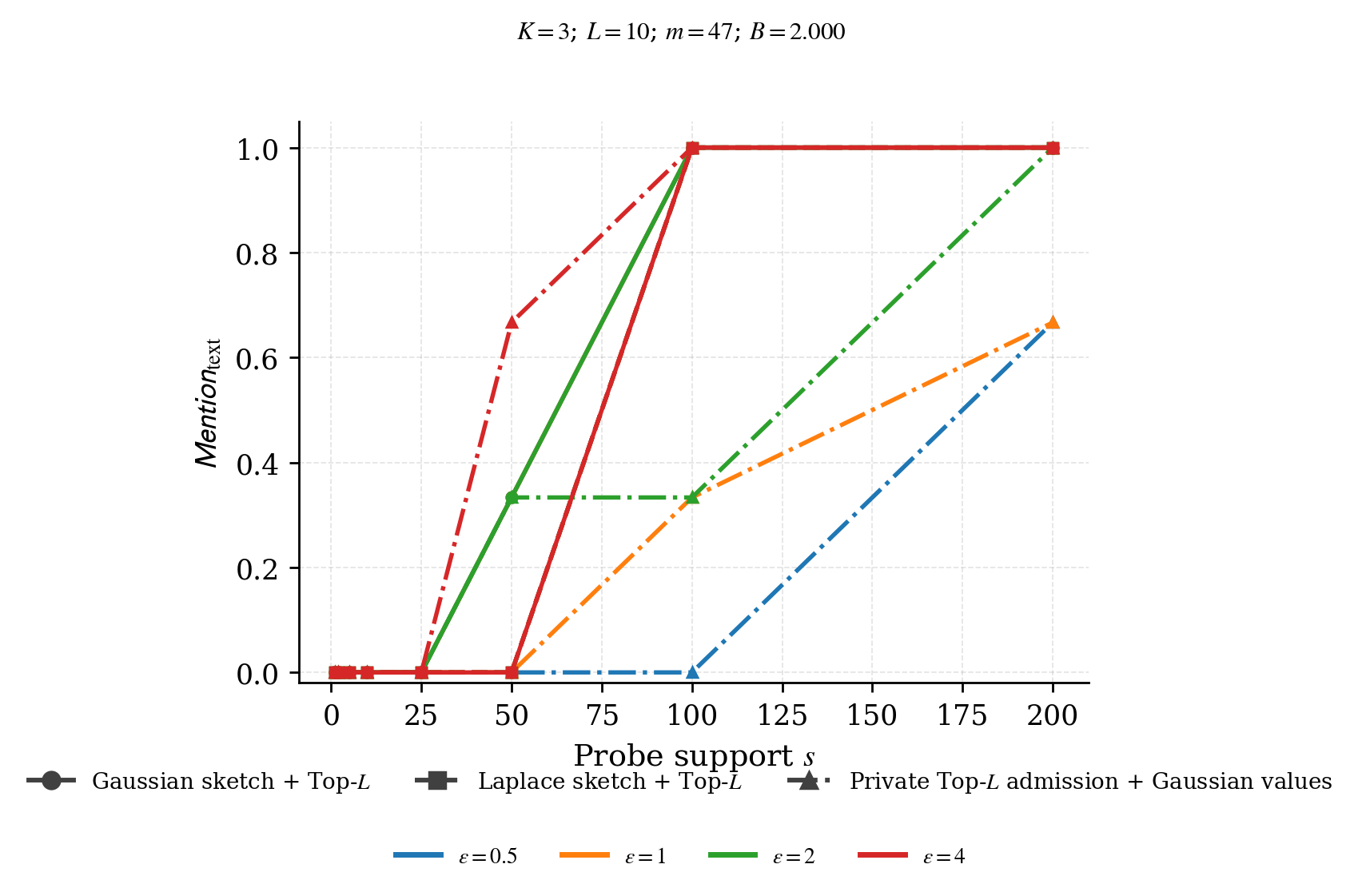}
        \vspace{2pt}
        \textbf{(b)} Summary mention.
    \end{minipage}
    \caption{
    Amazon All Beauty user-level controlled probe-atom experiment as the injected probe support $s$ varies, with $K=3$, $L=10$, $m=47$ public atoms including the probe atom, and user contribution bound $B=2$. Panel~(a) reports the plan-admission rate $\mathsf{Adm}_{\mathrm{plan}}$, which measures whether the differentially private release admits the public probe atom into the released semantic plan. Panel~(b) reports the summary mention rate $\mathsf{Mention}_{\mathrm{text}}$, which measures whether the public probe string is detected in the decoded summary. Curves compare privacy budgets and release mechanisms under the configured Amazon user-level probe-sweep setting.
    }
    \label{fig:amazon-user-probe-admission-mention}
\end{figure}

\clearpage

\begin{table*}[t]
\centering
\caption{Release-conditioned OpenAI evaluation of selected generated summaries on Amazon All Beauty reviews. Each summary is evaluated only with respect to the object released to its decoder. Scores are assigned on a 1--5 scale and report verbalization quality. The judge privacy-safety column evaluates only the generated text and is not a formal differential-privacy guarantee. Grouped values are reported as mean $\pm$ sample standard deviation across
the summaries in each row.}
\label{tab:openai-baseline-comparison-amazon-all-beauty}
\small
\resizebox{0.99\textwidth}{!}{%
\begin{tabular}{lllccccccccc}
\toprule
Method & Release object & Configuration & $\varepsilon$ & Protected units & Judged summaries  & Coverage & Specificity & Insightfulness & Faithfulness & Text safety & Clarity \\
\midrule
DP-SPIN private plan & DP semantic plan & record; Gaussian sketch + top-$L$; m=46; K=3; L=10 & 1.0 & 2000 & 3 & 5.00 $\pm$ 0.00 & 5.00 $\pm$ 0.00 & 4.00 $\pm$ 0.00 & 5.00 $\pm$ 0.00 & 5.00 $\pm$ 0.00 & 5.00 $\pm$ 0.00 \\
DP-SPIN private plan & DP semantic plan & user; Gaussian sketch + top-$L$; m=46; K=3; L=10; B=1 & 1.0 & 1000 & 3 & 5.00 $\pm$ 0.00 & 4.67 $\pm$ 0.58 & 4.00 $\pm$ 0.00 & 5.00 $\pm$ 0.00 & 5.00 $\pm$ 0.00 & 5.00 $\pm$ 0.00 \\
DP keyword histogram & DP keyword histogram & fixed public keyword vocabulary; Laplace histogram; released top-10 keywords & 1.0 & 2000 & 3 & 5.00 $\pm$ 0.00 & 4.00 $\pm$ 0.00 & 3.67 $\pm$ 0.58 & 5.00 $\pm$ 0.00 & 5.00 $\pm$ 0.00 & 4.67 $\pm$ 0.58 \\
DP category histogram & DP category histogram & fixed rating\_bin universe; Laplace histogram; released top-3 categories & 1.0 & 2000 & 3 & 5.00 $\pm$ 0.00 & 4.00 $\pm$ 0.00 & 3.33 $\pm$ 0.58 & 5.00 $\pm$ 0.00 & 5.00 $\pm$ 0.00 & 4.33 $\pm$ 0.58 \\
URANIA-style public keywords & DP public keywords & DP clustering; DP public-keyword histograms; clusters=10; keywords=10 & 1.0 & 2000 & 3 & 4.00 $\pm$ 0.00 & 3.67 $\pm$ 0.58 & 4.00 $\pm$ 0.00 & 4.00 $\pm$ 1.00 & 5.00 $\pm$ 0.00 & 4.33 $\pm$ 0.58 \\
\bottomrule
\end{tabular}%
}
\end{table*}

\begin{table*}[t]
\centering
\caption{Release-conditioned OpenAI evaluation of selected \texttt{DP-SPIN} summaries on Amazon All Beauty reviews. The decoder receives only the released differentially private semantic plan and public decoding instructions. Scores are assigned on the same 1--5 rubric used for the baseline-comparison table.}
\label{tab:openai-dpspin-amazon-all-beauty}
\small
\resizebox{0.99\textwidth}{!}{%
\begin{tabular}{lllccccccccc}
\toprule
Unit & Mechanism & Configuration & $\varepsilon$ & Protected units & Judged summaries  & Coverage & Specificity & Insightfulness & Faithfulness & Text safety & Clarity \\
\midrule
record & Gaussian sketch + top-$L$ & record; Gaussian sketch + top-$L$; m=46; K=3; L=10 & 1.0 & 2000 & 3 & 5.00 $\pm$ 0.00 & 5.00 $\pm$ 0.00 & 4.00 $\pm$ 0.00 & 5.00 $\pm$ 0.00 & 5.00 $\pm$ 0.00 & 5.00 $\pm$ 0.00 \\
record & Gaussian sketch + top-$L$ & record; Gaussian sketch + top-$L$; m=46; K=3; L=10 & 4.0 & 2000 & 3 & 5.00 $\pm$ 0.00 & 5.00 $\pm$ 0.00 & 4.00 $\pm$ 0.00 & 5.00 $\pm$ 0.00 & 5.00 $\pm$ 0.00 & 5.00 $\pm$ 0.00 \\
record & Private top-$L$ admission + Gaussian values & record; Private top-$L$ admission + Gaussian values; m=46; K=3; L=10 & 1.0 & 2000 & 3 & 5.00 $\pm$ 0.00 & 4.67 $\pm$ 0.58 & 4.00 $\pm$ 0.00 & 5.00 $\pm$ 0.00 & 5.00 $\pm$ 0.00 & 5.00 $\pm$ 0.00 \\
record & Private top-$L$ admission + Gaussian values & record; Private top-$L$ admission + Gaussian values; m=46; K=3; L=10 & 4.0 & 2000 & 3 & 5.00 $\pm$ 0.00 & 5.00 $\pm$ 0.00 & 4.00 $\pm$ 0.00 & 5.00 $\pm$ 0.00 & 5.00 $\pm$ 0.00 & 5.00 $\pm$ 0.00 \\
user & Gaussian sketch + top-$L$ & user; Gaussian sketch + top-$L$; m=46; K=3; L=10; B=1 & 1.0 & 1000 & 3 & 5.00 $\pm$ 0.00 & 4.67 $\pm$ 0.58 & 4.00 $\pm$ 0.00 & 5.00 $\pm$ 0.00 & 5.00 $\pm$ 0.00 & 5.00 $\pm$ 0.00 \\
user & Gaussian sketch + top-$L$ & user; Gaussian sketch + top-$L$; m=46; K=3; L=10; B=1 & 4.0 & 1000 & 3 & 5.00 $\pm$ 0.00 & 4.67 $\pm$ 0.58 & 4.00 $\pm$ 0.00 & 5.00 $\pm$ 0.00 & 5.00 $\pm$ 0.00 & 5.00 $\pm$ 0.00 \\
user & Private top-$L$ admission + Gaussian values & user; Private top-$L$ admission + Gaussian values; m=46; K=3; L=10; B=1 & 1.0 & 1000 & 3 & 5.00 $\pm$ 0.00 & 5.00 $\pm$ 0.00 & 4.00 $\pm$ 0.00 & 5.00 $\pm$ 0.00 & 5.00 $\pm$ 0.00 & 5.00 $\pm$ 0.00 \\
user & Private top-$L$ admission + Gaussian values & user; Private top-$L$ admission + Gaussian values; m=46; K=3; L=10; B=1 & 4.0 & 1000 & 3 & 5.00 $\pm$ 0.00 & 5.00 $\pm$ 0.00 & 4.00 $\pm$ 0.00 & 5.00 $\pm$ 0.00 & 5.00 $\pm$ 0.00 & 5.00 $\pm$ 0.00 \\
\bottomrule
\end{tabular}%
}
\end{table*}

\begin{table*}[t]
\centering
\caption{Similarity between OpenAI summaries generated from \texttt{DP-SPIN} private plans and summaries generated from the corresponding non-private top-$L$ plans on Amazon All Beauty reviews. Metrics compare generated texts and do not evaluate the underlying semantic sketches. Values are reported as mean $\pm$ sample standard deviation across runs.}
\label{tab:openai-private-non-private-amazon-all-beauty-reviews}
\small
\resizebox{0.99\textwidth}{!}{%
\begin{tabular}{lllccccccc}
\toprule
Unit & Mechanism & Configuration & $\varepsilon$ & Protected units & Runs & $J_{\mathrm{tok}}$ & $J_2$ & $J_{\mathrm{kp}}$ & $\mathsf{Cos}_{\mathrm{tfidf}}$ \\
\midrule
record & Gaussian sketch + top-$L$ & record; Gaussian sketch + top-$L$; m=46; K=3; L=10 & 1.0 & 2000 & 3 & 0.37 $\pm$ 0.01 & 0.22 $\pm$ 0.05 & 0.22 $\pm$ 0.03 & 0.62 $\pm$ 0.08 \\
record & Gaussian sketch + top-$L$ & record; Gaussian sketch + top-$L$; m=46; K=3; L=10 & 4.0 & 2000 & 3 & 0.45 $\pm$ 0.06 & 0.24 $\pm$ 0.03 & 0.25 $\pm$ 0.02 & 0.61 $\pm$ 0.04 \\
record & Private top-$L$ admission + Gaussian values & record; Private top-$L$ admission + Gaussian values; m=46; K=3; L=10 & 1.0 & 2000 & 3 & 0.34 $\pm$ 0.13 & 0.13 $\pm$ 0.08 & 0.14 $\pm$ 0.08 & 0.44 $\pm$ 0.10 \\
record & Private top-$L$ admission + Gaussian values & record; Private top-$L$ admission + Gaussian values; m=46; K=3; L=10 & 4.0 & 2000 & 3 & 0.44 $\pm$ 0.06 & 0.17 $\pm$ 0.04 & 0.20 $\pm$ 0.04 & 0.55 $\pm$ 0.10 \\
user & Gaussian sketch + top-$L$ & user; Gaussian sketch + top-$L$; m=46; K=3; L=10; B=1 & 1.0 & 1000 & 3 & 0.48 $\pm$ 0.08 & 0.18 $\pm$ 0.03 & 0.20 $\pm$ 0.02 & 0.49 $\pm$ 0.06 \\
user & Gaussian sketch + top-$L$ & user; Gaussian sketch + top-$L$; m=46; K=3; L=10; B=1 & 4.0 & 1000 & 3 & 0.45 $\pm$ 0.06 & 0.27 $\pm$ 0.06 & 0.27 $\pm$ 0.07 & 0.61 $\pm$ 0.09 \\
user & Private top-$L$ admission + Gaussian values & user; Private top-$L$ admission + Gaussian values; m=46; K=3; L=10; B=1 & 1.0 & 1000 & 3 & 0.33 $\pm$ 0.15 & 0.09 $\pm$ 0.04 & 0.11 $\pm$ 0.04 & 0.38 $\pm$ 0.11 \\
user & Private top-$L$ admission + Gaussian values & user; Private top-$L$ admission + Gaussian values; m=46; K=3; L=10; B=1 & 4.0 & 1000 & 3 & 0.41 $\pm$ 0.02 & 0.20 $\pm$ 0.03 & 0.22 $\pm$ 0.02 & 0.53 $\pm$ 0.02 \\
\bottomrule
\end{tabular}%
}
\end{table*}

\begin{table*}[t]
\centering
\caption{Examples of  aggregate summaries based on OpenAI-generated outputs on Amazon All Beauty reviews. Each row reports one selected \texttt{DP-SPIN} or baseline release object. The DP-SPIN row is terminology-normalized to the current semantic-support vocabulary; the admitted themes and released numeric values are unchanged.}
\label{tab:openai-qualitative-amazon-all-beauty-reviews}
\scriptsize
\setlength{\tabcolsep}{3pt}
\renewcommand{\arraystretch}{1.04}
\resizebox{0.99\textwidth}{!}{%
\begin{tabular}{p{0.27\textwidth}p{0.68\textwidth}}
\toprule
Method and release configuration & Generated summary \\
\midrule
DP-SPIN private plan (record; Gaussian sketch + top-$L$; m=46; K=3; L=10; $\varepsilon=1$; $n=2000$) & The aggregate summary identifies high semantic support for easy application, sensitive skin, and waste money. Moderate semantic support is reported for product quality and long hair, while dry hair, natural hair, hair quality, dry skin, and skin feel have low support. - High-support themes: easy application, sensitive skin, waste money. - Moderate-support themes: product quality, long hair. - Low-support themes: dry hair, natural hair, hair quality, dry skin, skin feel. \\ \hline
\addlinespace[1pt]
DP keyword histogram (fixed public keyword vocabulary; vocab=200; Laplace histogram; released top-10 keywords; $\varepsilon=1$; $n=2000$) & The aggregate summary reveals various keyword counts related to hair, products, skin, quality, and price, indicating a diverse range of interests. The most prominent focus is on hair and product keywords, followed by skin and quality. Other notable mentions include aspects related to ease of use, smell, and texture, with financial considerations also being present. - Hair and product keywords are the most frequently mentioned. - Skin and quality keywords show significant interest. - Additional mentions include ease of use, smell, and texture. \\ \hline
\addlinespace[1pt]
DP category histogram (fixed rating\_bin universe; categories=3; Laplace histogram; released top-3 categories; $\varepsilon=1$; $n=2000$) & The aggregate summary indicates a distribution of ratings within the public category field, with a notable support of high ratings, followed by low and medium ratings. - High ratings are the most common. - Low ratings are present but less frequent. - Medium ratings are the least common among the three categories. \\ \hline
\addlinespace[1pt]
URANIA-style public keywords (DP clustering; DP public-keyword histograms; clusters=10; keywords=10; $\varepsilon=1$; $n=2000$) & The preserved cluster entries highlight various aspects of product offerings and customer experiences, particularly in the realm of hair care and natural products. There is a notable emphasis on synthetic hair care products designed for sensitive skin, as well as natural product scents that consider factors like pricing and waste. Additionally, customer feedback indicates a preference for fast shipping and high-quality products that effectively address irritation and chemical concerns. - Synthetic hair care products are tailored for sensitive skin, emphasizing softness and short styles. - Natural product scents are linked to considerations of price and environmental impact. - Customers value quick delivery of high-quality products that minimize waste and address irritation. \\
\addlinespace[1pt]
\bottomrule
\end{tabular}%
}
\end{table*}

\clearpage
\appsection{Additional Experimental Results: Yelp Reviews}
\label{app:additional-results-yelp}

\vspace{-7pt}

In this appendix, we report additional experimental results on the \texttt{Yelp restaurant} reviews.

\vspace{-7pt}

\subsection{Record-level results}

\begin{figure}[h]
    \centering
    \includegraphics[width=0.49\linewidth]{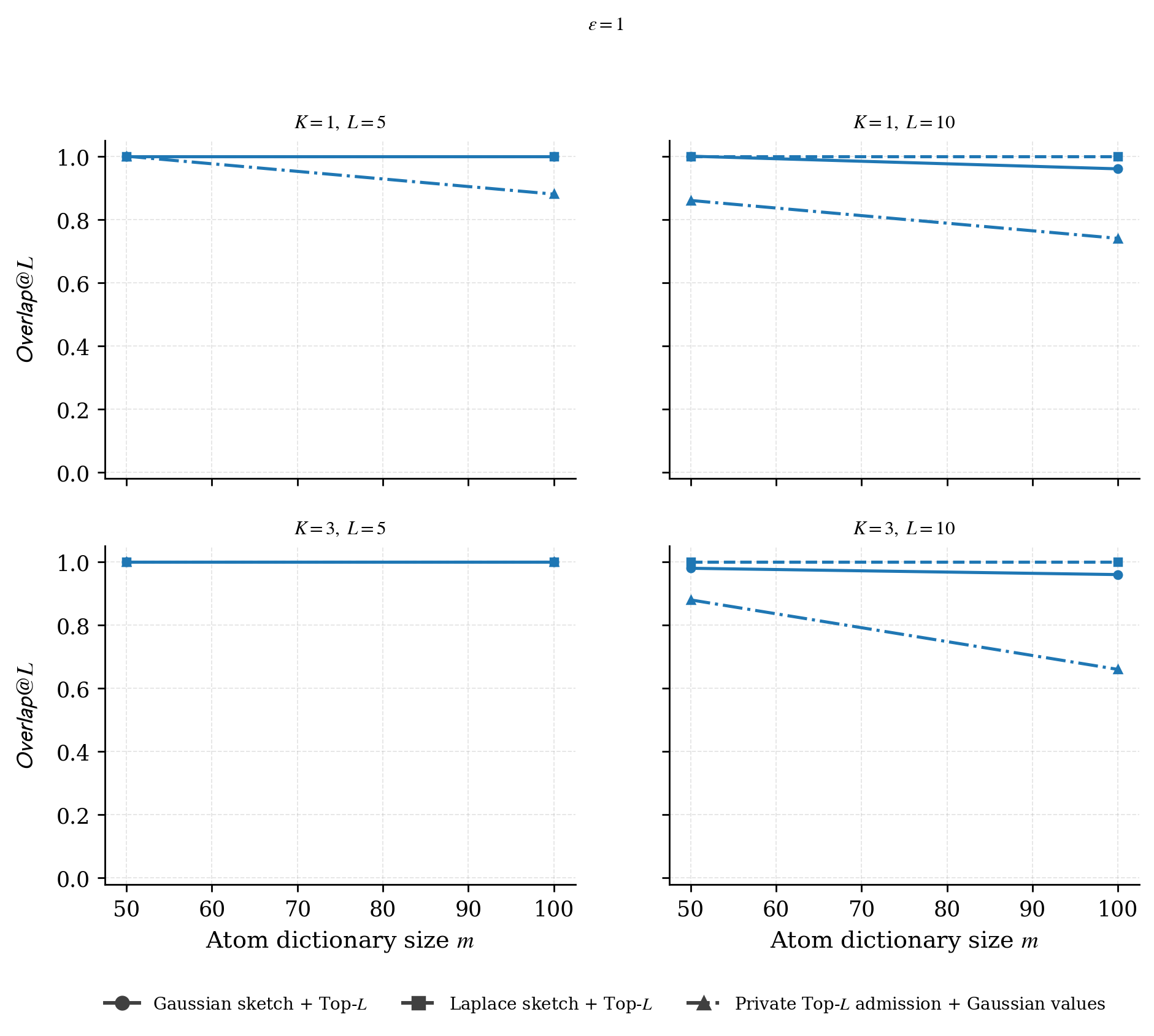}
    \vspace{-5pt}
    \caption{
    Yelp record-level ablation of $\mathsf{Overlap}@L$ as the atom dictionary size $m$ varies at $\varepsilon=1.0$. Panels vary the assignment sparsity $K$ and released plan size $L$; curves compare the release mechanisms.
    }
    \vspace{-10pt}
    \label{fig:yelp-record-overlap-vs-m}
\end{figure}

\begin{figure}[h]
    \centering
    \includegraphics[width=0.49\linewidth]{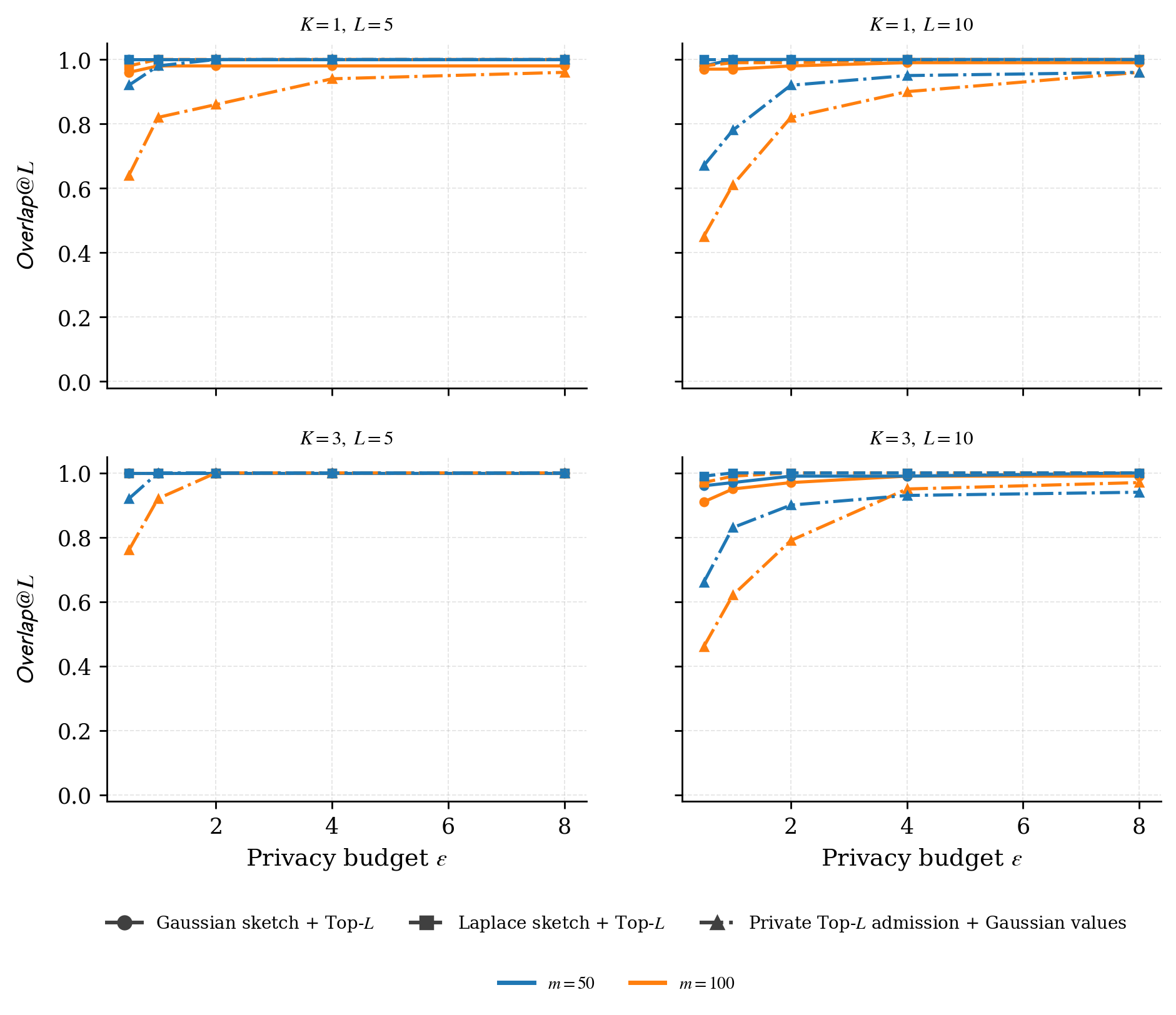}
    \vspace{-5pt}
    \caption{
    Yelp record-level ablation of $\mathsf{Overlap}@L$ as the privacy budget $\varepsilon$ varies. Panels vary the assignment sparsity $K$ and released plan size $L$; colors indicate the atom dictionary size $m$, and line styles indicate the release mechanism.
    }
    \label{fig:yelp-record-overlap-vs-epsilon}
\end{figure}

\begin{figure}[h]
    \centering
    \includegraphics[width=0.63\linewidth]{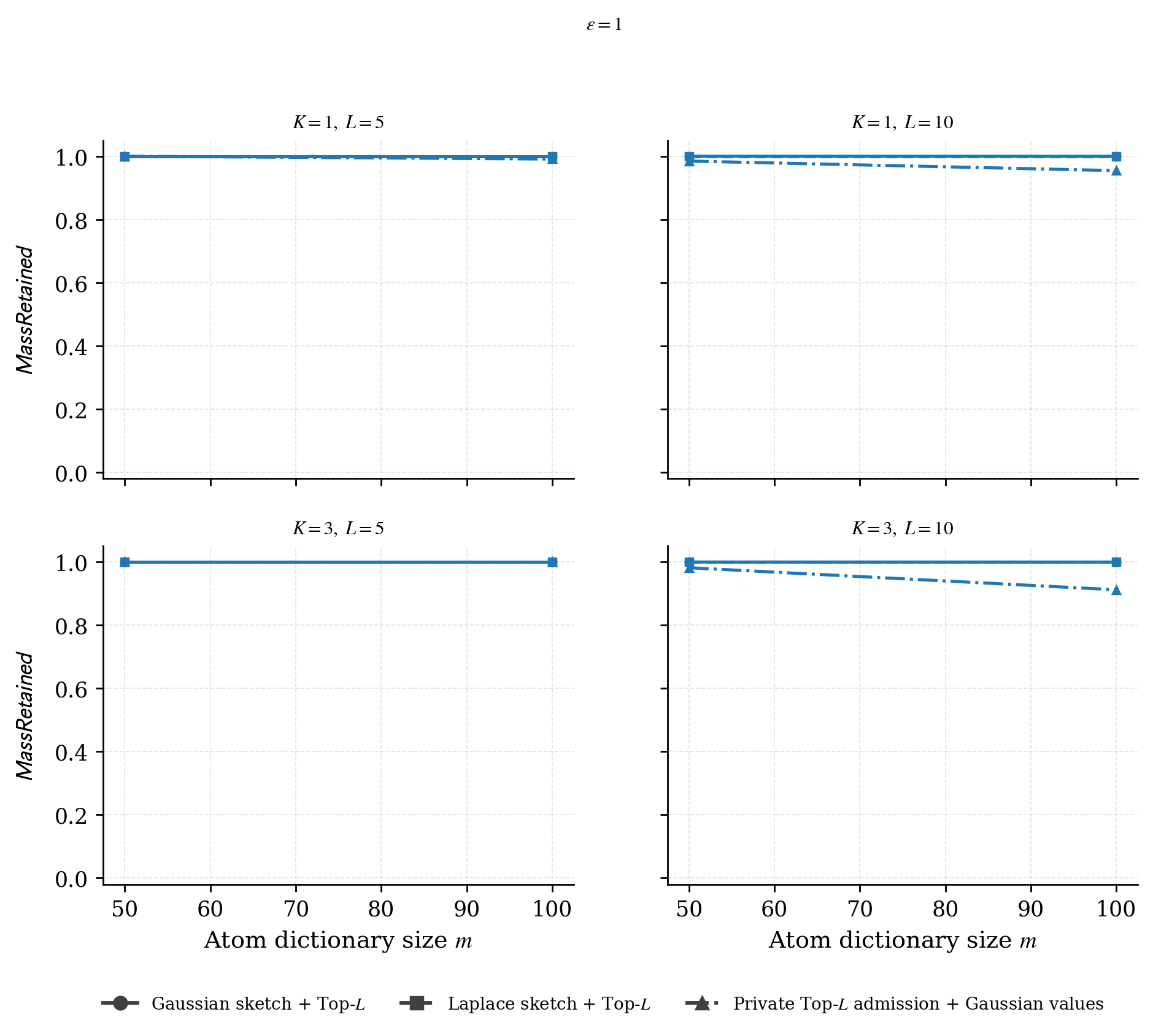}
    \caption{
    Yelp record-level ablation of the preserved non-private mass fraction $\mathsf{MassRetained}$ as the atom dictionary size $m$ varies at $\varepsilon=1.0$. Panels vary $K$ and $L$; curves compare the release mechanisms.
    }
    \label{fig:yelp-record-mass-preserved-vs-m}
\end{figure}

\begin{figure}[h]
    \centering
    \includegraphics[width=0.63\linewidth]{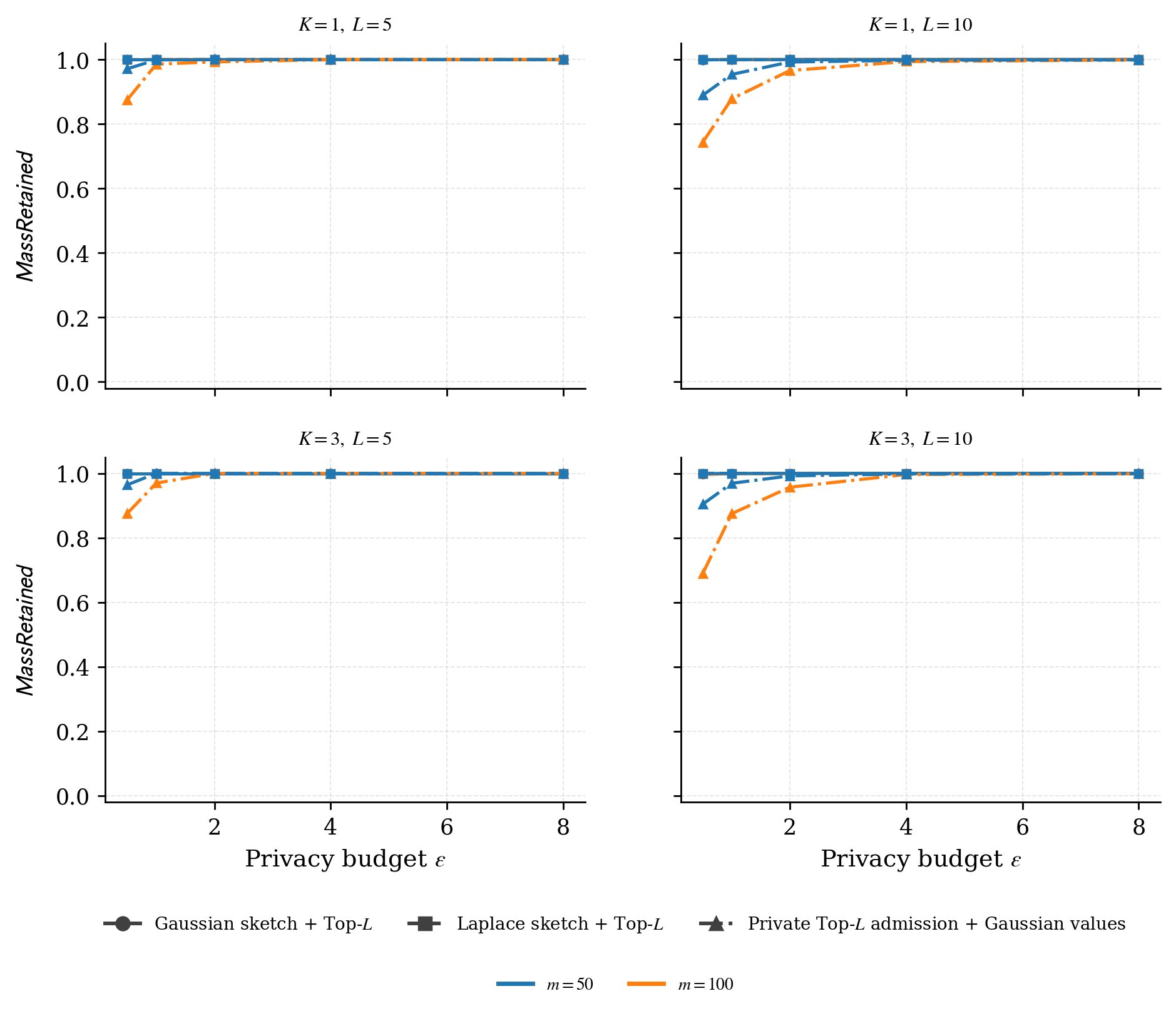}
    \caption{
    Yelp record-level ablation of the preserved non-private mass fraction $\mathsf{MassRetained}$ as the privacy budget $\varepsilon$ varies. Panels vary $K$ and $L$; colors indicate the atom dictionary size $m$, and line styles indicate the release mechanism.
    }
    \label{fig:yelp-record-mass-preserved-vs-epsilon}
\end{figure}

\begin{figure}[h]
    \centering
    \includegraphics[width=0.63\linewidth]{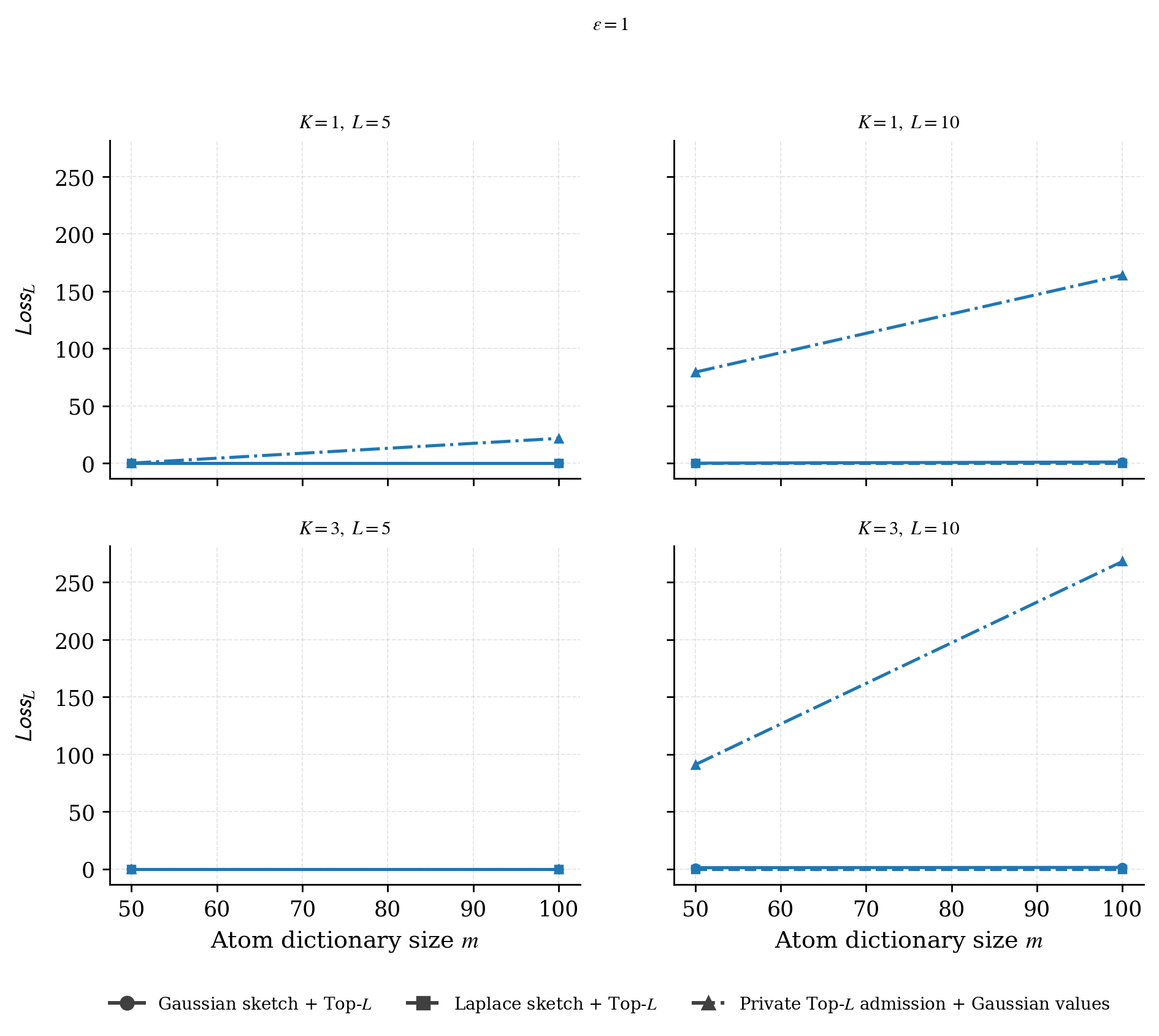}
    \caption{
    Yelp record-level ablation of support-mass loss $\mathsf{Loss}_L$ as the atom dictionary size $m$ varies at $\varepsilon=1.0$. Panels vary $K$ and $L$; curves compare the release mechanisms. Lower values indicate less non-private semantic mass lost by the released atom set.
    }
    \label{fig:yelp-record-loss-vs-m}
\end{figure}

\begin{figure}[h]
    \centering
    \includegraphics[width=0.63\linewidth]{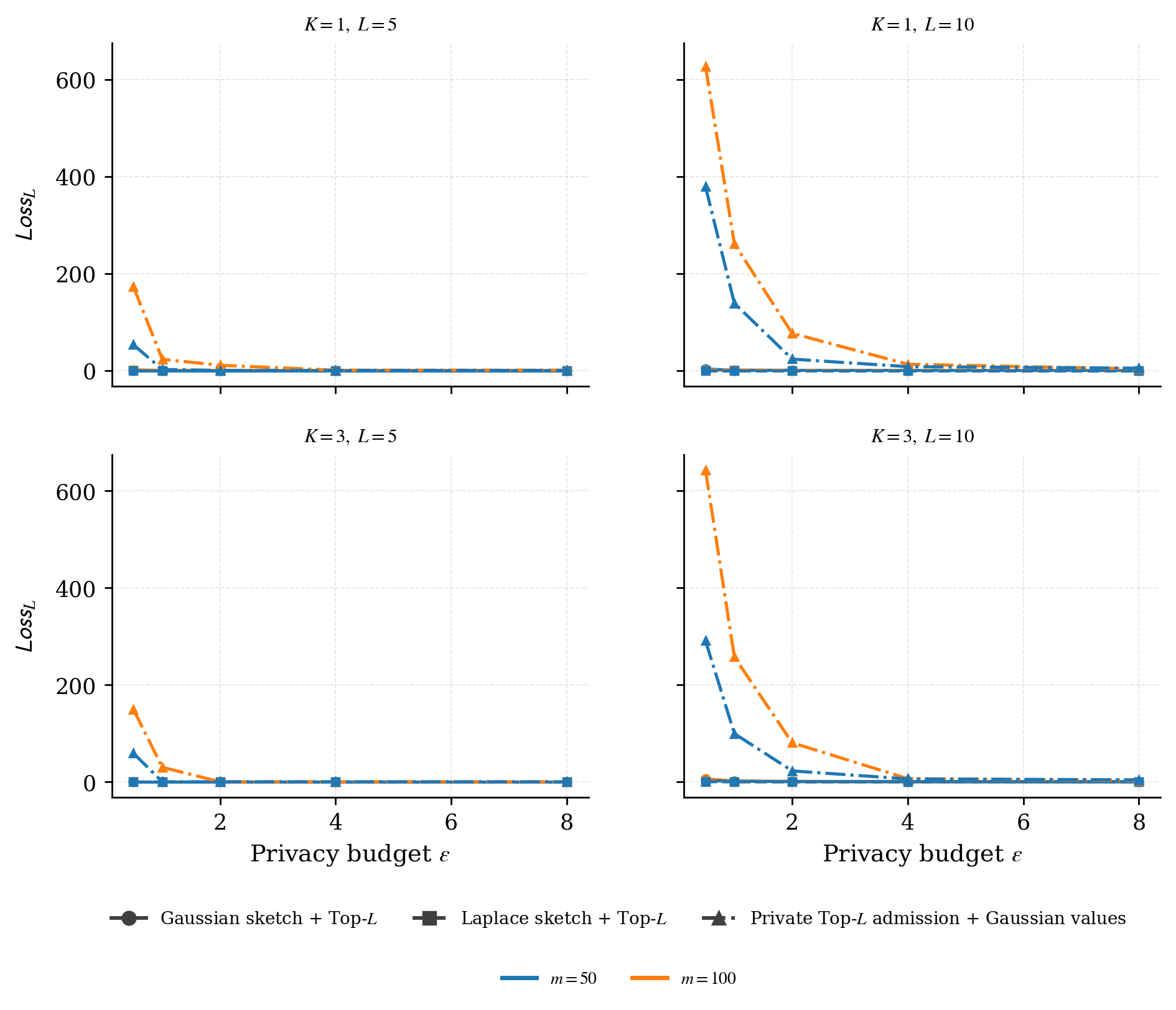}
    \caption{
    Yelp record-level ablation of support-mass loss $\mathsf{Loss}_L$ as the privacy budget $\varepsilon$ varies. Panels vary $K$ and $L$; colors indicate the atom dictionary size $m$, and line styles indicate the release
    mechanism. Lower values indicate less non-private semantic mass lost by the released atom set.
    }
    \label{fig:yelp-record-loss-vs-epsilon}
\end{figure}

\begin{figure}[h]
    \centering
    \includegraphics[width=0.63\linewidth]{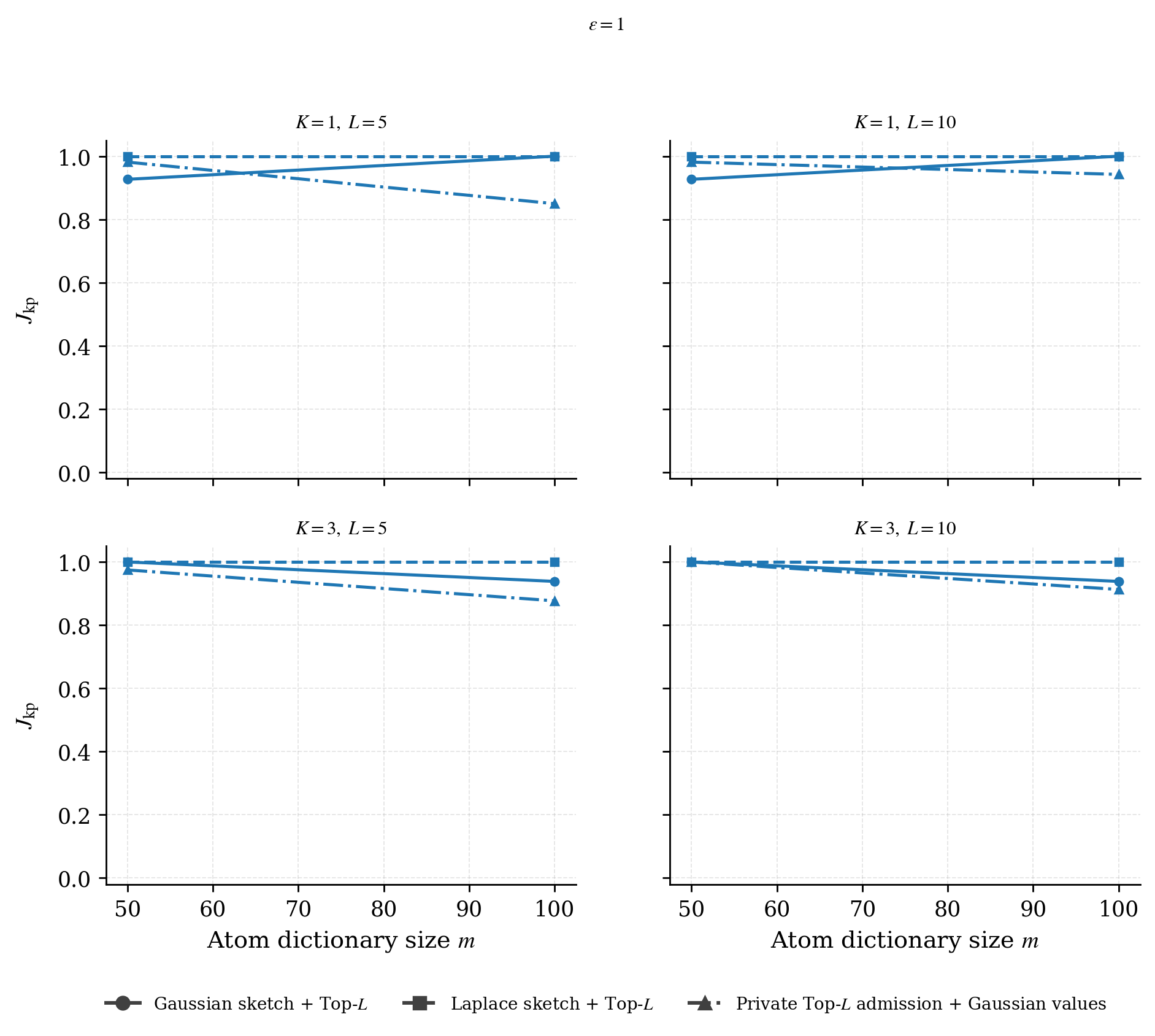}
    \caption{
    Yelp record-level keyphrase Jaccard similarity $J_{\mathrm{kp}}$ between the DP-plan summary and the non-private-plan reference summary as the atom dictionary size $m$ varies at $\varepsilon=1.0$. Panels vary $K$ and $L$; curves compare the release mechanisms.
    }
    \label{fig:yelp-record-keyphrase-jaccard-vs-m}
\end{figure}

\begin{figure}[h]
    \centering
    \includegraphics[width=0.63\linewidth]{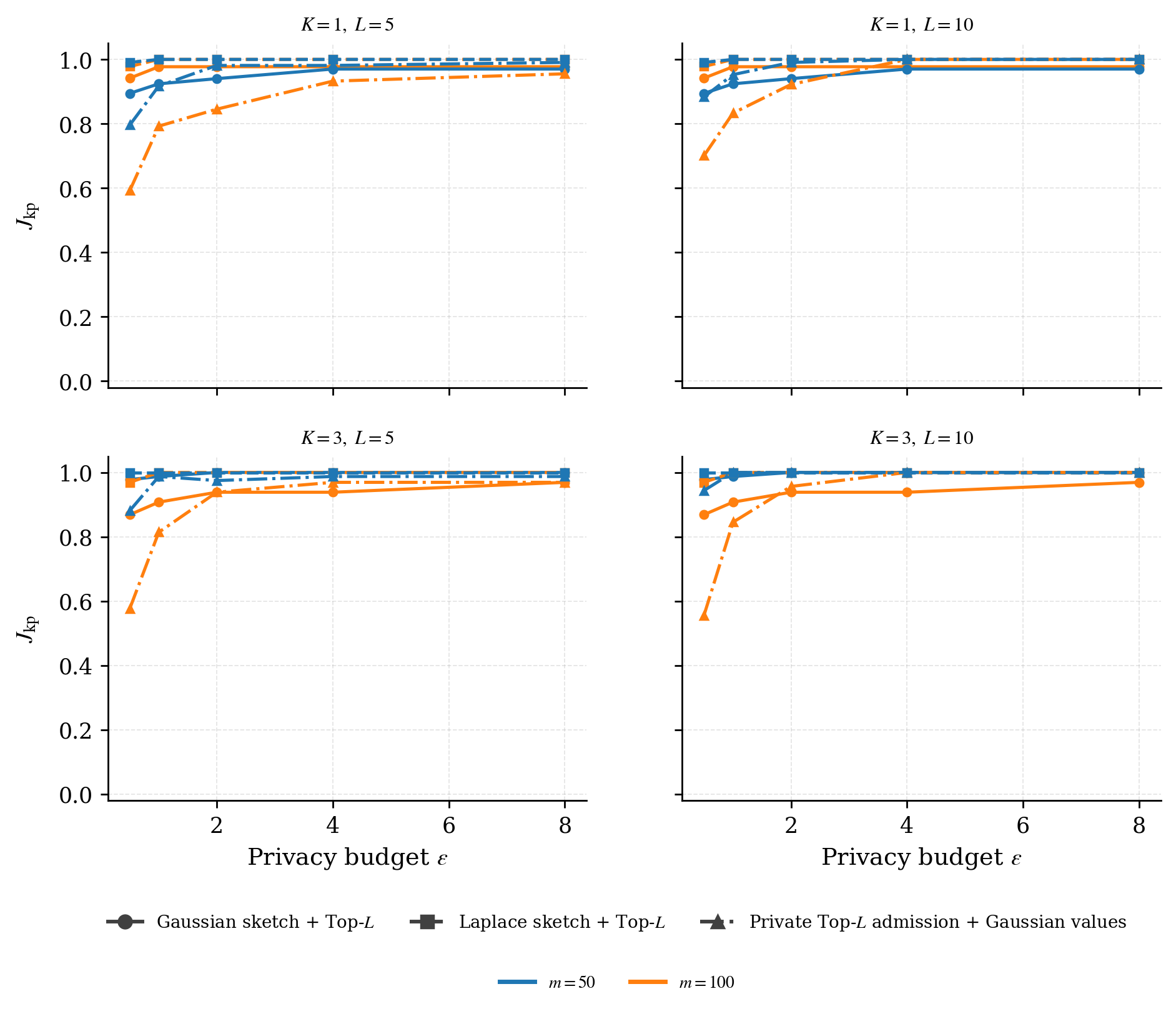}
    \caption{
    Yelp record-level keyphrase Jaccard similarity $J_{\mathrm{kp}}$ between the DP-plan summary and the non-private-plan reference summary as the privacy budget $\varepsilon$ varies. Panels vary $K$ and $L$; colors indicate the atom dictionary size $m$, and line styles indicate the release mechanism.
    }
    \label{fig:yelp-record-keyphrase-jaccard-vs-epsilon}
\end{figure}

\begin{figure}[h]
    \centering
    \includegraphics[width=0.63\linewidth]{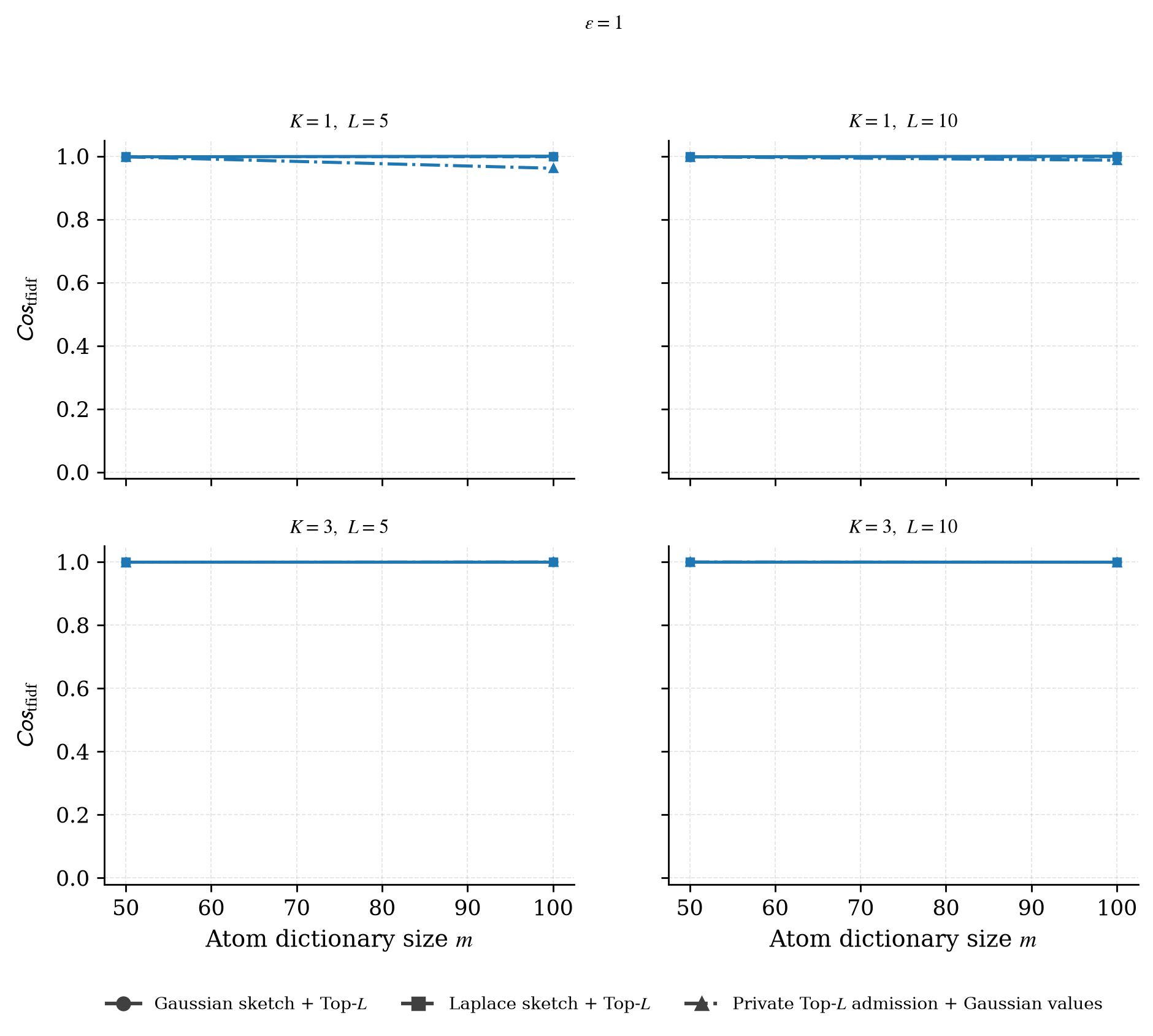}
    \caption{Yelp record-level TF--IDF cosine similarity $\mathsf{Cos}_{\mathrm{tfidf}}$ between the DP-plan summary and the non-private-plan reference summary as the atom dictionary size $m$ varies at $\varepsilon=1.0$. Panels vary $K$ and $L$; curves compare the release mechanisms.}
    \label{fig:yelp-record-tfidf-cosine-vs-m}
\end{figure}

\begin{figure}[h]
    \centering
    \includegraphics[width=0.63\linewidth]{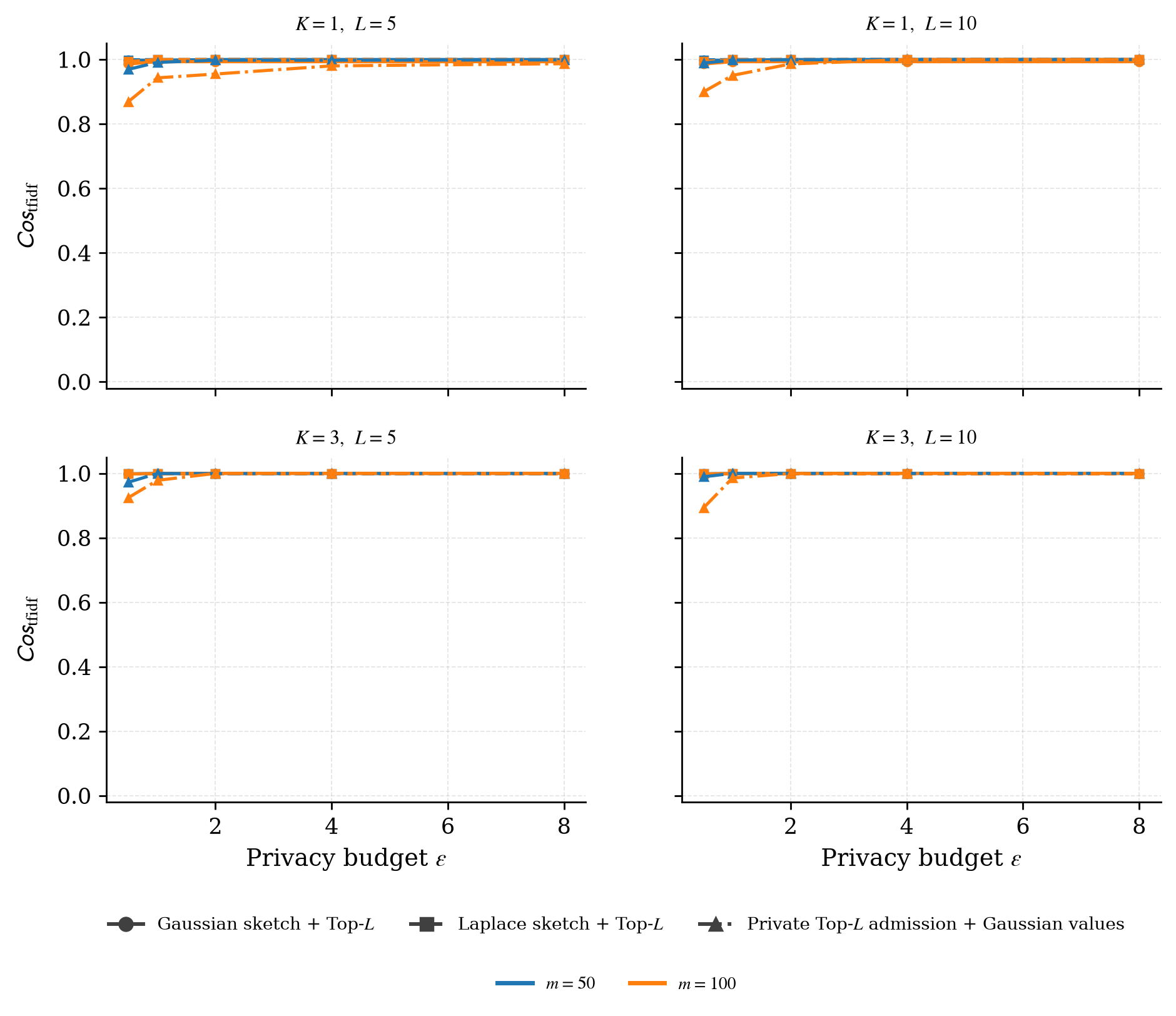}
    \caption{Yelp record-level TF--IDF cosine similarity $\mathsf{Cos}_{\mathrm{tfidf}}$ between the DP-plan summary and the non-private-plan reference summary as the privacy budget $\varepsilon$ varies. Panels vary $K$ and $L$; colors indicate the atom dictionary size $m$, and line styles indicate the release mechanism.}
    \label{fig:yelp-record-tfidf-cosine-vs-epsilon}
\end{figure}

\begin{figure}[h]
    \centering
    \includegraphics[width=0.7\linewidth]{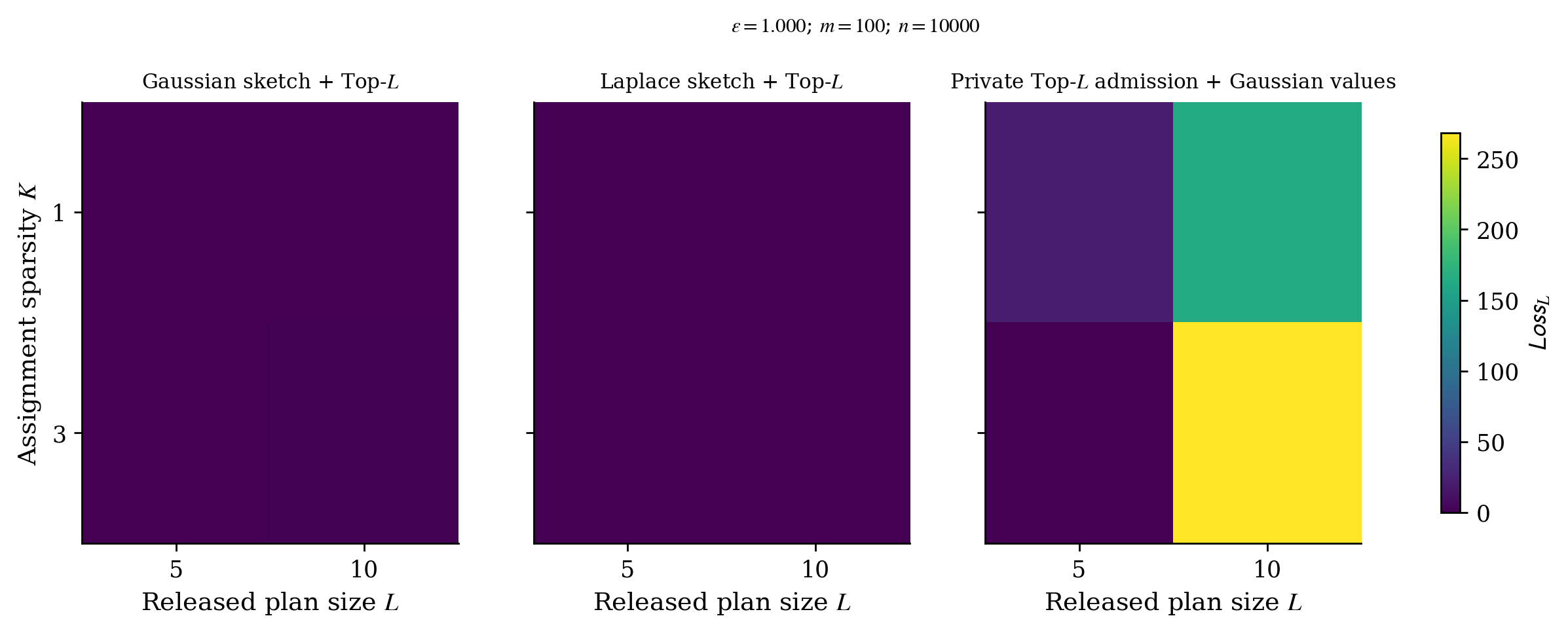}
    \caption{
    Yelp record-level heatmap of support-mass loss $\mathsf{Loss}_L$ over assignment sparsity $K$ and released plan size $L$, with $\varepsilon=1.0$, $m=100$, and $n=10000$. Each panel corresponds to one release mechanism. Lower values indicate less non-private semantic mass lost by the released atom set.
    }
    \label{fig:yelp-record-loss-heatmap}
\end{figure}

\begin{figure}[h]
    \centering
    \begin{minipage}{0.49\linewidth}
        \centering
        \includegraphics[width=\linewidth]{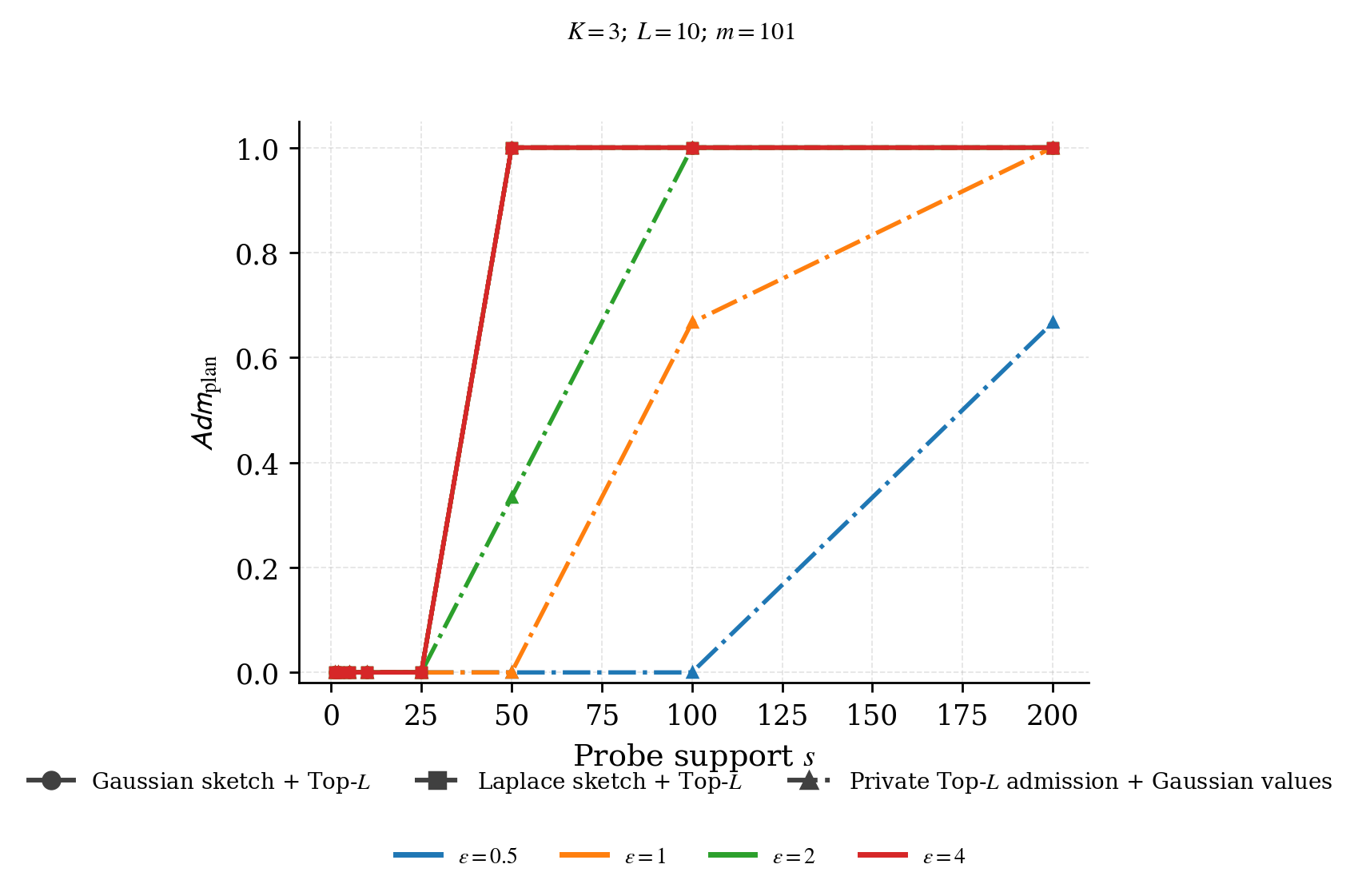}
        \vspace{2pt}
        \textbf{(a)} Plan admission.
    \end{minipage}
    \hfill
    \begin{minipage}{0.49\linewidth}
        \centering
        \includegraphics[width=\linewidth]{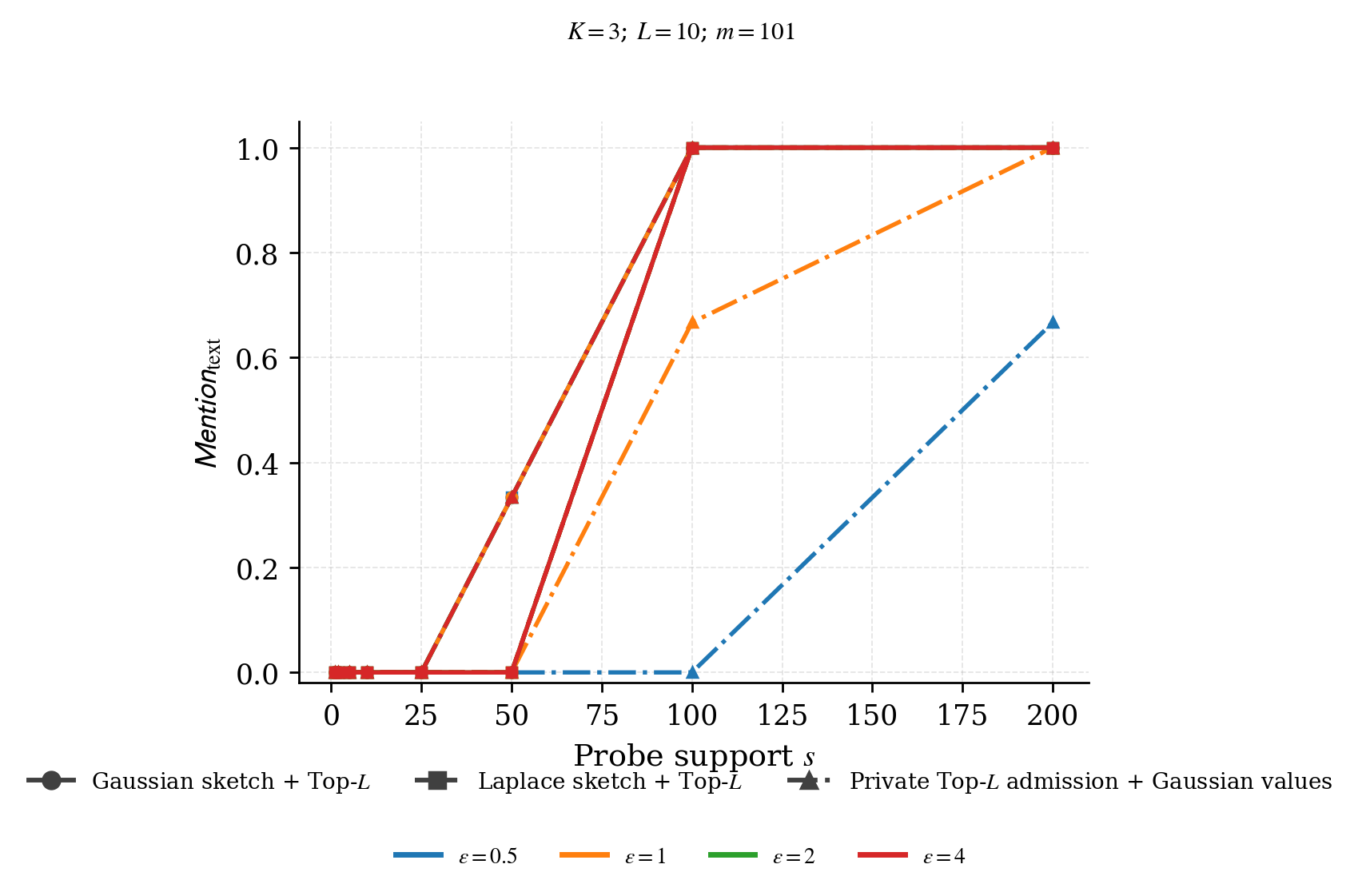}
        \vspace{2pt}
        \textbf{(b)} Summary mention.
    \end{minipage}
    \caption{
    Yelp record-level controlled probe-atom experiment as the injected probe support $s$ varies, with $K=3$, $L=10$, and $m=101$ public atoms including the probe atom. Panel~(a) reports the plan-admission rate $\mathsf{Adm}_{\mathrm{plan}}$, which measures whether the differentially private release admits the public probe atom into the released semantic plan. Panel~(b) reports the summary mention rate $\mathsf{Mention}_{\mathrm{text}}$, which measures whether the public probe string is detected in the decoded summary. Curves compare privacy budgets and release mechanisms under the configured Yelp record-level probe-sweep setting.
    }
    \label{fig:yelp-record-probe-admission-mention}
\end{figure}

\clearpage

\subsection{User-level results}

\vspace{-20pt}

\begin{figure}[h]
    \centering
    \includegraphics[width=0.52\linewidth]{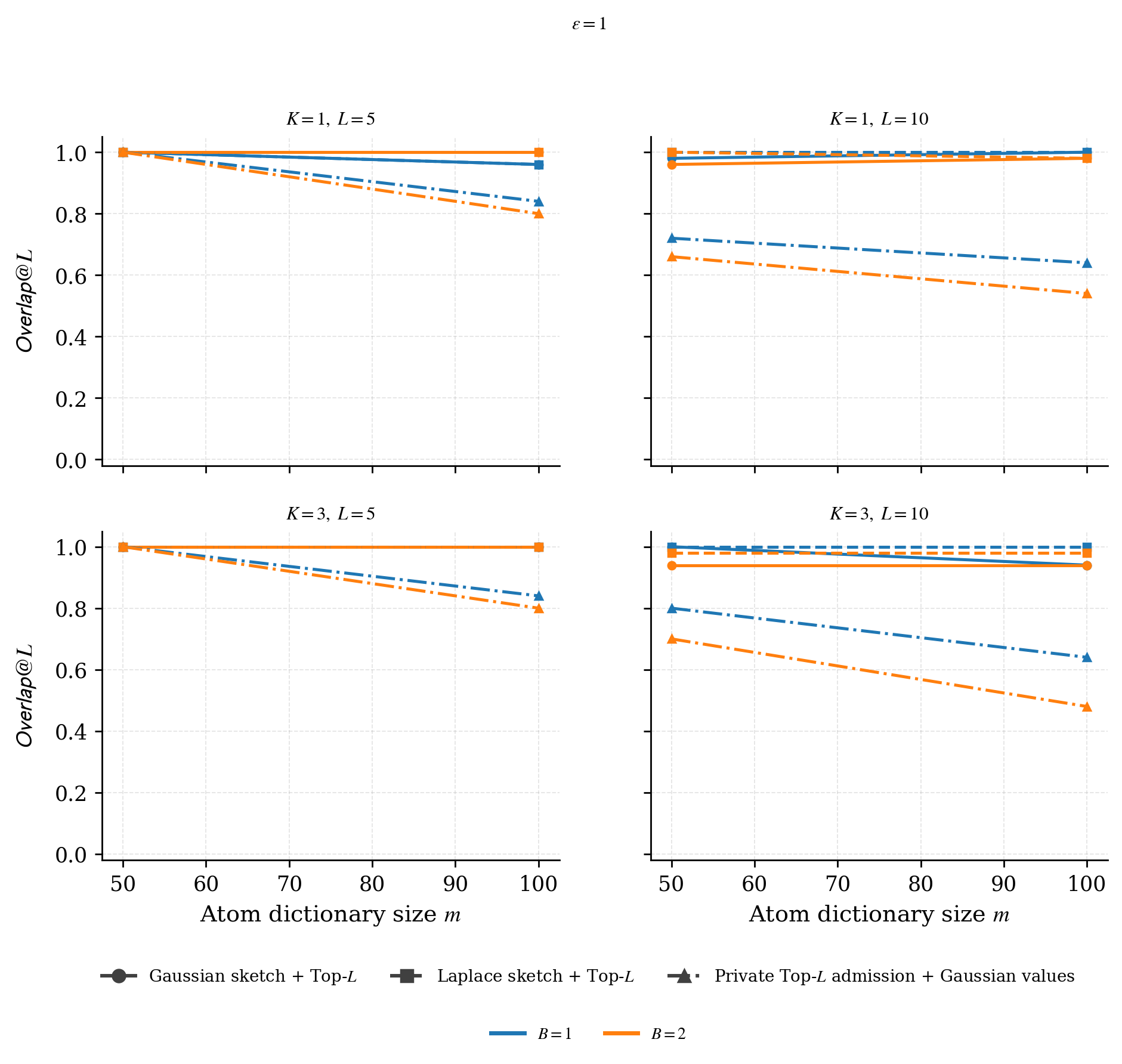}
    \vspace{-8pt}
    \caption{
    Yelp user-level ablation of $\mathsf{Overlap}@L$ as the atom dictionary size $m$ varies at $\varepsilon=1.0$. Panels vary the assignment sparsity $K$ and released plan size $L$; curves compare the release mechanisms.
    }
    \vspace{-20pt}
    \label{fig:yelp-user-overlap-vs-m}
\end{figure}

\begin{figure}[h]
    \centering
    \includegraphics[width=0.52\linewidth]{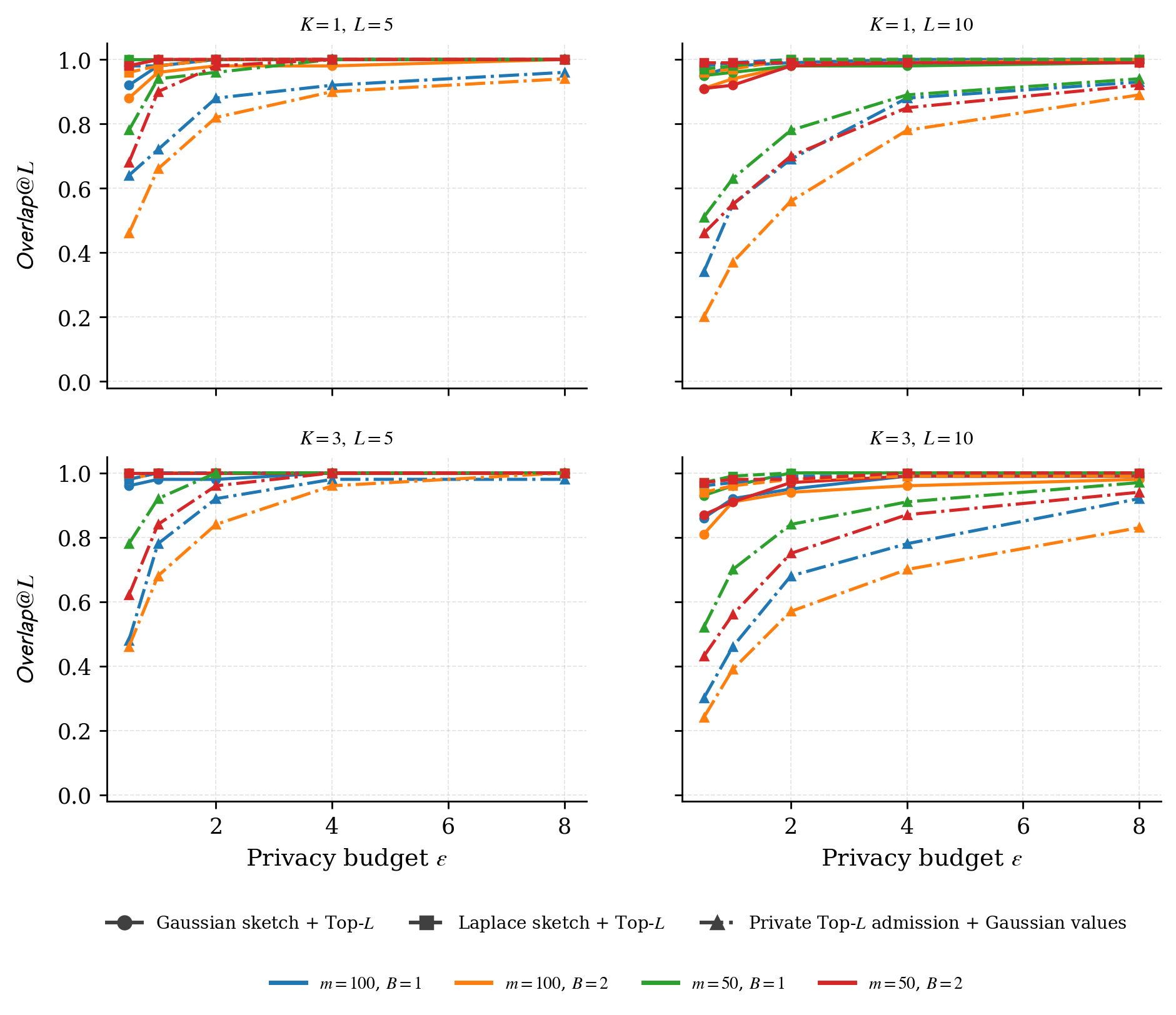}
    \vspace{-8pt}
    \caption{
    Yelp user-level ablation of $\mathsf{Overlap}@L$ as the privacy budget $\varepsilon$ varies. Panels vary the assignment sparsity $K$ and released plan size $L$; curves compare the release mechanisms and user-level configurations.
    }
    \label{fig:yelp-user-overlap-vs-epsilon}
\end{figure}

\begin{figure}[h]
    \centering
    \includegraphics[width=0.61\linewidth]{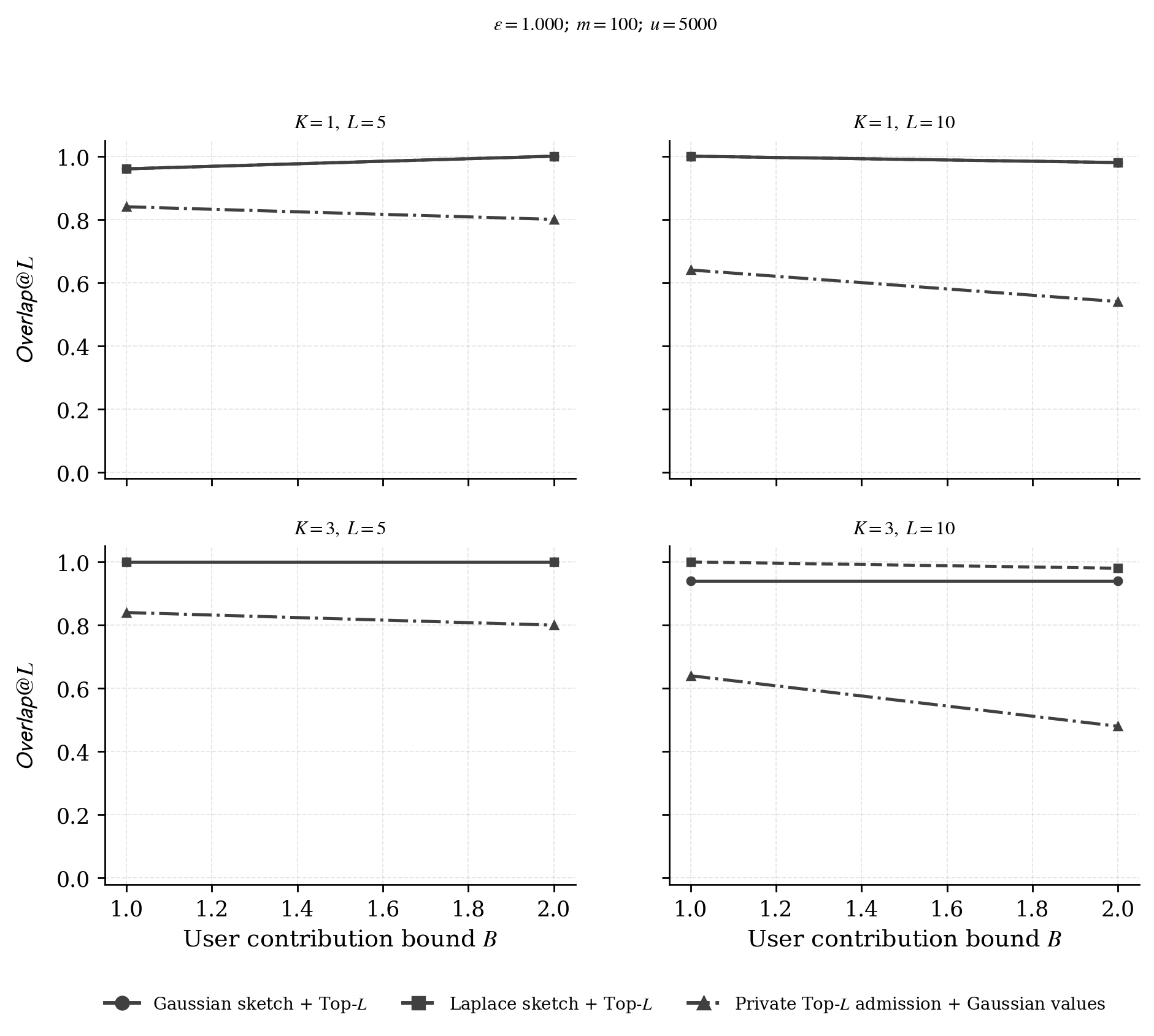}
    \caption{
    Yelp user-level ablation of $\mathsf{Overlap}@L$ as the user contribution bound $B$ varies, with $\varepsilon=1.0$, $m=100$, and $u=5000$ selected users. Panels vary $K$ and $L$; curves compare the release mechanisms.
    }
    \label{fig:yelp-user-overlap-vs-B}
\end{figure}

\begin{figure}[h]
    \centering
    \includegraphics[width=0.62\linewidth]{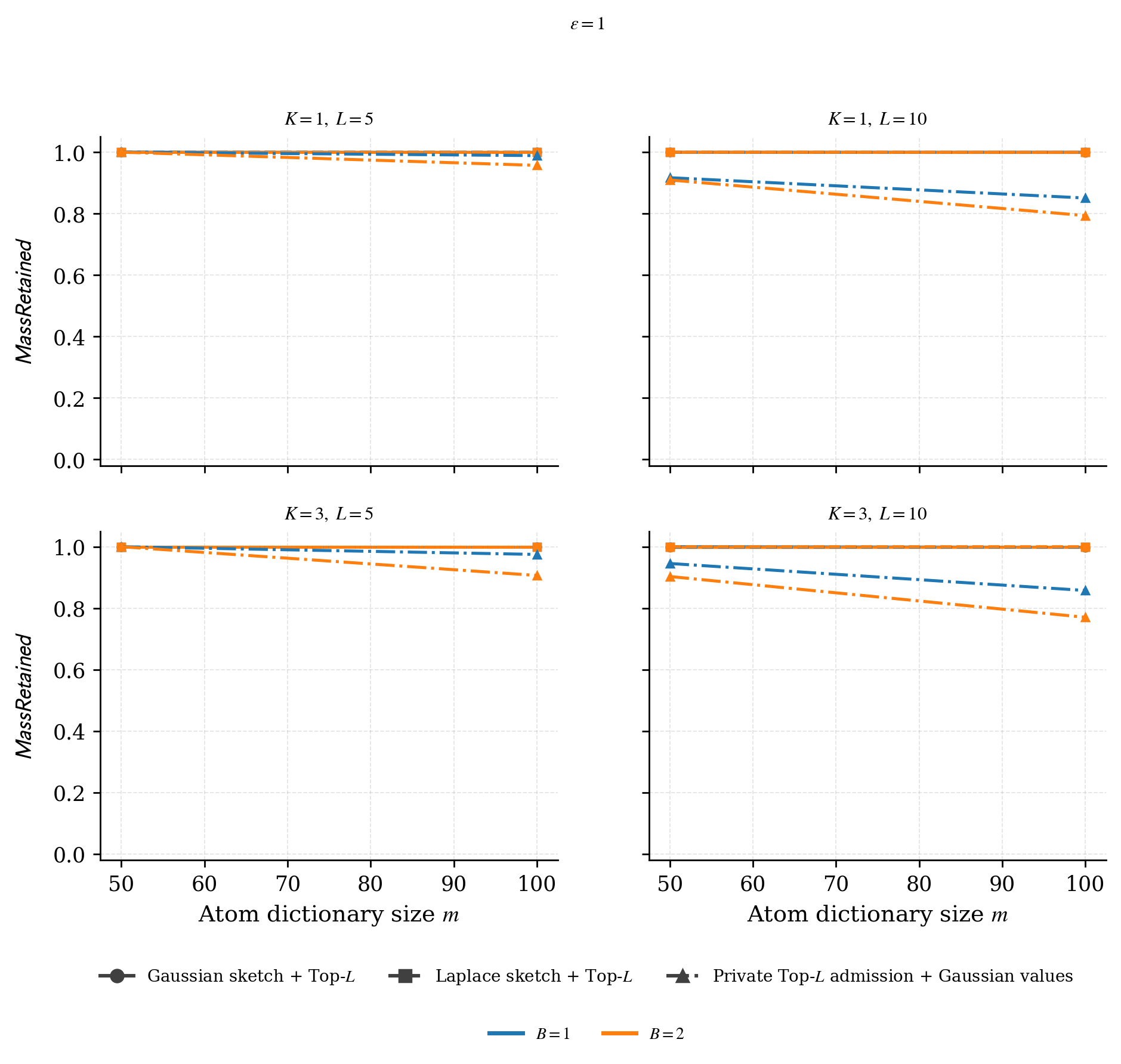}
    \caption{
    Yelp user-level ablation of the preserved non-private mass fraction $\mathsf{MassRetained}$ as the atom dictionary size $m$ varies at
    $\varepsilon=1.0$. Panels vary $K$ and $L$; curves compare the release mechanisms.
    }
    \label{fig:yelp-user-mass-preserved-vs-m}
\end{figure}

\begin{figure}[h]
    \centering
    \includegraphics[width=0.63\linewidth]{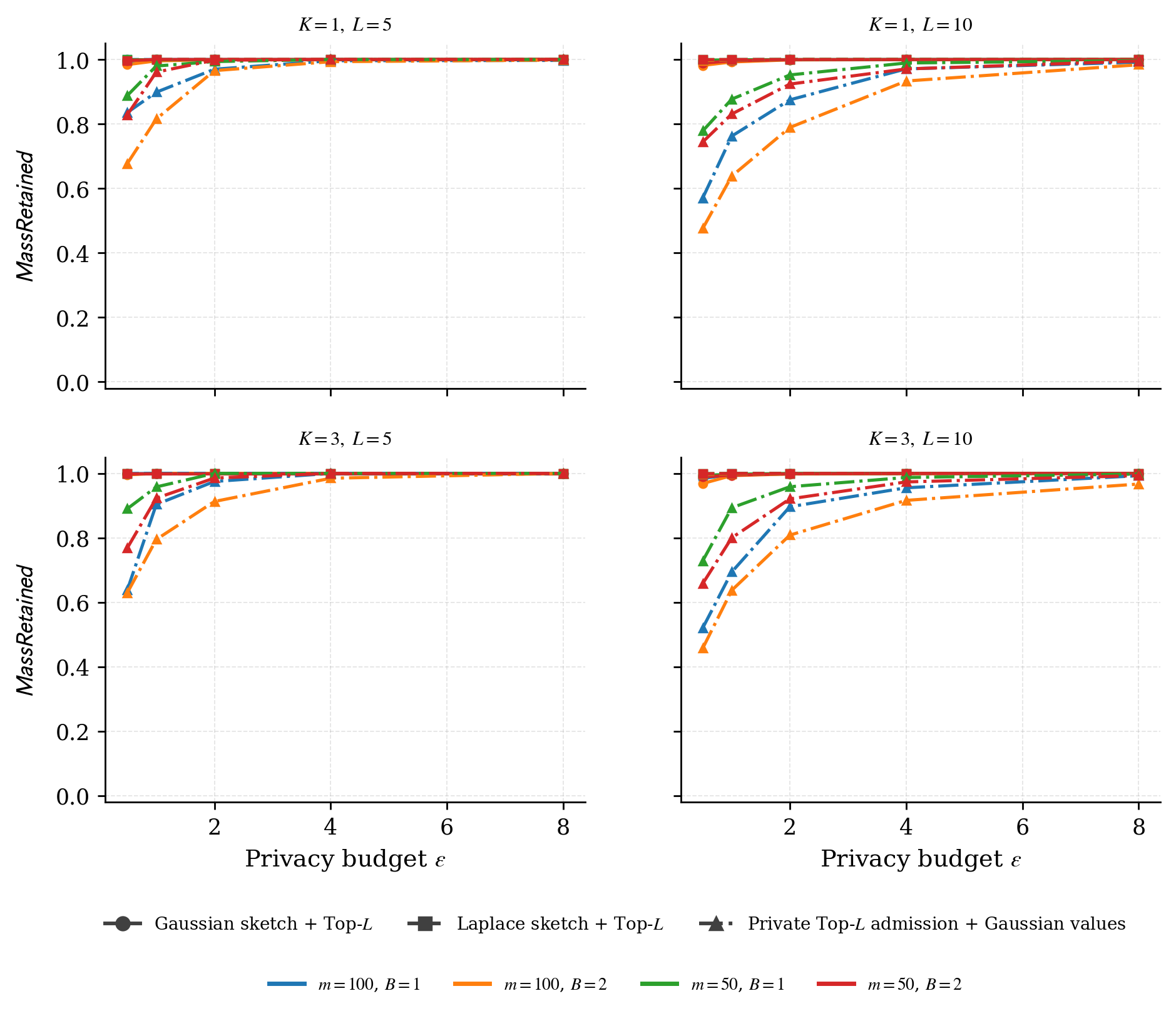}
    \caption{
    Yelp user-level ablation of the preserved non-private mass fraction $\mathsf{MassRetained}$ as the privacy budget $\varepsilon$ varies. Panels vary $K$ and $L$; curves compare the release mechanisms and user-level configurations.
    }
    \label{fig:yelp-user-mass-preserved-vs-epsilon}
\end{figure}

\begin{figure}[h]
    \centering
    \includegraphics[width=0.62\linewidth]{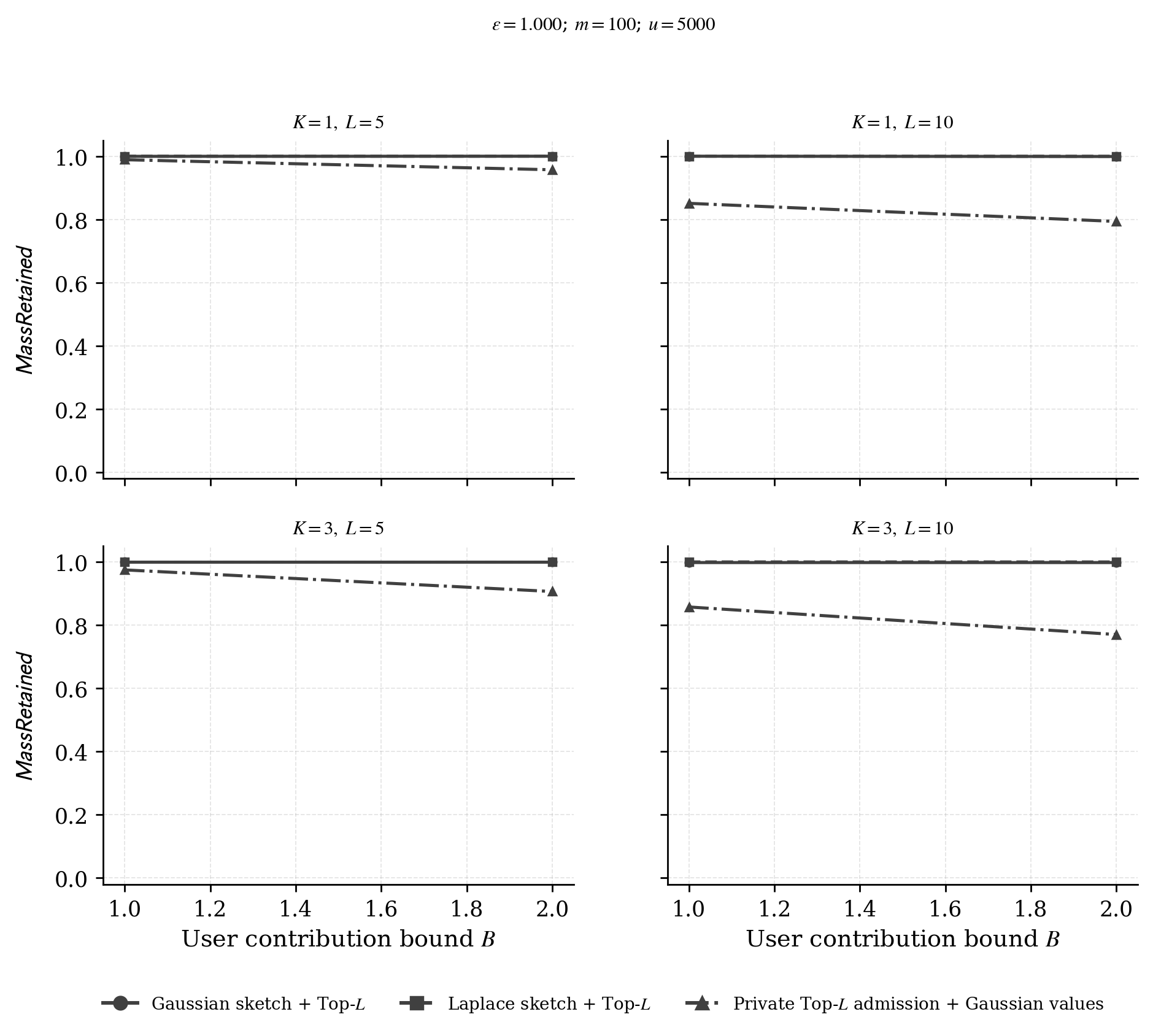}
    \caption{
    Yelp user-level ablation of the preserved non-private mass fraction $\mathsf{MassRetained}$ as the user contribution bound $B$ varies, with $\varepsilon=1.0$, $m=100$, and $u=5000$ selected users. Panels vary $K$ and $L$; curves compare the release mechanisms.
    }
    \label{fig:yelp-user-mass-preserved-vs-B}
\end{figure}

\begin{figure}[h]
    \centering
    \includegraphics[width=0.63\linewidth]{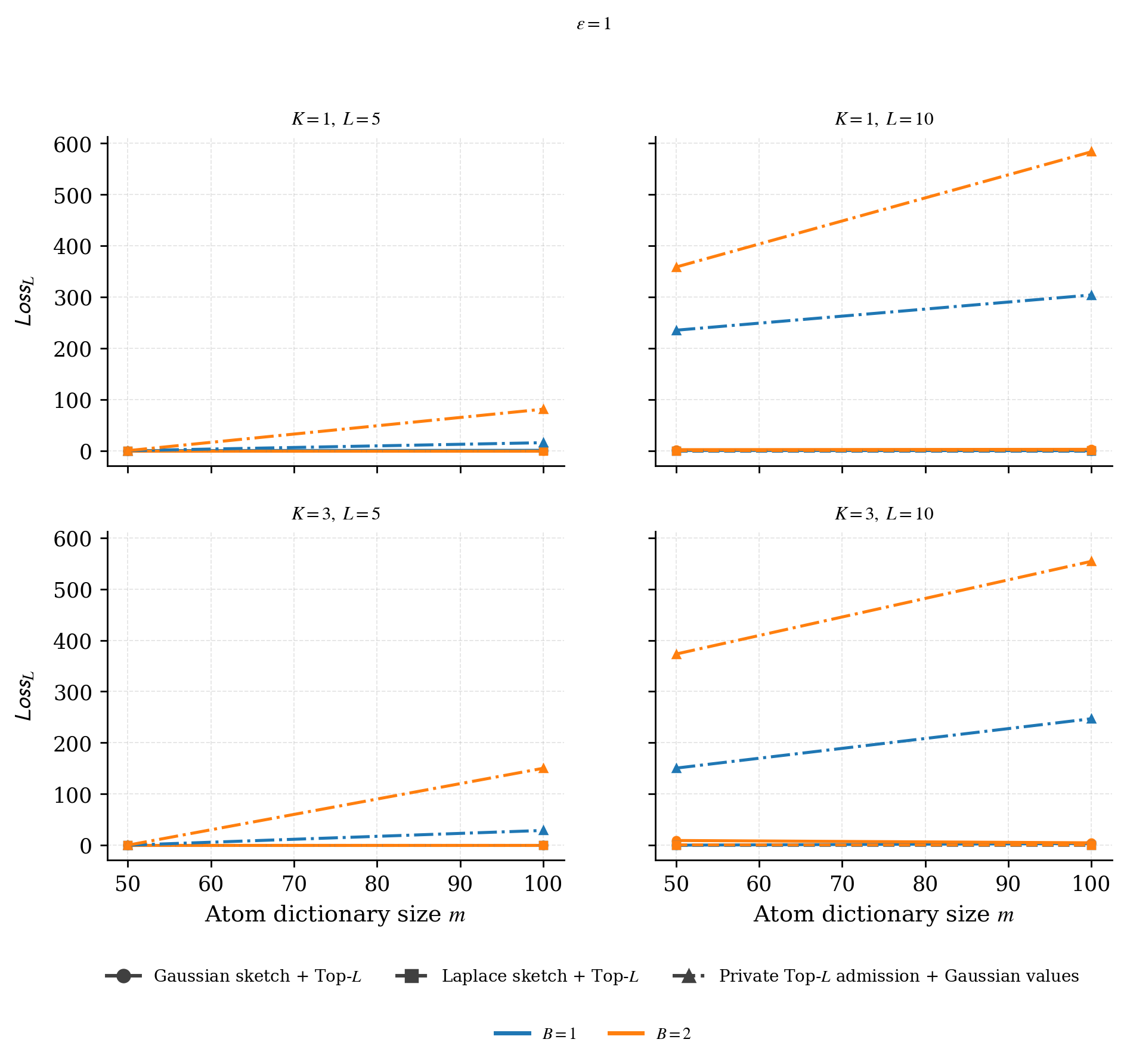}
    \caption{
    Yelp user-level ablation of support-mass loss $\mathsf{Loss}_L$ as the atom dictionary size $m$ varies at $\varepsilon=1.0$. Panels vary $K$ and $L$; curves compare the release mechanisms. Lower values indicate less non-private semantic mass lost by the released atom set.
    }
    \label{fig:yelp-user-loss-vs-m}
\end{figure}

\begin{figure}[h]
    \centering
    \includegraphics[width=0.63\linewidth]{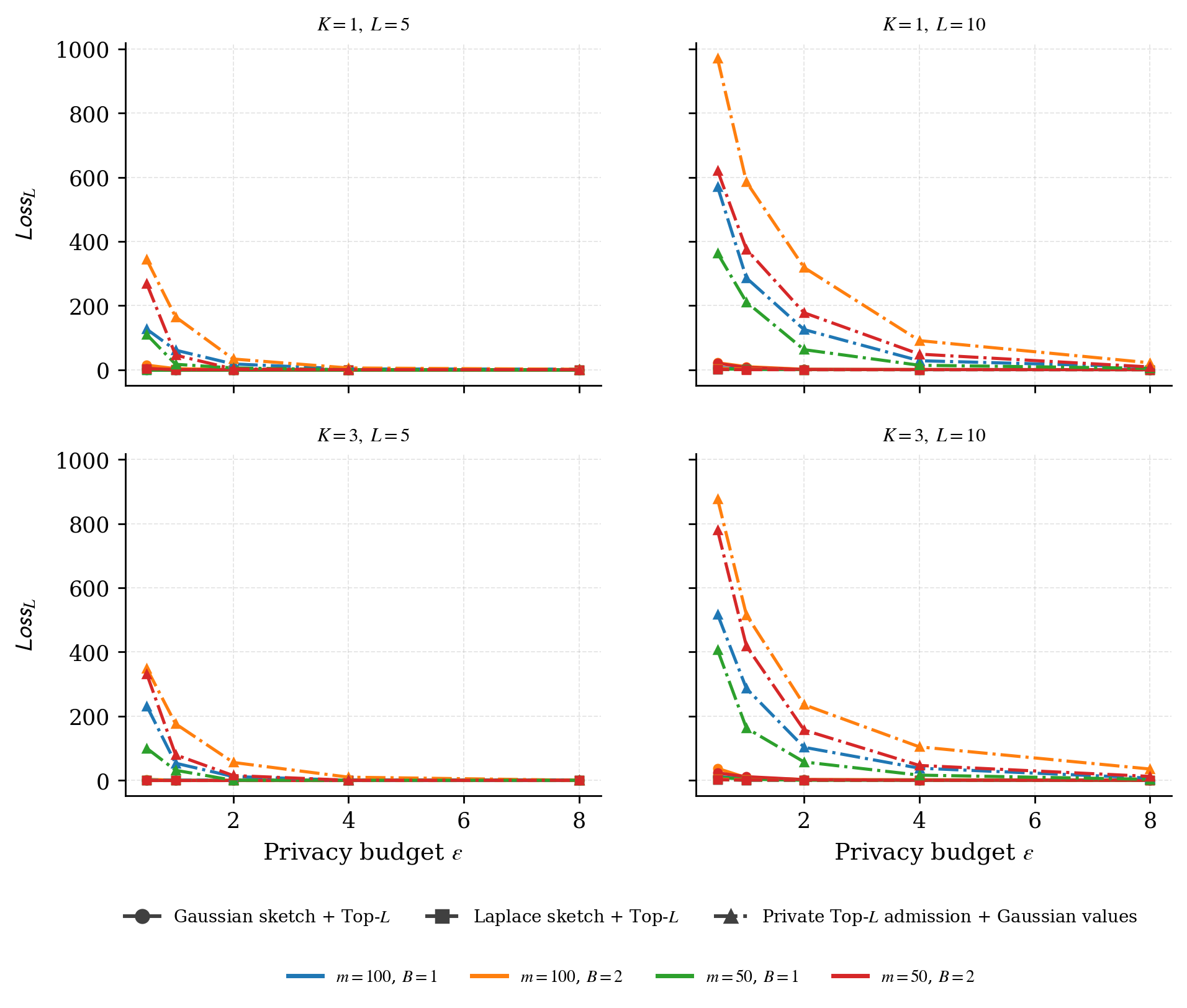}
    \caption{
    Yelp user-level ablation of support-mass loss $\mathsf{Loss}_L$ as the privacy budget $\varepsilon$ varies. Panels vary $K$ and $L$; curves compare the release mechanisms and user-level configurations. Lower values indicate less non-private semantic mass lost by the released atom set.
    }
    \label{fig:yelp-user-loss-vs-epsilon}
\end{figure}

\begin{figure}[h]
    \centering
    \includegraphics[width=0.6\linewidth]{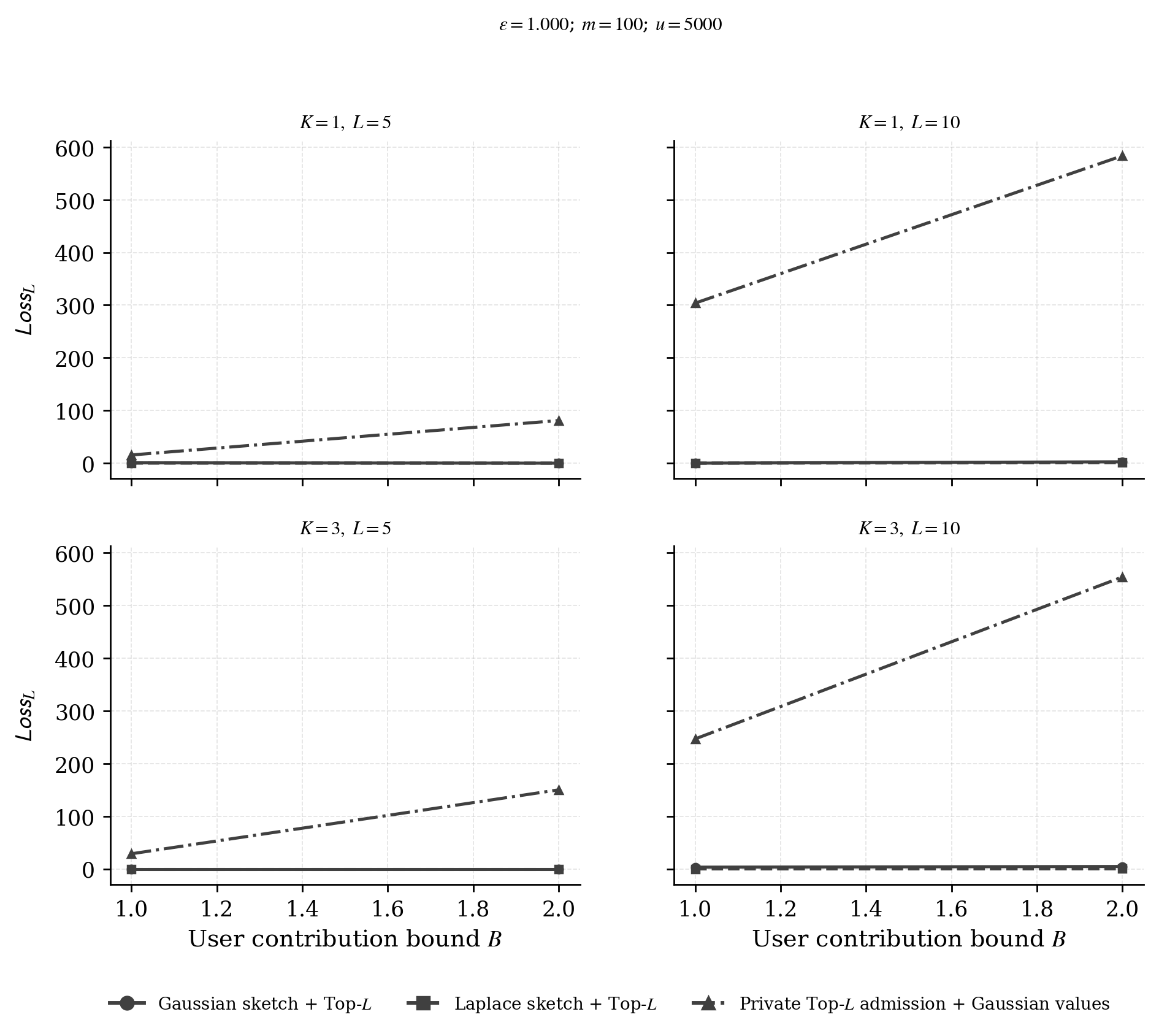}
    \caption{
    Yelp user-level ablation of support-mass loss $\mathsf{Loss}_L$ as the user contribution bound $B$ varies, with $\varepsilon=1.0$, $m=100$, and $u=5000$ selected users. Panels vary $K$ and $L$; curves compare the release mechanisms. Lower values indicate less non-private semantic mass lost by the released atom set.
    }
    \label{fig:yelp-user-loss-vs-B}
\end{figure}

\begin{figure}[h]
    \centering
    \includegraphics[width=0.61\linewidth]{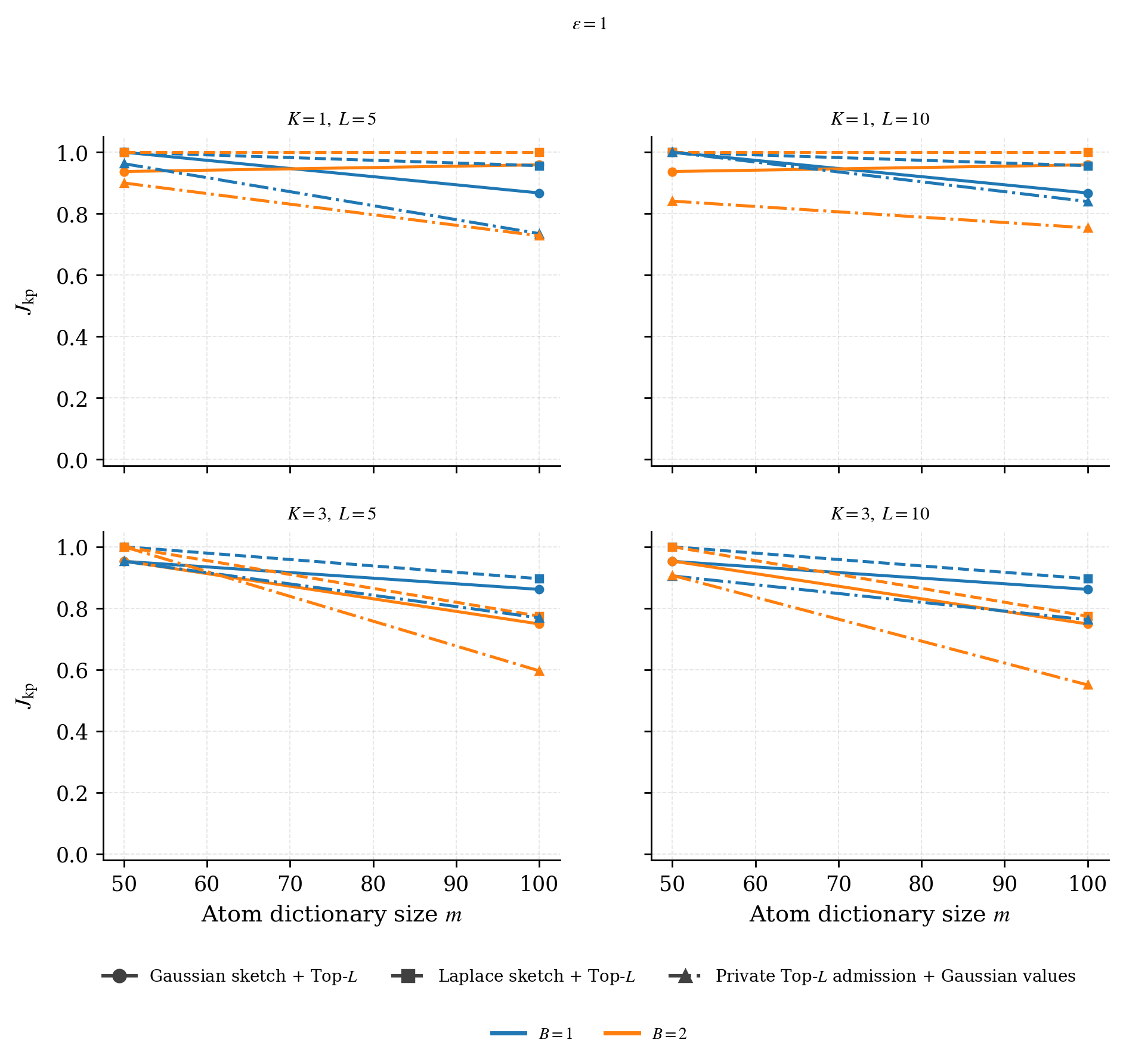}
    \caption{
    Yelp user-level keyphrase Jaccard similarity $J_{\mathrm{kp}}$ between the  DP-plan summary and the non-private-plan reference summary as the atom dictionary size $m$ varies at $\varepsilon=1.0$. Panels vary $K$ and $L$; curves compare the release mechanisms.
    }
    \label{fig:yelp-user-keyphrase-jaccard-vs-m}
\end{figure}

\begin{figure}[h]
    \centering
    \includegraphics[width=0.61\linewidth]{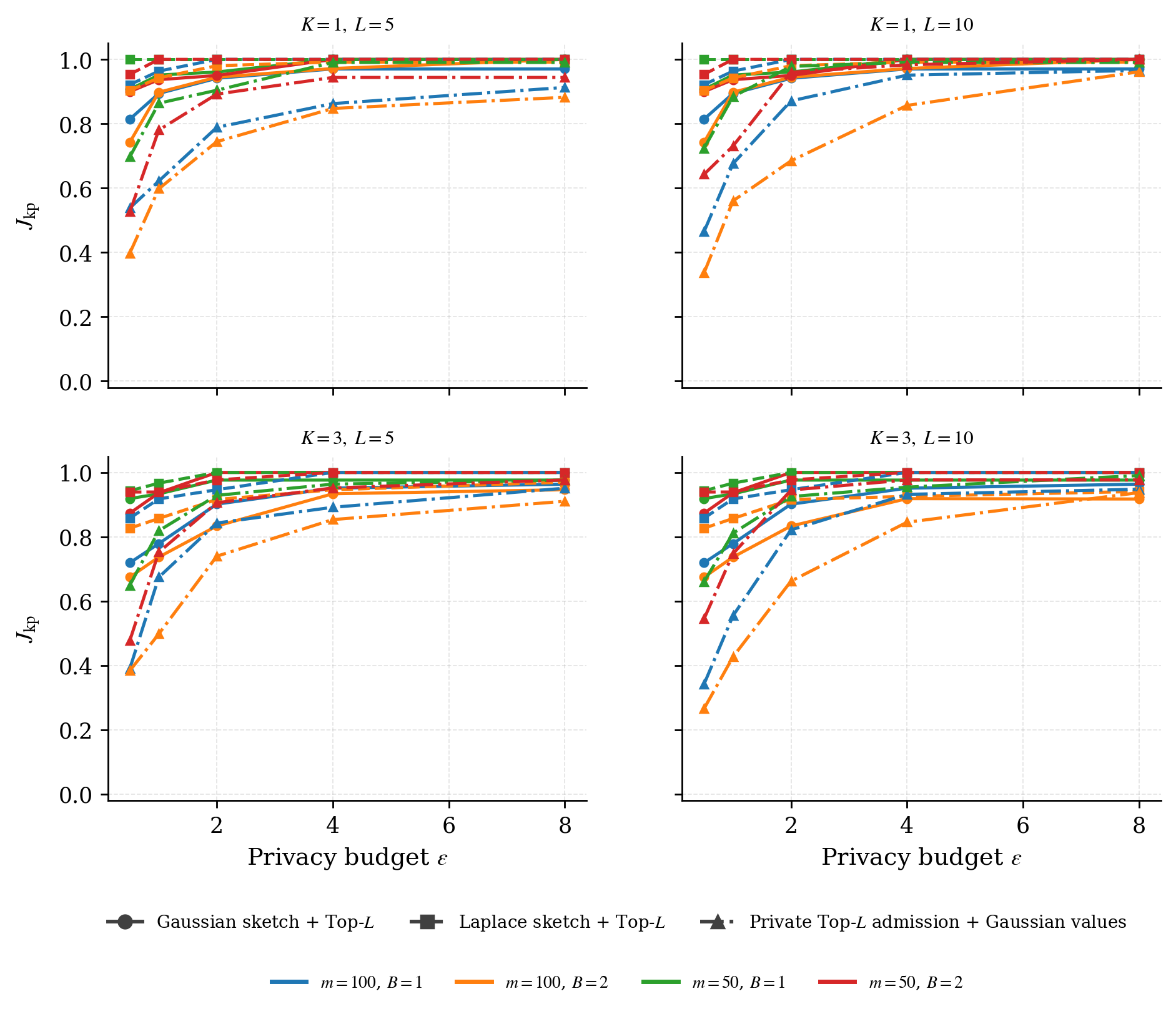}
    \caption{
    Yelp user-level keyphrase Jaccard similarity $J_{\mathrm{kp}}$ between the DP-plan summary and the non-private-plan reference summary as the privacy budget $\varepsilon$ varies. Panels vary $K$ and $L$; curves compare the release mechanisms and user-level configurations.
    }
    \label{fig:yelp-user-keyphrase-jaccard-vs-epsilon}
\end{figure}

\begin{figure}[h]
    \centering
    \includegraphics[width=0.61\linewidth]{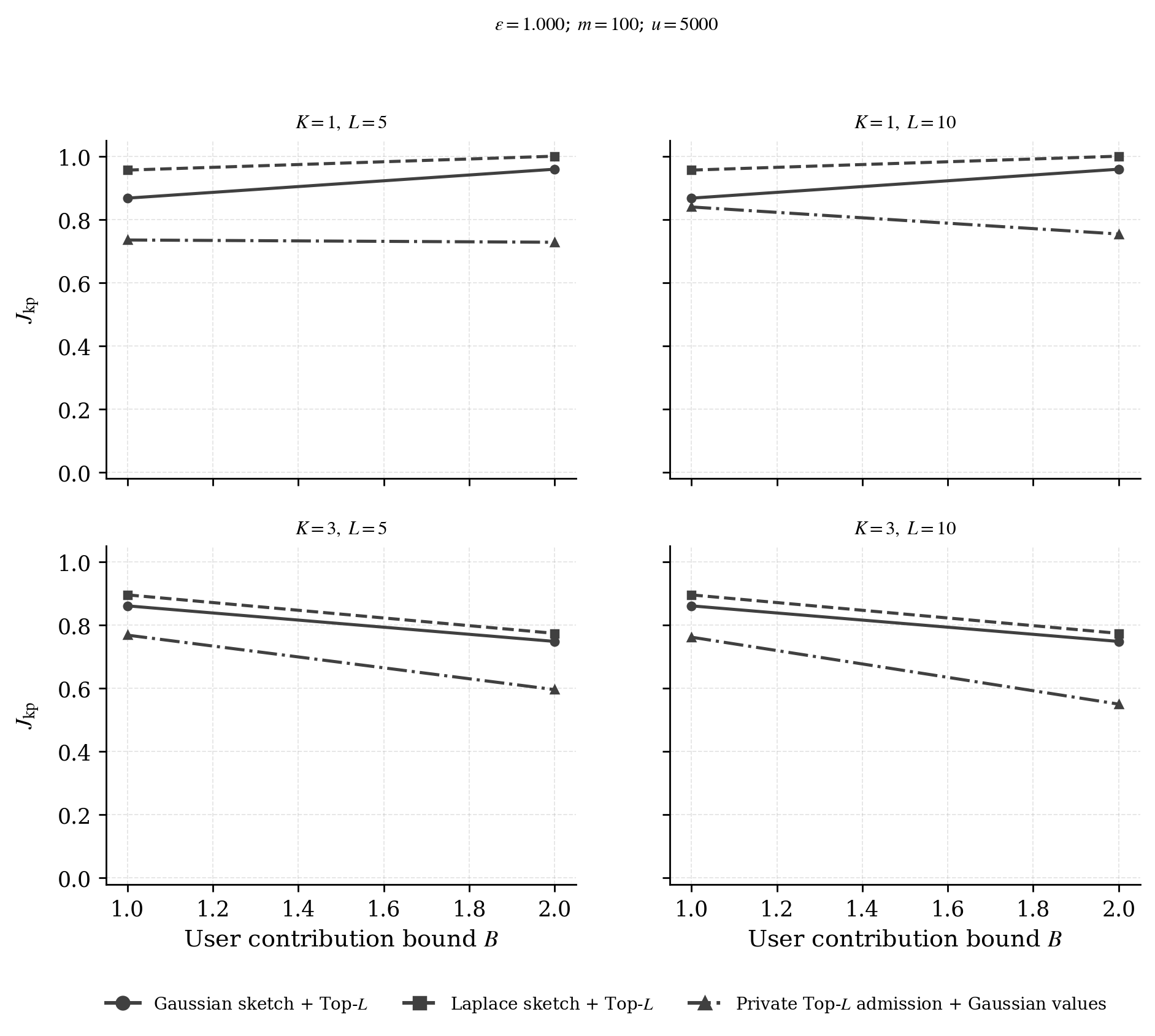}
    \caption{
    Yelp user-level keyphrase Jaccard similarity $J_{\mathrm{kp}}$ between the DP-plan summary and the non-private-plan reference summary as the user contribution bound $B$ varies, with $\varepsilon=1.0$, $m=100$, and $u=5000$ selected users. Panels vary $K$ and $L$; curves compare the release mechanisms.
    }
    \label{fig:yelp-user-keyphrase-jaccard-vs-B}
\end{figure}

\begin{figure}[h]
    \centering
    \includegraphics[width=0.61\linewidth]{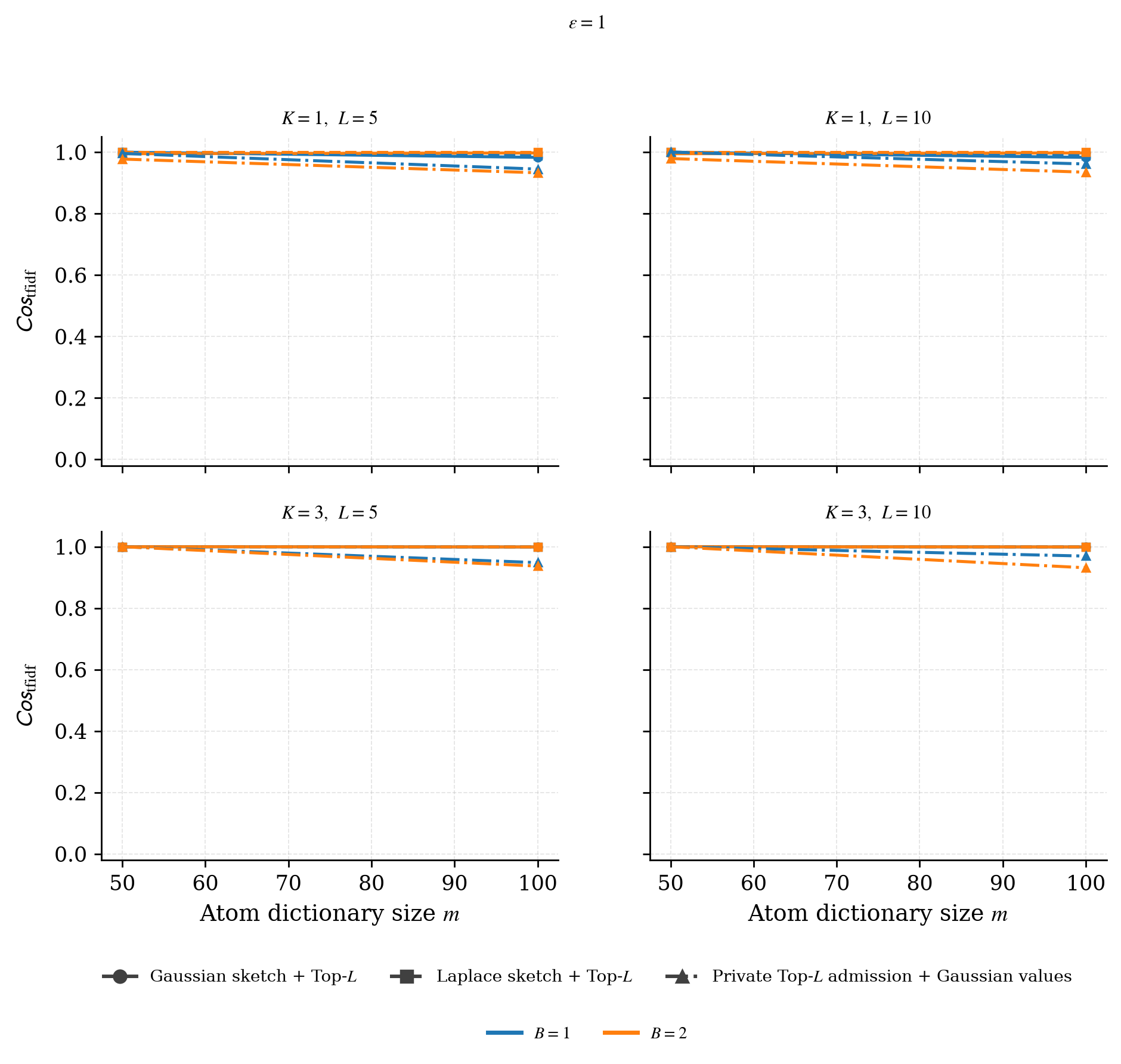}
    \caption{Yelp user-level TF--IDF cosine similarity $\mathsf{Cos}_{\mathrm{tfidf}}$ between the DP-plan summary and the non-private-plan reference summary as the atom dictionary size $m$ varies at $\varepsilon=1.0$. Panels vary $K$ and $L$; curves compare the release mechanisms.}
    \label{fig:yelp-user-tfidf-cosine-vs-m}
\end{figure}

\begin{figure}[h]
    \centering
    \includegraphics[width=0.63\linewidth]{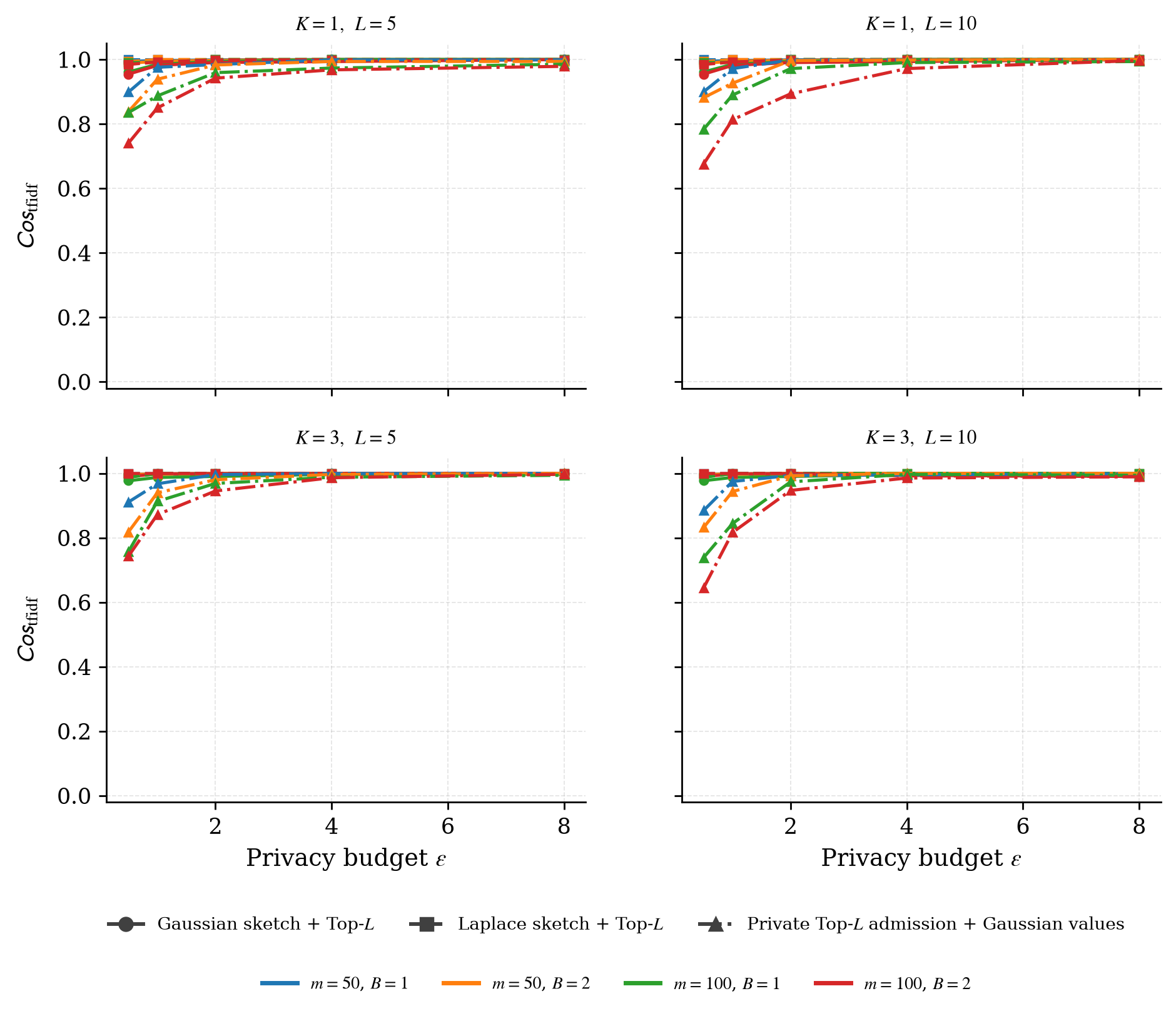}
    \caption{Yelp user-level TF--IDF cosine similarity $\mathsf{Cos}_{\mathrm{tfidf}}$ between the DP-plan summary and the non-private-plan reference summary as the privacy budget $\varepsilon$ varies. Panels vary $K$ and $L$; curves compare the release mechanisms and user-level configurations.}
    \label{fig:yelp-user-tfidf-cosine-vs-epsilon}
\end{figure}

\begin{figure}[h]
    \centering
    \includegraphics[width=0.63\linewidth]{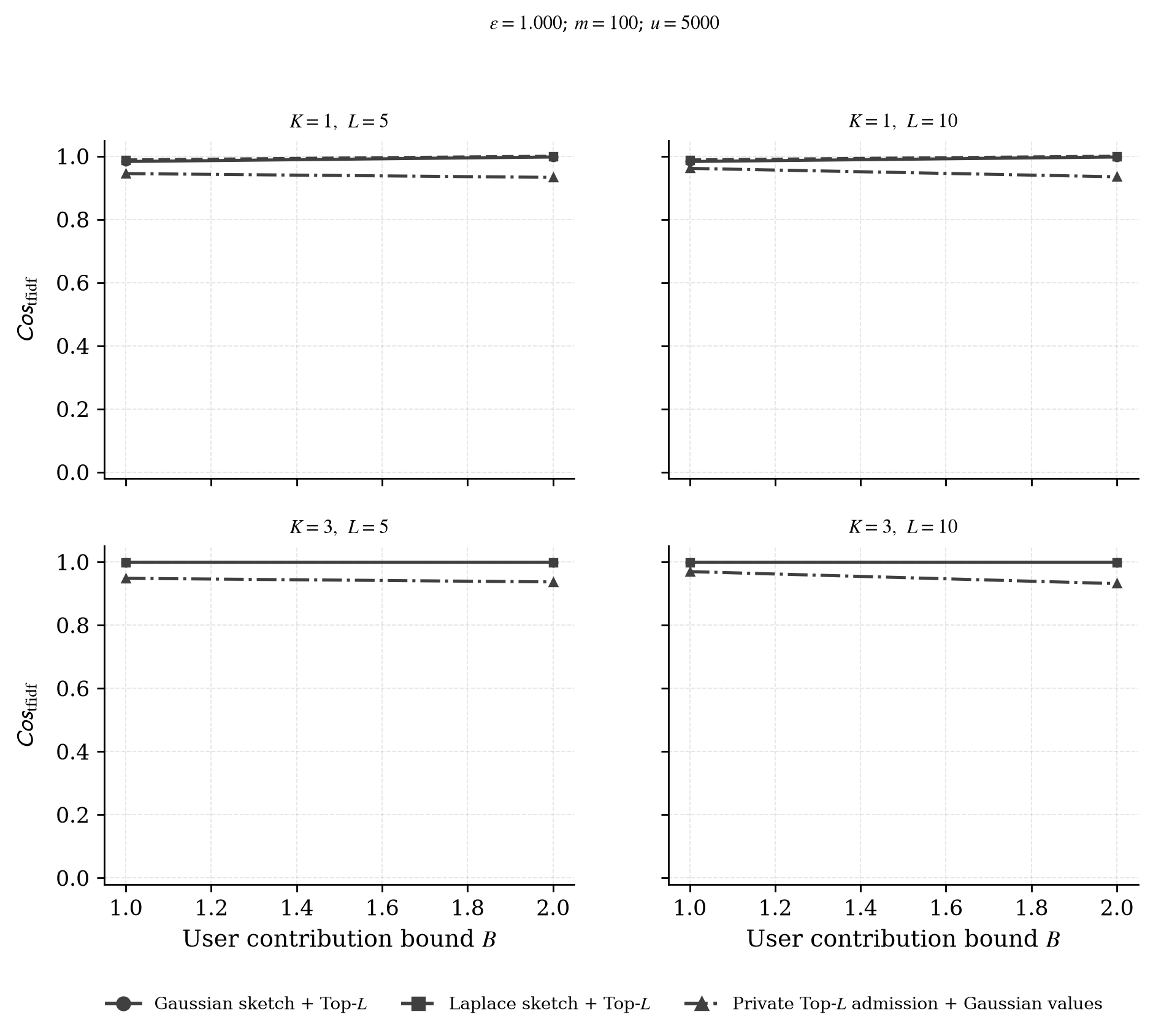}
    \caption{Yelp user-level TF--IDF cosine similarity $\mathsf{Cos}_{\mathrm{tfidf}}$ between the DP-plan summary and the non-private-plan reference summary as the user contribution bound $B$ varies, with $\varepsilon=1.0$, $m=100$, and $u=5000$ selected users. Panels vary $K$ and $L$; curves compare the release mechanisms.}
    \label{fig:yelp-user-tfidf-cosine-vs-B}
\end{figure}


\begin{figure}[h]
    \centering
    \includegraphics[width=0.7\linewidth]{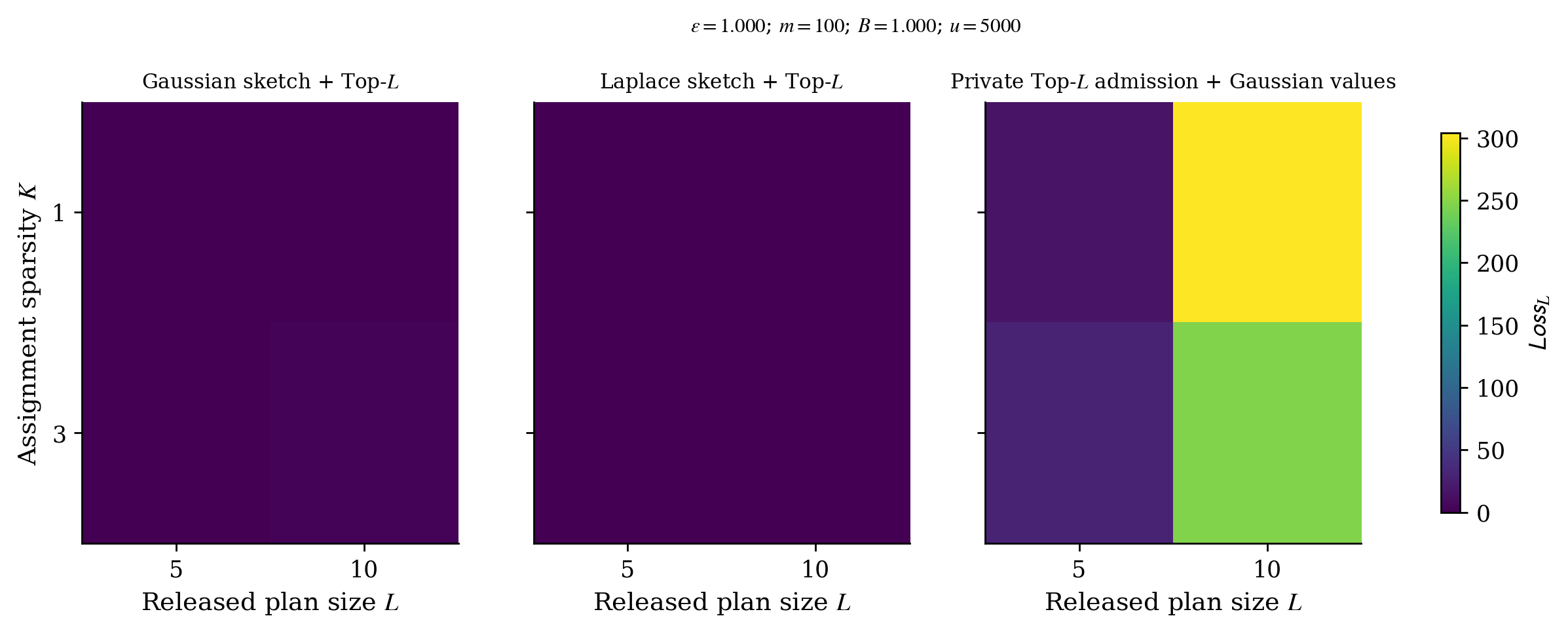}
    \caption{
    Yelp user-level heatmap of support-mass loss $\mathsf{Loss}_L$ over assignment sparsity $K$ and released plan size $L$, with $\varepsilon=1.0$, $m=100$, $B=1.0$, and $u=5000$ selected users. Each panel corresponds to one release mechanism. Lower values indicate less
    non-private semantic mass lost by the released atom set.
    }
    \label{fig:yelp-user-loss-heatmap}
\end{figure}

\begin{figure}[h]
    \centering
    \begin{minipage}{0.49\linewidth}
        \centering
        \includegraphics[width=\linewidth]{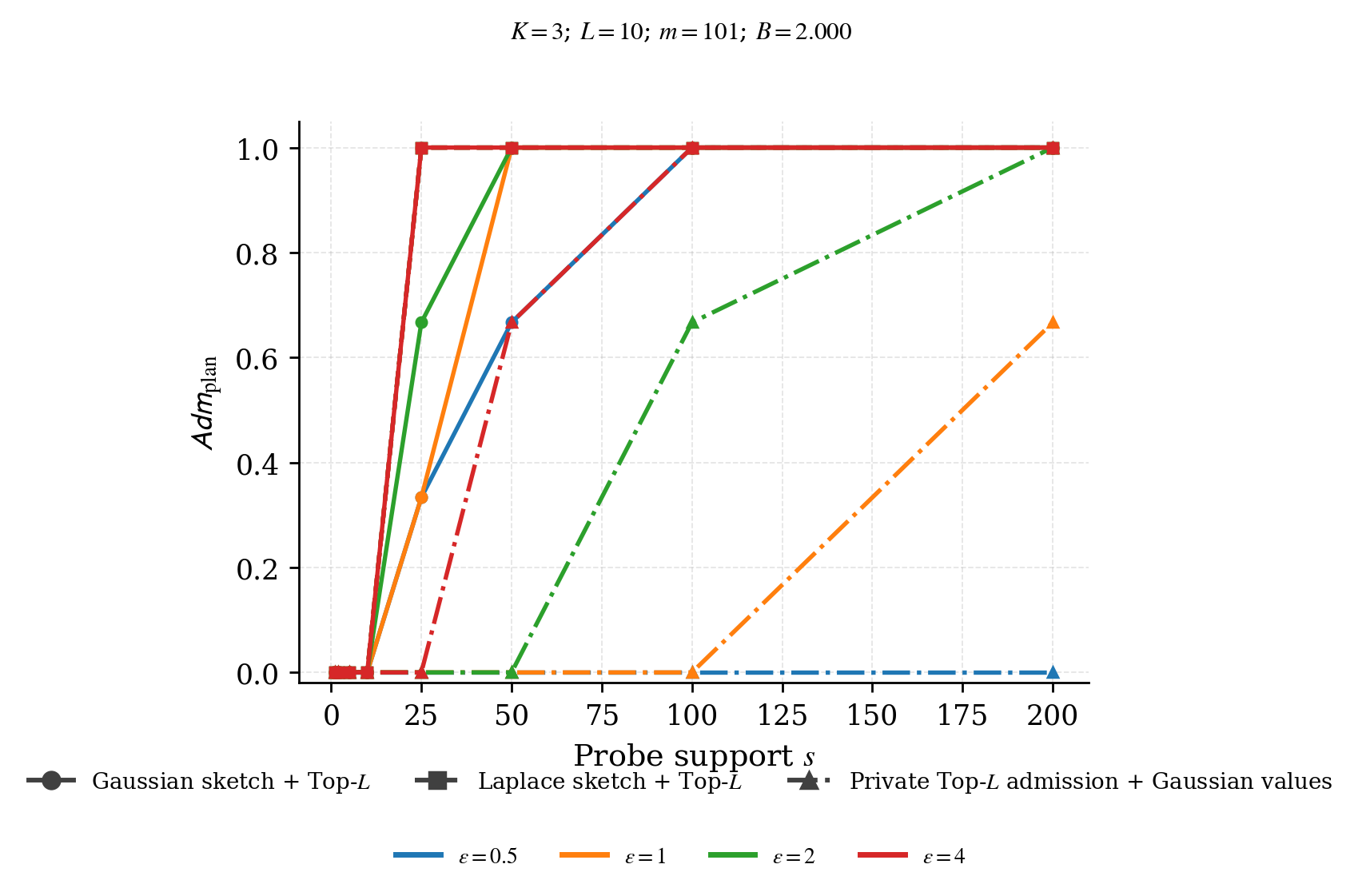}
        \vspace{2pt}
        \textbf{(a)} Plan admission.
    \end{minipage}
    \hfill
    \begin{minipage}{0.49\linewidth}
        \centering
        \includegraphics[width=\linewidth]{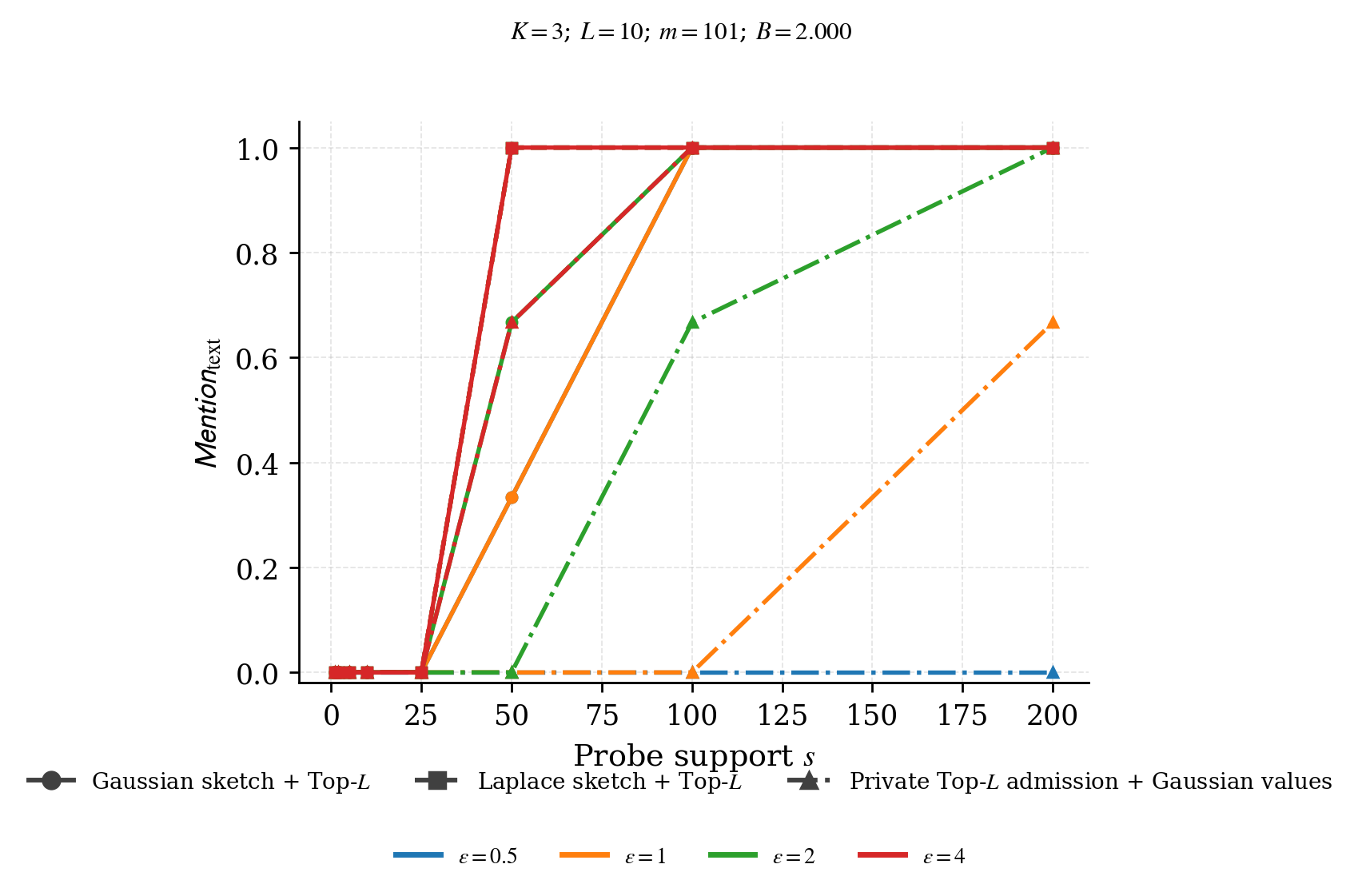}
        \vspace{2pt}
        \textbf{(b)} Summary mention.
    \end{minipage}
    \caption{
    Yelp user-level controlled probe-atom experiment as the injected probe support $s$ varies, with $K=3$, $L=10$, $m=101$ public atoms including the probe atom, and user contribution bound $B=2$. Panel~(a) reports the plan-admission rate $\mathsf{Adm}_{\mathrm{plan}}$, which measures whether the differentially private release admits the public probe atom into the released semantic plan. Panel~(b) reports the summary mention rate $\mathsf{Mention}_{\mathrm{text}}$, which measures whether the public probe  string is detected in the decoded summary. Curves compare privacy budgets and release mechanisms under the configured Yelp user-level probe-sweep setting.
    }
    \label{fig:yelp-user-probe-admission-mention}
\end{figure}

\clearpage

\begin{table*}[t]
\centering
\caption{Release-conditioned OpenAI evaluation of selected generated summaries on Yelp Restaurants reviews. Each summary is evaluated only with respect to the object released to its decoder. Scores are assigned on a 1--5 scale and report verbalization quality. The judge privacy-safety column evaluates only the generated text and is not a formal differential-privacy guarantee. Grouped values are reported as mean $\pm$ sample standard deviation across
the summaries in each row.}
\label{tab:openai-baseline-comparison-yelp-restaurants}
\small
\resizebox{0.99\textwidth}{!}{%
\begin{tabular}{lllccccccccc}
\toprule
Method & Release object & Configuration & $\varepsilon$ & Protected units & Judged summaries & Coverage & Specificity & Insightfulness & Faithfulness & Text safety & Clarity \\
\midrule
DP-SPIN private plan & DP semantic plan & record; Gaussian sketch + top-$L$; m=100; K=3; L=10 & 1.0 & 5000 & 3 & 4.67 $\pm$ 0.58 & 4.00 $\pm$ 0.00 & 3.33 $\pm$ 0.58 & 5.00 $\pm$ 0.00 & 5.00 $\pm$ 0.00 & 4.33 $\pm$ 0.58 \\
DP-SPIN private plan & DP semantic plan & user; Gaussian sketch + top-$L$; m=100; K=3; L=10; B=1 & 1.0 & 2000 & 3 & 4.67 $\pm$ 0.58 & 4.00 $\pm$ 0.00 & 4.00 $\pm$ 0.00 & 5.00 $\pm$ 0.00 & 5.00 $\pm$ 0.00 & 5.00 $\pm$ 0.00 \\
DP keyword histogram & DP keyword histogram & fixed public keyword vocabulary; Laplace histogram; released top-10 keywords & 1.0 & 2000 & 3 & 5.00 $\pm$ 0.00 & 4.33 $\pm$ 0.58 & 4.00 $\pm$ 0.00 & 5.00 $\pm$ 0.00 & 5.00 $\pm$ 0.00 & 5.00 $\pm$ 0.00 \\
DP category histogram & DP category histogram & fixed rating\_bin universe; Laplace histogram; released top-3 categories & 1.0 & 2000 & 3 & 5.00 $\pm$ 0.00 & 4.00 $\pm$ 0.00 & 3.33 $\pm$ 0.58 & 5.00 $\pm$ 0.00 & 5.00 $\pm$ 0.00 & 4.67 $\pm$ 0.58 \\
URANIA-style public keywords & DP public keywords & DP clustering; DP public-keyword histograms; clusters=10; keywords=10 & 1.0 & 2000 & 3 & 4.33 $\pm$ 0.58 & 3.67 $\pm$ 0.58 & 4.00 $\pm$ 0.00 & 3.67 $\pm$ 0.58 & 5.00 $\pm$ 0.00 & 4.67 $\pm$ 0.58 \\
\bottomrule
\end{tabular}%
}
\end{table*}

\begin{table*}[t]
\centering
\caption{Release-conditioned OpenAI evaluation of selected \texttt{DP-SPIN} summaries on Yelp Restaurants reviews. The decoder receives only the released differentially private semantic plan and public decoding instructions. Scores are assigned on the same 1--5 rubric used for the baseline-comparison table.}
\label{tab:openai-dpspin-yelp-restaurants}
\small
\resizebox{0.99\textwidth}{!}{%
\begin{tabular}{lllccccccccc}
\toprule
Unit & Mechanism & Configuration & $\varepsilon$ & Protected units & Judged summaries & Coverage & Specificity & Insightfulness & Faithfulness & Text safety & Clarity \\
\midrule
record & Gaussian sketch + top-$L$ & record; Gaussian sketch + top-$L$; m=100; K=3; L=10 & 1.0 & 5000 & 3 & 4.67 $\pm$ 0.58 & 4.00 $\pm$ 0.00 & 3.33 $\pm$ 0.58 & 5.00 $\pm$ 0.00 & 5.00 $\pm$ 0.00 & 4.33 $\pm$ 0.58 \\
record & Gaussian sketch + top-$L$ & record; Gaussian sketch + top-$L$; m=100; K=3; L=10 & 4.0 & 5000 & 3 & 5.00 $\pm$ 0.00 & 4.00 $\pm$ 0.00 & 3.67 $\pm$ 0.58 & 5.00 $\pm$ 0.00 & 5.00 $\pm$ 0.00 & 4.33 $\pm$ 0.58 \\
record & Private top-$L$ admission + Gaussian values & record; Private top-$L$ admission + Gaussian values; m=100; K=3; L=10 & 1.0 & 5000 & 3 & 4.33 $\pm$ 0.58 & 4.00 $\pm$ 0.00 & 3.33 $\pm$ 0.58 & 5.00 $\pm$ 0.00 & 5.00 $\pm$ 0.00 & 4.33 $\pm$ 0.58 \\
record & Private top-$L$ admission + Gaussian values & record; Private top-$L$ admission + Gaussian values; m=100; K=3; L=10 & 4.0 & 5000 & 3 & 5.00 $\pm$ 0.00 & 4.00 $\pm$ 0.00 & 3.33 $\pm$ 0.58 & 5.00 $\pm$ 0.00 & 5.00 $\pm$ 0.00 & 4.33 $\pm$ 0.58 \\
user & Gaussian sketch + top-$L$ & user; Gaussian sketch + top-$L$; m=100; K=3; L=10; B=1 & 1.0 & 2000 & 3 & 4.67 $\pm$ 0.58 & 4.00 $\pm$ 0.00 & 4.00 $\pm$ 0.00 & 5.00 $\pm$ 0.00 & 5.00 $\pm$ 0.00 & 5.00 $\pm$ 0.00 \\
user & Gaussian sketch + top-$L$ & user; Gaussian sketch + top-$L$; m=100; K=3; L=10; B=1 & 4.0 & 2000 & 3 & 4.33 $\pm$ 0.58 & 4.00 $\pm$ 0.00 & 3.67 $\pm$ 0.58 & 5.00 $\pm$ 0.00 & 5.00 $\pm$ 0.00 & 4.67 $\pm$ 0.58 \\
user & Private top-$L$ admission + Gaussian values & user; Private top-$L$ admission + Gaussian values; m=100; K=3; L=10; B=1 & 1.0 & 2000 & 3 & 4.33 $\pm$ 0.58 & 4.00 $\pm$ 0.00 & 3.67 $\pm$ 0.58 & 5.00 $\pm$ 0.00 & 5.00 $\pm$ 0.00 & 4.67 $\pm$ 0.58 \\
user & Private top-$L$ admission + Gaussian values & user; Private top-$L$ admission + Gaussian values; m=100; K=3; L=10; B=1 & 4.0 & 2000 & 3 & 4.33 $\pm$ 0.58 & 4.00 $\pm$ 0.00 & 4.00 $\pm$ 0.00 & 5.00 $\pm$ 0.00 & 5.00 $\pm$ 0.00 & 4.33 $\pm$ 0.58 \\
\bottomrule
\end{tabular}%
}
\end{table*}

\begin{table*}[t]
\centering
\caption{Similarity between OpenAI summaries generated from \texttt{DP-SPIN} private plans and summaries generated from the corresponding non-private top-$L$ plans on Yelp Restaurants reviews. Metrics compare generated texts and do not evaluate the underlying semantic sketches. Values are reported as mean $\pm$ sample standard deviation across runs.}
\label{tab:openai-private-non-private-yelp-restaurants-reviews}
\small
\resizebox{0.99\textwidth}{!}{%
\begin{tabular}{lllccccccc}
\toprule
Unit & Mechanism & Configuration & $\varepsilon$ & Protected units & Runs & $J_{\mathrm{tok}}$ & $J_2$ & $J_{\mathrm{kp}}$ & $\mathsf{Cos}_{\mathrm{tfidf}}$ \\
\midrule
record & Gaussian sketch + top-$L$ & record; Gaussian sketch + top-$L$; m=100; K=3; L=10 & 1.0 & 5000 & 3 & 0.49 $\pm$ 0.07 & 0.22 $\pm$ 0.09 & 0.23 $\pm$ 0.09 & 0.53 $\pm$ 0.09 \\
record & Gaussian sketch + top-$L$ & record; Gaussian sketch + top-$L$; m=100; K=3; L=10 & 4.0 & 5000 & 3 & 0.50 $\pm$ 0.07 & 0.21 $\pm$ 0.06 & 0.22 $\pm$ 0.06 & 0.54 $\pm$ 0.09 \\
record & Private top-$L$ admission + Gaussian values & record; Private top-$L$ admission + Gaussian values; m=100; K=3; L=10 & 1.0 & 5000 & 3 & 0.36 $\pm$ 0.07 & 0.12 $\pm$ 0.02 & 0.15 $\pm$ 0.03 & 0.40 $\pm$ 0.03 \\
record & Private top-$L$ admission + Gaussian values & record; Private top-$L$ admission + Gaussian values; m=100; K=3; L=10 & 4.0 & 5000 & 3 & 0.41 $\pm$ 0.05 & 0.14 $\pm$ 0.02 & 0.17 $\pm$ 0.01 & 0.42 $\pm$ 0.05 \\
user & Gaussian sketch + top-$L$ & user; Gaussian sketch + top-$L$; m=100; K=3; L=10; B=1 & 1.0 & 2000 & 3 & 0.56 $\pm$ 0.16 & 0.35 $\pm$ 0.12 & 0.34 $\pm$ 0.14 & 0.61 $\pm$ 0.06 \\
user & Gaussian sketch + top-$L$ & user; Gaussian sketch + top-$L$; m=100; K=3; L=10; B=1 & 4.0 & 2000 & 3 & 0.54 $\pm$ 0.11 & 0.32 $\pm$ 0.10 & 0.33 $\pm$ 0.09 & 0.57 $\pm$ 0.06 \\
user & Private top-$L$ admission + Gaussian values & user; Private top-$L$ admission + Gaussian values; m=100; K=3; L=10; B=1 & 1.0 & 2000 & 3 & 0.34 $\pm$ 0.01 & 0.09 $\pm$ 0.02 & 0.12 $\pm$ 0.02 & 0.34 $\pm$ 0.01 \\
user & Private top-$L$ admission + Gaussian values & user; Private top-$L$ admission + Gaussian values; m=100; K=3; L=10; B=1 & 4.0 & 2000 & 3 & 0.48 $\pm$ 0.10 & 0.25 $\pm$ 0.10 & 0.26 $\pm$ 0.10 & 0.50 $\pm$ 0.07 \\
\bottomrule
\end{tabular}%
}
\end{table*}

\begin{table*}[t]
\centering
\caption{Examples of aggregate summaries based on OpenAI-generated outputs on Yelp Restaurants reviews. Each row reports one selected \texttt{DP-SPIN} or baseline release object. The DP-SPIN row is terminology-normalized to the current semantic-support vocabulary; the admitted themes and released numeric values are unchanged.}
\label{tab:openai-qualitative-yelp-restaurants-reviews}
\scriptsize
\setlength{\tabcolsep}{3pt}
\renewcommand{\arraystretch}{1.04}
\resizebox{0.99\textwidth}{!}{%
\begin{tabular}{p{0.27\textwidth}p{0.68\textwidth}}
\toprule
Method and release configuration & Generated summary \\
\midrule
DP-SPIN private plan (record; Gaussian sketch + top-$L$; m=100; K=3; L=10; $\varepsilon=1$; $n=5000$) & The aggregate summary identifies low semantic support across the admitted dining-related themes, including long wait times, friendly staff, customer service, delicious food, pizza places, service staff, food places, Mexican food, lunch specials, and bar restaurants. - Long wait times and customer service have low released support. - Friendly staff has low released support. - Several food-related categories, including Mexican food and pizza places, have low released support. \\ \hline
\addlinespace[1pt]
DP keyword histogram (fixed public keyword vocabulary; vocab=200; Laplace histogram; released top-10 keywords; $\varepsilon=1$; $n=2000$) & The aggregate summary reveals a diverse range of keywords related to food, places, and services, with a notable emphasis on restaurants and specific food items. The most frequently mentioned category is food, followed by place and service, indicating a strong interest in dining experiences. Within the food category, pizza and sushi are highlighted, along with mentions of menus and descriptors like delicious, suggesting a focus on appealing meal options. - Food-related keywords dominate the summary. - Restaurants and specific food items like pizza and sushi are significant. - Descriptive terms such as delicious and meal types like lunch are also present. \\ \hline
\addlinespace[1pt]
DP category histogram (fixed rating\_bin universe; categories=3; Laplace histogram; released top-3 categories; $\varepsilon=1$; $n=2000$) & The aggregate summary indicates the distribution of ratings within the public category field, showing a predominance of high ratings, followed by low and medium ratings. - High ratings are the most common. - Low ratings are less frequent. - Medium ratings are the least common among the three categories. \\ \hline
\addlinespace[1pt]
URANIA-style public keywords (DP clustering; DP public-keyword histograms; clusters=10; keywords=10; $\varepsilon=1$; $n=2000$) & The preserved clusters reveal insights into various dining experiences, emphasizing the quality of food, service, and customer interactions across different types of restaurants. Notably, there is a focus on reasonable pricing and friendly staff in establishments serving Chinese cuisine, as well as a broader exploration of fast food options, including Vietnamese and burger offerings. The summaries also highlight the importance of food quality and taste, particularly in relation to sandwiches and breakfast items, suggesting a diverse culinary landscape. - Chinese restaurants are noted for their reasonable prices and friendly service. - Fast food options encompass a variety of cuisines, indicating a dynamic restaurant scene. - Quality and taste are emphasized across different food types, including breakfast and sandwiches. \\
\addlinespace[1pt]
\bottomrule
\end{tabular}%
}
\end{table*}

\end{document}